\documentclass[11pt]{article}
\usepackage[hmargin=1in, vmargin=0.8in]{geometry}
\usepackage[utf8]{inputenc}
\usepackage[T1]{fontenc}
\usepackage[english]{babel}
\usepackage{csquotes}

\usepackage[backend=biber, style=authoryear, maxcitenames=2, uniquelist=false, uniquename=init,
giveninits=true, doi=false, url=false, isbn=false, eprint=true]{biblatex}
\let\cite\parencite

\usepackage{amsmath}
\usepackage{amsthm}
\usepackage{amssymb}
\usepackage{mathrsfs}
\usepackage{mathtools}
\usepackage{booktabs}
\usepackage{array}
\usepackage{enumitem}
\usepackage{xcolor}
\usepackage{etoc}

\usepackage{graphicx}
\usepackage{float}
\graphicspath{{figures/}}

\usepackage{tikz}
\usetikzlibrary{arrows.meta,positioning,calc,decorations.pathreplacing}

\usepackage{hyperref}
\hypersetup{hidelinks,
  pdftitle={Universal Local Error and Realized Amplification for the First-Order EDM Predictor},
  pdfauthor={Nicolas Brosse, Arnak S. Dalalyan}}
\usepackage{xurl}
\usepackage{caption}
\usepackage{cleveref}

\newif\ifarxivlayout
\arxivlayouttrue
\newcommand{\coderepo}{\url{https://github.com/nbrosse/edm-error-propagation-code}}
\newcommand{\codeavailability}{available in the public code repository \coderepo}
\newcommand{\introfigwidth}{0.6\textwidth}
\newcommand{\reliabilitytablestyle}{\footnotesize\setlength{\tabcolsep}{3pt}}

\expandafter\ifx\csname ifarxivlayout\endcsname\relax
  \expandafter\newif\csname ifarxivlayout\endcsname
\fi

\newcommand{\R}{\mathbb{R}}
\newcommand{\E}{\mathbb{E}}
\newcommand{\Cov}{\operatorname{Cov}}
\newcommand{\Law}{\operatorname{Law}}
\newcommand{\Tr}{\operatorname{tr}}
\newcommand{\Id}{\operatorname{Id}}
\newcommand{\Idd}{\mathbf{I}_d}
\newcommand{\dd}{\mathrm{d}}
\newcommand{\norm}[1]{\left\lVert #1\right\rVert}
\newcommand{\ip}[2]{\left\langle #1,#2\right\rangle}
\newcommand{\Wtwo}{W_2}
\newcommand{\Rres}{\mathcal R}                        %
\newcommand{\Bd}[1][\rho]{B_{d,#1}}                   %
\newcommand{\gambar}{\overline{\gamma}}                 %
\newcommand{\budexp}{\omega}                          %
\NewDocumentCommand{\Lamos}{o m}{\Lambda^{\os\IfValueT{#1}{,#1}}_{#2}}
\newcommand{\Cres}{C_{\mathrm{res}}}                  %
\newcommand{\sighi}{\sigma_{\mathrm{hi}}}             %
\newcommand{\bhi}{b_{\mathrm{hi}}}                    %
\newcommand{\Khi}{K_{\mathrm{hi}}}                    %
\newcommand{\Lhi}{L_{\mathrm{hi}}}                    %
\newcommand{\thetahi}{\theta_{\mathrm{hi}}}           %
\newcommand{\thetalo}{\theta_{\mathrm{lo}}}           %
\newcommand{\kappahi}{\kappa_{\mathrm{hi}}}           %
\newcommand{\beff}{b_{\mathrm{eff}}}                  %
\newcommand{\Gsched}{\mathfrak G}                     %
\newcommand{\rhoEDM}{\rho_{\sf EDM}}                  %
\newcommand{\sigmad}{\tau_q}
\newcommand{\Dhat}{\widehat{D}}
\newcommand{\Phihat}{\widehat{\Phi}}
\newcommand{\vhat}{\widehat{v}}                       %
\newcommand{\auditDim}{3{,}072}                       %
\newcommand{\auditSigmaData}{0.5}                     %

\newcommand{\sync}{\mathrm{sync}}
\newcommand{\OT}{\mathrm{OT}}
\newcommand{\pts}{\mathrm{pts}}                       %
\newcommand{\os}{\mathrm{os}}                         %
\newcommand{\op}{\mathrm{op}}                         %
\newcommand{\cin}{c_{\mathrm{in}}}
\newcommand{\cskip}{c_{\mathrm{skip}}}
\newcommand{\cout}{c_{\mathrm{out}}}

\newcommand{\Fhat}{\widehat F}                        %

\NewDocumentCommand{\SFest}{o o}
  {\widetilde S_F
   \IfValueTF{#1}{\IfBlankTF{#1}
     {\IfValueT{#2}{^{(#2)}}}
     {^{[#1]\IfValueT{#2}{,(#2)}}}}
     {\IfValueT{#2}{^{(#2)}}}}
\NewDocumentCommand{\SFpest}{m o}{\widetilde S_{F,#1}\IfValueT{#2}{^{(#2)}}}

\newcommand{\qlev}[1]{^{[#1]}}                        %
\newcommand{\Quant}[3]{\operatorname{Quant}_{#1}\bigl\{#2\bigr\}_{#3}}

\NewDocumentCommand{\SF}{o}{S_F\IfValueT{#1}{\qlev{#1}}}
\NewDocumentCommand{\SFp}{m}{S_{F,#1}}
\NewDocumentCommand{\SPhi}{o m}{S_{\Phi,#2}\IfValueT{#1}{\qlev{#1}}}

\NewDocumentCommand{\bprobe}{o}{b_{\mathrm{probe}}\IfValueT{#1}{\qlev{#1}}}

\NewDocumentCommand{\zdisp}{m o}{\zeta^{#1}\IfValueT{#2}{_{#2}}}

\NewDocumentCommand{\Erate}{o m}{\mathcal E\IfValueT{#1}{^{#1}}_{#2}}
\NewDocumentCommand{\Epos}{o m}{\mathcal E\IfValueT{#1}{^{#1}}_{#2,+}}
\NewDocumentCommand{\Qsec}{o m}{Q\IfValueT{#1}{^{#1}}_{#2}}
\NewDocumentCommand{\Amp}{o m}{A\IfValueT{#1}{^{#1}}_{#2}}

\NewDocumentCommand{\LF}{m}{L_{F,#1}}                 %
\NewDocumentCommand{\Los}{o m}{\mathcal L^{\os\IfValueT{#1}{,#1}}_{#2}}

\theoremstyle{plain}
\newtheorem{theorem}{Theorem}
\newtheorem{proposition}[theorem]{Proposition}
\newtheorem{corollary}[theorem]{Corollary}
\newtheorem{lemma}[theorem]{Lemma}
\newtheorem{definition}[theorem]{Definition}
\newtheorem{assumption}[theorem]{Assumption}
\newtheorem{remark}[theorem]{Remark}

\crefname{assumption}{assumption}{assumptions}
\Crefname{assumption}{Assumption}{Assumptions}

\crefname{enumi}{assumption}{assumptions}
\Crefname{enumi}{Assumption}{Assumptions}

\newcommand{\auditBcrossQninety}{8.9}
\newcommand{\auditBsupercritQninety}{40.8}

\newcommand{\resKmin}{17}
\newcommand{\resKmax}{1088}

\newcommand{\resErrorOrderMin}{0.58}
\newcommand{\resErrorOrderMax}{0.60}

\newcommand{\resOmega}{0.41}
\newcommand{\resOmegaFitK}{136}

\newcommand{\resBoundExcessMin}{35}
\newcommand{\resBoundExcessMax}{40}

\newcommand{\resDirExcessMax}{2.2}
\newcommand{\resFieldRatioCoarse}{143}
\newcommand{\resFieldRatioFine}{11.1}

\newcommand{\resGaussInitMax}{6.2\times10^{-5}}
\newcommand{\resGaussDiscFine}{0.0136}
\newcommand{\resMisspecLambdaFine}{0.67}
\newcommand{\auditSegK}{68}
\newcommand{\auditSegSigmaLow}{0.004}
\newcommand{\auditSegRandLow}{-82}
\newcommand{\auditSegParLow}{0.49}
\newcommand{\auditSegWorstOverParLow}{424}

\newcommand{\auditSyncTrajectories}{1280}

\title{Universal Local Error and Realized Amplification\\ for the First-Order EDM Predictor}
\author{%
  Nicolas Brosse\\ CREST, ENSAE Paris, Institut Polytechnique de Paris\\
  \texttt{nicolas.brosse@ensae.fr}
  \and
  Arnak S. Dalalyan\\ CREST, ENSAE Paris, Institut Polytechnique de Paris\\
  \texttt{arnak.dalalyan@ensae.fr}}
\date{}

\begin{document}
\etocdepthtag.toc{main}

\maketitle

\begin{abstract}
  We analyze the first-order deterministic diffusion sampler of
  \textcite{karras_elucidating_2022}, termed EDM, in 2-Wasserstein distance by separating two
  sources of error: local discretization error and its amplification by subsequent learned steps.
  We prove that local error admits a universal bound: for any data distribution with finite second
  moment, the one-step discretization error is quadratic in the step size, with an explicit
  constant that does not depend on the data distribution. Error propagation, in contrast, depends
  on the learned network. At high noise levels, we exploit the network parametrization of EDM to
  derive an explicit contraction criterion. At low noise levels, we measure propagation through
  the amplification realized on the distributions transported by the sampler; this realized
  amplification can be arbitrarily smaller than the worst-case Lipschitz constant. This analysis
  yields an \(O(e^{\Lambda_K}/K)\) global discretization error for \(K\) sampling steps, where
  \(\Lambda_K\) is the low-noise log-amplification. Experiments on a one-dimensional Gaussian
  mixture show how measured amplification accounts for slower error decay on finite sampling
  grids. Diagnostics on a pretrained CIFAR–10 model illustrate related stability mechanisms
  without certifying the global assumptions.
\end{abstract}

\section{Introduction}
\label{sec:introduction}

A deterministic diffusion sampler generates samples by discretizing the probability flow from noise
to data using a trained denoising network. Its accuracy depends both on the errors introduced at
each step and on how subsequent steps propagate them. Controlling these two mechanisms is essential
for understanding accuracy at a limited number of network evaluations.

\begin{figure}[t]
  \centering
  \includegraphics[width=\introfigwidth]
  {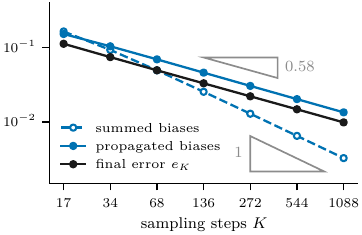}
  \vspace*{-4pt}
  \caption{%
  Local accuracy and accumulated error. For a four-Gaussian mixture with oracle denoising and exact
  initialization, summed local biases approach first-order decay, while final error decreases at
  orders \(\resErrorOrderMin\)--\(\resErrorOrderMax\) over the tested range. Propagating each bias
  through the measured amplification of the later steps (solid blue) recovers the slower decay of
  the error; see \Cref{subsec:evidence_law_level}.
  \vspace*{-14pt}
  }
  \label{fig:teaser_convergence}
\end{figure}
\Cref{fig:teaser_convergence} illustrates the distinction even with an exact denoiser. On a Gaussian
mixture, the sum of local discretization biases approaches first-order decay, while the final
Wasserstein error decreases at orders \(\resErrorOrderMin\)--\(\resErrorOrderMax\) over the tested
range. Propagating each bias through the measured amplification of the subsequent steps recovers
this slower decay. This rate is preasymptotic: it shows that local accuracy alone need not describe
the behavior on finite sampling grids.

Existing Wasserstein analyses control local error and propagation under various structural
assumptions. \textcite{beyler_convergence_2025} analyze deterministic and stochastic samplers on the
Gaussian smoothing path, using bounded support in their local estimates and a global Lipschitz
envelope of the learned map for propagation.
\textcite{gao_zhu_probability_flow_2025} obtain contraction under
strong log-concavity; other approaches use weaker geometric conditions
\cite{gentiloni_ocello_beyond_2025,bruno_sabanis_semiconvexity_2025}. We ask whether local error can
be bounded without such assumptions on the data law, and how much worst-case stability estimates can
overstate propagation.

We study the first-order deterministic EDM predictor of \cite{karras_elucidating_2022}, equivalently
the DDIM update of
\textcite{song_denoising_implicit_2021} in the
variance-exploding parametrization. Our main result is a universal local discretization bound. We
combine it with propagation measured on the exact and sampler laws, keeping separate what follows
from Gaussian smoothing and what requires control of the learned dynamics.

Our analysis has three parts.
\vspace*{-7pt}
\begin{enumerate}[leftmargin=1.6em]
  \itemsep=0pt
  \item \textbf{Universal local error.}
  We express the acceleration of the exact probability flow through conditional moments of Gaussian
  noise and bound its \(\mathbb L_2\) norm for every Borel data law. The bound has an explicit
  constant and a sharp noise exponent. For data with finite second moment, it yields a quadratic
  one-step discretization bound without support, smoothness, or log-concavity assumptions
  (\Cref{lem:local_euler_defect}). The key observation is that the relevant conditional moments are
  controlled by Gaussian noise moments, independently of the geometry of the data distribution.

  \item \textbf{Realized propagation.}
  EDM preconditioning provides an explicit sufficient condition for high-noise contraction. At lower
  noise, we retain the amplification of the Wasserstein distance between the two transported laws
  and derive directional and worst-case upper bounds. A smooth construction shows that realized
  amplification can remain nonexpansive while the worst-case majorants become arbitrarily large
  (\Cref{prop:strict_law_map_separation}). The directional bound measures expansion along an optimal
  coupling of the exact and sampler laws, avoiding a supremum over all spatial locations and
  directions.

  \item \textbf{Global error control.}
  We separate initialization, discretization, learning, and terminal smoothing errors
  (\Cref{thm:global_edm_wasserstein}). At fixed positive noise endpoints, the discretization
  contribution is \(O(K^{-1}e^{\Lambda_K})\), where \(\Lambda_K\) accumulates the positive
  logarithms of the realized low-noise amplification factors. A uniform bound on \(\Lambda_K\) gives
  first-order decay; logarithmic growth gives the rates in \Cref{cor:budget_refinement_rates}.
  High-noise contraction damps early errors, while EDM preconditioning expresses the learning
  contribution through the excess risk of a normalized regression problem.
\end{enumerate}

The propagation estimate is a posteriori: we do not derive an amplification envelope from training
or architecture. Experiments on the Gaussian mixture examine its explanatory value and the gap to
worst-case bounds. Diagnostics on a pretrained CIFAR--10 model illustrate related mechanisms without
certifying the global stability assumptions. The analysis covers the deterministic predictor; Heun
corrections and stochastic churn remain outside its scope. \Cref{app:related_work_extended} provides
detailed comparisons with prior work.

\section{Setting and One-Step Decomposition}
\label{sec:denoising_path_predictor}
\label{sec:one_step_error}

This section introduces deterministic diffusion sampling and the first-order EDM predictor studied
in this paper. We write \(\mathcal P_2(\R^d)\) for the set of Borel probability measures on \(\R^d\)
with finite second moment and \(f_\#\mu\) for the pushforward of \(\mu\) by a measurable map \(f\).
A hat marks a learned object. For a random vector \(Y\) and a measurable vector-valued function
\(f\), respectively, we use the notation \(\|Y\|_{\mathbb L_2} \coloneqq (\mathbb
E[\|Y\|^2])^{1/2}\) and \(\| f\|_{\mathbb L_2(\mu)} \coloneqq (\int \|f(x)\|^2 \,\mu (\mathrm d
x))^{1/2}\). For a more detailed introduction to the underlying mathematical framework, see
\textcite{holderrieth2025}; complements specific to our setting are provided in
\Cref{app:main_text_complements}.

Let \(q\in\mathcal P_2(\R^d)\) be the target distribution. For independent vectors \(X_0\sim q\) and
\(Z\sim\mathcal N(0,\Idd)\), set
\begin{equation*}
  X_\sigma=X_0+\sigma Z,\quad\text{ and} \quad \mu_\sigma =\Law(X_\sigma) .
\end{equation*}
The mapping \(\sigma \mapsto \mu_\sigma\), for \(\sigma\in(0, \infty)\) defines a smooth curve in
\(\mathcal P_2 (\R^d)\) starting from the target distribution \(q =\mu_0\). In addition, at high
noise levels \(\bar\sigma\), the normalized variable \(X_{\bar\sigma}/\bar\sigma\) has a
distribution close to \(\mathcal N(0,\Idd)\). This motivates using \(\mathcal
N(0,\bar\sigma^2\Idd)\) as the latent distribution for generative modeling. The same curve can be
realized by solving an ordinary differential equation based on the velocity field
\begin{equation}\label{eq:velocity}
  v_\sigma(x)=\big(x-D_\sigma(x)\big)/{\sigma} = -\sigma\nabla\log p_\sigma(x)
\end{equation}
where \(D_\sigma(x) = \mathbb E [\,X_0\, |\, X_\sigma = x\,]\) is the posterior-mean denoiser and
\(p_\sigma\) is the density of \(\mu_\sigma\). More precisely, for every compact interval \( I
\subset (0,\infty)\), and for every \( (\sigma_0,
x) \in I\times \R^d\), the differential equation
\begin{equation*}
  \frac{\dd x_r}{\dd r} = v_r(x_r), \quad r\in I, \quad x_{\sigma_0} = x
\end{equation*}
has a unique solution on the whole interval \(I\). Denote its solution map by \(
\Psi_{\sigma,\sigma_0}:\R^d\to\R^d \), so that
\begin{equation}\label{eq:Psi}
  \frac{\dd}{\dd\sigma}\Psi_{\sigma,\sigma_0}(x) = v_\sigma\bigl(\Psi_{\sigma,\sigma_0}(x)\bigr),
  \quad \Psi_{\sigma_0,\sigma_0}(x)=x .
\end{equation}
For every \(\sigma,\sigma_0\in I\), the map \(\Psi_{\sigma,\sigma_0}\) is a diffeomorphism of
\(\R^d\), with inverse \(\Psi_{\sigma_0,\sigma}\). In addition, for all \(\sigma_0,\sigma_1,\sigma_2
\in I\), \(\Psi_{\sigma_2,\sigma_0} = \Psi_{
\sigma_2, \sigma_1} \circ\Psi_{\sigma_1,\sigma_0}\).
This flow transports the heat-regularized laws:
\begin{equation}\label{eq:pushforward_sigma}
  \mu_\sigma = (\Psi_{\sigma,\sigma_0})_\#\mu_{\sigma_0}, \qquad \sigma,\sigma_0\in I .
\end{equation}
The main idea behind flow matching, DDIM, EDM and their multiple variants is to choose a small value
\(\underline{\sigma}>0\), a large value \(\overline{\sigma}\), to approximate
\(\Psi_{\underline{\sigma}, \overline{\sigma}}\) by some map \(\widehat\Psi\) based on the training
data, and then to define the generative model as \(\widehat\Psi_\#
\mathcal N(0,\overline{\sigma}^2\Idd) \).

\begin{figure}[!t]
  \centering
  \begingroup
\definecolor{edmExact}{HTML}{226B8E}
\definecolor{edmDisc}{HTML}{B97914}
\definecolor{edmLearn}{HTML}{76529B}
\definecolor{edmProp}{HTML}{287C68}
\definecolor{edmInk}{HTML}{26313A}
\begin{tikzpicture}[
  x=0.74cm,y=0.8cm,
  font=\small,
  every node/.style={text=edmInk},
  law/.style={draw=black!22,fill=white,rounded corners=3pt,
    minimum height=6mm,inner sep=4pt,align=center},
  source/.style={law,minimum width=12mm},
  target/.style={law,minimum width=28mm},
  transport/.style={-{Latex[length=1.4mm]},line width=.85pt},
  edm distance/.style={{Latex[length=1.2mm]}-{Latex[length=1.2mm]},
    densely dashed,line width=.85pt},
  maplabel/.style={fill=white,inner sep=1pt},
  heading/.style={font=\sffamily\footnotesize\bfseries,text=black!60}
]
\node[heading] at (-.5,4.5) {INPUT LAWS};
\node[heading] at (5.2,4.5) {TRANSPORTED LAWS};

\node[source,draw=edmExact!55,fill=edmExact!4] (mu) at (-0.9,3.6)
  {$\mu_j$};
\node[source,draw=edmProp!55,fill=edmProp!4] (muhat) at (-0.9,-1.5)
  {$\widehat\mu_j$};

\node[target,draw=edmExact!55,fill=edmExact!4] (exact) at (5.2,3.6)
  {$\mu_{j+1}=(\Psi_j)_\#\mu_j$};
\node[target,draw=edmDisc!55,fill=edmDisc!4] (oracle) at (5.2,1.9)
  {$(\Phi_j)_\#\mu_j$};
\node[target,draw=edmLearn!55,fill=edmLearn!4] (learned) at (5.2,0.2)
  {$(\widehat\Phi_j)_\#\mu_j$};
\node[target,draw=edmProp!55,fill=edmProp!4] (out) at (5.2,-1.5)
  {$\widehat\mu_{j+1}=(\widehat\Phi_j)_\#\widehat\mu_j$};

\draw[transport,edmExact] (mu.east) --
  node[maplabel,above,text=edmExact] {\tiny$\Psi_j$ (exact flow)} (exact.west);
\draw[transport,edmDisc] (mu.east)
  .. controls (2.6,3.6) and (2.6,2.1) ..
  node[maplabel,pos=.5,text=edmDisc] {\tiny$\Phi_j$ (oracle)} (oracle.west);
\draw[transport,edmLearn] (mu.south east)
  .. controls (1.9,1.1) and (2.4,.2) ..
  node[maplabel,pos=.50,text=edmLearn] {\tiny$\widehat\Phi_j$ (learned)} (learned.west);
\draw[transport,edmLearn] (muhat.east) --
  node[maplabel,below,text=edmLearn] {\tiny$\widehat\Phi_j$ (learned)} (out.west);

\draw[edm distance,black!55] (mu.south) --
  node[maplabel,left=2pt] {$e_j$} (muhat.north);
\draw[edm distance,edmDisc] (exact.south) -- (oracle.north);
\draw[edm distance,edmLearn] (oracle.south) -- (learned.north);
\draw[edm distance,edmProp] (learned.south) -- (out.north);

\node[anchor=west,align=left,text=edmDisc] at (5.3,2.7)
  {\textsf{\tiny Discretization}\\[-1pt]
  \tiny $W_2\le\delta_j$};
\node[anchor=west,align=left,text=edmLearn] at (5.3,1)
  {\tiny\textsf{Learning}\\[-1pt]
   \tiny$W_2\le a_j\|\widehat D_{\sigma_j}-D_{\sigma_j}\|$};
\node[anchor=west,align=left,text=edmProp] at (5.3,-0.7)
  {\tiny\textsf{Propagation}\\[-1pt] 
  \tiny $W_2=\gamma_j e_j$};
\draw[edmDisc!45,line width=.6pt] (5.45,2.72) -- (7.25,2.72);
\draw[edmLearn!45,line width=.6pt] (5.45,1.1) -- (7.25,1.1);
\draw[edmProp!45,line width=.6pt] (5.45,-0.63) -- (7.25,-0.63);
\end{tikzpicture}
\endgroup
  \vspace*{-10pt}
  \caption{One-step Wasserstein error decomposition. Solid arrows correspond to
  pushforwards, whereas dashed links are \(W_2\) distances.
  \vspace*{-10pt}}
  \label{fig:one_step_decomposition}
\end{figure}

To be more precise, fix a schedule \(\overline{
\sigma}=\sigma_0 > \ldots > \sigma_K = \underline{
\sigma} >0\), and assume that we have a trained
version \(\widehat{D}_\sigma\) of the posterior mean denoiser \(D_\sigma\) at each noise level
\(\sigma\in\{\sigma_0,\ldots,\sigma_K\}\). Set \(\mu_j = \mu_{\sigma_j}\), \( a_j = ( \sigma_j-
\sigma_{j+1})/\sigma_j\in(0,1)\), and
\(\Psi_j = \Psi_{\sigma_{j+1}, \sigma_j}\), so that \(\mu_{j+1} = (\Psi_j)_\#\mu_j\). In
\cref{eq:Psi}, freezing the velocity at the starting noise level \(\sigma_j\) and using
\eqref{eq:velocity}, gives the one-step oracle and learned EDM predictors
\begin{align}
  \Phi_j(x) &= (1-a_j)x+a_jD_{\sigma_j}(x), \label{eq:oracle_edm_predictor} \\
  \Phihat_j(x) &= (1-a_j)x+a_j\Dhat_{\sigma_j}(x). \label{eq:learned_edm_predictor}
\end{align}
For \(\widehat\mu_0=\mathcal N(0,\sigma_0^2\Idd)\), the sampler laws and errors are
\begin{equation}\label{eq:learned_sampler_laws_errors}
  \widehat\mu_{j+1}=(\Phihat_j)_\#\widehat\mu_j, \quad e_j=\Wtwo(\widehat\mu_j,\mu_j).
\end{equation}
With these notations, the triangle inequality and the canonical coupling give
\begin{equation}\label{eq:edm_final_error_with_terminal_bias}
  \Wtwo(\widehat\mu_K,q)\le e_K+\sigma_K\sqrt d.
\end{equation}
Therefore, the main goal is to understand the behavior of the final error \(e_K\). The deployed
output \((\Dhat_{\sigma_K})_\#\widehat\mu_K\), obtained by one additional denoising step, is treated
in \Cref{subsec:deployed_output}.

We complete this section by providing a simple one-step recursion for the accumulated error \(
e_j\). Its proof follows straightforwardly from the triangle inequality. Let us define the local
discretization bias by
\begin{equation}\label{eq:def_local_defect_learned_map}
  \delta_j=\bigl\|\Psi_j(X_{\sigma_j})-\Phi_j(X_{\sigma_j})\bigr\|_{\mathbb L_2},
\end{equation}
and the realized amplification factor
\begin{equation}\label{eq:def_raw_amplification}
  \gamma_j
  = \frac{\Wtwo\bigl((\Phihat_j)_\#\mu_j,(\Phihat_j)_\#\widehat\mu_j\bigr)}{\Wtwo(\mu_j,\widehat\mu_j)},
\end{equation}
with the convention that \(\gamma_j=1\) if \(\Wtwo(\mu_j,\widehat\mu_j)=0\). For every
\(j=0,\ldots,K-1\),
\begin{equation}\label{eq:law_level_one_step}
  e_{j+1} \le \gamma_je_j+\delta_j
  + a_j \bigl\|\Dhat_{\sigma_j}-D_{\sigma_j} \bigr\|_{\mathbb L_2(\mu_j)},
\end{equation}
see \Cref{fig:one_step_decomposition} for a visual representation.

\section{Universal Local Error}
\label{sec:discretization_error}

The one-step decomposition in \cref{eq:law_level_one_step} separates error propagation,
discretization bias, and denoiser approximation. This section bounds the local discretization bias
\(\delta_j\). We show that its second-order behavior follows from a universal bound on the
acceleration of the exact flow. Proofs are provided in \Cref{app:local_error}.

To compare samplers with different numbers of steps, we keep the endpoints \(\underline\sigma\) and
\(\overline\sigma\) fixed and refine the grid between them. The EDM schedule places noise levels
uniformly in the transformed variable \(\eta=\sigma^{1/\rhoEDM}\), which we call the numerical
clock, see \Cref{fig:clock_grid1}. To ease notation, we write \(\rho=1/\rhoEDM\); thus the commonly
used EDM choice \(\rhoEDM = 7\) corresponds to \(\rho=1/7\). By the chain rule, the flow in the
clock \(\eta =
\sigma^\rho\) satisfies
\begin{equation}\label{eq:eta_probability_flow_ode}
  \frac{\dd x}{\dd\eta}=w_\eta(x),\quad w_\eta(x)=\frac{\eta^{\frac{1}{\rho}-1}
  }{\rho}\, v_{\eta^{1/\rho}}(x).
\end{equation}

\begin{definition}[Refinement family
  \(\Gsched\)]
  \label{def:schedule_family}
  Fix \(\rho>0\), \( 0 < \underline{
  \sigma} < \overline{\sigma}\), and a positive
  integer \(K_\circ\). Set \(\underline{\eta} = \underline{\sigma}^\rho\) and \(\overline{\eta} =
  \overline{\sigma}^\rho\). For \(K\in\mathscr K=\{K_\circ2^r: r\in\mathbb N\}\), set \(h_K =
  (\overline{\eta}-\underline{\eta})/{K}
  \) and let \(G_K: = \{\eta_j=\overline{\eta} -
  jh_K : 0\leqslant j\leqslant K\}\). The family of nested grids is \(\Gsched =
  \{G_K:K\in\mathscr K\}\).
\end{definition}

In the asymptotic results throughout this paper, when we assume \(K\) goes to infinity, the target
\(q\), the denoisers \((\Dhat_\sigma)_{
\sigma\in[\underline{\sigma},\overline{\sigma}]}\),
and the initial distribution \(\widehat\mu_0 =
\mathcal N(0,\overline{\sigma}^2\Idd)\) are held
fixed. These schedules share their endpoints and are nested: \(G_K\subset G_{K'}\) if \(K,K'\in
\mathscr K\) and \(K<K'\). Their relative clock
increments satisfy
\begin{equation}\label{eq:schedule_family_relative_mesh}
  {h_K}/{\eta_j}\leqslant{h_K}/{\underline{\eta}} \eqqcolon \theta_K\longrightarrow0,
\end{equation}
as \(K\to\infty\). We add the subscript \(K\) when comparing quantities on different schedules, for
example \(\delta_{j,K}\) and \(\gamma_{j,K}\). A bound is uniform under refinement if its constant
is independent of \(K\).

\paragraph{Acceleration in the numerical clock.}
\label{subsec:acceleration_numerical_clock}
Define
\begin{equation}\label{eq:def_A_eta}
  \mathcal A_\rho(\sigma,x) =\bigl[\partial_\eta w_\eta(x)
  +\nabla_xw_\eta(x)\,w_\eta(x)\bigr]_{ \eta=\sigma^\rho}.
\end{equation}
We refer to this quantity as the acceleration field, expressed in terms of the noise level, since
along an exact trajectory \(x_\eta\) solution to \eqref{eq:eta_probability_flow_ode}, the chain rule
yields \(\dd^2x_\eta/\dd\eta^2 =\mathcal A_\rho(\eta^{1/\rho},x_\eta)\). Controlling this
acceleration in \(\mathbb L_2\) therefore controls second-order remainders along the flow. The key
fact is that such a bound can be obtained independently of the target distribution.

\begin{figure}[t]
  \centering
  \begin{tikzpicture}[x=0.6cm,y=0.8cm,>=Stealth]
  \def\etaticks{0,1,2,3,4,5,6,7,8,9,10}
  \foreach \x/\r in {0/0.090,1/0.099,2/0.110,3/0.123,4/0.141,5/0.164,6/0.196,7/0.243,8/0.321,9/0.474}{
    \fill[black!18] (\x+0.15,1.15) rectangle (\x+0.85,{1.15+2.6*\r});
  }
  \draw[dashed,thick,black!60] (-0.2,{1.15+2.6*0.6}) -- (10.2,{1.15+2.6*0.6})
    node[right,font=\footnotesize,align=left] {};
  \node[font=\footnotesize,anchor=west] at (-0.2,2.45) {relative step \(h_K/\eta_j\)};
  \draw[->,thick,black!70] (-0.3,0.6) -- (10.6,0.6) node[right,font=\footnotesize] {clock \(\eta\)};
  \foreach \x in \etaticks { \draw[thick] (\x,0.48) -- (\x,0.72); }
  \node[font=\footnotesize,above] at (0,0.72) {\(\overline\eta\)};
  \node[font=\footnotesize,above] at (10,0.72) {\(\underline\eta\)};
  \draw[decorate,decoration={brace,amplitude=3pt,mirror,raise=1pt}] (4,0.45) -- (5,0.45)
    node[midway,below=3pt,font=\footnotesize] {\(h_K\)};
  \draw[->,thick,black!70] (-0.3,-1.2) -- (10.6,-1.2) node[right,font=\footnotesize] {noise \(\sigma\)};
  \foreach \xe/\xs in {0/0.0,1/1.74,2/3.31,3/4.72,4/5.96,5/7.05,6/7.96,7/8.72,8/9.31,9/9.74,10/10.0}{
    \draw[black!30] (\xe,0.48) -- (\xs,-1.08);
    \draw[thick] (\xs,-1.32) -- (\xs,-1.08);
  }
  \node[font=\footnotesize,below] at (0,-1.32) {\(\overline\sigma\)};
  \node[font=\footnotesize,below] at (10,-1.32) {\(\underline\sigma\)};
  \node[font=\footnotesize,black!60] at (5.6,-0.35) {\(\sigma_j=\eta_j^{1/\rho}\)};
  \end{tikzpicture}
  \vspace*{-5pt}
  \caption[Clock grid and mesh regularity]{A member \(G_K\) of the refinement family
  with \(\rho=1/2\) and \(K=10\).
  \vspace*{-10pt}}
  \label{fig:clock_grid1}
\end{figure}

\begin{theorem}[Universal \(\mathbb L_2\) bound]
  \label{thm:global_curvature_bound}
  For every \(\rho>0\) and every Borel probability measure \(q\) on \(\R^d\),
  \begin{equation}\label{eq:Aeta_global_bound}
    \bigl\|\mathcal A_\rho(\sigma,X_\sigma)\bigr\|_{\mathbb L_2} \le B_{d,\rho}\,\sigma^{1-2\rho},
    \qquad\forall\sigma>0
  \end{equation}
  with \(B_{d,\rho} = \rho^{-2}\sqrt d
  \bigl[|1-\rho| + 4(d+3)\bigr]\).
\end{theorem}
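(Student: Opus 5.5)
Write \(w_\eta(x)=\rho^{-1}\eta^{1/\rho-1}v_\sigma(x)\) with \(\sigma=\eta^{1/\rho}\), so \(\dd\sigma/\dd\eta=\rho^{-1}\sigma^{1-\rho}\). The plan is to rewrite \(\mathcal A_\rho\) in the \(\sigma\) variable and then express everything through posterior moments of the noise. By the chain rule,
\[
\mathcal A_\rho(\sigma,x)=\rho^{-2}\sigma^{2-2\rho}\Bigl[\tfrac{1-\rho}{\sigma}v_\sigma(x)+\partial_\sigma v_\sigma(x)+\nabla v_\sigma(x)\,v_\sigma(x)\Bigr],
\]
since \(\partial_\eta(\eta^{1/\rho-1})=(1/\rho-1)\eta^{1/\rho-2}\) and \(\eta^{1/\rho-1}=\sigma^{1-\rho}\). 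It therefore suffices to prove two bounds. The first is \(\|v_\sigma(X_\sigma)\|_{\mathbb L_2}\le\sqrt d\). The second is \(\|\partial_\sigma v_\sigma+\nabla v_\sigma\,v_\sigma\|_{\mathbb L_2}\le 4(d+3)\sqrt d/\sigma\), which is the \(\sigma\)-acceleration \(\dd^2x/\dd\sigma^2\). Together they give \(\rho^{-2}\sigma^{1-2\rho}\sqrt d[|1-\rho|+4(d+3)]\).

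The first bound is immediate. By Tweedie, \(v_\sigma(x)=\E[Z\mid X_\sigma=x]\), so Jensen gives \(\E\|v_\sigma(X_\sigma)\|^2\le\E\|Z\|^2=d\). For the second, write \(m_1(x)=\E[Z\mid X_\sigma=x]\) and \(M_2(x)=\E[ZZ^\top\mid X_\sigma=x]\). We have \(p_\sigma(x)=\int\varphi_\sigma(x-y)\,q(\dd y)\) for every Borel \(q\), with Gaussian kernel \(\varphi_\sigma\), so everything is smooth in \(x\) and \(\sigma>0\) and differentiation under the integral is legitimate. Differentiating the posterior weights \(\propto\varphi_\sigma(x-y)\) gives
\[
\nabla_x m_1=\sigma^{-1}\bigl(M_2-m_1m_1^\top\bigr)-\sigma^{-1}\Idd\cdot 0\text{-type terms},
\]
in the form \(\sigma\nabla v=\Idd-\Cov(Z\mid x)\), up to sign conventions. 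Similarly, \(\partial_\sigma\) of the weights is \(\sigma^{-1}(\|z\|^2-d)\) minus its conditional mean. Hence \(\partial_\sigma v\) is \(\sigma^{-1}\) times a conditional covariance between \(Z\) and \(\|Z\|^2\). Combining these, \(\sigma(\partial_\sigma v+\nabla v\,v)\) is a polynomial in conditional moments of \(Z\) of total degree at most three. Examples are \(\E[Z\|Z\|^2\mid x]\), \(M_2m_1\), \(m_1\E[\|Z\|^2\mid x]\), \(\|m_1\|^2m_1\) and \(m_1\), with coefficients of absolute size bounded by small integers.

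Each term is then bounded in \(\mathbb L_2\) by conditional Jensen or Cauchy--Schwarz, reducing it to Gaussian moments. For instance, \(\|\E[Z\|Z\|^2\mid X_\sigma]\|_{\mathbb L_2}\le(\E\|Z\|^6)^{1/2}=\sqrt{d(d+2)(d+4)}\le(d+3)\sqrt d\). Likewise \(\|M_2m_1\|\le\E[\|Z\|^2\mid x]\,\|m_1\|\le\E[\|Z\|^3\mid x]\), and similar bounds hold for the other cubic products. The terms linear in \(Z\) are bounded by \(\sqrt d\le(d+3)\sqrt d\). Summing with the triangle inequality, and checking that the absolute coefficients add to at most 4, gives the constant \(4(d+3)\).

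The main obstacle is the bookkeeping in the middle step. It requires getting the exact identity for \(\partial_\sigma v+\nabla v\,v\) in terms of conditional moments, with the right signs, and in particular seeing that the terms of order \(\sigma^{-1}m_1\) cancel or combine favourably. It also requires organizing the terms so the coefficient sum is 4 rather than something larger. I would first derive the identity in the Gaussian-target case \(q=\delta_0\) to check the signs, where \(v_\sigma(x)=x/\sigma\) and the acceleration vanishes. I would then verify the general formula by direct differentiation of \(m_1\).
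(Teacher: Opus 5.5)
Your route is the paper's in outline. You pass to the native clock (the paper records the same relation between \(\mathcal A_\rho\), \(\mathcal A_1\) and \(v_\sigma\)), rewrite \(\mathcal A_1\) through conditional moments of \(Z\) by differentiating the Gaussian posterior, and bound these by Gaussian moments.

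\textbf{The misstated step.} One ingredient is wrong as written, and it is the one that fixes the constant. At fixed \(x\), the integrand \(Z=(x-X_0)/\sigma\) itself depends on \(\sigma\). So, besides the covariance coming from the weights, you get an extra \(-v_\sigma\):
\[
\sigma\,\partial_\sigma v_\sigma(x)=\Cov\bigl(Z,\norm{Z}^2\mid X_\sigma=x\bigr)-v_\sigma(x),\qquad \sigma\nabla v_\sigma(x)=\Idd-\mathcal C_\sigma(x).
\]
This \(-v_\sigma\) cancels the \(+v_\sigma\) produced by \(\nabla v_\sigma\,v_\sigma\). If you drop it, a spurious linear term survives and your coefficient count gives \(5\) instead of \(4\); your \(\delta_0\) test would catch this. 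With it, and writing \(R=\E[\norm{Z}^2\mid X_\sigma=x]\), \(m_1=N_\sigma(x)\),
\[
\sigma\mathcal A_1(\sigma,x)=\E\bigl[\norm{Z}^2Z\mid X_\sigma=x\bigr]-M_2m_1-R\,m_1+\norm{m_1}^2m_1=\mathcal C_\sigma N_\sigma+\mathfrak m_{3,\sigma}.
\]
This is the paper's noise-moment identity at \(\rho=1\): four cubic terms with unit coefficients and no linear term.

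\textbf{Bounding step.} Here you differ mildly from the paper. Each of the four terms is bounded pointwise by \(\E[\norm{Z}^3\mid X_\sigma=x]\):
\begin{itemize}
\item the first and last by Jensen;
\item the middle two via \(\norm{M_2}_\op\le\Tr M_2=R\), \(\norm{m_1}\le\E[\norm{Z}\mid X_\sigma=x]\), and H\"older's inequality \(R\,\E[\norm{Z}\mid X_\sigma=x]\le\E[\norm{Z}^3\mid X_\sigma=x]\).
\end{itemize}
Hence \(\norm{\sigma\mathcal A_1(\sigma,X_\sigma)}\le4\,\E[\norm{Z}^3\mid X_\sigma]\) almost surely, and conditional Jensen gives \(4\sqrt{d(d+2)(d+4)}\le4(d+3)\sqrt d\).

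The paper instead keeps the centered form \(\mathcal C_\sigma N_\sigma+\mathfrak m_{3,\sigma}=\E[W\ip{W}{Z}\mid X_\sigma]\), with \(W=Z-N_\sigma\). It bounds this in one go by Cauchy--Schwarz, H\"older with exponents \(3/2\) and \(3\), and \(\norm{W}_{\mathbb L_6}\le2\norm{Z}_{\mathbb L_6}\), reaching the same factor \(4\). Your version is slightly more elementary and yields a pointwise bound. The paper's centered form is the one it reuses for the exact Gaussian formulas, where \(\mathfrak m_{3,\sigma}=0\).

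The paper also derives the identity through the heat equation and the score--Hessian formula, rather than by differentiating \(\E[Z\mid X_\sigma=x]\) directly in \(\sigma\) and \(x\). Both routes are justified by the finiteness of all posterior moments.
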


The proof expresses the acceleration as \(\sigma^{1-2\rho}/\rho^2\) times a combination of
conditional moments of \(Z\) given \(X_\sigma = x\). Jensen's inequality then bounds the relevant
terms by Gaussian moments, independently of \(q\). The exponent \(1-2\rho\) is sharp over all target
distributions; see Prop.~\ref{prop:acceleration_exponent_sharp} in \Cref{subsec:gaussian_sharpness}.
Whether the dimension dependence \(O_\rho(d^{3/2})\) is sharp remains open.

\paragraph{Local bias of EDM.}

The numerical clock determines the noise levels at which the predictor is evaluated. The oracle
predictor is given by \cref{eq:oracle_edm_predictor}. If \(\rho\neq 1\), this is generally different
from an Euler step\footnote{See \Cref{rem:frozen_step_not_eta_euler} in
\Cref{app:eta_euler_vs_edm_predictor}.} for \(w_\eta\). We bound its bias using
\Cref{thm:global_curvature_bound} at \(\rho=1\).

\begin{lemma}
  \label{lem:local_euler_defect}
  Let \(q\in\mathcal P_2(\R^d)\) and let \(\sigma_0>\cdots>\sigma_K>0\) be any schedule. For every
  \(j=0,\ldots,K-1\), the bias defined in \cref{eq:def_local_defect_learned_map} satisfies
  \begin{equation}\label{eq:def_Delta_k_curvature_bound}
    \delta_j\leqslant\Delta_j = B_{d,1}\Bigl[ (\sigma_j-\sigma_{j+1})
    -\sigma_{j+1} \log\frac{\sigma_j}{\sigma_{j+1}}\Bigr].
  \end{equation}
\end{lemma}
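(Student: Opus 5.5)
We use \Cref{thm:global_curvature_bound} with \(\rho=1\), where the clock is \(\eta=\sigma\), \(w_\sigma=v_\sigma\), and \(\mathcal A_1(\sigma,x)=\partial_\sigma v_\sigma(x)+\nabla_x v_\sigma(x)\,v_\sigma(x)\). The bound then reads \(\|\mathcal A_1(s,X_s)\|_{\mathbb L_2}\le B_{d,1}\,s^{-1}\). Along an exact trajectory \(x_s=\Psi_{s,\sigma_j}(x)\), the chain rule gives \(\frac{\dd}{\dd s}v_s(x_s)=\mathcal A_1(s,x_s)\). The oracle step \eqref{eq:oracle_edm_predictor} is exactly the Euler step in \(\sigma\): since \(a_j\sigma_j=\sigma_j-\sigma_{j+1}\),
\[
\Phi_j(x)=x-(\sigma_j-\sigma_{j+1})\,v_{\sigma_j}(x).
\]
Integrating backward from \(\sigma_j\) to \(\sigma_{j+1}\) gives
\[
\Psi_j(x)-\Phi_j(x)=-\int_{\sigma_{j+1}}^{\sigma_j}\bigl(v_r(x_r)-v_{\sigma_j}(x)\bigr)\,\dd r .
\]
Writing \(v_r(x_r)-v_{\sigma_j}(x)=-\int_r^{\sigma_j}\mathcal A_1(s,x_s)\,\dd s\) and applying Fubini yields the Peano form
\[
\Psi_j(x)-\Phi_j(x)=\int_{\sigma_{j+1}}^{\sigma_j}(s-\sigma_{j+1})\,\mathcal A_1(s,x_s)\,\dd s .
\]

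Next, evaluate this identity at \(x=X_{\sigma_j}\) and take \(\mathbb L_2\) norms. Minkowski's integral inequality moves the norm inside the integral. By \eqref{eq:pushforward_sigma}, \(x_s=\Psi_{s,\sigma_j}(X_{\sigma_j})\) has law \(\mu_s\), so \(\|\mathcal A_1(s,x_s)\|_{\mathbb L_2}=\|\mathcal A_1(s,X_s)\|_{\mathbb L_2}\le B_{d,1}/s\). Hence
\[
\delta_j\le B_{d,1}\int_{\sigma_{j+1}}^{\sigma_j}\frac{s-\sigma_{j+1}}{s}\,\dd s
= B_{d,1}\Bigl[(\sigma_j-\sigma_{j+1})-\sigma_{j+1}\log\frac{\sigma_j}{\sigma_{j+1}}\Bigr],
\]
which is \eqref{eq:def_Delta_k_curvature_bound}. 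Note that only the law of \(x_s\) enters, not a Lipschitz bound on the flow.

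The main obstacle is justifying these manipulations for an arbitrary \(q\in\mathcal P_2\).
\begin{enumerate}
  \item We need \(s\mapsto v_s(x_s)\) to be \(C^1\) on the compact interval \([\sigma_{j+1},\sigma_j]\subset(0,\infty)\). This holds because \(p_s\) is a Gaussian convolution, hence smooth and positive jointly in \((s,x)\), so \(v\) is smooth.
  \item We need \(\Phi_j(X_{\sigma_j})\) and \(\Psi_j(X_{\sigma_j})\) to lie in \(\mathbb L_2\). This uses \(q\in\mathcal P_2\): \(D_{\sigma_j}(X_{\sigma_j})\) is a conditional expectation of \(X_0\), and \(\mu_{\sigma_{j+1}}\in\mathcal P_2\).
  \item We need measurability in \((s,\omega)\) so that Fubini and Minkowski apply. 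This follows from joint continuity of \((s,x)\mapsto\mathcal A_1(s,\Psi_{s,\sigma_j}(x))\).
\end{enumerate}
If pointwise integrability in \(s\) were a concern, the \(\mathbb L_2\) bound from \Cref{thm:global_curvature_bound} is already integrable on the interval, and Minkowski's inequality then makes the argument rigorous. I expect the computation itself to be routine.
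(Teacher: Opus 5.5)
Your proposal is correct and is essentially the paper's own proof. Both arguments write the oracle step as the frozen Euler step \(\Phi_j = \Id - \ell_j v_{\sigma_j}\) and take the integral Taylor remainder along the exact trajectory. They then apply Minkowski's integral inequality, use \(x_s\sim\mu_s\) from the pushforward identity, and invoke the \(\rho=1\) case of \Cref{thm:global_curvature_bound}. Your integration variable \(s\) is just the paper's \(\sigma_j-r\), and your justifications (joint continuity for measurability, the integrable \(\mathbb L_2\) bound on the compact positive-noise interval) are the same as the paper's.
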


The proof is based on a second-order expansion. Indeed, the integral remainder along the exact flow,
together with \Cref{thm:global_curvature_bound}, implies
\begin{equation*}
  \delta_j \le B_{d,1}\int_{\sigma_{j+1}}^{\sigma_j} \frac{s-\sigma_{j+1}}{s}\,\dd s =\Delta_j.
\end{equation*}
This bound holds for every noise schedule and depends on the target distribution only through the
dimension.

On the refinement family \(\Gsched\), expanding in the relative clock increment \(h_K/\eta_j\) and
setting \(\sigma_j=\eta_j^{1/\rho}\) yields
\begin{equation}\label{eq:local_defect_clock_expansion}
  \Delta_{j,K}
  =\frac{B_{d,1}}{2\rho^2}\,h_K^2\sigma_j^{1-2\rho}
  \bigg[1+O_\rho\bigg(\frac{h_K}{\eta_j}\bigg)\bigg].
\end{equation}
Thus the bound on the local bias is quadratic in the clock step, with a noise dependence
\(\sigma_j^{1-2\rho}\). In particular, \(\rho = 1/2\), or \(\rhoEDM=2\), makes the leading bound
independent of the noise level. This choice balances the universal bound across noise scales; it
need not minimize the actual local bias on practical grids. The mixture experiment in
\Cref{sec:numerical_evidence}, for example, ranks the clocks in the reverse order when measured
biases are compared (\Cref{tab:control_clock}). For the commonly used choice \(\rho=1/7\), or
\(\rhoEDM=7\), the leading bound instead contains \(\sigma_j^{5/7}\) and is %
largest at high noise. The contraction mechanism studied in \Cref{sec:two_regime_stability} can
reduce the contribution of these early biases to the final error.

\section{EDM Preconditioning}
\label{sec:preconditioning_residual}

We now examine how the learned network enters the other two terms of \cref{eq:law_level_one_step}.
The EDM parametrization expresses the denoiser approximation error as a normalized regression error
and provides an explicit sufficient condition for contraction of the learned predictor. Proofs and
complements are provided in \Cref{app:preconditioning}. We first state the standing assumption under
which the learned sampler laws and Wasserstein errors introduced in~\Cref{sec:one_step_error} are
well defined.
\begin{assumption}
  \label{ass:wellposed}
  On every schedule in \(\Gsched\), each \(\Phihat_j\) is locally Lipschitz and has at most linear
  growth: for some \(A_j<\infty\), \(\|\Phihat_j(x)\|\le A_j(1+\|x\|)\) for all \(x\in\R^d\).
\end{assumption}
Since the initial distribution has finite second moment, this assumption ensures that
\(\widehat\mu_j\in\mathcal P_2(\R^d)\) and \(e_j<\infty\).

\paragraph{EDM parametrization.}
For the remainder of the analysis, we assume that the target distribution is centered, \(\mathbb
E[X_0]=0\), and write \(\sigmad^2 =\mathbb E\|X_0\|^2/d>0\). Set
\(\alpha(\sigma)={\sigmad^2}/({\sigma^2 +
\sigmad^2})\) and
\begin{equation}\label{eq:edm_coefficients}
  \beta(\sigma)={\sigma\sigmad} \cin(\sigma), \ \ \cin(\sigma)=(\sigma^2
  + \sigmad^2)^{-1/2}\!\!\!\!.
\end{equation}
These are the coefficients \(\cskip\), \(\cout\), and \(\cin\) of
\cite[Sec.~5]{karras_elucidating_2022}. Writing \(\alpha_j=\alpha(\sigma_j)\), \(\beta_j =
\beta(\sigma_j)\), and \(c_j=\cin(
\sigma_j)\), the learned denoiser is
\begin{equation}\label{eq:edm_learned_denoiser_step}
  \Dhat_{\sigma_j}(x) =\alpha_j x +\beta_j \,\Fhat_j\bigl(c_j x\bigr),
\end{equation}
where \(\Fhat_j=\Fhat_{\sigma_j}\) is the network evaluated at noise level \(\sigma_j\).
Substituting this expression into \cref{eq:learned_edm_predictor} gives
\begin{equation}\label{eq:edm_learned_map_explicit}
  \Phihat_j(x) = \bigl(1 - a_j(1 - \alpha_j)\bigr) x + a_j\beta_j \,\Fhat_j\bigl(c_j x\bigr).
\end{equation}
Our analysis uses this form without imposing a particular network architecture.

\paragraph{Normalized regression error.}
The exact denoiser can be written in the same form, which defines its normalized residual function
\(F_\sigma\):
\begin{equation*}
  D_\sigma(x) = \alpha(\sigma)x + \beta(\sigma) \, F_\sigma\bigl(\cin(\sigma) x\bigr).
\end{equation*}
We write \(F_j=F_{\sigma_j}\). Define the normalized input and regression target by \(\widetilde
X_\sigma=\cin (\sigma) X_\sigma\) and \(\mathcal T_\sigma = \big(X_0 - \alpha(\sigma) X_\sigma
\big)/
{\beta(\sigma)}\). Then \(F_\sigma(\widetilde
X_\sigma) = \mathbb E[\mathcal T_\sigma\mid
\widetilde X_\sigma]\).
Thus \(F_\sigma\) is the optimal predictor of \(\mathcal T_\sigma\) from \(\widetilde X_\sigma\)
under squared loss. Let
\begin{equation*}
  \Rres(\sigma) =\|\Fhat_\sigma(\widetilde X_\sigma) -F_\sigma(\widetilde X_\sigma)\|_{\mathbb L_2},
  \quad \Rres_j=\Rres(\sigma_j).
\end{equation*}
The orthogonality of conditional expectation yields
\begin{equation}\label{eq:edm_regr_pyth}
  \Rres_j^2 = \|\Fhat_j(\widetilde X_{\sigma_j}) -\mathcal T_{\sigma_j}\|^2_{\mathbb L_2}
  - \|F_j(\widetilde X_{\sigma_j}) - \mathcal T_{\sigma_j} \|^2_{\mathbb L_2}.
\end{equation}
Consequently, \(\Rres_j^2\) is the excess population risk of the normalized regression problem.
Moreover,
\begin{equation}\label{eq:edm_den_error_residual}
  \|\Dhat_{\sigma_j}-D_{\sigma_j}\|_{\mathbb L_2(\mu_j)} =\beta_j\Rres_j,
\end{equation}
so the learning term in \cref{eq:law_level_one_step} is exactly \(a_j\beta_j\Rres_j\). The choice of
coefficients gives \(\mathbb E\|\mathcal T_\sigma\|^2/d=1\), which motivates measuring the
regression error relative to \(\sqrt d\).
\begin{assumption}
  \label{ass:uniform_residual_accuracy}
  There is \(\Cres\geqslant0\) such that \(\Rres(\sigma)\leqslant\sqrt d\,\Cres\), for all \(\sigma
  \in [\underline \sigma,
  \overline\sigma]\).
\end{assumption}
The first term on the right-hand side of \eqref{eq:edm_regr_pyth} can be estimated from noisy
validation pairs. The second term, the Bayes risk, is generally unknown, so the excess risk cannot
be recovered from the prediction loss. The prediction loss upper-bounds \(\mathcal R_j^2\), but
checking that \(\Cres\) is small over all noise levels remains difficult in practice.

\paragraph{EDM stability margin.}
The same parametrization yields a sufficient condition for contraction. Suppose \(\Fhat_j\) is
globally \(\LF{j}\)-Lipschitz with respect to its normalized input, and define
\begin{equation}\label{eq:def_b_k_edm}
  b_j =1-\alpha_j-\beta_jc_j\LF{j} =\ c_j^2(\sigma_j^2-\sigma_j\sigmad\LF{j}).
\end{equation}
Since \(1-a_j(1-\alpha_j)>0\), \cref{eq:edm_learned_map_explicit} implies
\begin{equation}\label{eq:edm_learned_map_lip_margin}
  \operatorname{Lip}(\Phihat_j)\leqslant1-a_jb_j.
\end{equation}
Thus \(b_j>0\) is equivalent to \(\LF{j}<\sigma_j/\sigmad\) and certifies strict contraction. High
noise, therefore, permits larger network Lipschitz constants while retaining this contraction
guarantee. When \(b_j\leqslant0\), the bound does not certify strict contraction; the learned
predictor may nevertheless contract the laws it transports.

\paragraph{Assembled recursion.}
Combining the bound from \Cref{lem:local_euler_defect} with \eqref{eq:edm_den_error_residual}, we
obtain
\begin{equation}\label{eq:one_step_learned_map_bound_integrated_curvature}
  e_{j+1} \leqslant\gamma_je_j+\Delta_j+a_j\beta_j\Rres_j.
\end{equation}
Under \Cref{ass:uniform_residual_accuracy}, the learning contribution is at most \(a_j\beta_j\sqrt
d\,\Cres\).

To summarize this section, the residual \(\Rres_j\) controls the learning term, while a positive
margin \(b_j\) certifies contraction. The recursion itself requires no contraction assumption. The
next section combines this contraction certificate at high noise with control of the amplification
accumulated over the remaining steps.

\section{High-Noise Contraction \ifarxivlayout and\else \&\fi{} Realized Amplification}
\label{sec:two_regime_stability}

We now control the propagation term in \cref{eq:one_step_learned_map_bound_integrated_curvature}. At
high noise, the EDM margin provides a sufficient condition for contraction. Over the remaining
steps, we retain the realized amplification. Its cumulative value is the quantity used in the global
bound; directional and worst-case estimates provide upper bounds on it. Proofs and complements are
provided in \Cref{app:propagation}.

\paragraph{High-noise contraction.}
Fix \(\sighi>0\) and define
\begin{equation}\label{eq:def_split_index}
  \Khi=\max\{j\in\{0,\ldots,K\}: \sigma_j\geqslant\sighi\},
\end{equation}
with \(\Khi=0\) if the set is empty. Steps \(j<\Khi\) form the high-noise block; the remaining steps
form the low-noise block.

\begin{assumption}[High-noise transport contraction]
  \label{ass:high_noise_contraction}
  There is \(\bhi\in(0,1)\), independent of \(K\), such that on every \(G_K\in\Gsched\), for
  \(j<\Khi\),
  \begin{equation}\label{eq:high_noise_transport_contraction}
    \Wtwo\bigl((\Phihat_j)_\#\mu_j, (\Phihat_j)_\#\widehat\mu_j\bigr) \leqslant(1-\bhi a_j)e_j.
  \end{equation}
\end{assumption}

\begin{proposition}
  \label{prop:high_noise_stability_certificate}
  If, on every schedule, each high-noise network \(\Fhat_j\) is globally \(\LF{j}\)-Lipschitz in its
  normalized input and \(b_j\geqslant\bhi\) for some \(\bhi\in(0,1)\) independent of \(K\), then
  \Cref{ass:high_noise_contraction} holds.
\end{proposition}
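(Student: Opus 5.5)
The argument goes through the Lipschitz bound \eqref{eq:edm_learned_map_lip_margin} and a coupling argument for the pushforward. Fix a schedule \(G_K\in\Gsched\) and an index \(j<\Khi\).

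First I would re-derive \eqref{eq:edm_learned_map_lip_margin} from \eqref{eq:edm_learned_map_explicit}. For \(x,y\in\R^d\), the triangle inequality and the Lipschitz property of \(\Fhat_j\) in its normalized input give
\begin{equation*}
  \|\Phihat_j(x)-\Phihat_j(y)\| \le \bigl(1-a_j(1-\alpha_j)\bigr)\|x-y\| + a_j\beta_j c_j \LF{j}\|x-y\|
  = (1-a_jb_j)\|x-y\|.
\end{equation*}
The absolute value of the linear coefficient can be dropped because \(1-a_j(1-\alpha_j)>0\). This holds since \(a_j\in(0,1)\) and \(\alpha_j\in(0,1)\).

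Next I would pass to Wasserstein distance. Both \(\mu_j\) and \(\widehat\mu_j\) lie in \(\mathcal P_2(\R^d)\); for \(\widehat\mu_j\) this follows from \Cref{ass:wellposed}. Hence an optimal coupling \((X,Y)\) with \(\|X-Y\|_{\mathbb L_2}=e_j\) exists. The pair \((\Phihat_j(X),\Phihat_j(Y))\) is then a coupling of \((\Phihat_j)_\#\mu_j\) and \((\Phihat_j)_\#\widehat\mu_j\), so
\begin{equation*}
  \Wtwo\bigl((\Phihat_j)_\#\mu_j,(\Phihat_j)_\#\widehat\mu_j\bigr)\le \|\Phihat_j(X)-\Phihat_j(Y)\|_{\mathbb L_2}\le (1-a_jb_j)\,e_j .
\end{equation*}
If \(e_j=0\), both sides vanish and there is nothing to prove.

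Finally, the hypothesis \(b_j\ge\bhi\) together with \(a_j\ge0\) gives \(1-a_jb_j\le 1-\bhi a_j\). This is \eqref{eq:high_noise_transport_contraction}, with \(\bhi\in(0,1)\) independent of \(K\) by assumption.

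There is no substantial obstacle. The only points needing care are the positivity of the linear coefficient, so that no absolute value spoils the bound, and the existence of an optimal coupling. A near-optimal coupling followed by a limit would also suffice for the latter.
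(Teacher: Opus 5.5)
Your proof is correct and follows the paper's own route: the paper likewise obtains \(\operatorname{Lip}(\Phihat_j)\le 1-a_jb_j\) from \eqref{eq:edm_learned_map_explicit}, using \(1-a_j(1-\alpha_j)>0\). It then pushes an optimal coupling of \((\mu_j,\widehat\mu_j)\) through \(\Phihat_j\) and uses \(b_j\ge\bhi\) to reach \eqref{eq:high_noise_transport_contraction}.
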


This follows by applying \eqref{eq:edm_learned_map_lip_margin} to an optimal coupling. For the
oracle with \(\operatorname{supp}q\subseteq B(0,R)\), the transport assumption holds above
\(\sighi=R/\sqrt{1-\bhi}\) (\Cref{subsec:oracle_contraction}).

\paragraph{Realized amplification.}
On the remaining steps, we impose no contraction assumption. We now accumulate the logarithms of the
realized factors \(\gamma_j\).

\begin{definition}[Low-noise log-amplification]
  \label{def:low_noise_budget}
  Set \(\gambar_j=\max\{1,\gamma_j\}\). On \(G_K\in\Gsched\), define
  \(\Lambda_K=\sum_{j=\Khi}^{K-1}\log\gambar_j\).
\end{definition}

Under \Cref{ass:wellposed}, \(\Lambda_K<\infty\) on each schedule, and
\(\prod_{i=r}^{s}\gambar_i\leqslant e^{\Lambda_K}\) whenever \(\Khi\leqslant r\leqslant s<K\). Thus
\(e^{\Lambda_K}\) bounds propagation over any part of the low-noise block. Its growth under
refinement enters the global error bound. The estimates below provide alternative ways to control
this same quantity.

\paragraph{Directional bound.}
\label{subsec:low_noise_certificate}

Write \(\ell_j=\sigma_j-\sigma_{j+1}=a_j\sigma_j\) and
\(\vhat_\sigma(x)=(x-\Dhat_\sigma(x))/\sigma\). For a low-noise step, let \((U_j,V_j)\) optimally
couple \((\mu_j,\widehat\mu_j)\), and set \(\mathsf v_j=\vhat_{\sigma_j}(U_j)
-\vhat_{\sigma_j}(V_j)\). Define the positive directional expansion rate and the squared secant
ratio by
\begin{equation}\label{eq:directional_step_constants}
  \Epos[\OT]{j} =\frac{(\mathbb E\ip{V_j-U_j}{ \mathsf v_j})_+}{e_j^2}, \quad \Qsec[\OT]{j}
  =\frac{\mathbb E\|\mathsf v_j\|^2}{e_j^2}.
\end{equation}
Set both to zero when \(e_j=0\). The first quantity measures expansion along the displacement in the
optimal coupling; the second controls the quadratic correction from the finite step.

\begin{proposition}
  \label{prop:law_level_sufficiency}
  For every low-noise step,
  \begin{align}
    \gamma_j^2 &\leqslant1+2\ell_j\Epos[\OT]{j}
    +\ell_j^2\Qsec[\OT]{j}, \label{eq:directional_gamma_chain} \\
    \log\gambar_j &\leqslant\ell_j\Epos[\OT]{j}
    +\tfrac12\ell_j^2\Qsec[\OT]{j}. \label{eq:law_level_stepwise_log}
  \end{align}
  Consequently, for any choice of optimal couplings,
  \begin{equation}\label{eq:law_level_summed_condition}
    \Lambda_K\leqslant\Lambda_K^{\mathrm{dir}} \coloneqq \sum_{j=\Khi}^{K-1} \Bigl(\ell_j\Epos[\OT]{j}
    +\tfrac12\ell_j^2\Qsec[\OT]{j}\Bigr).
  \end{equation}
\end{proposition}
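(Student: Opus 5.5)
The plan is to bound the numerator of \(\gamma_j\) in \eqref{eq:def_raw_amplification} by the cost of one explicit coupling, namely the image under \(\Phihat_j\) of the optimal coupling \((U_j,V_j)\) of \((\mu_j,\widehat\mu_j)\). First rewrite the learned predictor in velocity form. From \eqref{eq:learned_edm_predictor} and the definition \(\vhat_\sigma(x)=(x-\Dhat_\sigma(x))/\sigma\),
\[
\Phihat_j(x)=x-a_j\bigl(x-\Dhat_{\sigma_j}(x)\bigr)=x-a_j\sigma_j\vhat_{\sigma_j}(x)=x-\ell_j\vhat_{\sigma_j}(x).
\]
Since \((\Phihat_j(U_j),\Phihat_j(V_j))\) is a coupling of \((\Phihat_j)_\#\mu_j\) and \((\Phihat_j)_\#\widehat\mu_j\), we get
\[
\Wtwo^2\bigl((\Phihat_j)_\#\mu_j,(\Phihat_j)_\#\widehat\mu_j\bigr)\le\mathbb E\bigl\|(V_j-U_j)+\ell_j\mathsf v_j\bigr\|^2 .
\]
(The sign of the inner-product term depends on the orientation of \(\mathsf v_j\); with \(\mathsf v_j=\vhat(U_j)-\vhat(V_j)\) the difference \(\Phihat_j(V_j)-\Phihat_j(U_j)\) equals \((V_j-U_j)+\ell_j\mathsf v_j\), which matches \eqref{eq:directional_step_constants}.)

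Expanding the square gives
\[
\mathbb E\|V_j-U_j\|^2+2\ell_j\mathbb E\ip{V_j-U_j}{\mathsf v_j}+\ell_j^2\mathbb E\|\mathsf v_j\|^2 .
\]
Here \(\mathbb E\|V_j-U_j\|^2=e_j^2\) by optimality of the coupling. Bound the middle term by its positive part, divide by \(e_j^2\) when \(e_j>0\), and \eqref{eq:directional_gamma_chain} follows. All expectations are finite: \(\mu_j\) and \(\widehat\mu_j\) lie in \(\mathcal P_2\), and under \Cref{ass:wellposed} \(\Phihat_j\) has linear growth, so \(\vhat_{\sigma_j}\) does too. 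When \(e_j=0\), the convention \(\gamma_j=1\) and the zero values of both constants make the inequality trivial.

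For \eqref{eq:law_level_stepwise_log}, write \(x=2\ell_j\Epos[\OT]{j}+\ell_j^2\Qsec[\OT]{j}\ge0\). Then \(\gambar_j^2\le\max\{1,1+x\}=1+x\), so \(\log\gambar_j\le\tfrac12\log(1+x)\le\tfrac12x\). Summing over \(j=\Khi,\dots,K-1\) yields \eqref{eq:law_level_summed_condition}. Since the argument used nothing specific about the chosen optimal coupling, the bound holds for every such choice, and existence of an optimal coupling in \(\mathcal P_2\) is standard.

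The only delicate step is the orientation bookkeeping in the cross term and checking integrability; everything else is a direct expansion. There is no real obstacle beyond confirming that the linear-growth assumption makes \(\mathbb E\|\mathsf v_j\|^2<\infty\).
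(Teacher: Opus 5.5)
Your proof is correct and follows essentially the same route as the paper: push the optimal input coupling through \(\Phihat_j=\Id-\ell_j\vhat_{\sigma_j}\), expand the squared difference and bound the signed cross term by its positive part, then apply \(\log(1+x)\le x\) to \(\gambar_j^2\) and sum over the low-noise block. Your explicit check that the linear growth of \(\Phihat_j\) makes \(\mathbb E\|\mathsf v_j\|^2\) finite is a detail the paper leaves implicit.
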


The proof pushes the optimal input coupling through \(\Phihat_j\) and expands
\(\|\Phihat_j(U_j)-\Phihat_j(V_j)\|^2\) using \(\Phihat_j(x)=x-\ell_j\vhat_{\sigma_j}(x)\). Unlike
\(\Lambda_K\), the directional bound uses the transported input coupling, which need not be optimal
for the output laws. Therefore, its cost only upper-bounds the Wasserstein distance defining
\(\gamma_j\).

\paragraph{Worst-case bounds.}
A direct upper bound replaces each \(\gamma_j\) by the Lipschitz constant of the map:
\begin{equation}\label{eq:map_gamma_bound}
  \Lambda_K
  \leqslant \Lambda_K^{\mathrm{map}} \coloneqq \sum_{j=\Khi}^{K-1}
  \log\max\{1,\operatorname{Lip}(\Phihat_j)\}.
\end{equation}
A field-level bound instead uses the global one-sided expansion rate
\begin{equation}\label{eq:global_one_sided_expansion_rate}
  \Los{j} =\sup_{x\ne y} \frac{-\ip{x-y}{ \vhat_{\sigma_j}(x)-\vhat_{\sigma_j}(y)}} {\|x-y\|^2}.
\end{equation}
For continuously differentiable fields, this equals \(\sup_x\lambda_{\max}(
-\operatorname{Sym}\nabla\vhat_{\sigma_j}(x))\). Replacing the directional quantities by their
global bounds gives
\begin{equation}\label{eq:field_level_summed_condition}
  \Lambda_K^{\mathrm{field}} \coloneqq \sum_{j=\Khi}^{K-1} \Bigl(\ell_j(\Los{j})_+
  +\tfrac12\ell_j^2 \operatorname{Lip}(\vhat_{\sigma_j})^2\Bigr).
\end{equation}
Both worst-case bounds maximize over space, rather than depend on the particular laws
\((\mu_j,\widehat\mu_j)\). If every low-noise field is globally Lipschitz, then
\begin{equation}\label{eq:law_map_field_budget_hierarchy}
  \Lambda_K \leqslant\Lambda_K^{\mathrm{dir}} \leqslant\Lambda_K^{\mathrm{field}},\qquad \Lambda_K
  \leqslant\Lambda_K^{\mathrm{map}} \leqslant\Lambda_K^{\mathrm{field}},
\end{equation}
see \Cref{cor:field_level_certificate}. The directional and map majorants are not ordered in
general: each one can serve to bound \(\Lambda_K\) uniformly under refinement. A sufficient
condition, through the field majorant, is given in \Cref{rem:integrable_rate}.

\paragraph{Strict separation from worst-case stability.}

The worst-case bounds can be arbitrarily larger than the realized amplification. For the following
proposition only, we allow an alternative initialization \(\widehat\mu_0=\nu_0^\varepsilon\), fixed
across refinements for each \(\varepsilon\), instead of the Gaussian initialization in
\Cref{sec:one_step_error}.

\begin{proposition}
  \label{prop:strict_law_map_separation}
  Let \(d=1\), \(q=\mathcal N(0,\sigmad^2)\) with \(\sigmad>0\), and \(\sighi>\overline\sigma\).
  There is \(C\), depending only on \(\sigmad,\underline\sigma,\overline\sigma\), such that for each
  \(\varepsilon\in(0,2\sigmad)\) there are a smooth denoiser family \(\Dhat^\varepsilon\) satisfying
  \Cref{ass:wellposed} and \cref{eq:edm_learned_denoiser_step}, and an initialization
  \(\nu_0^\varepsilon\), with the following properties:
  \begin{enumerate}[
    label=\textup{(\alph*)}, leftmargin=1.6em, itemsep=0pt, topsep=0pt]
    \item
    \(\Wtwo(\nu_0^\varepsilon,\mu_{\overline\sigma})
    \leqslant C\varepsilon\) and
    \(\Rres_{j,K}^\varepsilon\leqslant C\varepsilon\);
    \item
    \(\gamma_{j,K}^\varepsilon\leqslant1\), hence \(\Lambda_K^\varepsilon=0\) for every \(K\);
    \item as \(K\) tends to infinity
    along \(\mathscr K\), we have \(\lim_K \Lambda_K^{
    \mathrm{map},\varepsilon} = \lim_K
    \Lambda_K^{\mathrm{field},\varepsilon}
    =\frac{\overline\sigma - \underline\sigma}
    {\varepsilon}+O(1)\), where \(O(1)\) is independent
    of \(\varepsilon\).
  \end{enumerate}
\end{proposition}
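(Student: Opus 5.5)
The plan is to work in $d=1$, where everything is explicit. For $q=\mathcal N(0,\sigmad^2)$ we have $\mu_\sigma=\mathcal N(0,\sigma^2+\sigmad^2)$. The exact denoiser $D_\sigma(x)=\alpha(\sigma)x$ is linear, so $F_\sigma\equiv0$. The exact flow is the linear map $\Psi_j(x)=s_jx$ with $s_j=\sqrt{(\sigma_{j+1}^2+\sigmad^2)/(\sigma_j^2+\sigmad^2)}<1$.

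\emph{Base denoiser.} I would first pick a reference learned denoiser $\Dhat^{0}$ with $\Phihat^0_j=\Psi_j$ exactly, namely $\Dhat^0_{\sigma_j}(x)=\bigl(1-(1-s_j)/a_j\bigr)x$. This is of the form \eqref{eq:edm_learned_denoiser_step} with $\Fhat_j$ linear. The difference $\Dhat^0_{\sigma_j}-D_{\sigma_j}$ is $O(a_j)$ times $x$, so it vanishes under refinement. The residual bound in (a) must hold uniformly in $K$, so the base residual must be $O(\varepsilon)$ uniformly; alternatively one simply uses $\Dhat^0=D$ and absorbs the discretization mismatch into the argument below. I flag this bookkeeping as something to check.

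\emph{Perturbation.} On top of the base, I add a smooth bump to $\vhat_\sigma$ of amplitude $O(\varepsilon)$ whose derivative reaches $-1/\varepsilon+O(1)$. Concretely, take $\vhat_\sigma=v_\sigma-\varepsilon\,\phi\bigl((x-c)/\varepsilon^2\bigr)\cdot\kappa$ with a fixed smooth compactly supported $\phi$, $\phi'(0)=1$, and $\kappa$ bounded. This gives $\Rres^\varepsilon_{j,K}\le C\varepsilon$ (because $\beta$ is bounded below on $[\underline\sigma,\overline\sigma]$) and keeps \Cref{ass:wellposed}. Moreover $\operatorname{Lip}(\Phihat_j)=1+\ell_j/\varepsilon+O(\ell_j)$ and $(\Los{j})_+=1/\varepsilon+O(1)$.

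\emph{Part (c).} Summing $\log(1+\ell_j/\varepsilon+O(\ell_j))$ and $\ell_j(\Los{j})_+ +\tfrac12\ell_j^2\operatorname{Lip}(\vhat_{\sigma_j})^2$ over the low-noise block gives the result. Since $\sighi>\overline\sigma$, we have $\Khi=0$, so the block is all steps. The limit is $\int_{\underline\sigma}^{\overline\sigma}(1/\varepsilon+O(1))\,\dd\sigma$, because $\sum_j\ell_j^2\to0$. This is routine Riemann-sum bookkeeping.

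\emph{Part (b).} This is the heart of the argument, and I would use the one-dimensional monotone structure. For $\varepsilon$ small relative to the other scales, $\Phihat_j$ is increasing. An increasing map sends the quantile (optimal) coupling $(U,V)$ of $(\mu_j,\widehat\mu_j)$ to the optimal coupling of the images. Hence
$$\gamma_j^2=\frac{\E|\Phihat_j(U)-\Phihat_j(V)|^2}{\E|U-V|^2}.$$
So it suffices that $|\Phihat_j(U)-\Phihat_j(V)|\le|U-V|$ almost surely. Away from the bump support, $\Phihat_j$ is the linear contraction $\Psi_j$ (or $\Phi_j$), so this holds. The initialization $\nu_0^\varepsilon$ is therefore chosen so that every quantile pair meeting the bump region has $U=V$. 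Concretely, $\nu_0^\varepsilon$ coincides with $\mu_{\overline\sigma}$ on a quantile band containing the bump, and differs from it by an $O(\varepsilon)$ monotone rearrangement outside the band. The bump is moved along with the band, e.g.\ centred at $c_j=\Psi_{\sigma_j,\overline\sigma}(x^*)$. The displacement would then be an $O(\varepsilon)$ tail rearrangement, giving $\Wtwo(\nu_0^\varepsilon,\mu_{\overline\sigma})\le C\varepsilon$.

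\emph{Main obstacle.} The difficulty is invariance. The band on which $\widehat\mu_j=\mu_j$ must persist under the steps even though $\Phihat_j$ differs from $\Psi_j$ on the bump. I would first try making the bump itself law-preserving for $\mu_j$ restricted to the band. Concretely, I would define $\Phihat_j$ on the band as $T_{j+1}^{-1}\circ\Psi_j\circ T_j$, conjugating by a fixed monotone rearrangement $T_j$ that equals the identity off the bump. The two laws then stay equal on the band, while the steep derivative of $T_j$ supplies the $1/\varepsilon$ Lipschitz constant. The remaining check is that this conjugated form is still representable as $x-\ell_j\vhat_{\sigma_j}(x)$ with an $O(\varepsilon)$, $K$-uniform residual. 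If exact conjugation fails, the fallback is to put the bump in the far tail, so that the mismatch is only $O(\varepsilon)$ relative, and to verify $\gamma\le1$ by direct computation.
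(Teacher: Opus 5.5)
Your proposal has a genuine gap in (b), and it is structural rather than bookkeeping. Your mechanism needs $\widehat\mu_j$ and $\mu_j$ to have the same quantile function on a band containing the bump, at \emph{every} step. In one dimension this is incompatible with a steep bump. If $\Phihat_j$ is increasing (as you assume), the post-step quantile functions are $\Phihat_j\circ Q_{\widehat\mu_j}$ and $\Psi_j\circ Q_{\mu_j}$. Since $Q_{\widehat\mu_j}=Q_{\mu_j}$ on the band, agreement after the step forces $\Phihat_j=\Psi_j$ on the band's spatial image. So the bump must vanish exactly where you placed it, and you placed it at a fixed quantile level $u^\ast$ at every step.

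The conjugation $T_{j+1}^{-1}\circ\Psi_j\circ T_j$ does not escape this. It propagates $\widehat\mu_j=(T_j^{-1})_\#\mu_j$, not $\widehat\mu_j=\mu_j$. The coupled pairs inside the bump then have $U\neq V$ and see the steep slope of $T_{j+1}^{-1}$, which is what the argument was built to avoid. The claim that ``the two laws then stay equal on the band'' is false.

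The base denoiser adds two further problems:
\begin{itemize}
\item With $\Dhat^0=D$, the step $\Phi_j(x)=\chi_jx$, where $\chi_j=1-\ell_j\sigma_j/(\sigmad^2+\sigma_j^2)$, differs from $\Psi_j(x)=s_jx$. Band equality then fails even with no bump.
\item With $\Phihat^0_j=\Psi_j$, the coefficient $1-(1-s_j)/a_j$ depends on $\sigma_{j+1}$, so it is not a single family in $\sigma$ fixed across the nested grids. Its residual is also of order $\ell_j$; on $G_{K_\circ}$ this is a fixed positive number and cannot be $\le C\varepsilon$ for all small $\varepsilon$.
\end{itemize}
Your fallback (``far tail, verify $\gamma\le1$ by direct computation'') is therefore the whole content of (b), and it is left undone. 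Separately, $\Phihat_j$ is increasing only when $\ell_j\lesssim\varepsilon$, not ``for small $\varepsilon$''. You never need monotonicity anyway: pushing any coupling through $\Phihat_j$ gives $\gamma_j^2e_j^2\le\E|\Phihat_j(U)-\Phihat_j(V)|^2$.

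The missing idea is to control $\gamma_j$ through the \emph{amplitude} of the perturbation, not by keeping the discrepancy away from its slope. Keep the oracle as base: $\Dhat^\varepsilon_\sigma(x)=D_\sigma(x)-\sigma\varepsilon\sin(x/\varepsilon^2)$ is one family in $\sigma$ with pointwise error $\sigma\varepsilon$. Then $\Phihat_j(x)=\chi_jx-\ell_j\varepsilon\sin(x/\varepsilon^2)$ with $\sigma_{j+1}/\sigma_j<\chi_j<1$.
\begin{itemize}
\item Push an optimal coupling through $\Phihat_j$ and use $|\sin u-\sin v|\le2$. This gives $\gamma_j\le\chi_j+2\ell_j\varepsilon/e_j$, hence $\gamma_j\le1$ as soon as $e_j\ge2\varepsilon(\sigmad^2+\sigma_j^2)/\sigma_j$.
\item Get that lower bound by translating the initial law by $R_\star\varepsilon$. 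The exact laws are centered, $\Wtwo$ dominates the distance between means, and the sampler mean satisfies $\bar x_{j+1}\ge\chi_j\bar x_j-\ell_j\varepsilon$. Hence $\bar x_j\ge\bigl(R_\star\underline\sigma/\overline\sigma-(\overline\sigma-\underline\sigma)\bigr)\varepsilon$.
\item Choosing $R_\star$ large, depending only on $\sigmad,\underline\sigma,\overline\sigma$, closes (b) on every grid. Part (a) is then immediate, and (c) is read off the explicit slope $\varepsilon^{-1}\cos(x/\varepsilon^2)$.
\end{itemize}
The linear part removes $(1-\chi_j)e_j$, of order $\ell_je_j$, from the discrepancy. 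The perturbation adds at most $2\ell_j\varepsilon$ whatever its slope. Keeping $e_j$ large compared with $\varepsilon$ makes the contraction win.
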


The construction combines an \(O(\varepsilon)\) oscillation with slope \(O(\varepsilon^{-1})\) and
an \(O(\varepsilon)\) translation of the initial law. The learned steps do not amplify the realized
discrepancy, although their worst-case majorants diverge as \(\varepsilon\to0\). This is a
separation across denoisers: for each fixed \(\varepsilon\), both worst-case majorants have finite
limits under refinement. The role of initialization is essential to this example;
\Cref{subsec:evidence_hierarchy} examines the default Gaussian one.

\section{Global Error Bound}
\label{sec:two_regime_propagation}
\label{sec:global_bound}

We now combine the local error bound with the propagation controls of
\Cref{sec:two_regime_stability}. Proofs and explicit constants are provided in
\Cref{app:global_bound}. In the high-noise block, the kernel \(\Pi_{r,k}=\prod_{i=r}^{k-1}(1-\bhi
a_i)\) damps errors introduced at earlier steps. In the low-noise block, every partial product of
amplification factors is bounded by \(e^{\Lambda_K}\). Empty products are set to one.

To express the accumulated discretization bound, define, for \(0<s_1\leqslant s_0\) and
\(b\geqslant0\), the scale function
\begin{equation}\label{eq:def_Xi}
  \Xi_\rho(b;s_1,s_0) =\int_{s_1}^{s_0} \Bigl(\frac{s_1}{s}\Bigr)^b s^{-\rho}\,\dd s.
\end{equation}
The factor \((s_1/s)^b\) represents damping, while \(s^{-\rho}\) comes from summing the local biases
on the clock-uniform grid. Thus \(\Xi_\rho\) is nonincreasing in \(b\).

\begin{theorem}[Global bound]
  \label{thm:global_edm_wasserstein}
  Let \Cref{ass:wellposed,ass:uniform_residual_accuracy,ass:high_noise_contraction} hold and
  \(\rho\in(0,1]\). Fix \(\thetahi,\thetalo\in(0,1)\). There are \(C_\star<\infty\), depending only
  on \(\rho\) and \(\max(\thetahi,\thetalo)\), and an effective margin \(\beff\in(0,\bhi)\),
  depending only on \(\rho,\thetahi,\bhi\), with \(\beff\to\bhi\) as \(\thetahi\to0\), such that for
  every \(G_K\in\Gsched\) with \(h_K\leqslant
  \min\{\thetahi\sighi^\rho,\thetalo
  \underline{\sigma}^\rho\}\),
  \ifarxivlayout
  \begin{equation}
    \Wtwo(\widehat\mu_K,q) \leqslant e^{\Lambda_K} \bigl[\Pi_{0,\Khi}e_0 +\mathrm{Disc}_K
    +\mathrm{Learn}_K\bigr] +\sigma_K\sqrt d, \label{eq:transport_global_bound}
  \end{equation}
  where
  \begin{align*}
    \mathrm{Disc}_K &=C_\star\Bd[1]\,h_K \bigl[\Xi_\rho(0;\sigma_K,\sigma_{\Khi})
    +\Xi_\rho(\beff; \sigma_{\Khi}, \sigma_0)\bigr], \\
    \mathrm{Learn}_K &=\sqrt d\,\Cres\,\sigmad \Bigl[\frac{1-\Pi_{0,\Khi}}{\bhi}
    +\log\Bigl(1+\frac{2\sigma_{\Khi}}{\sigmad}\Bigr)\Bigr].
  \end{align*}
  \else
  \begin{align}
    \Wtwo(\widehat\mu_K,q) \leqslant{} e^{\Lambda_K} \bigl[\Pi_{0,\Khi}e_0
    +\mathrm{Disc}_K&+\mathrm{Learn}_K\bigr] \nonumber \\
    &+\sigma_K\sqrt d, \label{eq:transport_global_bound}
  \end{align}
  where
  \begin{align*}
    \mathrm{Disc}_K &=C_\star\Bd[1]\,h_K \bigl[\Xi_\rho(0;\sigma_K,\sigma_{\Khi}) \\
    &\qquad\qquad\qquad\qquad\qquad +\Xi_\rho(\beff; \sigma_{\Khi}, \sigma_0)\bigr], \\
    \mathrm{Learn}_K &=\sqrt d\,\Cres\,\sigmad \Bigl[\frac{1-\Pi_{0,\Khi}}{\bhi}
    +\log\Bigl(1+\frac{2\sigma_{\Khi}}{\sigmad}\Bigr)\Bigr].
  \end{align*}
  \fi
\end{theorem}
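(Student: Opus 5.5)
We unroll the assembled recursion \eqref{eq:one_step_learned_map_bound_integrated_curvature} separately on the two blocks and then glue them. First we bound the final error \(e_K\); the terminal term \(\sigma_K\sqrt d\) is added at the end via \eqref{eq:edm_final_error_with_terminal_bias}.

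\textbf{High-noise block.} For \(j<\Khi\), \Cref{ass:high_noise_contraction} gives \(\gamma_j\le 1-\bhi a_j\). Iterating therefore yields
\begin{equation*}
e_{\Khi}\le \Pi_{0,\Khi}e_0+\sum_{j<\Khi}\Pi_{j+1,\Khi}\bigl(\Delta_j+a_j\beta_j\sqrt d\,\Cres\bigr).
\end{equation*}
For the learning part we use \(\beta_j\le\sigmad\). This gives \(\sum_{j<\Khi}\Pi_{j+1,\Khi}a_j\le (1-\Pi_{0,\Khi})/\bhi\), since \(\Pi_{j+1,\Khi}-\Pi_{j,\Khi}=\bhi a_j\Pi_{j+1,\Khi}\) and the sum telescopes.

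For the discretization part we convert the discrete kernel into the continuous damping \((\sigma_{\Khi}/s)^{\beff}\):

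\textbf{Kernel comparison.} Using \(\log(1-\bhi a_i)\le-\bhi a_i\) and \(a_i=(\sigma_i-\sigma_{i+1})/\sigma_i\ge\log(\sigma_i/\sigma_{i+1})\,(1-O(a_i))\), we aim for \(\Pi_{j+1,\Khi}\le C(\sigma_{\Khi}/\sigma_{j})^{\beff}\).

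\textbf{Bias bound.} By \eqref{eq:local_defect_clock_expansion}, \(\Delta_j\le C\Bd[1]h_K^2\sigma_j^{1-2\rho}\). Moreover \(h_K^2\sigma_j^{1-2\rho}\) is comparable to \(h_K\int_{\sigma_{j+1}}^{\sigma_j}s^{-\rho}\,\dd s\), because \(\dd\eta=\rho s^{\rho-1}\dd s\) and one clock step of length \(h_K\) gives \(\sigma_j-\sigma_{j+1}\approx h_K\sigma_j^{1-\rho}/\rho\).

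Both comparisons hold uniformly because \(h_K/\eta_j\le\thetahi\) on this block: the condition \(h_K\le\thetahi\sighi^\rho\) and \(\eta_j\ge\sighi^\rho\) give exactly this. The price is a shrink \(\bhi\to\beff\), with \(\beff\to\bhi\) as \(\thetahi\to0\), and a constant depending on \(\rho,\thetahi\). The Riemann sum then becomes \(C_\star\Bd[1]h_K\,\Xi_\rho(\beff;\sigma_{\Khi},\sigma_0)\).

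\textbf{Low-noise block.} For \(\Khi\le j<K\) we use \(\gamma_j\le\gambar_j\). Every partial product is then at most \(e^{\Lambda_K}\), so
\begin{equation*}
e_K\le e^{\Lambda_K}\Bigl[e_{\Khi}+\sum_{j\ge\Khi}\bigl(\Delta_j+a_j\beta_j\sqrt d\,\Cres\bigr)\Bigr].
\end{equation*}
The biases are summed exactly as above, with no damping and with \(h_K/\eta_j\le\thetalo\); this gives \(C_\star\Bd[1]h_K\,\Xi_\rho(0;\sigma_K,\sigma_{\Khi})\).

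The learning sum is bounded by comparison with an integral. Since \(a_j\beta_j=(\sigma_j-\sigma_{j+1})\sigmad\,\cin(\sigma_j)\) and \(\cin\) is decreasing, the sum is at most
\begin{equation*}
\sigmad\sum_{j}(\sigma_j-\sigma_{j+1})\,(\sigma_{j+1}^2+\sigmad^2)^{-1/2}\cdot\frac{\cin(\sigma_j)}{\cin(\sigma_{j+1})}.
\end{equation*}
We would rather bound it crudely by \(\int_0^{\sigma_{\Khi}}(s^2+\sigmad^2)^{-1/2}\dd s\) plus a one-step overshoot. Then we use \(\operatorname{arsinh}(x)\le\log(1+2x)\) to get \(\sigmad\log(1+2\sigma_{\Khi}/\sigmad)\); the factor \(2\) should absorb the discretization slack. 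Finally, since \(e^{\Lambda_K}\ge1\), we also multiply the high-noise terms by \(e^{\Lambda_K}\) to match \eqref{eq:transport_global_bound}.

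\textbf{Main obstacle.} We expect the delicate step to be the discrete-to-continuous kernel comparison with a controlled effective margin \(\beff\) independent of \(K\). This requires uniform control of \(a_j\) and of the ratios \(\sigma_j/\sigma_{j+1}\) in terms of \(\thetahi\) alone. Concretely, \(\sigma_j/\sigma_{j+1}=(\eta_j/\eta_{j+1})^{1/\rho}\le(1-\thetahi)^{-1/\rho}\). One also has to keep the constants depending only on \(\rho\) and \(\max(\thetahi,\thetalo)\).
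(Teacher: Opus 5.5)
Your route is the paper's. You unroll the high-noise recursion with the factor \(1-\bhi a_j\), telescope the learning terms after \(\beta_j\le\sigmad\), replace the kernel by a power of the noise ratio, and compare the bias sum with \(\Xi_\rho\). You then close the low-noise block by bounding every partial product of the \(\gambar_i\) by \(e^{\Lambda_K}\).

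The step you call the main obstacle is short. Write \(r_i=\log(\sigma_i/\sigma_{i+1})\); then \(r_i\le r_\theta=\rho^{-1}\log((1-\theta)^{-1})\), and \(r\mapsto(1-e^{-r})/r\) is decreasing. Hence \(a_i=1-e^{-r_i}\ge\kappa(\theta)r_i\), and \(1-x\le e^{-x}\) gives \(\Pi_{j+1,\Khi}\le(\sigma_{\Khi}/\sigma_{j+1})^{\beff}\) with \(\beff=\bhi\kappa(\thetahi)\) and no prefactor.

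Also, \eqref{eq:local_defect_clock_expansion} is only asymptotic. The uniform bound \(\Delta_j\le C_{\rho,\theta}B_{d,1}h_K^2\sigma_j^{1-2\rho}\) comes from writing \(\Delta_j=B_{d,1}\sigma_j g(a_j)\) with \(g(a)=a+(1-a)\log(1-a)\le a^2/(2(1-a))\). You then need Bernoulli's inequality \(a_j\le h_K/(\rho\eta_j)\), which is where \(\rho\le1\) is used.

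The step that does not hold as written is the low-noise learning sum.

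\begin{itemize}
\item Your displayed bound is in fact an identity: the ratio \(\cin(\sigma_j)/\cin(\sigma_{j+1})\) just restores \(\cin(\sigma_j)\).
\item The overshoot you then try to absorb appears only if you drop that ratio, i.e.\ evaluate \(\cin\) at the lower end of each step.
\item The factor \(2\) cannot absorb it. It is fully consumed by \(\operatorname{arsinh}x=\log(x+\sqrt{1+x^2})\le\log(1+2x)\), which rests on \(\sqrt{1+x^2}\le1+x\). For \(x=\sigma_{\Khi}/\sigmad\) large, the leftover \(\sigmad[\log(1+2x)-\operatorname{arsinh}x]\) tends to zero. So nothing guarantees it dominates a Riemann-sum overshoot that, at fixed \(\thetalo\), is bounded but not small.
\end{itemize}

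The fix is that there is no overshoot at all. Since \(a_j\beta_j=\sigmad\ell_j\cin(\sigma_j)\), \(\cin\) is decreasing, and \(\sigma_j\) is the upper end of \([\sigma_{j+1},\sigma_j]\), each term satisfies \(a_j\beta_j\le\sigmad\int_{\sigma_{j+1}}^{\sigma_j}\cin(s)\,\dd s\). Summing gives at most \(\sigmad\operatorname{arsinh}(\sigma_{\Khi}/\sigmad)\le\sigmad\log(1+2\sigma_{\Khi}/\sigmad)\), with no mesh condition. This is exactly the low-noise part of \(\mathrm{Learn}_K\). With this correction your argument proves the theorem.
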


The four contributions are initialization, discretization (undamped low-noise and damped high-noise
biases), learning, and terminal smoothing bias.

\paragraph{Role of high-noise contraction.}
At fixed endpoints, both scale functions are uniformly bounded under refinement. The discretization
contribution is therefore \(O(h_Ke^{\Lambda_K})\). High-noise contraction reduces its prefactor by
damping early biases. For \(\beff>1-\rho\),
\begin{equation*}
  \Xi_\rho(\beff;\sigma,\sigma_0) \leqslant \frac{\sigma^{1-\rho}}{\beff+\rho-1},
\end{equation*}
so the high-noise scale function is bounded independently of \(\sigma_0\) at fixed
\(\sigma_{\Khi}\). For \(\rho=1/7\), this requires \(\beff>6/7\), which follows from \(\bhi>6/7\) on
sufficiently fine meshes; see \Cref{rem:three_regimes}.

Since \(0<\Pi_{0,\Khi}\leqslant1\), \(\mathrm{Learn}_K\) also admits an upper bound independent of
\(\sigma_0\) at fixed \(\sigma_{\Khi}\). Under the light-tail assumptions, the canonical
initialization error is \(O(\sigma_0^{-1})\); see \Cref{prop:initialization_error}.

\paragraph{Rates under refinement.}
With fixed endpoints and clock, \(h_K=O(K^{-1})\). The resulting discretization rate therefore
depends on the growth of \(\Lambda_K\).

\begin{corollary}
  \label{cor:budget_refinement_rates}
  In the setting of \Cref{thm:global_edm_wasserstein}, suppose
  \begin{equation}\label{eq:logarithmic_budget}
    \exists \budexp\in[0,1)\text{ s.t. } \sup_{G_K\in\Gsched}
    \bigl(\Lambda_K-\budexp\log K\bigr)<\infty.
  \end{equation}
  Then the discretization term in \eqref{eq:transport_global_bound} is \(O(K^{-(1-\budexp)})\). For
  the oracle denoiser with exact initialization \(\widehat\mu_0 = \mu_{\overline\sigma}\), we have
  \(\Wtwo(\widehat\mu_K,q) \leqslant \underline\sigma\sqrt d + O(K^{-(1-\budexp)})\).
\end{corollary}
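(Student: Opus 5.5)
We plan to read the corollary directly off \Cref{thm:global_edm_wasserstein}. The discretization contribution to \eqref{eq:transport_global_bound} is \(e^{\Lambda_K}\mathrm{Disc}_K\). Under \eqref{eq:logarithmic_budget} there is \(M<\infty\) such that \(\Lambda_K\le \budexp\log K+M\), hence \(e^{\Lambda_K}\le e^M K^{\budexp}\) on every \(G_K\in\Gsched\). It then suffices to show \(\mathrm{Disc}_K=O(K^{-1})\), with a constant uniform along \(\mathscr K\).

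To get this, recall \(h_K=(\overline\sigma^\rho-\underline\sigma^\rho)/K\). We bound the two scale functions uniformly in \(K\). Since \((s_1/s)^b\le1\) for \(s\ge s_1\), we have
\[
\Xi_\rho(b;s_1,s_0)\le\int_{s_1}^{s_0}s^{-\rho}\,\dd s\le\int_{\underline\sigma}^{\overline\sigma}s^{-\rho}\,\dd s,
\]
because \(\underline\sigma=\sigma_K\le\sigma_{\Khi}\le\sigma_0=\overline\sigma\). This is finite since \(\rho\le1\) and \(\underline\sigma>0\), and it does not depend on \(K\). The constant \(C_\star\) depends only on \(\rho\) and \(\max(\thetahi,\thetalo)\), which are fixed, and \(\Bd[1]\) depends only on \(d\). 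Hence \(\mathrm{Disc}_K\le C'/K\), where \(C'=2C_\star\Bd[1](\overline\sigma^\rho-\underline\sigma^\rho)\int_{\underline\sigma}^{\overline\sigma}s^{-\rho}\dd s\). The mesh condition \(h_K\le\min\{\thetahi\sighi^\rho,\thetalo\underline\sigma^\rho\}\) holds for all \(K\in\mathscr K\) beyond some \(K_1\). The finitely many coarser grids do not affect an \(O(\cdot)\) statement, since each term is finite under \Cref{ass:wellposed}. Multiplying the two bounds gives \(e^{\Lambda_K}\mathrm{Disc}_K\le e^MC'K^{-(1-\budexp)}\).

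For the oracle statement we take \(\Dhat_\sigma=D_\sigma\). Then \(\Rres\equiv0\), so \Cref{ass:uniform_residual_accuracy} holds with \(\Cres=0\) and \(\mathrm{Learn}_K=0\). Exact initialization gives \(e_0=\Wtwo(\mu_{\overline\sigma},\mu_0)=0\) (here \(\mu_0=\mu_{\sigma_0}\) in step notation), so the initialization term vanishes. Finally \(\sigma_K=\underline\sigma\), so \eqref{eq:transport_global_bound} reduces to \(\underline\sigma\sqrt d+e^{\Lambda_K}\mathrm{Disc}_K\). The previous paragraph then yields the claim.

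The only point needing care is that the corollary is stated "in the setting of" the theorem. For the oracle this means the remaining hypotheses, namely \Cref{ass:wellposed} and \Cref{ass:high_noise_contraction} for the oracle maps, are taken as given rather than proved here. The main obstacle is the uniformity of the constants: we must check that \(C_\star\), \(\beff\) and the scale-function bounds involve no \(K\)-dependence beyond \(h_K\). The crude bound \((s_1/s)^b\le1\) settles this without invoking \(\beff\).
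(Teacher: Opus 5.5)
Your proposal is correct and follows essentially the paper's proof: the envelope gives \(e^{\Lambda_K}=O(K^{\budexp})\), \(h_K=O(1/K)\), and dropping the damping factor bounds the scale functions uniformly in \(K\). The paper combines the two pieces into \(\Xi_\rho(0;\underline\sigma,\overline\sigma)\) by additivity, where you bound each separately at the cost of a factor \(2\); it also notes that the oracle claim does not actually need the high-noise contraction assumption, since taking \(\sighi>\overline\sigma\) empties the high-noise block and the envelope then applies to \(\Lambda_K\) over all steps.
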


A uniform bound on \(\Lambda_K\) gives \(\budexp=0\) and first-order decay of the discretization
contribution. Refinement alone does not remove learning error, initialization mismatch, or the
terminal smoothing bias. The envelope \eqref{eq:logarithmic_budget} is an asymptotic assumption, not
a consequence of a finite-range fit. Even when \(\Lambda_K\) is uniformly bounded, first-order
behavior may emerge only beyond practical step counts (\Cref{subsec:evidence_law_level}).

\section{Numerical Experiments}
\label{sec:numerical_evidence}

We examine whether realized amplification explains the observed discretization error, how closely
its majorants track it, and whether related stability mechanisms appear in an image model. On a
one-dimensional Gaussian mixture, we evaluate the law-level quantities by quadrature. On CIFAR--10,
we study local diagnostics of the learned field. Additional results and numerical checks are in
\Cref{app:numerics}.

\begin{figure*}[t]
  \centering
  \includegraphics[width=\textwidth]{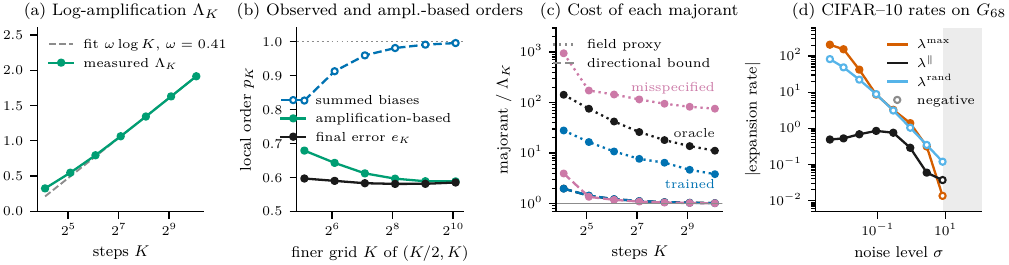}
  \caption{
  \small
  1D mixture (a--c) and pretrained CIFAR--10 EDM model (d). \textbf{(a)} Oracle with exact
  initialization: measured \(\Lambda_K\) and its fit \(\budexp\log K\). \textbf{(b)} Local orders
  \eqref{eq:empirical_order_main} of the final error \(e_K\) and of the summed local biases, against
  the amplification-based order \(1-(\Lambda_K-\Lambda_{K/2})/\log2\) computed from (a): the biases
  converge at first order, the error does not. \textbf{(c)} Directional majorant (dashed) and
  finite-domain field proxy (dotted) divided by \(\Lambda_K\), from the canonical Gaussian
  initialization; trained networks as the median over three seeds. \textbf{(d)} Energy-weighted
  rates \eqref{eq:segment_rate_main}, maximized over nine nodes \(t\), on the coarse--fine segments
  of \(G_{\auditSegK}\): worst direction \(\lambda^{\max}\), displacement \(\lambda^{\parallel}\)
  and random direction \(\lambda^{\mathrm{rand}}\), in absolute value; open markers are negative
  rates, and shading marks where the 90th-percentile empirical margin \eqref{eq:probe_margin_main}
  is positive.
  \vspace*{-14pt}
  }
  \label{fig:numerical_summary_row}
\end{figure*}

\subsection{Amplification and the Finite-Grid Rate}
\label{subsec:evidence_law_level}

We use a centered mixture of four narrow Gaussians (\(\sigmad=0.5\)), with the oracle denoiser and
exact initialization \(\mu_{\overline\sigma}\). The error relative to the exact terminal law is
therefore due only to discretization. Each law is represented by \(M=65{,}536\) quantiles. Matching
quantiles gives the optimal coupling in one dimension, allowing us to evaluate \(e_{j,K}\),
\(\gamma_{j,K}\) and \(\Lambda_K\) on grids ranging from \(K=\resKmin\) to \(K=\resKmax\).

\paragraph{Local accuracy and final error.}
For a quantity \(C_K\), define its order between successive grids by
\begin{equation}\label{eq:empirical_order_main}
  \textstyle p_K(C) =\frac{\log(C_{K/2}/C_K)}{\log 2}.
\end{equation}
The summed local biases approach first-order decay, whereas the final error has orders between
\(\resErrorOrderMin\) and \(\resErrorOrderMax\). Over the same range, \(\Lambda_K\) grows
approximately logarithmically, with fitted slope \(\budexp\simeq\resOmega\)
(\Cref{fig:numerical_summary_row}a).

An error proportional to \(K^{-1}e^{\Lambda_K}\) would have order \(1-(\Lambda_K-\Lambda_{K/2})/\log
2\). This amplification-based order agrees with the observed one on the two finest grid pairs
(\Cref{fig:numerical_summary_row}b). Thus measured amplification accounts for the slower decay on these
grids.

\paragraph{Interpretation and scope.}
This agreement is a posteriori: \(\Lambda_K\) is computed from the transported laws, rather than
predicted from the denoiser. Moreover, the slower decay is preasymptotic. For this oracle,
\(\Lambda_K\) is uniformly bounded under refinement (\Cref{rem:toy_uniform_budget}), so
\Cref{cor:budget_refinement_rates} gives an eventual first-order bound.

The measured local biases also provide a quantitative check of the propagation analysis. With exact
initialization and oracle denoising, \cref{eq:law_level_one_step} unrolls into
\begin{equation}\label{eq:recursion_bound_main}
  e_{K,K} \leqslant \sum_{j=0}^{K-1}\widetilde\delta_{j,K} \prod_{k=j+1}^{K-1}\gamma_{k,K},
\end{equation}
where the local biases \(\widetilde\delta_{j,K}\) are evaluated by quadrature. The resulting bound
overestimates the error by \(\resBoundExcessMin\)--\(\resBoundExcessMax\)\%. Additional comparisons
with the universal local bound and the high-noise initialization control are reported in
\Cref{app:numerics}.

\subsection{Directional Bounds Track Realized
Amplification}
\label{subsec:evidence_hierarchy}

We compare \(\Lambda_K\) with \(\Lambda_K^{\mathrm{dir}}\) and a finite-domain proxy of
\(\Lambda_K^{\mathrm{field}}\). We use the canonical Gaussian initialization and five denoisers: the
oracle, three trained EDM-preconditioned multilayer perceptrons, and a misspecified posterior mean.
We do not evaluate \(\Lambda_K^{\mathrm{map}}\).

At \(K=\resKmax\), the directional majorant exceeds \(\Lambda_K\) by at most \(\resDirExcessMax\%\)
for every denoiser. For the oracle, the field proxy is much larger: \(\resFieldRatioCoarse\) times
\(\Lambda_K\) on the coarsest grid and \(\resFieldRatioFine\) times on the finest
(\Cref{fig:numerical_summary_row}c). These differences become more pronounced after exponentiation in
the global error bound.

The oracle illustrates why the directional bound is tighter. Its velocity expands in narrow bands
between the modes, while the discrepancy between the laws lies mostly where the predictor contracts.
The field supremum includes these expanding regions regardless of where the discrepancy lies; the
directional bound weights them through the actual coupling.

\subsection{Stability Mechanisms in an Image Model}

We use the class-conditional CIFAR--10 EDM checkpoint of \textcite{karras_elucidating_2022} on
nested grids \(G_K\), with \(\sigma\in[0.002,80]\), \(\rhoEDM=7\), and \(\auditSyncTrajectories\)
latent variables. The diagnostics use Jacobian--vector products and the checkpoint's \(\sigma_{\rm
data}=\auditSigmaData\) in place of \(\sigmad\) (\Cref{subsec:numerical_definitions}).

\paragraph{High noise.}
\label{subsec:evidence_damping}

We probe the EDM contraction margin by replacing the global constant \(\LF{j}\) in
\cref{eq:def_b_k_edm} with an empirical Jacobian norm. Let \(\SFest[p](\sigma)\) be the
\(p\)-quantile of power-iteration estimates of \(\|\nabla\Fhat_\sigma\|_\op\) at \(128\) noised test
images, and set
\begin{equation}\label{eq:probe_margin_main}
  \bprobe[p](\sigma) =1-\alpha(\sigma) -\beta(\sigma)\cin(\sigma)\SFest[p](\sigma).
\end{equation}
The empirical margin \(\bprobe[0.9]\) becomes positive at \(\sigma=\auditBcrossQninety\) and exceeds
\(6/7\), the supercritical threshold in \Cref{rem:three_regimes}, above
\(\sigma=\auditBsupercritQninety\) (\Cref{fig:numerical_summary_row}d). These probes suggest that the
EDM certificate is most informative near the high-noise end of the schedule. They do not bound the
global Lipschitz constant or establish a valid contraction threshold.

\paragraph{Low noise.}
\label{subsec:evidence_directional_cifar}

To examine directional expansion, we pair trajectories on \(G_K\) and \(G_{2K}\) from the same
latent variable. At a common noise level, let \(x\) and \(y\) be the coarse and fine states and set
\(\zeta=y-x\). For a unit direction \(u\), define
\begin{equation}\label{eq:segment_rate_main}
  \lambda^u(t) =-u^\top\operatorname{Sym} \nabla\vhat_{\sigma_j}(x+t\zeta)\,u, \qquad t\in[0,1].
\end{equation}
We compare rates along the displacement (\(\lambda^\parallel\)), in the most expanding direction
(\(\lambda^{\max}\)), and in a random direction (\(\lambda^{\rm rand}\)); aggregation details are in
\Cref{app:numerics}.

At \(\sigma=\auditSegSigmaLow\), the random direction contracts (\(\lambda^{\rm
rand}=\auditSegRandLow\)), while the displacement expands (\(\lambda^\parallel=+\auditSegParLow\)).
The worst-direction rate is \(\auditSegWorstOverParLow\) times the displacement rate
(\Cref{fig:numerical_summary_row}d). Thus random directions can miss expansion along the discrepancy,
while worst directions can substantially overstate it.

These are local diagnostics along synchronously coupled coarse and fine trajectories. They
illustrate the importance of direction, but do not estimate \(\Lambda_K\), which compares the exact
and sampler laws through optimal transport. Refinement results and sensitivity to individual
trajectories are given in \Cref{subsec:analytic_controls}.

\section{Discussion and Limitations}

Gaussian smoothing gives a universal local discretization bound for the first-order EDM predictor,
while the global estimate separates this error from its amplification by the learned dynamics. The
mixture experiment shows why this distinction matters on finite grids: accumulated local biases and
final error can decrease at different rates.

The remaining difficulty is to control realized amplification from accessible properties of the
trained model. Our bound is a posteriori, the CIFAR--10 diagnostics do not certify global stability,
and the \(O(d^{3/2})\) local constant limits quantitative use in high dimension. A central next step
is a computable propagation certificate that retains the advantage of distribution-dependent
control.

\section*{Acknowledgments}
This project has received funding from the European Research Council (ERC) under the European
Union’s Horizon Europe research and innovation programme (grant agreement No.~101201229) and from
ANR through the Hi!~PARIS Cluster 2030 project (ANR-23-IACL-0005) under the France 2030 investment
plan.

\section*{AI Use Statement}
We made use of generative AI tools (large language models and coding assistants) throughout this
project: to develop and check proofs, to write and test the code of the numerical experiments, and
to draft and edit the text. The research questions, the modelling choices and assumptions, and the
overall direction of the work are our own. We carefully read every result and proof, corrected or
rewrote them where needed. The figures, the generated tables and the measured values quoted in the
main text are generated automatically from the saved experiment outputs available in the public
code repository, rather than typed by hand. We take full responsibility for the content of this
paper.

\printbibliography

\clearpage
\appendix
\raggedbottom %
\etocdepthtag.toc{appendix}

\begingroup
\renewcommand{\contentsname}{Appendix Contents}
\etocsetnexttocdepth{subsection}
\etocsettagdepth{main}{none}
\etocsettagdepth{appendix}{subsection}
\tableofcontents
\endgroup

\clearpage

\Cref{app:notation} collects the notation. \Crefrange{app:setting}{app:numerics} follow
\Crefrange{sec:denoising_path_predictor}{sec:numerical_evidence} and contain all proofs, with the
comparisons with prior work placed next to the results they concern;
\Cref{app:related_work_extended} compares our analysis in detail with the closest Wasserstein
analyses.

\section{Notation Index}
\label{app:notation}

This appendix collects the general notation and the recurring problem-specific symbols; each
problem-specific symbol is defined precisely at the place indicated in the last column.

\paragraph{General notation.}
The symbols \(\E\), \(\Law(U)\), \(\Cov(U\mid V)\), \(\mathcal N(m,\Sigma)\) and
\(\operatorname{supp}\lambda\) denote expectation, the law of \(U\), conditional covariance, a
Gaussian law and the support of a measure; \(\Id\) is the identity map and \(\Idd\) the \(d\times
d\) identity matrix, \(\lambda_{\max}(A)\) the largest eigenvalue of a symmetric matrix and
\(\preceq\) the Loewner order. We write \(\ip{x}{y}\) and \(\norm{x}\) for the Euclidean inner
product and norm, \(\norm{A}_\op\) for the operator norm of a matrix,
\(\operatorname{Sym}(A)=\tfrac12(A+A^\top)\) for its symmetric part, \(B(m,r)\) for the Euclidean
ball of center \(m\) and radius \(r\), and \(r_+=\max\{r,0\}\) for the positive part of a scalar.
For a vector-valued map \(f\), \(\nabla f\) is its Jacobian, \(\nabla\cdot f\) its divergence and
\(\operatorname{Lip}(f)\) its global Lipschitz constant. For \(1\le p<\infty\) and a measure
\(\lambda\), \(\norm{f}_{\mathbb L_p(\lambda)}=(\int\norm{f(x)}^p\,\lambda(\dd x))^{1/p}\); the
measure is omitted for a random variable on the underlying probability space, so that
\(\norm{U}_{\mathbb L_p}=(\E\norm{U}^p)^{1/p}\). A subscript on a Landau symbol records allowed
dependence of its implicit constant, as in \(O_\rho(\cdot)\).

Finally, \(\mathcal P_2(\R^d)\) is the set of Borel probability measures with finite second moment,
\(f_\#\lambda\) the pushforward of \(\lambda\) by a measurable map \(f\), and
\begin{equation*}
  \Wtwo^2(\lambda,\lambda')
  = \inf_{\pi\in\mathscr C(\lambda,\lambda')} \int\norm{x-y}^2\,\pi(\dd x,\dd y)
\end{equation*}
the quadratic Wasserstein distance, where \(\mathscr C(\lambda,\lambda')\) is the set of couplings
and \(\mathscr C_{\mathrm{opt}}(\lambda,\lambda')\) the set of optimal ones. \(\mathbb
N=\{0,1,\ldots\}\), and \(\delta_m\) is the Dirac mass at \(m\). For a scalar-valued map \(f\),
\(\nabla f\) and \(\nabla^2f\) are its gradient and Hessian. For matrices, \(A^\top\), \(\Tr A\) and
\(\norm{A}_F\) are the transpose, trace and Frobenius norm, and \(\lambda_{\min}(A)\) the smallest
eigenvalue of a symmetric matrix. The set \(\mathbb S_+^d\) is the cone of symmetric
positive-semidefinite \(d\times d\) matrices, \(A\preceq B\) means \(B-A\in\mathbb S_+^d\), and
\(A>0\) means that a symmetric matrix is positive definite.

\paragraph{Decorations and indices.}
A hat marks a learned object (\(\Dhat\), \(\Phihat\), \(\vhat\), \(\Fhat\), \(\widehat\mu_j\)), and
the exact counterpart is undecorated. A tilde marks the value returned by a numerical routine or
measured by quadrature for the object written without it (\Cref{app:numerical_registry}). The
superscripts \(\OT\) and \(\sync\) name the coupling along which a quantity is evaluated: an optimal
coupling of the exact and sampler laws, or the synchronous coupling of two nested grids driven by
the same latent variable. The index \(j\) denotes a schedule level or the step starting there, and
\(K\) the number of steps; the index \(K\) is written only when several grids are compared.

\begin{center}
  \small
  \begin{tabular}{@{}>{$}l<{$}>{\raggedright}p{0.55\linewidth}>{\raggedright\arraybackslash}p{0.16\linewidth}@{}}
\toprule
\text{symbol} & \text{meaning} & \text{defined in} \\
\midrule
q,\ X_0,\ Z,\ X_\sigma,\ \mu_\sigma
  & data law, clean variable, standard Gaussian noise, noisy variable, and noisy law
  & \Cref{sec:denoising_path_predictor} \\
p_\sigma,\ \pi_{x,\sigma},\ D_\sigma,\ s_\sigma,\ v_\sigma,\ \Psi_{\sigma,\sigma_0}
  & noisy density, posterior kernel, exact denoiser, score, probability-flow velocity, and exact
    flow map
  & \cref{eq:velocity,eq:Psi} \\
\underline\sigma,\ \overline\sigma
  & terminal and initial noise levels, fixed under refinement
  & \Cref{sec:denoising_path_predictor} \\
\sigma_j,\ a_j,\ \Psi_j,\ \Phi_j,\ \Phihat_j
  & noise level, relaxation coefficient, exact flow step, oracle predictor, and learned
    predictor
  & \cref{eq:oracle_edm_predictor,eq:learned_edm_predictor} \\
\ell_j
  & noise step \(\sigma_j-\sigma_{j+1}=a_j\sigma_j\)
  & \Cref{subsec:low_noise_certificate} \\
\mu_j,\ \widehat\mu_0,\ \widehat\mu_j,\ e_j
  & exact law \(\mu_{\sigma_j}\), initial sampling law \(\mathcal N(0,\overline\sigma^2\Idd)\),
    sampler law, and its \(\Wtwo\)-error against \(\mu_j\)
  & \cref{eq:learned_sampler_laws_errors} \\
\eta,\ \rho,\ \rhoEDM,\ w_\eta
  & numerical clock \(\eta=\sigma^\rho\), clock exponent, EDM exponent \(\rhoEDM=1/\rho\), and
    clock velocity
  & \Cref{sec:discretization_error} \\
\underline\eta,\ \overline\eta,\ h_K,\ G_K,\ \Gsched
  & clock endpoints, clock mesh, schedule, and refinement family
  & \Cref{def:schedule_family} \\
\theta_K
  & relative mesh \(h_K/\underline\eta\) of \(G_K\)
  & \cref{eq:schedule_family_relative_mesh} \\
\delta_j,\ \Delta_j
  & local discretization bias and its universal bound
  & \cref{eq:def_local_defect_learned_map,eq:def_Delta_k_curvature_bound} \\
\mathcal A_\rho,\ B_{d,\rho}
  & clock acceleration and acceleration constant; \(B_{d,1}\) is the constant at \(\rho=1\)
    that enters \(\Delta_j\)
  & \cref{eq:def_A_eta}, \Cref{thm:global_curvature_bound} \\
\gamma_j,\ \gambar_j,\ \Lambda_K
  & realized amplification factor, its truncation from below at one, and the cumulative low-noise
    log-amplification
  & \cref{eq:def_raw_amplification}, \Cref{def:low_noise_budget} \\
\budexp
  & growth exponent of a logarithmic envelope \(\Lambda_K\le\budexp\log K+O(1)\)
  & \cref{eq:logarithmic_budget} \\
\sigmad
  & root-mean-square scale of the centered target, \(\sigmad^2=\E\norm{X_0}^2/d\)
  & \Cref{sec:preconditioning_residual} \\
\alpha_j,\ \beta_j,\ c_j
  & EDM coefficients \(\alpha(\sigma_j)\), \(\beta(\sigma_j)\), \(\cin(\sigma_j)\), that is
    \(\cskip,\cout,\cin\)
  & \cref{eq:edm_coefficients} \\
F_\sigma,\ \Fhat_j,\ \Rres_j,\ \Cres
  & exact and learned normalized residual branches, residual error, and its uniform constant
  & \Cref{sec:preconditioning_residual} \\
\sighi,\ \Khi
  & high-noise threshold and split index
  & \cref{eq:def_split_index} \\
\LF{j},\ \Lhi
  & Lipschitz constant of \(\Fhat_j\) in normalized input, and its envelope on the high-noise block
  & \Cref{sec:preconditioning_residual}, \Cref{app:global_bound} \\
b_j,\ \bhi
  & EDM stability margin and its uniform lower bound
  & \cref{eq:def_b_k_edm}, \Cref{ass:high_noise_contraction} \\
\Pi_{r,k},\ \Xi_\rho
  & high-noise contraction kernel \(\prod_{i=r}^{k-1}(1-\bhi a_i)\) and scale function
  & \Cref{sec:global_bound}, \cref{eq:def_Xi} \\
\kappa,\ \thetahi,\ \beff
  & finite-mesh damping factor, relative-mesh bound on the high-noise block, and effective
    exponent \(\bhi\kappa(\thetahi)\)
  & \cref{eq:def_kappa_mesh_edm_predictor}, \Cref{prop:high_noise_grid} \\
\thetalo,\ C_{\rho,\theta},\ C_\star,\ \Upsilon
  & relative-mesh bound on the low-noise block, constants of the bias sums, and learning scale
  & \Cref{thm:global_edm_wasserstein,prop:terminal_transport_grid_bound,lem:damped_defect_sum} \\
\mathrm{Disc}_K,\ \mathrm{Learn}_K
  & discretization and learning terms of the global bound
  & \Cref{thm:global_edm_wasserstein} \\
\zdisp{\pi},\ \Erate[\pi]{j},\ \Epos[\pi]{j},\ \Qsec[\pi]{j}
  & displacement, signed and positive directional rates, and squared secant ratio along a coupling
    \(\pi\)
  & \cref{eq:directional_step_constants} \\
\Amp[\pi]{j,K}
  & root-mean-square amplification of the predictor along a coupling \(\pi\)
  & \cref{eq:toy_rms_amplification,eq:transport_sync_secant_identity} \\
\Los{j}
  & global one-sided expansion rate of \(\vhat_{\sigma_j}\)
  & \cref{eq:global_one_sided_expansion_rate} \\
\Lambda_K^{\mathrm{dir}},\ \Lambda_K^{\mathrm{map}},\ \Lambda_K^{\mathrm{field}}
  & transport-directional, map-level and field-level majorants of \(\Lambda_K\)
  & \cref{eq:law_level_summed_condition,eq:map_gamma_bound,eq:field_level_summed_condition} \\
\ell_{\max,K}
  & largest noise step of \(G_K\)
  & \Cref{rem:integrable_rate} \\
\bottomrule
  \end{tabular}
\end{center}

Notation specific to a numerical estimator is introduced only in
\Cref{sec:numerical_evidence,app:numerical_registry}.

\section{Setting and One-Step Decomposition}
\label{app:setting}
\label{app:main_text_complements}

This appendix complements \Cref{sec:denoising_path_predictor}. It constructs the exact probability
flow for every Borel data law, records the clock convention and the refinement family, relates the
EDM predictor to explicit Euler in the numerical clock, and proves the one-step decomposition, which
it compares with the two other possible orderings.

\subsection{Posterior Identities and the Exact \texorpdfstring{\(\sigma\)-Flow}{sigma-Flow}}

We record here the posterior identities and the exact probability-flow construction used in
\Cref{sec:denoising_path_predictor}, and prove the flow properties stated there. All statements are
made at positive noise, and all flow statements on compact subintervals of \((0,\infty)\); no
regularity at \(\sigma=0\) and no moment of \(q\) is assumed. The material is classical in
substance, but the paper needs it in a form that holds for every Borel data law and for integrands
that are unbounded in \(X_0\), which is why it is recorded with proofs. It is organized in three
blocks.
\begin{itemize}[nosep,leftmargin=1.4em]
  \item \Cref{subsec:app_posterior_kernel}: Gaussian smoothing and the posterior kernel. The identity
  \eqref{eq:posterior_kernel_identity_joint} for unbounded, jointly dependent integrands and the
  finiteness of all posterior moments are what \Cref{lem:gaussian_posterior_differentiation} uses to
  differentiate posterior averages in \Cref{app:curvature}, and what makes the pointwise denoiser
  \(D_\sigma\) well defined without a moment of \(q\).
  \item \Cref{subsec:app_tweedie}: Tweedie's formula, which identifies the probability-flow velocity
  with the denoiser residual and the score, and the linear growth of the denoiser on positive-noise
  intervals.
  \item \Cref{subsec:app_exact_flow}: the weak transport identity and the construction, by
  characteristics, of the exact flow maps \(\Psi_{\sigma,\sigma_0}\) of \cref{eq:Psi} that push
  \(\mu_{\sigma_0}\) to \(\mu_\sigma\), used in the one-step decomposition.
\end{itemize}

\paragraph{The probability flow.}
The velocity \cref{eq:velocity} drives the increasing-noise curve \((\mu_\sigma)_{\sigma>0}\)
through the continuity equation \(\partial_\sigma\mu_\sigma+\nabla\cdot(v_\sigma\mu_\sigma)=0\) on
\((0,\infty)\times\R^d\), with \(q\) as its narrow initial trace (\Cref{rem:initial_trace_sigma}).
The pointwise posterior construction, the Tweedie identity (\Cref{lem:tweedie}), the weak form of
the continuity equation (\Cref{prop:continuity_sigma_clock}), and the flow properties stated in
\Cref{sec:denoising_path_predictor} (\Cref{prop:exact_flow_sigma_clock}) are established below.

\subsubsection{Gaussian Smoothing and Posterior Moment Identities}
\label{subsec:app_posterior_kernel}
Let \(q\) be a Borel probability measure on \(\R^d\). Let \(X_0\sim q\), let \(Z\sim\mathcal
N(0,\Idd)\), and assume that \(X_0\) and \(Z\) are independent. For \(\sigma>0\), set
\begin{equation*}
  X_\sigma=X_0+\sigma Z, \qquad \mu_\sigma=\Law(X_\sigma).
\end{equation*}
Then \(\mu_\sigma\) is absolutely continuous with respect to Lebesgue measure, with density
\begin{equation}\label{eq:app_density_sigma}
  p_\sigma(x) = \int_{\R^d}\phi_\sigma(x-y)\,q(\dd y), \qquad \phi_\sigma(x)
  = (2\pi\sigma^2)^{-d/2} \exp\left(-\frac{\norm{x}^2}{2\sigma^2}\right).
\end{equation}

\begin{lemma}[Gaussian smoothing and posterior kernel]
  \label{lem:gaussian_smoothing}
  In this setting, for every \(\sigma>0\), the density \(p_\sigma\) is strictly positive and
  \begin{equation*}
    (x,\sigma)\longmapsto p_\sigma(x)
  \end{equation*}
  belongs to \(C^\infty(\R^d\times(0,\infty))\). Moreover, for every \(x\in\R^d\) and \(\sigma>0\),
  \begin{equation*}
    \pi_{x,\sigma}(\dd y) = \frac{\phi_\sigma(x-y)}{p_\sigma(x)}\,q(\dd y)
  \end{equation*}
  defines a probability kernel, and this kernel is a regular conditional distribution of \(X_0\)
  given \(X_\sigma\). That is, for every bounded measurable \(f:\R^d\to\R\),
  \begin{equation}\label{eq:posterior_kernel_identity}
    \int_{\R^d} f(y)\,\pi_{x,\sigma}(\dd y) = \E[f(X_0)\mid X_\sigma=x]
  \end{equation}
  for \(\mu_\sigma\)-almost every \(x\). Consequently, writing
  \begin{equation}\label{eq:app_def_posterior_average}
    \langle \varphi\rangle_\sigma(x) = \int_{\R^d}\varphi(x,y)\,\pi_{x,\sigma}(\dd y)
  \end{equation}
  for measurable \(\varphi:\R^d\times\R^d\to\R\), one has
  \begin{equation}\label{eq:posterior_kernel_identity_joint}
    \langle \varphi\rangle_\sigma(X_\sigma) = \E[\varphi(X_\sigma,X_0)\mid X_\sigma]
    \qquad\text{almost surely}
  \end{equation}
  whenever \(\varphi\) is either nonnegative or satisfies \(\varphi(X_\sigma,X_0)\in \mathbb L_1\);
  the integrand may therefore be unbounded and may depend on the conditioning point as well as on
  \(X_0\). Taking \(\varphi(x,y)=f(y)\) shows in particular that \cref{eq:posterior_kernel_identity}
  holds for every measurable \(f\) that is nonnegative or satisfies \(f(X_0)\in \mathbb L_1\). Since
  \(\pi_{x,\sigma}\) is a genuine conditional distribution, the conditional Jensen inequality
  applies to it in the usual way, under the same integrability proviso. Finally, for each fixed
  \((x,\sigma)\), all polynomial moments under \(\pi_{x,\sigma}\) are finite.
\end{lemma}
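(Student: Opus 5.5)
The proof has four parts: positivity and smoothness of \(p_\sigma\), the kernel property of \(\pi_{x,\sigma}\), the conditional-distribution identity, and finiteness of posterior moments.

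\textbf{Positivity and smoothness.} Positivity is immediate. The integrand \(\phi_\sigma(x-y)\) in \eqref{eq:app_density_sigma} is strictly positive and \(q\) is a probability measure. For smoothness I would differentiate under the integral sign in \((x,\sigma)\), using dominated convergence. Fix a compact set \(\mathcal K\subset\R^d\times(0,\infty)\), so that \(\|x\|\le R\) and \(\sigma\in[s_1,s_2]\) with \(s_1>0\). Every mixed derivative \(\partial_x^\alpha\partial_\sigma^k\phi_\sigma(x-y)\) equals a polynomial in \((x-y,\sigma^{-1})\) times \(\phi_\sigma(x-y)\). On \(\mathcal K\) its absolute value is bounded by \(C(1+\|y\|)^m e^{-(\|y\|-R)_+^2/(2s_2^2)}\), up to the constant \((2\pi s_1^2)^{-d/2}\). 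This bound is uniformly bounded in \(y\), hence \(q\)-integrable because \(q\) is a probability measure. No moment of \(q\) is needed. Induction on the order of the derivative then gives \(C^\infty\).

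\textbf{Kernel property.} Measurability of \((x,B)\mapsto\pi_{x,\sigma}(B)\) follows from Fubini–Tonelli applied to the jointly measurable integrand. Total mass one holds by the definition of \(p_\sigma\).

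\textbf{Conditional-distribution identity.} Let \(g\) be bounded measurable. By independence of \(X_0\) and \(Z\) and the change of variables \(x=y+\sigma z\),
\[
  \E[\varphi(X_\sigma,X_0)g(X_\sigma)]=\iint \varphi(x,y)g(x)\phi_\sigma(x-y)\,q(\dd y)\,\dd x .
\]
Tonelli applies for \(\varphi\ge0\). By the definition of \(\pi_{x,\sigma}\), the right-hand side equals \(\int\langle\varphi\rangle_\sigma(x)g(x)p_\sigma(x)\,\dd x=\E[\langle\varphi\rangle_\sigma(X_\sigma)g(X_\sigma)]\). Measurability of \(\langle\varphi\rangle_\sigma\) again comes from Tonelli. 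This proves \eqref{eq:posterior_kernel_identity_joint} for nonnegative \(\varphi\), including the case where \(\varphi\) depends on the conditioning point \(x\).

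For \(\varphi(X_\sigma,X_0)\in\mathbb L_1\), I would split \(\varphi=\varphi^+-\varphi^-\). The nonnegative case with \(g\equiv1\) gives \(\E\langle|\varphi|\rangle_\sigma(X_\sigma)<\infty\). Hence \(\langle\varphi^\pm\rangle_\sigma\) is finite almost surely and subtraction is legitimate. The special case \(\varphi(x,y)=f(y)\) yields \eqref{eq:posterior_kernel_identity}. Jensen's inequality holds because \(\pi_{x,\sigma}\) is a regular version of the conditional law.

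\textbf{Finite posterior moments.} Fix \((x,\sigma)\). Then \(\|y\|^m\phi_\sigma(x-y)\le C_{x,\sigma,m}\) for all \(y\), since a polynomial times a Gaussian is bounded. Dividing by \(p_\sigma(x)>0\) gives \(\int\|y\|^m\pi_{x,\sigma}(\dd y)<\infty\) for every \(m\).

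\textbf{Main obstacle.} The only delicate step is the domination in the smoothness argument, namely obtaining uniform bounds on a compact set without any moment assumption on \(q\). It is resolved by the observation that the Gaussian factor absorbs every polynomial in \(y\), leaving a bounded function and a finite measure.
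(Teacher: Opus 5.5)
Your proof is correct. Positivity, smoothness, the kernel property and the finiteness of posterior moments follow the paper's argument, including the key observation that a Gaussian factor times any polynomial in \(y\) is bounded on compacts, so no moment of \(q\) is needed.

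The one real difference is how you obtain the joint identity \eqref{eq:posterior_kernel_identity_joint}. The paper first proves it for bounded \(f(y)\), then for products \(g(x)h(y)\). It extends to bounded \(\varphi\) by the functional monotone class theorem, to \(\varphi\ge0\) by monotone convergence, and to \(\mathbb L_1\) by splitting. You instead compute \(\E[\varphi(X_\sigma,X_0)g(X_\sigma)]\) directly by the change of variables and Tonelli. This gives the nonnegative, possibly unbounded, \(x\)-dependent case in one step, and is shorter and more elementary. You also check explicitly that \(x\mapsto\pi_{x,\sigma}(B)\) is measurable, which the paper takes for granted.

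One small fix is needed. When \(\varphi\ge0\) is not integrable, restrict to \(g\ge0\); indicators \(\mathbf 1_B\) suffice for the conditional expectation of a nonnegative variable. For signed bounded \(g\), \(\E[\varphi(X_\sigma,X_0)g(X_\sigma)]\) need not be defined, and Tonelli does not apply as you stated it.
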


\begin{proof}
Strict positivity follows from \(\phi_\sigma(x-y)>0\) for all \(x,y\in\R^d\) and \(q(\R^d)=1\).

Let \(\mathcal Q\subset\R^d\times(0,\infty)\) be compact. On \(\mathcal Q\), every derivative
\begin{equation*}
  \partial_\sigma^m\partial_x^\alpha\phi_\sigma(x-y)
\end{equation*}
is a polynomial in \(x-y\) and \(\sigma^{-1}\), multiplied by \(\phi_\sigma(x-y)\). Since \(\sigma\)
is bounded away from zero on \(\mathcal Q\), this derivative is uniformly bounded in
\((x,\sigma,y)\in\mathcal Q\times\R^d\). Differentiation under the integral sign is therefore
justified by dominated convergence, and \(p_\sigma\) is smooth.

The definition of \(\pi_{x,\sigma}\) gives a probability measure because \(p_\sigma(x)\) is the
normalizing constant. To identify the conditional law, let \(f,g\) be bounded measurable. Then
\begin{equation*}
  \E[f(X_0)g(X_\sigma)] = \int_{\R^d}\int_{\R^d} f(y)g(x)\phi_\sigma(x-y)\,\dd x\,q(\dd y).
\end{equation*}
Equivalently,
\begin{equation*}
  \E[f(X_0)g(X_\sigma)]
  = \int_{\R^d} g(x) \left( \int_{\R^d} f(y)\,\pi_{x,\sigma}(\dd y) \right) p_\sigma(x)\,\dd x .
\end{equation*}
This proves the conditional-law identity.

We extend it to \cref{eq:posterior_kernel_identity_joint}. For \(\varphi(x,y)=g(x)h(y)\) with
\(g,h\) bounded measurable, the identity just proved and pulling out the \(X_\sigma\)-measurable
factor \(g(X_\sigma)\) give \(\langle\varphi\rangle_\sigma(X_\sigma)=\E[\varphi(X_\sigma,X_0)\mid
X_\sigma]\) almost surely. Such products generate the product \(\sigma\)-algebra, so the functional
monotone class theorem extends the identity to all bounded measurable \(\varphi\); monotone
convergence applied to \(\varphi\wedge n\) extends it to \(\varphi\ge0\), and splitting
\(\varphi=\varphi^+-\varphi^-\) to \(\varphi(X_\sigma,X_0)\in \mathbb L_1\). The Jensen claim is the
conditional Jensen inequality for the probability measure \(\pi_{x,\sigma}\).

Finally, for every integer \(k\ge0\), the function
\begin{equation*}
  y\longmapsto \norm{y}^k\phi_\sigma(x-y)
\end{equation*}
is bounded on \(\R^d\). Hence its integral against the finite measure \(q\) is finite, which gives
all polynomial moments of \(\pi_{x,\sigma}\).
\end{proof}

The kernel \(\pi_{x,\sigma}\) is defined for every \(x\), not only \(\mu_\sigma\)-almost everywhere.
Using it as the version of the conditional law of \(X_0\) given \(X_\sigma=x\), define for every
\(x\in\R^d\) and \(\sigma>0\)
\begin{equation}\label{eq:app_def_D_N_score}
  D_\sigma(x) = \int_{\R^d} y\,\pi_{x,\sigma}(\dd y), \qquad N_\sigma(x)
  = \int_{\R^d}\frac{x-y}{\sigma}\,\pi_{x,\sigma}(\dd y), \qquad s_\sigma(x)=\nabla\log p_\sigma(x).
\end{equation}
Thus \(D_\sigma\) is the posterior mean denoiser, \(N_\sigma\) is the posterior mean of the injected
noise, and \(s_\sigma\) is the score of the noisy law. When \(X_0\) is integrable, these definitions
coincide with the usual conditional expectations
\begin{equation*}
  D_\sigma(x)=\E[X_0\mid X_\sigma=x], \qquad N_\sigma(x)=\E[Z\mid X_\sigma=x],
\end{equation*}
for the chosen version: applying \cref{eq:posterior_kernel_identity} to \(f(y)=y\) componentwise
makes \(D_\sigma\) a version of \(\E[X_0\mid X_\sigma=\cdot]\), and since
\(N_\sigma(x)=(x-D_\sigma(x))/\sigma\) by \cref{eq:app_def_D_N_score} while
\(Z=(X_\sigma-X_0)/\sigma\), this makes \(N_\sigma(X_\sigma)\) a version of \(\E[Z\mid X_\sigma]\).
The pointwise definitions in \cref{eq:app_def_D_N_score} are useful because they are meaningful for
every Borel input law \(q\) after Gaussian smoothing.

The coupled path \(\sigma\mapsto X_\sigma=X_0+\sigma Z\) has pathwise velocity \(Z\). We define the
corresponding Eulerian velocity by conditioning this pathwise velocity on the current position:
\begin{equation}\label{eq:def_v_sigma}
  v_\sigma(x) = \E[Z\mid X_\sigma=x] = N_\sigma(x).
\end{equation}
This velocity field is canonical for the Gaussian coupling \(X_\sigma=X_0+\sigma Z\). It is not
meant to be the unique velocity field generating the marginal curve \((\mu_\sigma)_{\sigma>0}\).

Conditioning the identity \(X_\sigma=X_0+\sigma Z\) on \(X_\sigma=x\) gives
\begin{equation}\label{eq:D_N_relation_sigma}
  x=D_\sigma(x)+\sigma N_\sigma(x).
\end{equation}
Consequently,
\begin{equation}\label{eq:v_sigma_denoiser_form}
  v_\sigma(x) = N_\sigma(x) = \frac{x-D_\sigma(x)}{\sigma}.
\end{equation}

\subsubsection{Tweedie's Formula and Linear Growth of the Denoiser}
\label{subsec:app_tweedie}

\begin{lemma}[Tweedie identity]
  \label{lem:tweedie}
  Let \(q\) be a Borel probability measure on \(\R^d\), and let the Gaussian smoothing, posterior
  denoiser, score and velocity be defined as in \Cref{lem:gaussian_smoothing} and
  \cref{eq:def_v_sigma}. For every \(x\in\R^d\) and every \(\sigma>0\),
  \begin{equation*}
    s_\sigma(x) = \frac{D_\sigma(x)-x}{\sigma^2} = -\frac{1}{\sigma}N_\sigma(x).
  \end{equation*}
  Consequently, by \cref{eq:v_sigma_denoiser_form}, \(v_\sigma(x)=-\sigma s_\sigma(x)\).
\end{lemma}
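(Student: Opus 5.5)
The plan is to differentiate the density formula \eqref{eq:app_density_sigma} under the integral sign in \(x\), and then to read the result as a posterior average using the kernel \(\pi_{x,\sigma}\) of \Cref{lem:gaussian_smoothing}.

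First, \Cref{lem:gaussian_smoothing} gives that \(p_\sigma\) is smooth and strictly positive. Its proof also justifies differentiation under the integral sign, since on compact sets the \(x\)-derivative of \(\phi_\sigma(x-y)\) is bounded uniformly in \(y\). Hence
\begin{equation*}
  \nabla p_\sigma(x) = \int_{\R^d}\nabla_x\phi_\sigma(x-y)\,q(\dd y) = \int_{\R^d}\frac{y-x}{\sigma^2}\,\phi_\sigma(x-y)\,q(\dd y),
\end{equation*}
using \(\nabla\phi_\sigma(u)=-u\,\phi_\sigma(u)/\sigma^2\).

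Next, divide by \(p_\sigma(x)>0\) and recognize the posterior kernel \(\pi_{x,\sigma}(\dd y)=\phi_\sigma(x-y)q(\dd y)/p_\sigma(x)\). This gives
\begin{equation*}
  s_\sigma(x)=\nabla\log p_\sigma(x)=\int_{\R^d}\frac{y-x}{\sigma^2}\,\pi_{x,\sigma}(\dd y).
\end{equation*}
The integral is finite because all polynomial moments of \(\pi_{x,\sigma}\) are finite, again by \Cref{lem:gaussian_smoothing}. Splitting it and using the pointwise definition \eqref{eq:app_def_D_N_score} of \(D_\sigma\) gives \(s_\sigma(x)=(D_\sigma(x)-x)/\sigma^2\).

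The second equality is then immediate from \eqref{eq:D_N_relation_sigma}, which gives \(D_\sigma(x)-x=-\sigma N_\sigma(x)\). The consequence \(v_\sigma=-\sigma s_\sigma\) follows from \eqref{eq:v_sigma_denoiser_form}.

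The only point needing care is the interchange of \(\nabla\) and \(\int\) for an arbitrary Borel \(q\) with no moment assumption. I expect this to be handled by the uniform bound \(\sup_y\|x-y\|\phi_\sigma(x-y)<\infty\), which holds locally uniformly in \(x\), together with dominated convergence against the finite measure \(q\). Note also that the identity holds for every \(x\), not just almost everywhere, because every object in it is defined pointwise through \(\pi_{x,\sigma}\).
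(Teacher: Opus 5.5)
Your proof is correct and follows the paper's argument: differentiate the smoothed density under the integral sign, divide by \(p_\sigma(x)\) to read the result as a posterior average, and conclude with \eqref{eq:D_N_relation_sigma} and \eqref{eq:v_sigma_denoiser_form}. The interchange of \(\nabla\) and the integral that you single out is already covered by the uniform derivative bounds in the proof of \Cref{lem:gaussian_smoothing}.
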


\begin{proof}
Differentiating \eqref{eq:app_density_sigma} with respect to \(x\) gives
\begin{equation*}
  \nabla_x p_\sigma(x) = \int_{\R^d} \frac{y-x}{\sigma^2}\phi_\sigma(x-y)\,q(\dd y).
\end{equation*}
Dividing by \(p_\sigma(x)\), we obtain
\begin{equation*}
  s_\sigma(x) = \frac{1}{\sigma^2} \left( \int_{\R^d} y\,\pi_{x,\sigma}(\dd y)-x \right)
  = \frac{D_\sigma(x)-x}{\sigma^2}.
\end{equation*}
The identity \(s_\sigma(x)=-\sigma^{-1}N_\sigma(x)\) follows from \cref{eq:D_N_relation_sigma}, and
the velocity representation then follows from \cref{eq:v_sigma_denoiser_form}.
\end{proof}

The identities above show that a single population vector field can be written in four equivalent
coordinates: as the conditional velocity of the Gaussian coupling, as the posterior noise predictor,
as the residual of the posterior mean denoiser, and as the score. The conditional-velocity
representation is conceptually important: it shows that the denoising oracle is not only a pointwise
posterior regression object, but also the Eulerian velocity of the heat-regularized curve
\((\mu_\sigma)_{\sigma>0}\) associated with the canonical coupling \(X_\sigma=X_0+\sigma Z\), which
\Cref{subsec:app_exact_flow} makes precise.

The exact-flow construction of \Cref{subsec:app_exact_flow} is carried out only on compact
positive-noise intervals. The following estimate is the only global-in-space input it needs to rule
out finite-time explosion of characteristics.

\begin{lemma}[Linear growth of the denoiser and the velocity on positive-noise intervals]
  \label{lem:posterior_linear_growth}
  Let \(q\) be a Borel probability measure on \(\R^d\) and let \(0<\sigma_-<\sigma_+<\infty\). There
  exists \(C<\infty\), depending only on \(q,\sigma_-,\sigma_+\), such that
  \begin{equation}\label{eq:D_sigma_linear_growth_lemma}
    \norm{D_\sigma(x)} \leqslant C(1+\norm{x}), \qquad x\in\R^d,\quad \sigma\in[\sigma_-,\sigma_+] .
  \end{equation}
  Consequently,
  \begin{equation}\label{eq:v_sigma_linear_growth_lemma}
    \norm{v_\sigma(x)} \leqslant C(1+\norm{x}), \qquad x\in\R^d,\quad \sigma\in[\sigma_-,\sigma_+] .
  \end{equation}
\end{lemma}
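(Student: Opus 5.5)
We will prove \eqref{eq:D_sigma_linear_growth_lemma} directly from the posterior kernel of \Cref{lem:gaussian_smoothing}. No moment of \(q\) is available, so the bound must come from the Gaussian weights alone. The first step is to fix \(R<\infty\) with \(q(B(0,R))\geqslant 1/2\), which is possible since \(q\) is a probability measure. For every \(x\) and \(\sigma\in[\sigma_-,\sigma_+]\), we then bound the normalizer from below:
\begin{equation*}
  p_\sigma(x)\geqslant \tfrac12 (2\pi\sigma_+^2)^{-d/2}\exp\Bigl(-\frac{(\norm{x}+R)^2}{2\sigma_-^2}\Bigr).
\end{equation*}
This bound alone is too crude. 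Dividing the numerator by it would produce factors \(e^{c\norm{x}^2}\), and these do not cancel because \(\sigma_-\neq\sigma_+\).

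To avoid this, we will not estimate \(\int\norm{y}\phi_\sigma(x-y)q(\dd y)\) in absolute terms. Instead we split the \(y\)-integral into a near and a far region. On the near region \(\norm{y}\leqslant 2\norm{x}+M\), with \(M\) a large constant, the contribution to \(\norm{D_\sigma(x)}\) is trivially at most \(2\norm{x}+M\), because \(\pi_{x,\sigma}\) is a probability measure. On the far region \(\norm{y}>2\norm{x}+M\), we compare each weight with the weight of a point \(y'\in B(0,R)\), at the same \(\sigma\). There \(\norm{x-y}\geqslant \norm{y}/2+M/2\) and \(\norm{x-y'}\leqslant\norm{x}+R\). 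Hence
\begin{equation*}
  \frac{\phi_\sigma(x-y)}{\phi_\sigma(x-y')}
  \leqslant\exp\Bigl(-\frac{\norm{x-y}^2-(\norm{x}+R)^2}{2\sigma^2}\Bigr).
\end{equation*}
Using \(\norm{x-y}\geqslant \norm{x}+R+\tfrac14\norm{y}\), valid once \(M\geqslant 4R\) and \(\norm{y}\geqslant2\norm{x}+M\), the exponent is at most \(-\norm{y}^2/(32\sigma_+^2)\), and this holds uniformly in \(x\). Averaging over \(y'\in B(0,R)\), which carries mass at least \(1/2\), gives
\begin{equation*}
  \int_{\mathrm{far}}\norm{y}\,\pi_{x,\sigma}(\dd y)\leqslant 2\int \norm{y}e^{-\norm{y}^2/(32\sigma_+^2)}\,q(\dd y)\leqslant 2\sup_{t\geqslant0}te^{-t^2/(32\sigma_+^2)}.
\end{equation*}
The right-hand side is a finite constant depending only on \(\sigma_+\). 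Adding the two regions yields \(\norm{D_\sigma(x)}\leqslant 2\norm{x}+C'\), with \(C'\) depending on \(R\) (hence on \(q\)) and on \(\sigma_\pm\).

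The velocity bound \eqref{eq:v_sigma_linear_growth_lemma} then follows from \eqref{eq:v_sigma_denoiser_form}:
\begin{equation*}
  \norm{v_\sigma(x)}\leqslant(\norm{x}+\norm{D_\sigma(x)})/\sigma_-.
\end{equation*}

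The main obstacle is the far-region comparison. One must make sure that the \(x\)-dependence cancels exactly, by comparing weights at the same \(\sigma\), rather than bounding the numerator and denominator separately. The elementary inequality \(\norm{x-y}^2-(\norm{x}+R)^2\geqslant c\norm{y}^2\) on the far region is the step to check carefully.
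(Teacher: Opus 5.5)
Your strategy is the paper's: split at \(\norm{y}=2\norm{x}+\text{const}\), bound the near part by that radius because \(\pi_{x,\sigma}\) is a probability measure, and on the far part compare the weight at \(y\) with the weight of a ball of positive \(q\)-mass at the same \(\sigma\). The gap is in the far-region inequality, the step you flagged, which is false.

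For \(y\) parallel to \(x\), the claim \(\norm{x-y}\ge\norm{x}+R+\tfrac14\norm{y}\) reduces to \(\tfrac34\norm{y}\ge2\norm{x}+R\). This does not follow from \(\norm{y}\ge2\norm{x}+M\) once \(\norm{x}>\tfrac32M-2R\). Take \(y=(2+M/\norm{x})\,x\). Then \(\norm{y}=2\norm{x}+M\), \(\norm{x-y}=\norm{x}+M\), and
\[
\norm{x-y}^2-(\norm{x}+R)^2=(M-R)(2\norm{x}+M+R).
\]
This grows only linearly in \(\norm{x}\), while \(\norm{y}^2/16\) grows quadratically. So no lower bound \(c\norm{y}^2\) holds uniformly in \(x\) on your far region. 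At \(\sigma=\sigma_+\) the exponent bound \(-\norm{y}^2/(32\sigma_+^2)\) fails for large \(\norm{x}\), and the Gaussian supremum you end with is not justified.

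The conclusion of that step still holds, because only the first moment \(\norm{y}\) has to be absorbed, so decay linear in \(\norm{y}\) is enough. On the far region \(\norm{x-y}\ge\norm{y}-\norm{x}>0\), hence
\[
\norm{x-y}^2-(\norm{x}+R)^2\ge(\norm{y}-2\norm{x}-R)(\norm{y}+R)\ge(M-R)\norm{y}.
\]
The weight ratio is therefore at most \(\exp(-(M-R)\norm{y}/(2\sigma_+^2))\), uniformly in \(x\), \(\sigma\in[\sigma_-,\sigma_+]\) and \(y'\in B(0,R)\). With \(M>R\), your averaging over \(B(0,R)\) bounds the far contribution by \(2\sup_{t\ge0}t\,e^{-(M-R)t/(2\sigma_+^2)}=4\sigma_+^2/(e(M-R))\). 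This gives \(\norm{D_\sigma(x)}\le2\norm{x}+M+4\sigma_+^2/(e(M-R))\), and your velocity step then goes through after enlarging \(C\).

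The paper handles this step differently. It puts the threshold at \(r_x=2\norm{x}+r_0\), with \(r_0\) the ball radius itself, and writes \(u=\norm{y}-r_x\). It keeps only \(\norm{x-y}^2\ge(\norm{x}+r_0)^2+u^2\), dropping the cross term, and bounds \((r_x+u)e^{-u^2/(2\sigma^2)}\le r_x+\sigma_+\). Its far contribution \((r_x+\sigma_+)/c_0\) is therefore linear in \(\norm{x}\) rather than constant, with slope \(2+2/c_0\). Your repaired version keeps the cross term and gets slope \(2\), independent of \(q\). Either suffices for the lemma.
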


\begin{proof}
The Gaussian prefactor cancels in the posterior mean, so
\begin{equation*}
  D_\sigma(x)
  = \frac{ \int_{\R^d} y \exp\left(-\norm{x-y}^2/(2\sigma^2)\right)\,q(\dd y) }{ \int_{\R^d}
  \exp\left(-\norm{x-y}^2/(2\sigma^2)\right)\,q(\dd y) }.
\end{equation*}
Choose \(r_0>0\) such that \(c_0=q(B(0,r_0))>0\), and denote the denominator, an unnormalized
version of \(p_\sigma\), by \(\bar
p_\sigma(x)=\int_{\R^d}\exp\left(-\norm{x-y}^2/(2\sigma^2)\right)\,q(\dd y)\). For \(y\in
B(0,r_0)\), one has \(\norm{x-y}\leqslant\norm{x}+r_0\). Therefore
\begin{equation}\label{eq:denominator_lower_bound_growth}
  \bar p_\sigma(x) \ge c_0 \exp\left(-\frac{(\norm{x}+r_0)^2}{2\sigma^2}\right).
\end{equation}

Let \(r_x=2\norm{x}+r_0\). On the region \(\{\norm{y}\leqslant r_x\}\), the contribution to the
numerator is bounded in norm by \(r_x\bar p_\sigma(x)\). On the complementary region, write
\(u=\norm{y}-r_x\ge0\), so that \(\norm{x-y}\ge\norm{y}-\norm{x}=\norm{x}+r_0+u\). Hence
\begin{equation*}
  \norm{y} \exp\left(-\frac{\norm{x-y}^2}{2\sigma^2}\right)
  \leqslant (r_x+u) \exp\left(-\frac{(\norm{x}+r_0)^2}{2\sigma^2}\right)
  \exp\left(-\frac{u^2}{2\sigma^2}\right).
\end{equation*}
Since \(u\exp(-u^2/(2\sigma^2))\leqslant\sigma\leqslant\sigma_+\), integration over
\(\{\norm{y}>r_x\}\) gives
\begin{equation*}
  \int_{\{\norm{y}>r_x\}} \norm{y} \exp\left(-\frac{\norm{x-y}^2}{2\sigma^2}\right) q(\dd y)
  \leqslant (r_x+\sigma_+) \exp\left(-\frac{(\norm{x}+r_0)^2}{2\sigma^2}\right).
\end{equation*}
Combining this estimate with \eqref{eq:denominator_lower_bound_growth}, we obtain
\begin{equation*}
  \norm{D_\sigma(x)} \leqslant r_x+\frac{r_x+\sigma_+}{c_0} \leqslant C(1+\norm{x}),
  \qquad x\in\R^d,\quad \sigma\in[\sigma_-,\sigma_+] .
\end{equation*}
This proves \eqref{eq:D_sigma_linear_growth_lemma}. Since \(v_\sigma(x)=(x-D_\sigma(x))/\sigma\) and
\(\sigma\ge\sigma_-\), the velocity bound \eqref{eq:v_sigma_linear_growth_lemma} follows.
\end{proof}

\subsubsection{The Continuity Equation and the Exact Flow Maps}
\label{subsec:app_exact_flow}

The velocity \(v_\sigma\) of \Cref{subsec:app_tweedie} drives the heat-regularized curve in two
senses: weakly, through the continuity equation, and pathwise, through the flow of its
characteristics. The second statement produces the exact maps \(\Psi_{\sigma,\sigma_0}\) of
\cref{eq:Psi}.

\begin{proposition}[Weak transport identity]
  \label{prop:continuity_sigma_clock}
  Let \(q\) be a Borel probability measure on \(\R^d\), let \(X_0\sim q\), and let
  \(X_\sigma=X_0+\sigma Z\) with \(Z\sim\mathcal N(0,\Idd)\) independent of \(X_0\). For every
  \(f\in C_c^\infty(\R^d)\), the map \( \sigma\longmapsto \int_{\R^d} f\,\dd\mu_\sigma \) is
  differentiable on \((0,\infty)\), and
  \begin{equation}\label{eq:weak_transport_sigma}
    \frac{\dd}{\dd\sigma} \int_{\R^d} f(x)\,\mu_\sigma(\dd x)
    = \int_{\R^d} \nabla f(x)\cdot v_\sigma(x)\,\mu_\sigma(\dd x).
  \end{equation}
  Equivalently, in density form,
  \begin{equation}\label{eq:continuity_sigma}
    \partial_\sigma p_\sigma + \nabla_x\cdot(p_\sigma v_\sigma) = 0
    \qquad\text{on }(0,\infty)\times\R^d .
  \end{equation}
\end{proposition}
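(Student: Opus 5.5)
The plan is to differentiate \(\sigma\mapsto\int f\,\dd\mu_\sigma=\E[f(X_0+\sigma Z)]\) directly, using the Gaussian coupling, and then to condition on \(X_\sigma\) with the posterior kernel of \Cref{lem:gaussian_smoothing}. Since \(f\in C_c^\infty(\R^d)\), the map \(\sigma\mapsto f(X_0+\sigma Z)\) is smooth for every realization. Its derivative is \(\nabla f(X_0+\sigma Z)\cdot Z\), which is dominated by \(\norm{\nabla f}_\infty\norm{Z}\in\mathbb L_1\) uniformly in \(\sigma\). Differentiation under the expectation is therefore justified by dominated convergence, via the mean value theorem applied to difference quotients. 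This gives differentiability on \((0,\infty)\) and
\begin{equation*}
  \frac{\dd}{\dd\sigma}\int f\,\dd\mu_\sigma=\E\bigl[\nabla f(X_\sigma)\cdot Z\bigr].
\end{equation*}
No moment of \(q\) is needed here, because only \(Z\) appears unbounded.

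Next I condition on \(X_\sigma\). I apply \cref{eq:posterior_kernel_identity_joint} componentwise with \(\varphi_i(x,y)=\partial_i f(x)\,(x_i-y_i)/\sigma\), so that \(\varphi(X_\sigma,X_0)=\partial_if(X_\sigma)Z_i\in\mathbb L_1\). Together with the tower property, this gives
\begin{equation*}
  \E[\nabla f(X_\sigma)\cdot Z]=\E\bigl[\nabla f(X_\sigma)\cdot N_\sigma(X_\sigma)\bigr].
\end{equation*}
Since \(N_\sigma=v_\sigma\) by \cref{eq:def_v_sigma}, this is \cref{eq:weak_transport_sigma}. The point requiring care is that \(\langle\varphi\rangle_\sigma(x)=\partial_if(x)N_{\sigma,i}(x)\). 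This holds for every \(x\) because all posterior moments are finite, which is the last claim of \Cref{lem:gaussian_smoothing}.

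For the density form \cref{eq:continuity_sigma}, I use the smoothness of \((x,\sigma)\mapsto p_\sigma(x)\) from \Cref{lem:gaussian_smoothing}. By \Cref{lem:tweedie}, \(p_\sigma v_\sigma=-\sigma\nabla p_\sigma\), which is also smooth. The left side of \cref{eq:weak_transport_sigma} equals \(\int f\,\partial_\sigma p_\sigma\,\dd x\); differentiation under the integral is allowed because \(f\) has compact support and \(\partial_\sigma p_\sigma\) is locally bounded. Integrating by parts on the right side, with no boundary terms since \(f\) is compactly supported, gives \(-\int f\,\nabla\cdot(p_\sigma v_\sigma)\,\dd x\). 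Since \(f\) is an arbitrary test function and both integrands are continuous, they agree pointwise. As a sanity check, \(\partial_\sigma p_\sigma=\sigma\Delta p_\sigma\), which is the heat equation in \(\sigma^2/2\).

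The only genuine obstacle I expect is bookkeeping, namely justifying the exchange of limit and expectation without any moment assumption on \(q\). This is handled by the fact that the derivative involves only \(Z\) and bounded functions of \(X_\sigma\).
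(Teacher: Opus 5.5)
Your proof is correct and follows the paper's own argument: you differentiate \(\E[f(X_0+\sigma Z)]\) under the expectation with the domination \(\norm{\nabla f}_\infty\norm{Z}\), replace \(Z\) by its conditional mean \(N_\sigma(X_\sigma)=v_\sigma(X_\sigma)\), and pass to the density form using smoothness of \(p_\sigma\) and \(p_\sigma v_\sigma=-\sigma\nabla p_\sigma\). Your explicit appeal to the joint kernel identity \cref{eq:posterior_kernel_identity_joint}, and your integration by parts, only spell out steps that the paper states in one line each.
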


\begin{proof}
Since \(X_\sigma=X_0+\sigma Z\), differentiation under the expectation gives
\begin{equation*}
  \frac{\dd}{\dd\sigma}\E[f(X_\sigma)] = \E[\nabla f(X_\sigma)\cdot Z].
\end{equation*}
The differentiation is justified because \(f\in C_c^\infty(\R^d)\), \(\nabla f\) is bounded, and
\(Z\) is integrable. Since \(\nabla f(X_\sigma)\) is \(X_\sigma\)-measurable,
\begin{equation*}
  \E[\nabla f(X_\sigma)\cdot Z] = \E\left[ \nabla f(X_\sigma)\cdot \E[Z\mid X_\sigma] \right].
\end{equation*}
Using the definition of \(v_\sigma\), we obtain \cref{eq:weak_transport_sigma}.

The density \(p_\sigma\) is smooth by \Cref{lem:gaussian_smoothing}, and
\begin{equation*}
  p_\sigma v_\sigma = -\sigma p_\sigma s_\sigma = -\sigma\nabla p_\sigma
\end{equation*}
is smooth as well. Hence the weak identity is equivalent to the pointwise continuity equation
\eqref{eq:continuity_sigma}.
\end{proof}

\begin{remark}[Initial trace]
  \label{rem:initial_trace_sigma}
  The curve \((\mu_\sigma)_{\sigma>0}\) starts from the data law: \(\mu_\sigma\to q\) narrowly as
  \(\sigma\downarrow0\). Indeed, \(X_\sigma=X_0+\sigma Z\to X_0\) almost surely, so
  \(\E[f(X_\sigma)]\to\E[f(X_0)]\) for every bounded continuous \(f\) by dominated convergence.
\end{remark}

We now prove that the velocity field generates a global flow on every compact positive-noise
interval and that this flow transports the heat-regularized laws.

\begin{proposition}[Exact probability flow on the \texorpdfstring{\(\sigma\)-clock}{sigma-clock}]
  \label{prop:exact_flow_sigma_clock}
  Let \(q\) be a Borel probability measure on \(\R^d\), let the posterior velocity \(v_\sigma\) be
  defined from its Gaussian smoothing as above, and let \(I=[\sigma_-,\sigma_+]\Subset(0,\infty)\).
  The vector field \((\sigma,x)\mapsto v_\sigma(x)\) is the restriction to \(I\times\R^d\) of a
  \(C^\infty\) vector field on \((0,\infty)\times\R^d\). Moreover, for every compact set \(\mathcal
  Q\subset\R^d\),
  \begin{equation}\label{eq:local_lipschitz_v_sigma}
    \sup_{\sigma\in I}\sup_{x\in\mathcal Q} \Bigl( \norm{v_\sigma(x)}
    + \norm{\nabla_x v_\sigma(x)}_\op \Bigr) <\infty ,
  \end{equation}
  and, by \Cref{lem:posterior_linear_growth}, there exists \(C_I<\infty\) such that
  \begin{equation}\label{eq:linear_growth_v_sigma_flow}
    \norm{v_\sigma(x)} \leqslant C_I(1+\norm{x}), \qquad (\sigma,x)\in I\times\R^d .
  \end{equation}

  Consequently, for every \(\sigma_0\in I\) and every \(x\in\R^d\), the ODE
  \begin{equation*}
    \frac{\dd x_r}{\dd r} = v_r(x_r), \qquad r\in I,
  \end{equation*}
  has a unique solution on the whole interval \(I\). Denote its solution map by \(
  \Psi_{\sigma,\sigma_0}:\R^d\to\R^d \), so that
  \begin{equation*}
    \frac{\dd}{\dd\sigma}\Psi_{\sigma,\sigma_0}(x) = v_\sigma\bigl(\Psi_{\sigma,\sigma_0}(x)\bigr),
    \qquad \Psi_{\sigma_0,\sigma_0}(x)=x .
  \end{equation*}
  For every \(\sigma,\sigma_0\in I\), the map \(\Psi_{\sigma,\sigma_0}\) is a
  \(C^\infty\)-diffeomorphism of \(\R^d\), with inverse \(\Psi_{\sigma_0,\sigma}\). In particular,
  for all \(\sigma_0,\sigma_1,\sigma_2\in I\),
  \begin{equation}\label{eq:exact_flow_group_property_sigma}
    \Psi_{\sigma_2,\sigma_0} = \Psi_{\sigma_2,\sigma_1}\circ\Psi_{\sigma_1,\sigma_0},
    \qquad \Psi_{\sigma_0,\sigma_0}=\Id .
  \end{equation}

  The heat-regularized laws are transported by this flow, that is, \cref{eq:pushforward_sigma}
  holds. If \(q\in\mathcal P_2(\R^d)\), then \(\mu_\sigma\in\mathcal P_2(\R^d)\) for every
  \(\sigma>0\), and \((\Psi_{\sigma,\sigma_0})_\#\) maps \(\mathcal P_2(\R^d)\) into itself.
\end{proposition}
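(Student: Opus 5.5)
We prove the statement in four steps: smoothness of \(v\), the local bounds \eqref{eq:local_lipschitz_v_sigma}, global existence and the diffeomorphism property via standard ODE theory, and transport of the laws via a uniqueness argument for the continuity equation.

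\textbf{Smoothness and local bounds.} By \Cref{lem:tweedie}, \(v_\sigma(x)=-\sigma\nabla p_\sigma(x)/p_\sigma(x)\). By \Cref{lem:gaussian_smoothing}, \(p\) is \(C^\infty\) and strictly positive on \(\R^d\times(0,\infty)\). Hence \((\sigma,x)\mapsto v_\sigma(x)\) is a \(C^\infty\) field on \((0,\infty)\times\R^d\). The bound \eqref{eq:local_lipschitz_v_sigma} is then just continuity of \(v\) and \(\nabla_x v\) on the compact set \(I\times\mathcal Q\). The linear growth \eqref{eq:linear_growth_v_sigma_flow} is exactly \Cref{lem:posterior_linear_growth} with \([\sigma_-,\sigma_+]=I\).

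\textbf{Flow maps.} Since the field is locally Lipschitz, Picard--Lindelöf gives unique maximal solutions through every \((\sigma_0,x)\). Linear growth and Grönwall give
\[
\norm{x_r}\le(\norm{x}+C_I|I|)e^{C_I|I|},
\]
so solutions cannot escape in finite time and are defined on all of \(I\), forward and backward. The composition rule \eqref{eq:exact_flow_group_property_sigma} follows from uniqueness, by concatenating solutions. Taking \(\sigma_2=\sigma_0\) shows that \(\Psi_{\sigma_0,\sigma}\) inverts \(\Psi_{\sigma,\sigma_0}\). Smooth dependence on initial data for \(C^\infty\) fields makes both maps \(C^\infty\), hence each \(\Psi_{\sigma,\sigma_0}\) is a diffeomorphism.

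\textbf{Transport identity, the main obstacle.} Set \(\nu_\sigma=(\Psi_{\sigma,\sigma_0})_\#\mu_{\sigma_0}\). This curve solves the continuity equation with field \(v\) and initial value \(\mu_{\sigma_0}\). By \Cref{prop:continuity_sigma_clock}, \(\mu_\sigma\) solves the same equation. We therefore need a uniqueness statement in a class containing \(\mu_\sigma\), and I would avoid citing general theory by a duality argument.

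Fix \(\sigma_1\in I\) and \(f\in C_c^\infty\). Define \(g(\sigma,x)=f(\Psi_{\sigma_1,\sigma}(x))\). It is smooth and solves the backward transport equation \(\partial_\sigma g+v\cdot\nabla g=0\).

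The function \(g(\sigma,\cdot)\) is supported in \(\Psi_{\sigma,\sigma_1}(\operatorname{supp}f)\). This set is compact and stays in a fixed compact set for \(\sigma\in I\), by continuity of the flow. Hence \(g(\sigma,\cdot)\in C_c^\infty\) uniformly in \(\sigma\), and \eqref{eq:weak_transport_sigma} extends to time-dependent test functions by the chain rule. Differentiation under the integral is justified because \(p\), \(\nabla p\) and \(\partial_\sigma g\) are bounded on compacts.

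It follows that
\[
\frac{\dd}{\dd\sigma}\int g(\sigma,\cdot)\,\dd\mu_\sigma=\int(\partial_\sigma g+v\cdot\nabla g)\,\dd\mu_\sigma=0.
\]
So \(\int f\,\dd\mu_{\sigma_1}=\int f\circ\Psi_{\sigma_1,\sigma_0}\,\dd\mu_{\sigma_0}\), which is \eqref{eq:pushforward_sigma}, since \(C_c^\infty\) functions determine the measure.

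\textbf{Second moments.} If \(q\in\mathcal P_2(\R^d)\), then \(\E\norm{X_0+\sigma Z}^2\le2\E\norm{X_0}^2+2\sigma^2d<\infty\), so \(\mu_\sigma\in\mathcal P_2(\R^d)\). The Grönwall bound above gives \(\norm{\Psi_{\sigma,\sigma_0}(x)}\le C(1+\norm{x})\). Hence \((\Psi_{\sigma,\sigma_0})_\#\) maps \(\mathcal P_2(\R^d)\) into itself.
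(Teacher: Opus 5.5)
Your proposal is correct. The smoothness, the local bounds, global existence by linear growth and Gr\"onwall, the group and diffeomorphism properties, and the second-moment claims follow the paper's proof essentially verbatim. The difference lies in the transport identity \eqref{eq:pushforward_sigma}.

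The paper invokes the characteristic representation theorem for the continuity equation (Ambrosio--Gigli--Savar\'e, Prop.~8.1.8). It first checks two hypotheses: local Lipschitz bounds on compacts, and the integrability \(\int_J\int\norm{v_\sigma}\,\dd\mu_\sigma\,\dd\sigma<\infty\), obtained from \(\E\norm{\E[Z\mid X_\sigma]}\le\E\norm{Z}\). Because that theorem runs forward in time, the paper then treats \(\sigma<\sigma_0\) separately, applying it on \([\sigma,\sigma_0]\) and inverting the flow.

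You instead prove the needed uniqueness directly, by duality with the backward transport equation. Your test function \(g(\sigma,x)=f(\Psi_{\sigma_1,\sigma}(x))\) is constant along characteristics. It is supported in the fixed compact set \(\{\Psi_{\sigma,\sigma_1}(y):\sigma\in I,\ y\in\operatorname{supp}f\}\). With it, \eqref{eq:weak_transport_sigma} becomes conservation of \(\int g(\sigma,\cdot)\,\dd\mu_\sigma\).

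Your route has three advantages. It is self-contained, it handles both time directions at once, and it needs no integrability of \(v\) against \(\mu_\sigma\), since your test functions have uniformly compact support. The price is that it relies on regularity special to this setting: the flow must be a global diffeomorphism, and \(p\) must be jointly smooth and positive so that the continuity equation holds classically. The cited theorem, by contrast, covers general narrowly continuous measure-valued solutions.

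Two points should be stated explicitly. First, \(\partial_\sigma g+v\cdot\nabla g=0\) uses smooth dependence of \(\Psi_{\sigma_1,\sigma}(x)\) on the initial time \(\sigma\), not only on the initial point. Second, differentiating \(\int g(\sigma,x)\,p_\sigma(x)\,\dd x\) under the integral requires bounds on \(\partial_\sigma p\), not \(\nabla p\), on compacts. Both hold because \(p\) is jointly \(C^\infty\).
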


\begin{proof}
By \Cref{lem:gaussian_smoothing}, the map \( (\sigma,x)\longmapsto p_\sigma(x)\) is \(C^\infty\) on
\((0,\infty)\times\R^d\), and \(p_\sigma(x)>0\) for every \(\sigma>0\) and every \(x\in\R^d\). Hence
\(s_\sigma(x)=\nabla\log p_\sigma(x)\) is \(C^\infty\) on \((0,\infty)\times\R^d\). By the Tweedie
identity (\Cref{lem:tweedie}), \(v_\sigma(x)=-\sigma s_\sigma(x)\) is \(C^\infty\) on
\((0,\infty)\times\R^d\). Since \(I\times\mathcal Q\) is compact for every compact \(\mathcal
Q\subset\R^d\), continuity of \(v\) and \(\nabla_xv\) gives \eqref{eq:local_lipschitz_v_sigma}. The
global linear-growth estimate \eqref{eq:linear_growth_v_sigma_flow} is
\eqref{eq:v_sigma_linear_growth_lemma} of \Cref{lem:posterior_linear_growth}.

We now construct the flow. Since \(v\) is continuous in \(\sigma\) and locally Lipschitz in \(x\),
uniformly on compact subsets of \(I\times\R^d\), the Picard--Lindelöf theorem gives a unique maximal
solution through every \((\sigma_0,x)\in I\times\R^d\); see \cite[Ch.~II, Sec.~1,
Thm.~1.1]{hartman_ode_1982}. It remains to show that the maximal solution is defined on all of
\(I\).

Let \(y_r\) be such a maximal trajectory, with \(y_{\sigma_0}=x\). By
\eqref{eq:linear_growth_v_sigma_flow},
\begin{equation*}
  \norm{\dot y_r} \leqslant C_I(1+\norm{y_r})
\end{equation*}
as long as the trajectory is defined. Therefore, for every \(r\) in the maximal interval of
existence,
\begin{equation*}
  1+\norm{y_r} \leqslant 1+\norm{x} + C_I \left| \int_{\sigma_0}^{r} (1+\norm{y_u})\,\dd u \right|.
\end{equation*}
Gr\"onwall's lemma gives
\begin{equation*}
  1+\norm{y_r} \leqslant \exp(C_I|r-\sigma_0|) (1+\norm{x})
\end{equation*}
as long as the solution is defined. Hence the trajectory remains bounded on the finite interval
\(I\). By the continuation criterion for ODEs, a maximal solution in \(I\times\R^d\) can fail to
extend only by leaving every compact set; see \cite[Ch.~II, Sec.~3,
Cor.~3.1--3.2]{hartman_ode_1982}. Thus the solution is defined on all of \(I\).

We denote the solution map by \(\Psi_{\sigma,\sigma_0}\). The flow identities
\eqref{eq:exact_flow_group_property_sigma} follow from uniqueness. Since \(v\) is \(C^\infty\), the
classical smooth-dependence theorem for ODEs implies that \(x\mapsto\Psi_{\sigma,\sigma_0}(x)\) is
\(C^\infty\); see \cite[Ch.~V, Sec.~3--4]{hartman_ode_1982}. Moreover,
\begin{equation*}
  \Psi_{\sigma,\sigma_0}\circ\Psi_{\sigma_0,\sigma} = \Psi_{\sigma,\sigma} = \Id,
\end{equation*}
and the same identity with \(\sigma\) and \(\sigma_0\) exchanged gives
\begin{equation*}
  \Psi_{\sigma_0,\sigma}\circ\Psi_{\sigma,\sigma_0} = \Id .
\end{equation*}
Thus \(\Psi_{\sigma,\sigma_0}\) is a \(C^\infty\)-diffeomorphism of \(\R^d\), with inverse
\(\Psi_{\sigma_0,\sigma}\).

It remains to identify the transported laws. The curve \(\sigma\longmapsto\mu_\sigma\) is narrowly
continuous, since \(\mu_\sigma=\Law(X_0+\sigma Z)\) and bounded continuous test functions are
handled by dominated convergence, as in \Cref{rem:initial_trace_sigma}. By
\Cref{prop:continuity_sigma_clock}, the curve \((\mu_\sigma)_{\sigma>0}\) solves
\begin{equation*}
  \partial_\sigma\mu_\sigma+\nabla\cdot(v_\sigma\mu_\sigma)=0
\end{equation*}
in the weak sense.

We verify the hypotheses of the characteristic representation theorem for the continuity equation on
an arbitrary compact subinterval \(J\subset I\). The local regularity condition is satisfied
because, for every compact \(\mathcal Q\subset\R^d\),
\begin{equation*}
  \int_J \sup_{x\in\mathcal Q} \Bigl( \norm{v_\sigma(x)}
  + \norm{\nabla_xv_\sigma(x)}_\op \Bigr) \,\dd\sigma <\infty .
\end{equation*}
Moreover, the velocity is integrable against the curve of laws:
\begin{equation*}
  \int_{\R^d}\norm{v_\sigma(x)}\,\mu_\sigma(\dd x) = \E\left[ \norm{\E[Z\mid X_\sigma]} \right]
  \leqslant \E\norm{Z}.
\end{equation*}
Hence
\begin{equation*}
  \int_J\int_{\R^d} \norm{v_\sigma(x)}\,\mu_\sigma(\dd x)\,\dd\sigma <\infty .
\end{equation*}
The characteristic representation formula for the continuity equation therefore applies; see
\cite[Prop.~8.1.8]{ambrosio_gigli_savare_2008}. It gives, for \(\sigma\ge\sigma_0\),
\(\mu_\sigma=(\Psi_{\sigma,\sigma_0})_\#\mu_{\sigma_0}\). If \(\sigma<\sigma_0\), applying the same
result on \([\sigma,\sigma_0]\) gives \(\mu_{\sigma_0}=(\Psi_{\sigma_0,\sigma})_\#\mu_\sigma\).
Pushing forward by the inverse map \(\Psi_{\sigma,\sigma_0}\) yields again
\(\mu_\sigma=(\Psi_{\sigma,\sigma_0})_\#\mu_{\sigma_0}\). Thus \eqref{eq:pushforward_sigma} holds
for all \(\sigma,\sigma_0\in I\).

Finally, assume \(q\in\mathcal P_2(\R^d)\). Then \(X_\sigma=X_0+\sigma Z\) has finite second moment
for every \(\sigma>0\), so \(\mu_\sigma\in\mathcal P_2(\R^d)\). The Gr\"onwall estimate also gives,
with \(C'_I=\exp\bigl(C_I(\sigma_+-\sigma_-)\bigr)\),
\begin{equation*}
  \norm{\Psi_{\sigma,\sigma_0}(x)} \leqslant C'_I(1+\norm{x}), \qquad \sigma,\sigma_0\in I .
\end{equation*}
Therefore, if \(\lambda\in\mathcal P_2(\R^d)\), then
\begin{equation*}
  \int_{\R^d} \norm{\Psi_{\sigma,\sigma_0}(x)}^2\,\lambda(\dd x)
  \leqslant 2{C'_I}^2 \int_{\R^d} \bigl(1+\norm{x}^2\bigr)\,\lambda(\dd x) <\infty .
\end{equation*}
Thus \((\Psi_{\sigma,\sigma_0})_\#\lambda\in\mathcal P_2(\R^d)\) for every \(\lambda\in\mathcal
P_2(\R^d)\), which is the final assertion.
\end{proof}

\subsection{The Refinement Family}

The family \(\Gsched\) must be infinite: several hypotheses ask for a constant independent of the
mesh, and on any finite collection of grids such a constant exists trivially. Fixed endpoints matter
twice: they make the terminal bias \(\sigma_K\sqrt d\) of
\cref{eq:edm_final_error_with_terminal_bias} the same for every member, and they let a
schedule-uniform per-step constant be summed into a schedule-uniform total. Since the grid is
uniform in \(\eta\), the relative step \(h_K/\eta_j\) is largest where \(\eta_j\) is smallest
(\Cref{fig:clock_grid1}), and the single bound \eqref{eq:schedule_family_relative_mesh} covers every
block of the schedule, with \(\underline\eta=\sighi^\rho\) on the high-noise block and
\(\underline\eta=\underline\sigma^\rho\) on the whole schedule.

\subsection{\texorpdfstring{From \(\eta\)-Euler to the EDM Predictor}{From eta-Euler to the EDM Predictor}}
\label{app:eta_euler_vs_edm_predictor}

\begin{remark}[The EDM predictor is not \(\eta\)-Euler]
  \label{rem:frozen_step_not_eta_euler}
  The frozen \(\sigma\)-step \eqref{eq:oracle_edm_predictor} is the map analyzed throughout. It is
  not the explicit Euler step of the \(\eta\)-clock ODE \cref{eq:eta_probability_flow_ode}: the
  exact flow is invariant under \(\eta=\sigma^\rho\) but explicit Euler is not, and the two use
  different relaxation coefficients, agreeing only to first order in \(h_j/\eta_j\). On the
  power-law clock the EDM predictor is the integrating-factor form of that \(\eta\)-step, whatever
  \(\rho\); the rest of this subsection records the derivation and the exact one-step discrepancy.
\end{remark}

The EDM predictor \cref{eq:oracle_edm_predictor,eq:learned_edm_predictor} is the finite-step sampler
analyzed in this paper. This choice should be separated from the clock convention itself. The exact
probability flow is invariant under the change of variables \(\eta=\sigma^\rho\), but explicit Euler
is not invariant under a nonlinear reparameterization of time. Consequently, an explicit Euler step
in the \(\eta\)-clock is not, at finite step size, the same update as the first-order predictor used
in EDM on the same positive noise levels \cite[Sec.~3]{karras_elucidating_2022}. This appendix makes
the difference precise in two ways: it gives the exact one-step discrepancy between the two updates,
and it reads the EDM predictor as an integrating-factor variant of \(\eta\)-Euler. The upshot is
that the EDM predictor, unlike \(\eta\)-Euler, is the same map on every power-law clock, so the
clock exponent changes only where the levels are placed, not the update. The full EDM sampler may
also include stochastic churn and a Heun correction \cite[Sec.~3]{karras_elucidating_2022}. The
comparison here concerns only the underlying deterministic first-order predictor, because this is
the map directly comparable with explicit \(\eta\)-Euler.

We work on a generic decreasing clock grid \(\eta_0>\eta_1>\cdots>\eta_K>0\), with local steps
\(h_j=\eta_j-\eta_{j+1}\) and noise levels \(\sigma_j=\eta_j^{1/\rho}\); on the refinement family
\(\Gsched\) all local steps are equal, \(h_j\equiv h_K\). The comparison is governed by the relative
step
\begin{equation*}
  \bar h_j = \frac{h_j}{\eta_j} = 1-\Bigl(\frac{\sigma_{j+1}}{\sigma_j}\Bigr)^{\rho} \in(0,1).
\end{equation*}

To make the distinction explicit, suppose that a learned denoiser \(\Dhat_{\sigma_j}\) induces the
learned \(\eta\)-clock velocity
\begin{equation*}
  \widehat w_j(x) = \frac{1}{\rho\eta_j} \bigl(x-\Dhat_{\sigma_j}(x)\bigr),
  \qquad \eta_j=\sigma_j^\rho.
\end{equation*}
The corresponding explicit \(\eta\)-Euler denoiser relaxation is
\begin{equation*}
  \Phihat_j^{\mathrm{Eu}}(x) = x-h_j\widehat w_j(x)
  = (1-a_j^{\mathrm{Eu}})x+a_j^{\mathrm{Eu}}\Dhat_{\sigma_j}(x), \qquad a_j^{\mathrm{Eu}}
  = \frac{h_j}{\rho\eta_j} = \frac{\bar h_j}{\rho}.
\end{equation*}
By contrast, the EDM predictor \cref{eq:learned_edm_predictor} has the same form with the relaxation
coefficient
\begin{equation*}
  a_j = 1-\frac{\sigma_{j+1}}{\sigma_j} = 1-(1-\bar h_j)^{1/\rho}.
\end{equation*}

Thus both updates use the same denoiser and the same pair of positive noise levels, but they use
different relaxation coefficients, \(a_j^{\mathrm{Eu}}=\bar h_j/\rho\) and \(a_j=1-(1-\bar
h_j)^{1/\rho}\). For fixed \(\rho>0\), Taylor expansion gives
\begin{equation*}
  a_j = \frac{\bar h_j}{\rho} - \frac{1-\rho}{2\rho^2}\bar h_j^2 + O_\rho(\bar h_j^3)
  = a_j^{\mathrm{Eu}} - \frac{1-\rho}{2\rho^2}\bar h_j^2 + O_\rho(\bar h_j^3).
\end{equation*}
The two schemes are therefore first-order equivalent under the relative mesh condition \(\max_j\bar
h_j\to0\), but they need not be close on a coarse terminal grid, where \(\eta_j\) is small and
\(\bar h_j\) may be of order one. The one-step discrepancy has the exact algebraic form
\begin{equation*}
  \Phihat_j^{\mathrm{Eu}}(x)-\Phihat_j(x)
  = \bigl(a_j^{\mathrm{Eu}}-a_j\bigr) \bigl(\Dhat_{\sigma_j}(x)-x\bigr).
\end{equation*}
Consequently,
\begin{equation*}
  \norm{\Phihat_j^{\mathrm{Eu}}(x)-\Phihat_j(x)}
  \leqslant \left|a_j^{\mathrm{Eu}}-a_j\right| \norm{x-\Dhat_{\sigma_j}(x)}.
\end{equation*}
This discrepancy identity is the basic reconciliation between the \(\eta\)-clock Euler sampler and
the EDM finite-step predictor: they discretize the same continuous probability flow and may use the
same denoiser, but they are distinct finite-step integrators.

There is also a useful way to read the EDM coefficient from the \(\eta\)-clock. If, over one step,
the denoiser is frozen at its left-endpoint value \(\Dhat_{\sigma_j}(x_j)\), then the linear
equation
\begin{equation*}
  \frac{\dd x}{\dd\eta} = \frac{1}{\rho\eta} \bigl(x-\Dhat_{\sigma_j}(x_j)\bigr)
\end{equation*}
can be integrated exactly from \(\eta_j\) to \(\eta_{j+1}\). This gives
\begin{equation*}
  x_{j+1} = \frac{\sigma_{j+1}}{\sigma_j}x_j
  + \left(1-\frac{\sigma_{j+1}}{\sigma_j}\right) \Dhat_{\sigma_j}(x_j),
\end{equation*}
which is precisely the EDM predictor. In this sense, on the power-law clock, the EDM predictor may
be viewed as a \(\sigma\)-adapted or integrating-factor variant of the \(\eta\)-Euler step. It
treats the leading scalar factor \((\rho\eta)^{-1}\) exactly while freezing the denoiser over the
step. The result does not depend on \(\rho\), which is the invariance announced above.

\paragraph{Relation to prior work: identification of the predictor.}
The predictor analyzed here is the deterministic DDIM update \cite{song_denoising_implicit_2021} in
the variance-exploding parametrization, which is also the first-order DPM-Solver step
\cite{lu_dpm_solver_2022}.

\subsection{Proof of the One-Step Decomposition}

\begin{proof}[Proof of \cref{eq:law_level_one_step}]
By the triangle inequality in \(\Wtwo\), applied to \(\mu_{j+1}=(\Psi_j)_\#\mu_j\) and
\(\widehat\mu_{j+1}=(\Phihat_j)_\#\widehat\mu_j\),
\begin{equation*}
  e_{j+1} \leqslant \Wtwo\!\left((\Psi_j)_\#\mu_j,(\Phi_j)_\#\mu_j\right)
  + \Wtwo\!\left((\Phi_j)_\#\mu_j,(\Phihat_j)_\#\mu_j\right)
  + \Wtwo\!\left((\Phihat_j)_\#\mu_j,(\Phihat_j)_\#\widehat\mu_j\right).
\end{equation*}
All four laws lie in \(\mathcal P_2(\R^d)\) by \Cref{ass:wellposed}, so every term is finite. The
first two terms are bounded by the cost of the diagonal coupling induced by \(\mu_j\): the first
gives \(\delta_j\) by \cref{eq:def_local_defect_learned_map}, and the second gives the learning term
through the pointwise identity \(\Phi_j-\Phihat_j=a_j(D_{\sigma_j}-\Dhat_{\sigma_j})\) evaluated at
\(X_{\sigma_j}\sim\mu_j\). The third term equals \(\gamma_je_j\) when \(e_j>0\) by
\cref{eq:def_raw_amplification}, and vanishes when \(e_j=0\), since then \(\mu_j=\widehat\mu_j\) and
the two pushforwards coincide.
\end{proof}

\subsection{Three One-Step Decompositions}
\label{app:alt_decomposition}

This subsection justifies the order of the one-step decomposition \eqref{eq:law_level_one_step}: of
the three possible decompositions compared below, it is the only one that keeps both local errors it
creates, discretization and learning, on the exact law \(\mu_j\).

Let \((Y_j,\widehat X_j)\) be an optimal coupling of \((\mu_j,\widehat\mu_j)\), and set
\(Y_{j+1}=\Psi_j(Y_j)\sim\mu_{j+1}\) and \(\widehat X_{j+1}=\Phihat_j(\widehat
X_j)\sim\widehat\mu_{j+1}\). Any three-term bound on \(\norm{Y_{j+1}-\widehat X_{j+1}}_{\mathbb
L_2}\) walks along the chain \(\Psi_j\to\Phi_j\to\Phihat_j\), whose arrows are the discretization
bias and the learning error, and changes the argument from \(Y_j\) to \(\widehat X_j\) exactly once.
Where the change happens fixes which map is transported, and under which law the two local errors
are measured: upstream of the change they sit on the exact law \(\mu_j\), downstream on the sampler
law \(\widehat\mu_j\). Writing
\(\varepsilon_j(\lambda)=a_j\norm{\Dhat_{\sigma_j}-D_{\sigma_j}}_{\mathbb L_2(\lambda)}\) for the
learning error under a law \(\lambda\), so that \(\varepsilon_j(\mu_j)=a_j\beta_j\Rres_j\) by
\cref{eq:edm_den_error_residual}, the three options are
\begin{align*}
  \text{last:}\quad e_{j+1} \leqslant\norm{\Phihat_j(Y_j)-\Phihat_j(\widehat X_j)}_{\mathbb L_2}
  +\norm{\Psi_j-\Phi_j}_{\mathbb L_2(\mu_j)}+\varepsilon_j(\mu_j), \\
  \text{middle:}\quad e_{j+1} \leqslant\norm{\Phi_j(Y_j)-\Phi_j(\widehat X_j)}_{\mathbb L_2}
  +\norm{\Psi_j-\Phi_j}_{\mathbb L_2(\mu_j)}+\varepsilon_j(\widehat\mu_j), \\
  \text{first:}\quad e_{j+1} \leqslant\norm{\Psi_j(Y_j)-\Psi_j(\widehat X_j)}_{\mathbb L_2}
  +\norm{\Psi_j-\Phi_j}_{\mathbb L_2(\widehat\mu_j)}+\varepsilon_j(\widehat\mu_j).
\end{align*}

\begin{center}
  \begin{tabular}{llll}
\toprule
Argument change & Transported map & Discretization on & Learning on \\
\midrule
after \(\Phihat_j\) & learned \(\Phihat_j\) & \(\mu_j\) & \(\mu_j\) \\
after \(\Phi_j\) & oracle \(\Phi_j\) & \(\mu_j\) & \(\widehat\mu_j\) \\
after \(\Psi_j\) & exact flow \(\Psi_j\) & \(\widehat\mu_j\) & \(\widehat\mu_j\) \\
\bottomrule
  \end{tabular}
\end{center}

The choice trades the stability of one map against the law under which the local errors live. The
first option transports the exact flow, whose stability is a property of the data alone; this is the
route of the log-concave analysis compared in \Cref{app:related_work_extended}. It pays by moving
both local errors onto \(\widehat\mu_j\), where neither the universal acceleration bound nor the
identity \(\varepsilon_j(\mu_j)=a_j\beta_j\Rres_j\) applies without a change-of-measure estimate,
and \(\Wtwo\)-closeness of \(\widehat\mu_j\) to \(\mu_j\) alone does not supply one. The middle
option keeps the acceleration bound but still moves the learning error off the exact law, and
transports the oracle predictor, with
\(\operatorname{Lip}(\Phi_j)\leqslant1-a_j+a_j\operatorname{Lip}(D_{\sigma_j})\) set by the score
geometry of the path. Without structural assumptions on \(q\), \(\operatorname{Lip}(D_{\sigma_j})\)
is not controlled at low noise, so this transported term is a cost rather than a gain. The last
option, used in \cite{beyler_convergence_2025}, is a coupling-level form of the decomposition
\eqref{eq:law_level_one_step}. It keeps both local errors on \(\mu_j\), where
\Cref{thm:global_curvature_bound} and \cref{eq:edm_den_error_residual} apply directly, and places
the whole dependence on the network in the stability of \(\Phihat_j\). Its transported term is the
cost of one particular coupling of \((\Phihat_j)_\#\mu_j\) and \((\Phihat_j)_\#\widehat\mu_j\), and
so dominates the intrinsic term \(\gamma_je_j\) of \cref{eq:law_level_one_step}, which measures the
optimal one.

\Cref{app:related_work_extended} compares these routes, as they are used in the closest Wasserstein
analyses, with the present one.

\section{Universal Local Error}
\label{app:local_error}
\label{app:curvature}

This appendix supports \Cref{sec:discretization_error}. It proves the universal acceleration bound
\Cref{thm:global_curvature_bound} and the bound \Cref{lem:local_euler_defect} on the local
discretization bias (\Cref{subsec:proof_local_defect}), states the formal extrapolation to
higher-order integrators (\Cref{rem:higher_order_extrapolation}), and closes with exact formulas for
Gaussian input laws and the sharpness of the acceleration exponent. The acceleration \(\mathcal
A_\rho(\sigma,x)\) and the constant \(B_{d,\rho}\) are defined in
\Cref{subsec:acceleration_numerical_clock}; \(\mathcal A_1\) denotes the native \(\sigma\)-clock
case. The argument rewrites the acceleration in a sequence of increasingly convenient
coordinates---first in score--Hessian form, then in terms of conditional moments of the Gaussian
noise. From the noise-moment form the universal \(\mathbb L_2\) bound follows by elementary
inequalities, after which the same identity yields exact formulas in the Gaussian case.

\subsection{Clock Change and the Score--Hessian Form}

Before turning to the quantitative estimate, it is worth seeing where the curvature of the clock,
its second derivative \(\dd^2\sigma/\dd\eta^2\), enters. Under a smooth reparametrization
\(\sigma=\sigma(\eta)\) the velocity transforms as
\(w_\eta(x)=\tfrac{\dd\sigma}{\dd\eta}v_\sigma(x)\), and the chain rule gives
\begin{equation*}
  \partial_\eta w_\eta(x)+\nabla w_\eta(x)\,w_\eta(x)
  = \left(\frac{\dd\sigma}{\dd\eta}\right)^2 \left( \partial_\sigma v_\sigma(x)
  + \nabla v_\sigma(x)\,v_\sigma(x) \right) + \frac{\dd^2\sigma}{\dd\eta^2}v_\sigma(x).
\end{equation*}
For the power-law clock \(\eta=\sigma^\rho\) with \(\rho>0\) one has
\(\tfrac{\dd\sigma}{\dd\eta}=\tfrac{1}{\rho}\sigma^{1-\rho}\) and
\(\tfrac{\dd^2\sigma}{\dd\eta^2}=\tfrac{1-\rho}{\rho^2}\sigma^{1-2\rho}\), so that
\begin{equation*}
  \mathcal A_\rho(\sigma,x) = \frac{1}{\rho^2}\sigma^{2-2\rho}\mathcal A_1(\sigma,x)
  + \frac{1-\rho}{\rho^2}\sigma^{1-2\rho}v_\sigma(x).
\end{equation*}
The acceleration in the numerical clock is thus a rescaling of the acceleration in the noise clock,
plus a multiple of the velocity \(v_\sigma\) itself, with coefficient proportional to \(1-\rho\): a
genuinely curved clock (\(\rho\neq1\)) injects this velocity term, which a linear clock (\(\rho=1\))
does not. It reappears as the term \((1-\rho)\Idd\,N_\sigma\) of the noise-moment identity
\eqref{eq:Aeta_noise_moment_identity}, since \(N_\sigma=v_\sigma\) by Tweedie's identity.

For the quantitative estimate it is more convenient to write the clock acceleration in
score--Hessian coordinates. Besides the score \(s_\sigma=\nabla\log p_\sigma\), we write
\begin{equation*}
  H_\sigma=\nabla^2\log p_\sigma=\nabla s_\sigma
\end{equation*}
for the Hessian of the log-density of \(\mu_\sigma\); both are smooth on \(\R^d\) because
\(p_\sigma\) is smooth and positive.

\begin{lemma}[Score--Hessian formula for the clock acceleration]
  Let \(\rho>0\) and let \(q\) be a Borel probability measure on \(\R^d\). For every \(x\in\R^d\)
  and every \(\sigma>0\),
  \begin{equation}\label{eq:Aeta_score_hessian_formula}
    \mathcal A_\rho(\sigma,x) = \frac{1}{\rho^2}\left( -\sigma^{4-2\rho}\nabla\Tr H_\sigma(x)
    -\sigma^{4-2\rho}H_\sigma(x)s_\sigma(x) +(\rho-2)\sigma^{2-2\rho}s_\sigma(x) \right).
  \end{equation}
\end{lemma}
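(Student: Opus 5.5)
The plan is to start from the clock-change identity displayed just before the statement,
\begin{equation*}
  \mathcal A_\rho(\sigma,x) = \frac{1}{\rho^2}\sigma^{2-2\rho}\mathcal A_1(\sigma,x)
  + \frac{1-\rho}{\rho^2}\sigma^{1-2\rho}v_\sigma(x),
\end{equation*}
which reduces everything to the native clock. We then compute \(\mathcal A_1=\partial_\sigma v_\sigma+\nabla v_\sigma\,v_\sigma\) in score--Hessian coordinates. All quantities are smooth on \((0,\infty)\times\R^d\) by \Cref{lem:gaussian_smoothing}, and \(p_\sigma>0\), so every manipulation below is pointwise and requires no integrability of \(q\).

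\textbf{Step 1 (heat equation in \(\sigma\)).} From \(\partial_\sigma\phi_\sigma=\sigma\Delta\phi_\sigma\) and \eqref{eq:app_density_sigma} we get \(\partial_\sigma p_\sigma=\sigma\Delta p_\sigma\). The differentiation under the integral sign is justified exactly as in the proof of \Cref{lem:gaussian_smoothing}, since derivatives of \(\phi_\sigma\) are uniformly bounded on compacts with \(\sigma\) bounded away from zero. Dividing by \(p_\sigma\) and using \(\Delta p/p=\Tr H_\sigma+\|s_\sigma\|^2\) gives
\begin{equation*}
  \partial_\sigma\log p_\sigma=\sigma\bigl(\Tr H_\sigma+\|s_\sigma\|^2\bigr).
\end{equation*}
Since \(\log p_\sigma\) is jointly \(C^\infty\), we may exchange \(\partial_\sigma\) and \(\nabla\). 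With \(\nabla\|s\|^2=2Hs\) (as \(H\) is symmetric), this yields
\begin{equation*}
  \partial_\sigma s_\sigma=\sigma\bigl(\nabla\Tr H_\sigma+2H_\sigma s_\sigma\bigr).
\end{equation*}

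\textbf{Step 2 (native acceleration).} By \Cref{lem:tweedie}, \(v_\sigma=-\sigma s_\sigma\). Hence \(\partial_\sigma v_\sigma=-s_\sigma-\sigma^2(\nabla\Tr H_\sigma+2H_\sigma s_\sigma)\), while \(\nabla v_\sigma\,v_\sigma=(-\sigma H_\sigma)(-\sigma s_\sigma)=\sigma^2H_\sigma s_\sigma\). Adding the two gives
\begin{equation*}
  \mathcal A_1=-s_\sigma-\sigma^2\nabla\Tr H_\sigma-\sigma^2H_\sigma s_\sigma .
\end{equation*}

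\textbf{Step 3 (assembly).} Substitute this into the clock-change identity and replace \(v_\sigma=-\sigma s_\sigma\) in the correction term, which becomes \(-(1-\rho)\rho^{-2}\sigma^{2-2\rho}s_\sigma\). The two \(s_\sigma\) terms combine into \(-(1+1-\rho)=(\rho-2)\) times \(\rho^{-2}\sigma^{2-2\rho}s_\sigma\). The remaining terms carry the factor \(\rho^{-2}\sigma^{4-2\rho}\), which is exactly \eqref{eq:Aeta_score_hessian_formula}.

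If the clock-change identity itself is to be proved rather than taken from the text, it follows from differentiating \(w_\eta(x)=\sigma'(\eta)v_{\sigma(\eta)}(x)\) in \(\eta\) by the chain rule, with \(\sigma'(\eta)=\rho^{-1}\sigma^{1-\rho}\) and \(\sigma''(\eta)=(1-\rho)\rho^{-2}\sigma^{1-2\rho}\).

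The only point needing care is Step 1: commuting \(\partial_\sigma\) with \(\nabla\) and with the integral against an arbitrary Borel \(q\). Both are covered by the joint smoothness and strict positivity from \Cref{lem:gaussian_smoothing}. Everything else is algebra, where the main risk is bookkeeping of the powers of \(\sigma\) and the sign in front of \(H_\sigma s_\sigma\), which comes from \(-2+1=-1\).
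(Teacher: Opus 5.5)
Your proof is correct and is essentially the paper's argument. The key input is the same heat-equation identity $\partial_\sigma s_\sigma=\sigma(\nabla\Tr H_\sigma+2H_\sigma s_\sigma)$, followed by chain-rule algebra; you route it through the native acceleration $\mathcal A_1=-s_\sigma-\sigma^2\nabla\Tr H_\sigma-\sigma^2H_\sigma s_\sigma$ and the clock-change identity, whereas the paper differentiates $w_\eta=-\rho^{-1}\sigma^{2-\rho}s_\sigma$ directly in $\eta$ (a bookkeeping difference only).
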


\begin{proof}
The heat equation for the Gaussian smoothing in the \(\sigma\)-clock is
\begin{equation*}
  \partial_\sigma p_\sigma(x) = \sigma\Tr\nabla^2 p_\sigma(x).
\end{equation*}
Since \(\nabla^2p_\sigma/p_\sigma=H_\sigma+s_\sigma s_\sigma^\top\), dividing by \(p_\sigma\) gives
\begin{equation*}
  \partial_\sigma \log p_\sigma = \sigma\left(\Tr H_\sigma+\norm{s_\sigma}^2\right).
\end{equation*}
Differentiating in \(x\) gives
\begin{equation}\label{eq:partial_sigma_score_curvature}
  \partial_\sigma s_\sigma = \sigma\left(\nabla\Tr H_\sigma+2H_\sigma s_\sigma\right).
\end{equation}
Using
\begin{equation*}
  w_\eta(x) = -\frac{1}{\rho}\sigma^{2-\rho}s_\sigma(x), \qquad \frac{\dd\sigma}{\dd\eta}
  = \frac{1}{\rho}\sigma^{1-\rho},
\end{equation*}
we compute
\begin{equation*}
  \partial_\eta w_\eta(x) = -\frac{2-\rho}{\rho^2}\sigma^{2-2\rho}s_\sigma(x)
  - \frac{1}{\rho^2}\sigma^{3-2\rho}\partial_\sigma s_\sigma(x),
\end{equation*}
and
\begin{equation*}
  \nabla w_\eta(x)\,w_\eta(x) = \frac{1}{\rho^2}\sigma^{4-2\rho}H_\sigma(x)s_\sigma(x).
\end{equation*}
Therefore
\begin{equation*}
  \mathcal A_\rho(\sigma,x) = \frac{1}{\rho^2} \left( (\rho-2)\sigma^{2-2\rho}s_\sigma(x)
  -\sigma^{3-2\rho}\partial_\sigma s_\sigma(x) +\sigma^{4-2\rho}H_\sigma(x)s_\sigma(x) \right).
\end{equation*}
Substituting \cref{eq:partial_sigma_score_curvature} yields \cref{eq:Aeta_score_hessian_formula}.
\end{proof}

\subsection{The Noise-Moment Identity}

We now rewrite the same acceleration in terms of conditional moments of the Gaussian noise. This is
the form from which the universal \(\mathbb L_2\) bound follows.

Throughout the rest of this appendix, for fixed \(x\in\R^d\) and \(\sigma>0\), the notations
\(\E[\,\cdot\mid X_\sigma=x]\) and \(\Cov(\,\cdot\mid X_\sigma=x)\) denote integrals in \(X_0\)
against the posterior kernel \(\pi_{x,\sigma}\), with \(Z=(x-X_0)/\sigma\) under this conditioning.
By \cref{eq:posterior_kernel_identity_joint}, evaluating these pointwise representatives at
\(x=X_\sigma\) realizes the corresponding conditional expectations given \(X_\sigma\) almost surely,
including for the unbounded integrands, depending on the conditioning point, considered here.

With this convention, recall that
\begin{equation*}
  N_\sigma(x) = \E[Z \mid X_\sigma = x]
\end{equation*}
is the conditional mean of the noise given \(X_\sigma = x\). Define the conditional noise covariance
by
\begin{equation*}
  \mathcal C_\sigma(x) = \Cov(Z\mid X_\sigma=x),
\end{equation*}
and the contracted third centered conditional noise moment by
\begin{equation*}
  \mathfrak m_{3,\sigma}(x)
  = \E\left[ \norm{Z-N_\sigma(x)}^2 \bigl(Z-N_\sigma(x)\bigr) \mid X_\sigma=x \right].
\end{equation*}

For the relation of these identities to higher-order Tweedie formulas, see
\Cref{app:related_work_extended}. The posterior differentiation identity used below is the Gaussian
specialization of the meta-derivative identity of Dytso--Poor--Shamai
\cite[Thm.~1]{dytso_poor_shamai_gaussian_derivative_2023}; see also the observation-dependent
formulation of Dytso--Cardone--Zieder \cite{dytso_cardone_zieder_meta_derivative_2023}. We include
the short proof in our notation because \(q\) is only assumed to be a Borel probability measure, and
because the function to which the identity is applied depends explicitly on the conditioning point.

\begin{lemma}[Gaussian posterior differentiation rule]
  \label{lem:gaussian_posterior_differentiation}
  Let \(q\) be a Borel probability measure on \(\R^d\) and fix \(\sigma>0\). For a measurable
  function \(F:\R^d\times\R^d\to\R\), let \(\langle F\rangle_\sigma\) be the posterior average
  defined in \cref{eq:app_def_posterior_average}, so that, with the convention above,
  \begin{equation*}
    \E[F(x,X_0)\mid X_\sigma=x] = \langle F\rangle_\sigma(x)
    = \int_{\R^d}F(x,y)\,\pi_{x,\sigma}(\dd y).
  \end{equation*}
  Assume that \(F\) is continuously differentiable in its first variable and that, locally uniformly
  in \(x\), both \(F(x,y)\) and \(\nabla_xF(x,y)\) have at most polynomial growth in \(y\). Then
  \(x\mapsto \langle F\rangle_\sigma(x)\) is \(C^1\), and
  \begin{equation}\label{eq:gaussian_posterior_differentiation_rule}
    \nabla_x \E[F(x,X_0)\mid X_\sigma=x] = \E[\nabla_xF(x,X_0)\mid X_\sigma=x]
    + \Cov\left( F(x,X_0), \frac{X_0-x}{\sigma^2} \,\middle|\, X_\sigma=x \right).
  \end{equation}
  Here the covariance between a scalar and a vector is understood componentwise. The same formula
  holds componentwise for vector-valued or matrix-valued \(F\).
\end{lemma}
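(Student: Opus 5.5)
The plan is to write the posterior average as a ratio of two integrals against \(q\) and differentiate each under the integral sign. Put \(g_\sigma(x,y)=\exp(-\norm{x-y}^2/(2\sigma^2))\); the normalizing constant of \(\phi_\sigma\) cancels. Then
\begin{equation*}
  \langle F\rangle_\sigma(x)=\frac{N_F(x)}{\bar p_\sigma(x)},\qquad
  N_F(x)=\int_{\R^d}F(x,y)\,g_\sigma(x,y)\,q(\dd y),\qquad
  \bar p_\sigma(x)=\int_{\R^d}g_\sigma(x,y)\,q(\dd y).
\end{equation*}
The denominator \(\bar p_\sigma\) is smooth and strictly positive by \Cref{lem:gaussian_smoothing}, and \(\nabla_x g_\sigma(x,y)=g_\sigma(x,y)\,(y-x)/\sigma^2\). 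It is therefore enough to show that \(N_F\) is \(C^1\) with
\begin{equation*}
  \nabla N_F(x)=\int_{\R^d}\Bigl[\nabla_xF(x,y)+F(x,y)\,\tfrac{y-x}{\sigma^2}\Bigr]g_\sigma(x,y)\,q(\dd y),
\end{equation*}
and to apply the quotient rule.

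The main step, and the one needing care, is justifying differentiation under the integral sign for a merely Borel \(q\) with no moments. Fix a compact ball \(\mathcal Q\) around \(x\). By hypothesis there are \(C,k\) with \(|F(x',y)|+\norm{\nabla_xF(x',y)}\le C(1+\norm{y})^k\) for \(x'\in\mathcal Q\). The derivative of the integrand is then bounded by \(C(1+\norm{y})^{k+1}(1+\sup_{\mathcal Q}\norm{x'})\sigma^{-2}g_\sigma(x',y)\).

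The decisive observation is that \(\sup_{x'\in\mathcal Q}(1+\norm{y})^{m}g_\sigma(x',y)\) is a bounded function of \(y\). The Gaussian factor decays like \(\exp(-(\norm{y}-r)^2/(2\sigma^2))\) once \(\norm{y}\) exceeds the radius \(r\) of \(\mathcal Q\), and this beats any polynomial. A bounded function is integrable against the probability measure \(q\), so dominated convergence combined with the mean value theorem gives differentiability. The same domination shows the derivative is continuous in \(x\). This is exactly the argument used for the polynomial moments in \Cref{lem:gaussian_smoothing}.

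The quotient rule then gives
\begin{equation*}
  \nabla\langle F\rangle_\sigma=\frac{\nabla N_F}{\bar p_\sigma}-\langle F\rangle_\sigma\,\frac{\nabla\bar p_\sigma}{\bar p_\sigma}.
\end{equation*}
The first term equals \(\E[\nabla_xF(x,X_0)\mid X_\sigma=x]+\E[F(x,X_0)(X_0-x)/\sigma^2\mid X_\sigma=x]\). In the second term, \(\nabla\bar p_\sigma/\bar p_\sigma=\E[(X_0-x)/\sigma^2\mid X_\sigma=x]\), which is the Tweedie computation of \Cref{lem:tweedie}.

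Subtracting, the product of conditional means combines with the conditional expectation of the product into the conditional covariance. This yields \eqref{eq:gaussian_posterior_differentiation_rule}. All these conditional moments are finite by the same polynomial-growth bound.

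The vector- and matrix-valued cases follow by applying the scalar result to each component.
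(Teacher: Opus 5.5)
Your proposal is correct and follows essentially the same route as the paper: write \(\langle F\rangle_\sigma\) as a ratio of Gaussian-weighted integrals against \(q\), differentiate under the integral sign because the Gaussian factor dominates every polynomial in \(y\) uniformly on compact \(x\)-sets, and combine the quotient-rule terms into the conditional covariance. One small slip: your explicit envelope should carry a factor such as \(1+\sigma^{-2}\) rather than \(\sigma^{-2}\), since otherwise it does not cover the \(\nabla_xF\,g_\sigma\) term when \(\sigma>1\); any polynomial-times-Gaussian envelope suffices, so the argument is unaffected.
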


\begin{proof}
With the Gaussian density \(\phi_\sigma\) of \cref{eq:app_density_sigma}, the definition of the
posterior kernel gives
\begin{equation*}
  \langle F\rangle_\sigma(x) = \frac{I_F(x)}{I_1(x)},
  \qquad I_F(x)=\int_{\R^d}F(x,y)\phi_\sigma(x-y)\,q(\dd y),
  \qquad I_1(x)=\int_{\R^d}\phi_\sigma(x-y)\,q(\dd y)=p_\sigma(x).
\end{equation*}
For \(x\) in a compact set, the Gaussian factor dominates every polynomial in \(y\), uniformly in
\(x\). Therefore the assumed polynomial-growth bounds justify differentiation under the integral
sign. Since
\begin{equation*}
  \nabla_x \phi_\sigma(x-y) = \frac{y-x}{\sigma^2}\phi_\sigma(x-y),
\end{equation*}
we obtain
\begin{equation*}
  \nabla I_F(x) = \int_{\R^d} \left( \nabla_xF(x,y)
  + F(x,y)\frac{y-x}{\sigma^2} \right) \phi_\sigma(x-y)\,q(\dd y),
\end{equation*}
and
\begin{equation*}
  \nabla I_1(x) = \int_{\R^d} \frac{y-x}{\sigma^2} \phi_\sigma(x-y)\,q(\dd y).
\end{equation*}
The quotient rule gives
\begin{equation*}
  \begin{split} \nabla_x\langle F\rangle_\sigma(x) ={} &\E[\nabla_xF(x,X_0)\mid X_\sigma=x]
    + \E\left[ F(x,X_0)\frac{X_0-x}{\sigma^2} \,\middle|\, X_\sigma=x \right] \\ &-
    \E[F(x,X_0)\mid X_\sigma=x]\, \E\left[ \frac{X_0-x}{\sigma^2} \,\middle|\, X_\sigma=x \right],
  \end{split}
\end{equation*}
which is exactly \cref{eq:gaussian_posterior_differentiation_rule}.
\end{proof}

With the posterior differentiation rule in hand, we can eliminate the score and Hessian in favour of
conditional moments of the noise alone. The score is handled by Tweedie's identity, the Hessian by
\Cref{lem:hessian_covariance_identity}, and the gradient of its trace by
\Cref{lem:trace_gradient_identity}, which is where the posterior differentiation rule enters;
substituting the three into \cref{eq:Aeta_score_hessian_formula} gives the noise-moment identity
\Cref{prop:noise_moment_identity}.

\begin{lemma}[Hessian--covariance identity]
  \label{lem:hessian_covariance_identity}
  Let \(q\) be a Borel probability measure on \(\R^d\). For every \(x\in\R^d\) and every
  \(\sigma>0\),
  \begin{equation}\label{eq:H_noise_covariance_identity}
    s_\sigma(x) = -\sigma^{-1}N_\sigma(x), \qquad H_\sigma(x)
    = \sigma^{-2} \bigl( \mathcal C_\sigma(x)-\Idd \bigr).
  \end{equation}
\end{lemma}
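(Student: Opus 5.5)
The score identity is already Tweedie's formula, so the plan centres on the Hessian. By \Cref{lem:tweedie}, \(s_\sigma(x)=(D_\sigma(x)-x)/\sigma^2=-\sigma^{-1}N_\sigma(x)\), which is the first half of \eqref{eq:H_noise_covariance_identity}. For the Hessian I will differentiate the posterior-mean representation \(s_\sigma(x)=\sigma^{-2}\bigl(\E[X_0\mid X_\sigma=x]-x\bigr)\) in \(x\). To do this I apply \Cref{lem:gaussian_posterior_differentiation} componentwise to the vector-valued function \(F(x,y)=y\). This \(F\) does not depend on \(x\), so \(\nabla_xF=0\), and it has linear growth in \(y\). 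The polynomial-growth hypothesis therefore holds, and the finiteness of all posterior moments in \Cref{lem:gaussian_smoothing} makes every quantity below well defined for every Borel \(q\).

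The rule \eqref{eq:gaussian_posterior_differentiation_rule} then gives the Jacobian of the denoiser:
\begin{equation*}
  \nabla D_\sigma(x)=\Cov\Bigl(X_0,\tfrac{X_0-x}{\sigma^2}\Bigm| X_\sigma=x\Bigr)
  =\sigma^{-2}\Cov(X_0\mid X_\sigma=x).
\end{equation*}
The shift by the conditioning point \(x\) drops out of the covariance. Next I convert to noise coordinates. Under the convention \(Z=(x-X_0)/\sigma\), we have \(X_0=x-\sigma Z\) with \(x\) fixed inside the conditioning, so \(\Cov(X_0\mid X_\sigma=x)=\sigma^2\,\mathcal C_\sigma(x)\). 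Hence \(\nabla D_\sigma(x)=\mathcal C_\sigma(x)\).

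Finally, \(H_\sigma=\nabla s_\sigma=\sigma^{-2}(\nabla D_\sigma-\Idd)=\sigma^{-2}(\mathcal C_\sigma-\Idd)\). Smoothness of \(p_\sigma\) (\Cref{lem:gaussian_smoothing}) ensures that \(H_\sigma\) agrees with this classical derivative at every \(x\).

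The only step needing care is the applicability of the differentiation rule, together with the fact that the pointwise representatives are used for every \(x\) rather than almost everywhere. Both are already handled by the kernel definition of \(\pi_{x,\sigma}\) and by the local uniform domination argument in the proof of \Cref{lem:gaussian_posterior_differentiation}. I therefore expect no real obstacle beyond keeping the signs straight in the change \(X_0=x-\sigma Z\).
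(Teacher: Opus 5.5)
Your proof is correct, but it reaches the Hessian by a different route from the paper. The paper never differentiates the denoiser. It differentiates the Gaussian kernel twice under the integral sign to obtain $\nabla^2 p_\sigma(x)/p_\sigma(x)=\sigma^{-2}\bigl(\E[ZZ^\top\mid X_\sigma=x]-\Idd\bigr)$. It then uses $H_\sigma=\nabla^2p_\sigma/p_\sigma-s_\sigma s_\sigma^\top$ together with $s_\sigma=-\sigma^{-1}N_\sigma$, so the covariance appears as the second moment minus $N_\sigma N_\sigma^\top$. You instead differentiate the first-order Tweedie identity once, and the covariance comes out of the quotient rule inside \Cref{lem:gaussian_posterior_differentiation}.

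The paper's argument is slightly more elementary. It needs only the smoothness of $p_\sigma$ from \Cref{lem:gaussian_smoothing} and the explicit second derivative of $\phi_\sigma$, not the general differentiation rule. Your argument reuses a lemma the paper needs anyway for \Cref{lem:trace_gradient_identity}, so nothing new has to be justified. Its intermediate step is the Jacobian identity $\nabla D_\sigma=\mathcal C_\sigma=\sigma^{-2}\Cov(X_0\mid X_\sigma=x)$, which the paper quotes without proof in \Cref{subsec:oracle_contraction}; in the paper's route it follows only afterwards, via $\nabla D_\sigma=\Idd+\sigma^2H_\sigma$. Both routes rest on the same differentiation against the Gaussian kernel, so neither is more general.
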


\begin{proof}
The first identity is Tweedie's identity. Moreover, differentiating the Gaussian kernel twice gives
\begin{equation*}
  \frac{\nabla^2 p_\sigma(x)}{p_\sigma(x)}
  = \frac{1}{\sigma^2} \left( \E[ZZ^\top\mid X_\sigma=x]-\Idd \right).
\end{equation*}
Since
\begin{equation*}
  H_\sigma = \nabla^2\log p_\sigma = \frac{\nabla^2p_\sigma}{p_\sigma} - s_\sigma s_\sigma^\top,
\end{equation*}
and \(s_\sigma=-\sigma^{-1}N_\sigma\), we obtain \(H_\sigma(x)=\sigma^{-2}\bigl(\E[ZZ^\top\mid
  X_\sigma=x]-N_\sigma(x)N_\sigma(x)^\top-\Idd\bigr) =\sigma^{-2}(\mathcal C_\sigma(x)-\Idd)\).
\end{proof}

\begin{lemma}[Gradient of the conditional noise variance]
  \label{lem:trace_gradient_identity}
  Let \(q\) be a Borel probability measure on \(\R^d\). For every \(x\in\R^d\) and every
  \(\sigma>0\),
  \begin{equation}\label{eq:grad_trace_C_identity}
    \nabla_x\Tr\mathcal C_\sigma(x) = -\frac1\sigma\mathfrak m_{3,\sigma}(x),
    \qquad\text{hence}\qquad \nabla\Tr H_\sigma(x) = -\sigma^{-3}\mathfrak m_{3,\sigma}(x).
  \end{equation}
\end{lemma}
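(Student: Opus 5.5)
The plan is to apply \Cref{lem:gaussian_posterior_differentiation} directly to the conditional noise variance, written as a posterior average of a function that depends on the conditioning point. Set
\begin{equation*}
  F(x,y)=\Bigl\|\frac{x-y}{\sigma}-N_\sigma(x)\Bigr\|^2,
\end{equation*}
so that \(\Tr\mathcal C_\sigma(x)=\langle F\rangle_\sigma(x)\).

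The hypotheses of the lemma need checking first. By \Cref{prop:exact_flow_sigma_clock} and \cref{eq:v_sigma_denoiser_form}, \(N_\sigma=v_\sigma\) is \(C^\infty\). Hence \(F\) is \(C^1\) in \(x\), and both \(F\) and \(\nabla_xF\) grow at most quadratically in \(y\), locally uniformly in \(x\). The lemma then gives
\begin{equation*}
  \nabla_x\Tr\mathcal C_\sigma(x)=\E[\nabla_xF(x,X_0)\mid X_\sigma=x]
  +\Cov\Bigl(F(x,X_0),\tfrac{X_0-x}{\sigma^2}\Bigm| X_\sigma=x\Bigr).
\end{equation*}

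Write \(W=Z-N_\sigma(x)\) with \(Z=(x-X_0)/\sigma\). Then
\begin{equation*}
  \nabla_xF=2\bigl(\sigma^{-1}\Idd-\nabla N_\sigma(x)\bigr)^{\!\top}W.
\end{equation*}
Its posterior mean vanishes, because \(\E[W\mid X_\sigma=x]=0\) and the matrix factor depends only on \(x\). So the first term drops out entirely; no identity for \(\nabla N_\sigma\) is needed.

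For the covariance term, note that \((X_0-x)/\sigma^2=-Z/\sigma=-(W+N_\sigma(x))/\sigma\). The constant shift \(N_\sigma(x)\) does not affect a covariance, so the term equals \(-\sigma^{-1}\Cov(\|W\|^2,W\mid X_\sigma=x)\). Since \(W\) is centered, this covariance reduces to \(\E[\|W\|^2W\mid X_\sigma=x]=\mathfrak m_{3,\sigma}(x)\). This gives the first identity, \(\nabla_x\Tr\mathcal C_\sigma=-\sigma^{-1}\mathfrak m_{3,\sigma}\).

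The second identity follows from \Cref{lem:hessian_covariance_identity}. It gives \(\Tr H_\sigma=\sigma^{-2}(\Tr\mathcal C_\sigma-d)\), and differentiating in \(x\) at fixed \(\sigma\) yields \(\nabla\Tr H_\sigma=-\sigma^{-3}\mathfrak m_{3,\sigma}\).

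The main obstacle is the bookkeeping around the \(x\)-dependence of \(F\), which enters both through \(Z\) and through \(N_\sigma(x)\). One could instead expand \(\Tr\mathcal C_\sigma=\E[\|Z\|^2\mid x]-\|N_\sigma\|^2\) and differentiate each term, but that route needs \(\nabla N_\sigma=\sigma^{-1}(\Idd-\mathcal C_\sigma)\). The centered choice of \(F\) above avoids this, and I would use it.
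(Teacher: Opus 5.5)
Your proposal is correct and follows the paper's proof essentially step for step. The paper also applies \Cref{lem:gaussian_posterior_differentiation} to \(F(x,X_0)=\|W\|^2\) with \(W=(x-X_0)/\sigma-N_\sigma(x)\), justifies the hypotheses through the smoothness of \(N_\sigma=-\sigma s_\sigma\), drops the explicit-derivative term because \(W\) is conditionally centered, reduces the covariance term to \(-\sigma^{-1}\mathfrak m_{3,\sigma}(x)\), and then reads off the second identity from \Cref{lem:hessian_covariance_identity}, exactly as you do.
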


\begin{proof}
Write
\begin{equation*}
  N=N_\sigma(x), \qquad W=Z-N .
\end{equation*}
Then
\begin{equation*}
  \mathcal C_\sigma(x) = \E[WW^\top\mid X_\sigma=x], \qquad \Tr\mathcal C_\sigma(x)
  = \E[\norm{W}^2\mid X_\sigma=x].
\end{equation*}
We apply \Cref{lem:gaussian_posterior_differentiation} to
\begin{equation*}
  F(x,X_0) = \norm{W}^2 = \left\| \frac{x-X_0}{\sigma} - N_\sigma(x) \right\|^2 .
\end{equation*}
This application is legitimate because \( N_\sigma(x) = -\sigma s_\sigma(x), \) and \(s_\sigma\) is
smooth in \(x\). Hence, on compact \(x\)-sets, \(N_\sigma\) and \(\nabla_xN_\sigma\) are bounded.
Therefore \(F\) and \(\nabla_xF\) have polynomial growth in \(X_0\), and the posterior Gaussian
factor provides all required integrability. The differentiation rule gives
\begin{equation}\label{eq:grad_trace_C_split}
  \nabla_x\Tr\mathcal C_\sigma(x) = \E[\nabla_x\norm{W}^2\mid X_\sigma=x]
  + \Cov\left( \norm{W}^2, \frac{X_0-x}{\sigma^2} \,\middle|\, X_\sigma=x \right).
\end{equation}
We compute the two terms separately. For each coordinate \(i=1,\ldots,d\),
\begin{equation*}
  \partial_{x_i}W = \frac{e_i}{\sigma} - \partial_{x_i}N_\sigma(x),
\end{equation*}
and therefore
\begin{equation*}
  \partial_{x_i}\norm{W}^2 = 2W^\top \left( \frac{e_i}{\sigma} - \partial_{x_i}N_\sigma(x) \right).
\end{equation*}
The vector in parentheses is deterministic once \(x\) is fixed, while
\begin{equation*}
  \E[W\mid X_\sigma=x]=0.
\end{equation*}
Consequently,
\begin{equation}\label{eq:explicit_derivative_term_zero}
  \E[\nabla_x\norm{W}^2\mid X_\sigma=x]=0.
\end{equation}
For the covariance term, observe that
\begin{equation*}
  \frac{X_0-x}{\sigma^2} = -\frac{Z}{\sigma}.
\end{equation*}
Hence
\begin{equation*}
  \Cov\left( \norm{W}^2, \frac{X_0-x}{\sigma^2} \,\middle|\, X_\sigma=x \right) =
  -\frac1\sigma \Cov\left( \norm{W}^2, Z \,\middle|\, X_\sigma=x \right).
\end{equation*}
Since \(Z=N+W\), and \(N=N_\sigma(x)\) is deterministic under the conditioning,
\begin{equation*}
  \Cov\left( \norm{W}^2, Z \,\middle|\, X_\sigma=x \right)
  = \Cov\left( \norm{W}^2, W \,\middle|\, X_\sigma=x \right).
\end{equation*}
Using again \(\E[W\mid X_\sigma=x]=0\), this last covariance is
\begin{equation*}
  \E\left[ \norm{W}^2W \mid X_\sigma=x \right] = \mathfrak m_{3,\sigma}(x).
\end{equation*}
Combining this with \cref{eq:grad_trace_C_split,eq:explicit_derivative_term_zero} gives the first
identity in \cref{eq:grad_trace_C_identity}. By \cref{eq:H_noise_covariance_identity},
\begin{equation*}
  \Tr H_\sigma(x) = \sigma^{-2} \bigl( \Tr\mathcal C_\sigma(x)-d \bigr),
\end{equation*}
and since \(\sigma\) is fixed when differentiating with respect to \(x\), the second identity
follows.
\end{proof}

The resulting identity is the pivot of the whole appendix: the universal bound below follows from it
by crude moment inequalities, while its Gaussian specialization is exact.

\begin{proposition}[Noise-moment formula for the clock acceleration]
  \label{prop:noise_moment_identity}
  Let \(\rho>0\) and let \(q\) be a Borel probability measure on \(\R^d\). For every \(x\in\R^d\)
  and every \(\sigma>0\),
  \begin{equation}\label{eq:Aeta_noise_moment_identity}
    \mathcal A_\rho(\sigma,x)
    = \frac{\sigma^{1-2\rho}}{\rho^2} \left(
    \bigl((1-\rho)\Idd+\mathcal C_\sigma(x)\bigr)N_\sigma(x) + \mathfrak m_{3,\sigma}(x) \right).
  \end{equation}
\end{proposition}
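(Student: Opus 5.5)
The plan is direct substitution: insert the three noise-moment identities into the score--Hessian formula \eqref{eq:Aeta_score_hessian_formula} and collect powers of \(\sigma\). All ingredients are pointwise identities valid for every \(x\) and \(\sigma>0\) and every Borel \(q\). The only analytic input is \Cref{lem:trace_gradient_identity}, which already rests on \Cref{lem:gaussian_posterior_differentiation}, so the remaining argument is pure algebra.

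By \Cref{lem:hessian_covariance_identity} and \Cref{lem:trace_gradient_identity}, I will use
\begin{equation*}
  s_\sigma=-\sigma^{-1}N_\sigma, \qquad H_\sigma=\sigma^{-2}(\mathcal C_\sigma-\Idd), \qquad \nabla\Tr H_\sigma=-\sigma^{-3}\mathfrak m_{3,\sigma}.
\end{equation*}
Substituting term by term into \eqref{eq:Aeta_score_hessian_formula}:
\begin{itemize}[nosep,leftmargin=1.4em]
  \item The first term is \(-\sigma^{4-2\rho}\nabla\Tr H_\sigma=\sigma^{1-2\rho}\mathfrak m_{3,\sigma}\).
  \item The second term is \(-\sigma^{4-2\rho}H_\sigma s_\sigma=\sigma^{4-2\rho}\sigma^{-3}(\mathcal C_\sigma-\Idd)N_\sigma=\sigma^{1-2\rho}(\mathcal C_\sigma-\Idd)N_\sigma\).
  \item The third term is \((\rho-2)\sigma^{2-2\rho}s_\sigma=(2-\rho)\sigma^{1-2\rho}N_\sigma\).
\end{itemize}
Every term carries the common factor \(\sigma^{1-2\rho}\), which confirms the prefactor in \eqref{eq:Aeta_noise_moment_identity}.

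Adding the three terms and multiplying by \(\rho^{-2}\) gives
\begin{equation*}
  \frac{\sigma^{1-2\rho}}{\rho^2}\Bigl(\mathfrak m_{3,\sigma}+\bigl(\mathcal C_\sigma-\Idd+(2-\rho)\Idd\bigr)N_\sigma\Bigr),
\end{equation*}
and \(-\Idd+(2-\rho)\Idd=(1-\rho)\Idd\), which is exactly \eqref{eq:Aeta_noise_moment_identity}.

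The main obstacle is bookkeeping of signs and exponents, particularly in the sign of the \(H_\sigma s_\sigma\) term. As an independent check, I would use the clock-change relation stated earlier,
\begin{equation*}
  \mathcal A_\rho=\rho^{-2}\sigma^{2-2\rho}\mathcal A_1+\rho^{-2}(1-\rho)\sigma^{1-2\rho}v_\sigma.
\end{equation*}
Taking \(\rho=1\) in the formula just derived gives \(\mathcal A_1=\sigma^{-1}(\mathcal C_\sigma N_\sigma+\mathfrak m_{3,\sigma})\). With \(v_\sigma=N_\sigma\), the relation then reproduces the \((1-\rho)\Idd N_\sigma\) term, which confirms consistency in \(\rho\).
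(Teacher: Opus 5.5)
Your proposal is correct and follows the paper's proof exactly: substitute \(s_\sigma=-\sigma^{-1}N_\sigma\), \(H_\sigma=\sigma^{-2}(\mathcal C_\sigma-\Idd)\) and \(\nabla\Tr H_\sigma=-\sigma^{-3}\mathfrak m_{3,\sigma}\) into the score--Hessian formula \eqref{eq:Aeta_score_hessian_formula}, then combine \(-\Idd+(2-\rho)\Idd=(1-\rho)\Idd\). Your three term-by-term computations match the paper's. The clock-change check confirms only the \(\rho\)-dependence, since it reuses the \(\rho=1\) case, but you present it as a consistency check, which is the right way to read it.
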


\begin{proof}
We substitute
\begin{equation*}
  s_\sigma(x)=-\sigma^{-1}N_\sigma(x), \qquad H_\sigma(x)=\sigma^{-2}(\mathcal C_\sigma(x)-\Idd),
  \qquad \nabla\Tr H_\sigma(x)=-\sigma^{-3}\mathfrak m_{3,\sigma}(x),
\end{equation*}
from \Cref{lem:hessian_covariance_identity,lem:trace_gradient_identity}, into the score--Hessian
acceleration formula \eqref{eq:Aeta_score_hessian_formula}. The three terms become
\begin{equation*}
  -\sigma^{4-2\rho}\nabla\Tr H_\sigma = \sigma^{1-2\rho}\mathfrak m_{3,\sigma},
\end{equation*}
\begin{equation*}
  -\sigma^{4-2\rho}H_\sigma s_\sigma = \sigma^{1-2\rho} (\mathcal C_\sigma-\Idd)N_\sigma,
\end{equation*}
and
\begin{equation*}
  (\rho-2)\sigma^{2-2\rho}s_\sigma = (2-\rho)\sigma^{1-2\rho}N_\sigma.
\end{equation*}
Therefore
\begin{equation*}
  \mathcal A_\rho(\sigma,x) = \frac{\sigma^{1-2\rho}}{\rho^2} \left[ \mathfrak m_{3,\sigma}(x)
  + (\mathcal C_\sigma(x)-\Idd)N_\sigma(x) + (2-\rho)N_\sigma(x) \right].
\end{equation*}
Since \( -\Idd+(2-\rho)\Idd = (1-\rho)\Idd, \) this gives
\begin{equation*}
  \mathcal A_\rho(\sigma,x)
  = \frac{\sigma^{1-2\rho}}{\rho^2} \left( \bigl((1-\rho)\Idd+\mathcal C_\sigma(x)\bigr)N_\sigma(x)
  + \mathfrak m_{3,\sigma}(x) \right).
\end{equation*}
This proves \cref{eq:Aeta_noise_moment_identity}.
\end{proof}

\subsection{Proof of the Universal Bound}

We can now prove the universal acceleration bound. The noise-moment identity
\eqref{eq:Aeta_noise_moment_identity} factors \(\mathcal A_\rho(\sigma,\cdot)\) into the explicit
scale prefactor \(\sigma^{1-2\rho}/\rho^2\) and a bracketed moment vector; the entire task is to
bound that vector in \(\mathbb L_2\) uniformly over all input laws \(q\), which reduces to
controlling the conditional mean, covariance, and third moment of a standard Gaussian noise.

\begin{proof}[Proof of \Cref{thm:global_curvature_bound}]
Throughout the proof all conditional quantities are evaluated at \(X_\sigma\). We write
\begin{equation*}
  N=N_\sigma(X_\sigma) = \E[Z\mid X_\sigma],
\end{equation*}
\begin{equation*}
  W=Z-N,
\end{equation*}
\begin{equation*}
  \mathcal C=\mathcal C_\sigma(X_\sigma) = \E[WW^\top\mid X_\sigma],
\end{equation*}
and
\begin{equation*}
  \mathfrak m_3=\mathfrak m_{3,\sigma}(X_\sigma) = \E[\norm{W}^2W\mid X_\sigma].
\end{equation*}
By the noise-moment formula for the clock acceleration \cref{eq:Aeta_noise_moment_identity},
\begin{equation*}
  \mathcal A_\rho(\sigma,X_\sigma)
  = \frac{\sigma^{1-2\rho}}{\rho^2} \left( ((1-\rho)\Idd+\mathcal C)N+\mathfrak m_3 \right).
\end{equation*}
Thus it is enough to bound, uniformly in \(q\) and \(\sigma\),
\begin{equation*}
  S_\sigma = ((1-\rho)\Idd+\mathcal C)N+\mathfrak m_3
\end{equation*}
in \(\mathbb L_2\). First, since conditional expectation is a contraction in \(\mathbb L_2\),
\begin{equation}\label{eq:N_L2_improved}
  \norm{N}_{\mathbb L_2} = \norm{\E[Z\mid X_\sigma]}_{\mathbb L_2} \le \norm{Z}_{\mathbb L_2}
  = \sqrt d .
\end{equation}
The covariance and third-moment terms are bounded together. Since \(N+W=Z\),
\begin{equation*}
  \mathcal CN+\mathfrak m_3 = \E\bigl[W\bigl(\ip{W}{N}+\norm{W}^2\bigr)\mid X_\sigma\bigr]
  = \E\bigl[W\ip{W}{Z}\mid X_\sigma\bigr].
\end{equation*}
By the Cauchy--Schwarz inequality and conditional Jensen's inequality,
\begin{equation*}
  \norm{\mathcal CN+\mathfrak m_3}^2 \le \left( \E[\norm{W}^2\norm{Z}\mid X_\sigma] \right)^2
  \le \E[\norm{W}^4\norm{Z}^2\mid X_\sigma].
\end{equation*}
Taking expectations and applying H\"older's inequality with exponents \(3/2\) and \(3\),
\begin{equation*}
  \norm{\mathcal CN+\mathfrak m_3}_{\mathbb L_2}^2
  \le \left(\E\norm{W}^6\right)^{2/3} \left(\E\norm{Z}^6\right)^{1/3}.
\end{equation*}
Since \(W=Z-\E[Z\mid X_\sigma]\), the triangle inequality, followed by the contraction property of
conditional expectation in \(\mathbb L_6\), gives
\begin{equation*}
  \norm{W}_{\mathbb L_6} \le \norm{Z}_{\mathbb L_6} + \norm{\E[Z\mid X_\sigma]}_{\mathbb L_6}
  \le 2\norm{Z}_{\mathbb L_6},
\end{equation*}
so that \(\E\norm{W}^6\le2^6\E\norm{Z}^6\). Since \(Z\sim\mathcal N(0,\Idd)\),
\begin{equation*}
  \E\norm{Z}^6 = d(d+2)(d+4).
\end{equation*}
Consequently,
\begin{equation}\label{eq:merged_L2}
  \norm{\mathcal CN+\mathfrak m_3}_{\mathbb L_2} \le 4\sqrt{d(d+2)(d+4)} .
\end{equation}
Combining \cref{eq:N_L2_improved,eq:merged_L2}, we obtain
\begin{equation*}
  \norm{S_\sigma}_{\mathbb L_2} \le |1-\rho|\sqrt d + 4\sqrt{d(d+2)(d+4)}
  \le \sqrt d\,\bigl[|1-\rho|+4(d+3)\bigr],
\end{equation*}
where the last step uses \((d+2)(d+4)=(d+3)^2-1\le(d+3)^2\). Substituting this estimate into the
representation
\begin{equation*}
  \mathcal A_\rho(\sigma,X_\sigma) = \frac{\sigma^{1-2\rho}}{\rho^2}S_\sigma
\end{equation*}
proves \cref{eq:Aeta_global_bound} with \(B_{d,\rho}=\rho^{-2}\sqrt d\,[|1-\rho|+4(d+3)]\), as
stated in \Cref{thm:global_curvature_bound}.
\end{proof}

\subsection{Proof of the Local Bias Bound}
\label{subsec:proof_local_defect}

The local bias of \Cref{lem:local_euler_defect} compares one exact flow step with the oracle
predictor, which freezes the \(\sigma\)-clock velocity at the left endpoint. Their difference is the
Taylor remainder of the flow in the native clock, so the case \(\rho=1\) of
\Cref{thm:global_curvature_bound} controls it.

\begin{proof}[Proof of \Cref{lem:local_euler_defect}]
Write \(\ell_j=\sigma_j-\sigma_{j+1}=a_j\sigma_j\) for the step in the native \(\sigma\)-clock, as
in \Cref{subsec:low_noise_certificate}, so that \(\Phi_j(x)=x-\ell_jv_{\sigma_j}(x)\) by
\cref{eq:oracle_edm_predictor,eq:velocity}. Let \(Y_j\sim\mu_j\) and
\(y(r)=\Psi_{\sigma_j-r,\sigma_j}(Y_j)\) for \(0\le r\le\ell_j\). The exact flow transports the
smoothed laws, so \(y(r)\sim\mu_{\sigma_j-r}\), with \(\dd y/\dd r=-v_{\sigma_j-r}(y(r))\) and
\(\dd^2y/\dd r^2=\mathcal A_1(\sigma_j-r,y(r))\). Since
\(\Phi_j(y(0))=y(0)-\ell_jv_{\sigma_j}(y(0))\) and \(y(0)=Y_j\), Taylor's formula with integral
remainder gives
\begin{equation*}
  \Psi_j(Y_j)-\Phi_j(Y_j) = \int_0^{\ell_j}(\ell_j-r)\,\mathcal A_1(\sigma_j-r,y(r))\,\dd r,
\end{equation*}
and Minkowski's integral inequality yields \(\delta_j\le\int_0^{\ell_j}(\ell_j-r)\norm{\mathcal
A_1(\sigma_j-r,y(r))}_{\mathbb L_2}\,\dd r\); it applies because the integrand is jointly measurable
in \((r,\omega)\), as \(\mathcal A_1\) is continuous on \(\R^d\times(0,\infty)\) and \(r\mapsto
y(r)\) is continuous on \([0,\ell_j]\), where \(\sigma_j-r\ge\sigma_{j+1}>0\), and because its
\(\mathbb L_2\) norm is bounded uniformly in \(r\) by the estimate below. Taking \(\rho=1\) in
\Cref{thm:global_curvature_bound} gives \(\norm{\mathcal A_1(\sigma,X_\sigma)}_{\mathbb L_2}\le
B_{d,1}\sigma^{-1}\). With \(y(r)\sim\mu_{\sigma_j-r}\),
\begin{equation*}
  \delta_j \le B_{d,1}\int_0^{\ell_j}\frac{\ell_j-r}{\sigma_j-r}\,\dd r = \Delta_j,
\end{equation*}
since \(\sigma_j-\ell_j=\sigma_{j+1}\).
\end{proof}

\subsection{The Leading-Order Local Bias and the Clock}

The bound \eqref{eq:def_Delta_k_curvature_bound} has the compact form
\(\Delta_j=B_{d,1}\sigma_j\,g(a_j)\) with \(g(a)=a+(1-a)\log(1-a)=\tfrac12a^2+O(a^3)\). On a member
\(G_K\) of the refinement family \(\Gsched\), uniform in the clock \(\eta=\sigma^\rho\) with step
\(h_K\), one has \(\sigma_{j+1}/\sigma_j=(1-h_K/\eta_j)^{1/\rho}\), hence
\(a_j=1-(1-h_K/\eta_j)^{1/\rho}=\frac{h_K}{\rho\eta_j}\bigl[1+O_\rho(h_K/\eta_j)\bigr]\). Since
\(\eta_j^{-2}=\sigma_j^{-2\rho}\), this gives the expansion \eqref{eq:local_defect_clock_expansion},
\begin{equation*}
  \Delta_{j,K} = B_{d,1}\sigma_j\,\frac{a_j^2}{2}\bigl[1+O(a_j)\bigr]
  = \frac{B_{d,1}}{2\rho^2}\,h_K^2\sigma_j^{1-2\rho}
  \bigg[1+O_\rho\bigg(\frac{h_K}{\eta_j}\bigg)\bigg],
\end{equation*}
whose leading term carries the same noise exponent \(\sigma_j^{1-2\rho}\) as the pointwise
acceleration bound \eqref{eq:Aeta_global_bound}. Thus \(\rho=1/2\) is exactly the clock for which
the leading local bias bound does not depend on the noise level. This is the sense in which we call
it balanced: it balances the universal bias bound of the first-order predictor across noise scales.
In the EDM convention \(\rho=1/\rhoEDM\) (\Cref{sec:discretization_error}) this is \(\rhoEDM=2\),
the analogue for the analyzed first-order predictor of the schedule-smoothness calculation in
\cite[Appendix~D.1]{karras_elucidating_2022}. It is not a claim that \(\rho=1/2\) is optimal among
all schedules, nor that it optimizes the quality of the final samples.

The practical clock \(\rhoEDM=7\), that is \(\rho=1/7\), leaves a factor \(\sigma_j^{5/7}\) that
grows with the noise level: compared with the balanced clock, it refines the low-noise end of the
schedule and leaves the largest per-step biases at high noise. Within this analysis the departure is
harmless when the high-noise contraction is strong enough: in the supercritical regime of
\Cref{rem:three_regimes}, where the effective high-noise margin \(\beff\) exceeds \(1-\rho=6/7\),
the kernel discounts those biases and their sum is set by the exit scale \(\sigma_{\Khi}\), not by
\(\sigma_0\); \Cref{subsec:evidence_damping} measures where a pretrained network reaches this
regime. The finer low-noise steps serve sample quality, a criterion outside the scope of this
analysis. Why the balanced value is tied to first-order integrators is discussed in
\Cref{rem:higher_order_extrapolation}.

\begin{remark}[Formal extrapolation to higher-order integrators]
  \label{rem:higher_order_extrapolation}
  The balanced clock \(\rho=1/2\), i.e.\ \(\rhoEDM=2\), is tied to the first-order predictor:
  \(\mathcal A_\rho\) is the second \(\eta\)-derivative of the flow, and it is the second derivative
  that governs the local truncation error of a first-order step. Formally, the \(m\)-th
  \(\eta\)-derivative scales like \(\sigma^{1-m\rho}\), so a \(p\)-th order integrator on a uniform
  \(\eta\)-grid would have leading local error \(\propto h^{p+1}\sigma^{1-(p+1)\rho}\) and would be
  equalized by \(\rhoEDM=p+1\). This last statement is a formal extrapolation and is not proved
  here: only the case \(p=1\) is established, by \Cref{thm:global_curvature_bound}, which gives
  \(\rhoEDM=2\). In particular the value \(\rhoEDM=3\) that the extrapolation attaches to the
  second-order Heun corrector of the full EDM sampler---the value reported to nearly equalize the
  per-step truncation error \cite[Sec.~3]{karras_elucidating_2022}---is heuristic here, since we do
  not analyze the Heun corrector.
\end{remark}

\paragraph{Relation to prior work: schedule design.}
Schedule-oriented studies analyze backward SDEs, interpolation drifts, or the broader EDM design
space
\cite{wang_he_tao_design_space_2024,strasman_noise_schedule_2024,chen_lipschitz_schedules_2025,
kahouli_disentangling_2025}, or optimize the schedule directly for a given model or data law
\cite{sabour_align_steps_2024,williams_score_optimal_2024}.
\textcite{wainwright_denoising_growth_2026} derives local KL error bounds for Euler discretization of
a stochastic innovations representation, expressed through increments of denoising growth complexity
along the Gaussian heat flow; these bounds lead to optimized schedules and data-certified
guarantees. Our balanced clock \(\rho=1/2\) plays a different role: it equalizes across noise scales
a local-bias bound that holds uniformly over data laws, and is not meant to minimize the error for a
particular model or data law. The clock experiment of \Cref{tab:control_clock} illustrates the
distinction: the clock that balances the universal bound does not have the smallest measured biases
or final error.

\subsection{Gaussian Laws and Sharpness}
\label{subsec:gaussian_sharpness}

The universal bound is worst-case: it holds for every Borel input law, at the price of a dimensional
factor of order \(d^{3/2}\). For Gaussian input laws the same noise-moment identity can instead be
evaluated in closed form, since the third conditional moment vanishes and the conditional covariance
is explicit. This exhibits the exact dependence of the acceleration on the spectrum of the data
covariance, and supplies the witnesses used below to show that the exponent of the universal bound
is sharp and to separate the sharp steps of its proof from the lossy one.

\begin{proposition}[Exact clock acceleration for Gaussian input laws]
  Let \(\rho>0\) and assume that
  \begin{equation*}
    q=\mathcal N(m,\Sigma_0), \qquad \Sigma_0\in\mathbb S_+^d .
  \end{equation*}
  Let \(\lambda_1,\ldots,\lambda_d\) be the eigenvalues of \(\Sigma_0\). Then, for every
  \(\sigma>0\),
  \begin{equation}\label{eq:gaussian_exact_acceleration_formula}
    \left\| \mathcal A_\rho(\sigma,X_\sigma) \right\|_{\mathbb L_2}^2
    = \frac{\sigma^{2-4\rho}}{\rho^4} \sum_{i=1}^d \left(
    1-\rho+\frac{\lambda_i}{\lambda_i+\sigma^2} \right)^2 \frac{\sigma^2}{\lambda_i+\sigma^2}.
  \end{equation}
  In particular, within the Gaussian class,
  \begin{equation}\label{eq:gaussian_sqrt_d_bound}
    \left\| \mathcal A_\rho(\sigma,X_\sigma) \right\|_{\mathbb L_2}
    \le \frac{\sqrt d\,\Gamma_\rho}{\rho^2}\sigma^{1-2\rho}, \qquad \Gamma_\rho
    = \max_{0\le u\le 1}\sqrt u\,|2-\rho-u|.
  \end{equation}
  The \(\sqrt d\) dependence in \cref{eq:gaussian_sqrt_d_bound} is sharp inside the Gaussian family.
\end{proposition}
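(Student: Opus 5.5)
The plan is to evaluate the noise-moment identity \cref{eq:Aeta_noise_moment_identity} of \Cref{prop:noise_moment_identity} in closed form, using the fact that for Gaussian \(q\) the pair \((Z,X_\sigma)\) is jointly Gaussian. Write \(X_\sigma=X_0+\sigma Z\) with \(X_0\sim\mathcal N(m,\Sigma_0)\) independent of \(Z\). Then \(X_\sigma\sim\mathcal N(m,\Sigma_0+\sigma^2\Idd)\) and \(\Cov(Z,X_\sigma)=\sigma\Idd\). Since \(\Sigma_0+\sigma^2\Idd\) is invertible for \(\sigma>0\), the Gaussian conditioning formulas apply even when \(\Sigma_0\) is singular. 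They give
\begin{equation*}
  N_\sigma(x)=\sigma(\Sigma_0+\sigma^2\Idd)^{-1}(x-m),\qquad
  \mathcal C_\sigma(x)=\Idd-\sigma^2(\Sigma_0+\sigma^2\Idd)^{-1}=\Sigma_0(\Sigma_0+\sigma^2\Idd)^{-1},
\end{equation*}
where the covariance does not depend on \(x\). The conditional law of \(Z-N_\sigma(x)\) is a centered Gaussian, hence symmetric, so its odd moment \(\mathfrak m_{3,\sigma}\) vanishes. These pointwise formulas are the posterior-kernel quantities of \Cref{lem:gaussian_smoothing}: both sides are continuous in \(x\) and agree almost everywhere, since \(p_\sigma\) is smooth and positive. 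The identity therefore reduces to
\begin{equation*}
  \mathcal A_\rho(\sigma,x)=\frac{\sigma^{1-2\rho}}{\rho^2}\bigl((1-\rho)\Idd+\mathcal C_\sigma\bigr)N_\sigma(x).
\end{equation*}

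Next, diagonalize \(\Sigma_0=U\operatorname{diag}(\lambda_i)U^\top\). All matrices above commute with \(\Sigma_0\), and the \(\mathbb L_2\) norm is rotation invariant, so I will work in the eigenbasis. The \(i\)-th coordinate of \(U^\top N_\sigma(X_\sigma)\) is \(\sigma(\lambda_i+\sigma^2)^{-1}\) times a centered Gaussian of variance \(\lambda_i+\sigma^2\). Its second moment is therefore \(\sigma^2/(\lambda_i+\sigma^2)\). The matrix \((1-\rho)\Idd+\mathcal C_\sigma\) acts on that coordinate as the scalar \(1-\rho+\lambda_i/(\lambda_i+\sigma^2)\). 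Squaring, taking expectations and summing over \(i\) gives \cref{eq:gaussian_exact_acceleration_formula}.

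For \cref{eq:gaussian_sqrt_d_bound}, set \(u_i=\sigma^2/(\lambda_i+\sigma^2)\in(0,1]\). Then \(\lambda_i/(\lambda_i+\sigma^2)=1-u_i\), and the \(i\)-th summand equals \((2-\rho-u_i)^2u_i\le\Gamma_\rho^2\). Summing the \(d\) terms and taking square roots gives the bound.

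For sharpness, let \(u_\star\in[0,1]\) attain the maximum defining \(\Gamma_\rho\); it exists by continuity on a compact interval. The function vanishes at \(u=0\), so if \(\Gamma_\rho>0\) we may take \(u_\star\in(0,1]\). Given any \(\sigma\), take \(\Sigma_0=\lambda\Idd\) with \(\lambda=\sigma^2(1/u_\star-1)\ge0\); the value \(\lambda=0\), a Dirac law, is permitted. Every summand then equals \(\Gamma_\rho^2\), so equality holds in \cref{eq:gaussian_sqrt_d_bound} for every \(d\). This shows that both the \(\sqrt d\) factor and the constant \(\Gamma_\rho\) are attained within the Gaussian family. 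The degenerate case \(\Gamma_\rho=0\) occurs only when \(\rho=2\) forces \(u=0\); it is impossible, since \(2-\rho-u=0\) at a single \(u\) cannot make the maximum vanish on all of \([0,1]\). So \(\Gamma_\rho>0\) always.

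I expect no real obstacle. The one point to state carefully is the legitimacy of the Gaussian conditioning formulas when \(\Sigma_0\) is singular, which is covered by the invertibility of \(\Sigma_0+\sigma^2\Idd\).
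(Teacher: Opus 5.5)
Your proposal is correct and follows the paper's proof essentially step for step. Gaussian conditioning gives \(N_\sigma\), \(\mathcal C_\sigma\) and \(\mathfrak m_{3,\sigma}=0\); the noise-moment identity then reduces to a diagonal computation in the eigenbasis of \(\Sigma_0\); and the substitution \(u_i=\sigma^2/(\lambda_i+\sigma^2)\) yields both the bound and its attainment. Your extra points (singular \(\Sigma_0\), matching the conditioning versions with the posterior-kernel quantities, and the explicit choice \(\Sigma_0=\sigma^2(1/u_\star-1)\Idd\), which includes the Dirac case needed when \(u_\star=1\), e.g.\ for \(\rho\ge2\)) only make the paper's one-line sharpness argument explicit; the sentence on \(\Gamma_\rho=0\) is confusingly worded, but its conclusion \(\Gamma_\rho>0\) is right.
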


\begin{proof}
Let \(V_\sigma=\Sigma_0+\sigma^2\Idd\). Since \((Z,X_\sigma)\) is jointly Gaussian,
\begin{align*}
  N_\sigma(x) &= \E[Z\mid X_\sigma=x] = \sigma V_\sigma^{-1}(x-m), \\
  \mathcal C_\sigma(x) &= \Cov(Z\mid X_\sigma=x) = \Idd-\sigma^2V_\sigma^{-1}
  = \Sigma_0V_\sigma^{-1}.
\end{align*}
The conditional law of \(Z\) given \(X_\sigma=x\) is Gaussian with covariance independent of \(x\),
so its centered third conditional moment vanishes, \(\mathfrak m_{3,\sigma}(x)=0\). Using the
noise-moment formula \eqref{eq:Aeta_noise_moment_identity},
\begin{equation*}
  \mathcal A_\rho(\sigma,x)
  = \frac{\sigma^{1-2\rho}}{\rho^2} \bigl((1-\rho)\Idd+\mathcal C_\sigma\bigr)N_\sigma(x).
\end{equation*}
Diagonalizing \(\Sigma_0\), the \(i\)-th coordinate of \(N_\sigma(X_\sigma)\) has variance
\(\sigma^2/(\lambda_i+\sigma^2)\), while the corresponding eigenvalue of \(\mathcal C_\sigma\) is
\(\lambda_i/(\lambda_i+\sigma^2)\). Hence
\begin{equation*}
  \left\| \mathcal A_\rho(\sigma,X_\sigma) \right\|_{\mathbb L_2}^2
  = \frac{\sigma^{2-4\rho}}{\rho^4} \sum_{i=1}^d \left( 1-\rho+\frac{\lambda_i}{\lambda_i+\sigma^2}
  \right)^2 \frac{\sigma^2}{\lambda_i+\sigma^2},
\end{equation*}
which proves \cref{eq:gaussian_exact_acceleration_formula}. Writing
\(u_i=\sigma^2/(\lambda_i+\sigma^2)\in(0,1]\), so that \(\lambda_i/(\lambda_i+\sigma^2)=1-u_i\), the
\(i\)-th summand equals \(u_i(2-\rho-u_i)^2\), and the bound \eqref{eq:gaussian_sqrt_d_bound}
follows immediately by maximizing the one-dimensional factor \(\sqrt u\,|2-\rho-u|\) over
\(u\in[0,1]\). It is sharp within the Gaussian family by choosing all eigenvalues \(\lambda_i\) so
that the corresponding \(u_i\)'s attain the maximizer of \(u\mapsto
\sqrt u\,|2-\rho-u|\).
\end{proof}

\begin{corollary}[Isotropic Gaussian specialization]
  Let \(\rho>0\). If
  \begin{equation*}
    q=\mathcal N(m,\varsigma^2\Idd), \qquad \varsigma\ge0,
  \end{equation*}
  then
  \begin{equation*}
    \left\| \mathcal A_\rho(\sigma,X_\sigma) \right\|_{\mathbb L_2}
    = \frac{\sqrt d}{\rho^2} \sigma^{2-2\rho} \frac{ \left| (2-\rho)\varsigma^2+(1-\rho)\sigma^2
    \right| }{ (\varsigma^2+\sigma^2)^{3/2} } .
  \end{equation*}
  In particular, for the native \(\sigma\)-clock \(\rho=1\),
  \begin{equation}\label{eq:isotropic_gaussian_rho_one}
    \left\| \mathcal A_1(\sigma,X_\sigma) \right\|_{\mathbb L_2}
    = \sqrt d\, \frac{\varsigma^2}{(\varsigma^2+\sigma^2)^{3/2}} .
  \end{equation}
\end{corollary}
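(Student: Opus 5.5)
This is a direct specialization of the exact Gaussian formula \eqref{eq:gaussian_exact_acceleration_formula}. I would set \(\Sigma_0=\varsigma^2\Idd\), so that every eigenvalue equals \(\lambda_i=\varsigma^2\). The \(d\) summands are then identical, and
\begin{equation*}
  \left\| \mathcal A_\rho(\sigma,X_\sigma) \right\|_{\mathbb L_2}^2
  = \frac{\sigma^{2-4\rho}}{\rho^4}\, d\,
  \left(1-\rho+\frac{\varsigma^2}{\varsigma^2+\sigma^2}\right)^2
  \frac{\sigma^2}{\varsigma^2+\sigma^2}.
\end{equation*}
Degenerate covariance is allowed, since the proposition is stated for \(\Sigma_0\in\mathbb S_+^d\). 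The case \(\varsigma=0\) (a Dirac mass at \(m\)) is therefore covered: there \(\lambda_i=0\), \(u_i=1\), and the formula still holds.

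The only computation is algebraic. Over the common denominator,
\begin{equation*}
  1-\rho+\frac{\varsigma^2}{\varsigma^2+\sigma^2}=\frac{(2-\rho)\varsigma^2+(1-\rho)\sigma^2}{\varsigma^2+\sigma^2}.
\end{equation*}
I would take square roots, using absolute value on the numerator because \(1-\rho\) may be negative. The prefactor combines as \(\sigma^{1-2\rho}\cdot\sigma=\sigma^{2-2\rho}\), and the powers of \(\varsigma^2+\sigma^2\) collect as \(1+\tfrac12=\tfrac32\) in the denominator. This gives the displayed identity with the factor \(\sqrt d/\rho^2\).

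For \(\rho=1\), the numerator reduces to \(|\varsigma^2|=\varsigma^2\) and \(\sigma^{2-2\rho}=1\), which yields \eqref{eq:isotropic_gaussian_rho_one}.

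There is no real obstacle here. The only care needed is keeping the absolute value and checking that the exponents of \(\sigma\) and of \(\varsigma^2+\sigma^2\) combine correctly.
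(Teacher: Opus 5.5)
Your proposal is correct and follows the paper's route: the corollary is simply the substitution \(\lambda_i=\varsigma^2\) into the exact Gaussian formula \eqref{eq:gaussian_exact_acceleration_formula}, which holds for degenerate \(\Sigma_0\) and so covers the Dirac case \(\varsigma=0\). Your algebra is also right, including the absolute value and the exponents \(\sigma^{2-2\rho}\) and \((\varsigma^2+\sigma^2)^{3/2}\).
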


The estimate of \Cref{thm:global_curvature_bound} is sharp in its dependence on the noise scale
\(\sigma\), and lossy in its dependence on the dimension \(d\). We take the two axes in turn.

\begin{proposition}[Sharpness of the acceleration exponent]
  \label{prop:acceleration_exponent_sharp}
  Let \(\rho>0\) and \(\sigma>0\). If \(\rho\ne1\) and \(q=\delta_m\) for some \(m\in\R^d\), then
  \begin{equation*}
    \left\| \mathcal A_\rho(\sigma,X_\sigma) \right\|_{\mathbb L_2}
    = \frac{|1-\rho|}{\rho^2}\sqrt d\,\sigma^{1-2\rho}.
  \end{equation*}
  If \(\rho=1\) and \(q_\sigma=\mathcal N(0,2\sigma^2\Idd)\), then
  \begin{equation}\label{eq:gaussian_rho_one_witness}
    \left\| \mathcal A_1(\sigma,X_\sigma) \right\|_{\mathbb L_2}
    = \frac{2}{3\sqrt3}\sqrt d\,\sigma^{-1}.
  \end{equation}
  Consequently, for every \(\rho>0\), no estimate with a better power of \(\sigma\) than
  \(\sigma^{1-2\rho}\) in \cref{eq:Aeta_global_bound} can hold uniformly over the class of all Borel
  probability measures \(q\).
\end{proposition}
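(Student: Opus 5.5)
Both witnesses can be read off from results already established: the Dirac case from the noise-moment identity \eqref{eq:Aeta_noise_moment_identity}, and the Gaussian case from the isotropic Gaussian corollary. The ``consequently'' clause then follows because each witness attains exactly the power \(\sigma^{1-2\rho}\) with a strictly positive, \(\sigma\)-independent constant.

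\textbf{Dirac witness (\(\rho\neq1\)).} Take \(q=\delta_m\). The posterior kernel \(\pi_{x,\sigma}\) is then \(\delta_m\) for every \(x\). Hence, given \(X_\sigma=x\), the noise \(Z=(x-m)/\sigma\) is deterministic. This gives \(N_\sigma(x)=(x-m)/\sigma\), \(\mathcal C_\sigma(x)=0\) and \(\mathfrak m_{3,\sigma}(x)=0\). Substituting into \eqref{eq:Aeta_noise_moment_identity} yields
\[
  \mathcal A_\rho(\sigma,x)=\frac{\sigma^{1-2\rho}}{\rho^2}(1-\rho)\frac{x-m}{\sigma}.
\]
Under \(X_\sigma=m+\sigma Z\), the vector \((X_\sigma-m)/\sigma=Z\) has \(\mathbb L_2\) norm \(\sqrt d\). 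This gives the stated identity. (Alternatively, the isotropic corollary with \(\varsigma=0\) gives the same formula.)

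\textbf{Gaussian witness (\(\rho=1\)).} Take \(\varsigma^2=2\sigma^2\) in \eqref{eq:isotropic_gaussian_rho_one}:
\[
  \sqrt d\,\frac{2\sigma^2}{(3\sigma^2)^{3/2}}=\frac{2}{3\sqrt3}\sqrt d\,\sigma^{-1}.
\]
The Dirac witness is useless here because its coefficient \(|1-\rho|\) vanishes. The variance \(\varsigma^2\) is therefore tied to \(\sigma\); the ratio \(2\) maximizes \(u\mapsto\sqrt u(1-u)\) at \(u=1/3\), though any fixed ratio would do.

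\textbf{Sharpness conclusion.} Suppose, for contradiction, that some bound \(\|\mathcal A_\rho(\sigma,X_\sigma)\|_{\mathbb L_2}\le C\,\psi(\sigma)\) held uniformly over all Borel \(q\). Evaluating it at the witness for each \(\sigma\) forces
\[
  \psi(\sigma)\ge c\,\sigma^{1-2\rho}, \qquad c=\frac{\sqrt d}{C\rho^2}\min\Bigl\{|1-\rho|,\tfrac{2}{3\sqrt3}\Bigr\}>0,
\]
with the appropriate branch of the minimum used in each case. In particular, no power \(\sigma^{p}\) with \(p\neq1-2\rho\) can serve as \(\psi\) on all of \((0,\infty)\). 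Indeed, \(\sigma^{p}/\sigma^{1-2\rho}=\sigma^{p-1+2\rho}\) tends to \(0\) either as \(\sigma\to0\) or as \(\sigma\to\infty\), which contradicts the lower bound above.

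The only point needing care is that the exponent statement is about a bound valid for all \(\sigma>0\) simultaneously. Because the \(\rho=1\) witness depends on \(\sigma\), the argument must use a \(\sigma\)-dependent choice of \(q\) at each noise level; this is permitted since the bound is required to hold uniformly in \(q\).
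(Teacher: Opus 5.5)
Your proposal is correct and follows the paper's proof essentially verbatim. For \(\rho\ne1\) you use the Dirac witness in the noise-moment identity, with \(N_\sigma(X_\sigma)=Z\) and \(\mathcal C_\sigma=\mathfrak m_{3,\sigma}=0\); for \(\rho=1\) you use the isotropic Gaussian formula at \(\varsigma^2=2\sigma^2\); and you conclude from each witness attaining a positive multiple of \(\sqrt d\,\sigma^{1-2\rho}\) at every \(\sigma\) (your explicit contradiction argument just spells out the paper's one-line conclusion), though writing \(c\) as a minimum is awkward since it vanishes at \(\rho=1\), so a plain case split is cleaner.
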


\begin{proof}
First suppose \(\rho\ne1\), and take \(q=\delta_m\). Then
\begin{equation*}
  X_\sigma=m+\sigma Z,
\end{equation*}
and conditioning on \(X_\sigma\) reveals \(Z\) exactly. Hence
\begin{equation*}
  N_\sigma(X_\sigma)=Z, \qquad \mathcal C_\sigma(X_\sigma)=0,
  \qquad \mathfrak m_{3,\sigma}(X_\sigma)=0,
\end{equation*}
so that, by \cref{eq:Aeta_noise_moment_identity},
\begin{equation*}
  \mathcal A_\rho(\sigma,X_\sigma) = \frac{1-\rho}{\rho^2}\sigma^{1-2\rho}Z,
\end{equation*}
and the first claim follows from \(\norm{Z}_{\mathbb L_2}=\sqrt d\). Thus, whenever \(\rho\ne1\),
the power \(\sigma^{1-2\rho}\) is attained, with equality, by a Dirac law.

For \(\rho=1\), the Dirac law gives zero acceleration because the path \(m+\sigma Z\) is affine in
the native \(\sigma\)-clock. Nevertheless the same scale is still sharp uniformly over \(q\).
Indeed, for each \(\sigma>0\) and \(t>0\), take
\begin{equation*}
  q_\sigma=\mathcal N(0,t\sigma^2\Idd).
\end{equation*}
Then \cref{eq:isotropic_gaussian_rho_one} gives
\begin{equation*}
  \left\| \mathcal A_1(\sigma,X_\sigma) \right\|_{\mathbb L_2}
  = \sqrt d\,\sigma^{-1} \frac{t}{(1+t)^{3/2}} ,
\end{equation*}
and the factor \(t(1+t)^{-3/2}\) is maximized at \(t=2\), yielding
\cref{eq:gaussian_rho_one_witness}. In both cases the lower bound is a positive multiple of \(\sqrt
d\,\sigma^{1-2\rho}\) at every \(\sigma>0\), which proves the last claim.
\end{proof}

\paragraph{Minimax reading.}
This sharpness statement is minimax in \(q\). It should not be read as saying that every fixed
regular input law saturates the universal scale: for a fixed nondegenerate full-dimensional Gaussian
\(q=\mathcal N(m,\Sigma_0)\), with \(\Sigma_0>0\), the formula
\eqref{eq:gaussian_exact_acceleration_formula} gives \(\norm{\mathcal
A_\rho(\sigma,X_\sigma)}_{\mathbb L_2}=O(\sigma^{2-2\rho})\) as \(\sigma\to0\), one power of
\(\sigma\) better than the worst-case bound. The universal exponent is driven by singular laws, or
by laws whose intrinsic scale is comparable to the current noise level \(\sigma\).

\paragraph{Where the factor \(d\) is lost.}
The proof of \Cref{thm:global_curvature_bound} bounds the covariance and third-moment terms jointly,
with the constant \(4\) of \cref{eq:merged_L2}. To see where the dimension is lost, we isolate an
auxiliary estimate of the covariance term alone, which is the whole acceleration for the Gaussian
witness \cref{eq:gaussian_rho_one_witness}: there \(\mathfrak m_{3,\sigma}=0\), so at \(\rho=1\) the
noise-moment identity \eqref{eq:Aeta_noise_moment_identity} reduces to \(\mathcal
A_1(\sigma,\cdot)=\sigma^{-1}\mathcal C_\sigma N_\sigma\). With the notation of the proof of the
theorem, let \(R=\E[\norm{Z}^2\mid X_\sigma]\). By Jensen's inequality \(\norm{N}^2\le R\), and
\(\Tr\mathcal C=\E[\norm{W}^2\mid X_\sigma]=R-\norm{N}^2\). Since \(\mathcal C\) is positive
semidefinite, \(\norm{\mathcal CN}\le\norm{\mathcal C}_\op\norm{N}\le(\Tr\mathcal C)\norm{N}\), so
that \(\norm{\mathcal CN}^2\le(R-\norm{N}^2)^2\norm{N}^2\). For \(0\le u\le R\) the cubic
\(u\mapsto(R-u)^2u\) is maximized at \(u=R/3\), with value \(4R^3/27\); with \(\E
R^3\le\E\norm{Z}^6=d(d+2)(d+4)\), this gives
\begin{equation}\label{eq:CN_L2_improved}
  \norm{\mathcal C N}_{\mathbb L_2} \le \frac{2}{3\sqrt3} \sqrt{d(d+2)(d+4)} .
\end{equation}
The witness saturates every step of this estimate but one. It sits at the maximizer of the cubic:
for \(t=2\), \(\mathcal C_\sigma=\tfrac23\Idd\) and each coordinate of \(N_\sigma\) has variance
\(\tfrac13\), so \(\norm{N_\sigma}^2\) and \(R\) concentrate at \(d/3\) and \(d\), which is why the
constant \(\tfrac{2}{3\sqrt3}\) is the same; and the moment step is tight to leading order, both
sides being of order \(d^3\). The one lossy step is \(\norm{\mathcal C_\sigma}_\op\le\Tr\mathcal
C_\sigma\), which at \(t=2\) replaces \(\tfrac23\) by \(\tfrac{2d}3\): carrying the operator norm
through the chain returns the exact value \(\tfrac{2}{3\sqrt3}\sqrt d\,\sigma^{-1}\) of
\cref{eq:gaussian_rho_one_witness}, whereas the trace returns
\(\tfrac{2}{3\sqrt3}d^{3/2}\sigma^{-1}\), the estimate \eqref{eq:CN_L2_improved}.

\paragraph{General laws and an open question.}
This diagnosis is specific to the Gaussian family, where \(\mathfrak m_{3,\sigma}=0\) and the
third-moment term is absent; on its own it does not lower the universal rate. For general laws the
proof controls the covariance and third-moment terms together, through the contracted moment
\(\E[\norm{W}^2\norm{Z}\mid X_\sigma]\) of \cref{eq:merged_L2}, whose Cauchy--Schwarz step discards
the matrix structure of \(W\ip{W}{Z}\) just as the trace discards that of \(\mathcal C_\sigma\).
Bringing \(B_{d,\rho}\) below \(d^{3/2}\) would require controlling that contracted moment more
sharply.

The operator-norm/trace step is the price of assuming nothing whatever about \(q\). Under a
bounded-support assumption \(\operatorname{supp}q\subseteq B(m,r)\) one has \(\mathcal
C_\sigma=\sigma^{-2}\Cov(X_0\mid X_\sigma)\preceq(r/\sigma)^2\Idd\), so that \(\norm{\mathcal
C_\sigma N_\sigma}_{\mathbb L_2}\le(r/\sigma)^2\sqrt d\) and the \(\sqrt d\) rate is restored for
the covariance term, at the cost of a \(\sigma\)-dependent constant. Whether the true universal rate
is \(\Theta(\sqrt d)\), \(\Theta(d)\), or \(\Theta(d^{3/2})\)---in particular whether the
operator-norm/trace step can be avoided without a moment assumption on \(q\)---is left open; closing
it would directly sharpen the constant \(B_{d,\rho}\) of \Cref{thm:global_curvature_bound}. At the
CIFAR--10 dimension \(d=\auditDim\), the current constant is
\(B_{\auditDim,1}\approx6.8\times10^5\).

\paragraph{Relation to prior work: local bias and dimension.}
The posterior identities of \Cref{app:curvature} are Gaussian-posterior moment identities of the
kind used for higher-order Tweedie formulas \cite{efron_tweedie_2011,meng_high_order_2021}. They
bound the local bias for every data law, at the price of an explicit dimension dependence in
\(B_{d,\rho}\). Total-variation and Kullback--Leibler guarantees can adapt their dimension
dependence to low-dimensional structure in the data
\cite{de_bortoli_manifold_2022,li_yan_lowdim_2024,liang_huang_chen_lowdim_2025}; the sharp dimension
dependence of a universal bias bound is left open (\Cref{app:curvature}).

\section{EDM Preconditioning}
\label{app:preconditioning}

This appendix complements \Cref{sec:preconditioning_residual}. It reads the EDM margin as a
monotonicity property of the learned velocity, gives the noise threshold above which a Lipschitz
envelope certifies contraction, and checks the high-noise contraction for the oracle denoiser.

\subsection{Complements on the EDM Margin}

The margin \eqref{eq:def_b_k_edm} has an infinitesimal reading. By
\cref{eq:edm_learned_denoiser_step},
\(\vhat_{\sigma_j}(x)=\sigma_j^{-1}\bigl[(1-\alpha_j)x-\beta_j\Fhat_j(c_jx)\bigr]\), and the
Lipschitz bound on \(\Fhat_j\) yields, for all \(x,y\in\R^d\),
\begin{equation}\label{eq:edm_velocity_monotonicity}
  \ip{x-y}{\vhat_{\sigma_j}(x)-\vhat_{\sigma_j}(y)} \ge \frac{b_j}{\sigma_j}\norm{x-y}^2 :
\end{equation}
the learned velocity is \(b_j/\sigma_j\)-monotone, although \(b_j/\sigma_j\), built from a
worst-case envelope, need not be its best monotonicity modulus. Monotonicity alone would give a
weaker finite-step factor, since squaring the predictor displacement
\(\Phihat_j=\Id-\ell_j\vhat_{\sigma_j}\), with \(\ell_j=\sigma_j-\sigma_{j+1}\), produces a positive
term of order \(\ell_j^2\); the affine EDM representation avoids that loss.

If \(\LF{j}\le\Lhi\) at every level above a noise threshold \(\sighi\), then \(b_j\ge\bhi\) at these
levels as soon as
\begin{equation*}
  \frac{\sigma_j}{\sigmad} \ge \frac{\Lhi+\sqrt{\Lhi^2+4\bhi(1-\bhi)}}{2(1-\bhi)}.
\end{equation*}
Any uniform normalized Lipschitz envelope therefore provides a sufficiently large admissible
threshold \(\sighi\) for the high-noise block. More generally, writing \(L_F(\sigma)\) for the
envelope at noise level \(\sigma\), \(L_F(\sigma)=o(\sigma)\) gives \(b_j\to1\) as
\(\sigma\to\infty\), and the certified contraction approaches the damping by the noise ratio.

\subsection{High-Noise Contraction for the Oracle}
\label{subsec:oracle_contraction}

For the oracle, the high-noise contraction of \Cref{ass:high_noise_contraction} is part of the
hypotheses and is not automatic: \(\nabla D_\sigma(x)=\sigma^{-2}\Cov(X_0\mid X_\sigma=x)\), whose
supremum over \(x\) can be infinite when \(q\) has unbounded support. If
\(\operatorname{supp}q\subseteq B(0,R)\), then \(0\preceq\nabla D_\sigma\preceq(R/\sigma)^2\Idd\),
so \(\operatorname{Lip}(\Phi_j)\le1-a_j(1-R^2/\sigma_j^2)\) and the assumption holds with margin
\(\bhi\) above \(\sighi=R/\sqrt{1-\bhi}\).

\paragraph{Relation to prior work: contraction through denoiser regularity.}
\textcite{li_zeng_stein_stabilization_2026} study deterministic diffusion updates whose contraction
certificates fail, measure their local Lipschitz behavior, and propose an inference-time correction
based on Stein identities. Their first-order certificate includes the condition
\(\operatorname{Lip}(\Dhat_\sigma)<1\) in the EDM signal-scaling convention \cite[Sec.~4.1 and
App.~C.1]{li_zeng_stein_stabilization_2026}. In our parametrization, this is the condition that
makes the predictor contractive: by \cref{eq:learned_edm_predictor},
\(\Phihat_j=(1-a_j)\Id+a_j\Dhat_{\sigma_j}\), hence
\(\operatorname{Lip}(\Phihat_j)\le1-a_j\bigl(1-\operatorname{Lip}(\Dhat_{\sigma_j})\bigr)\). The EDM
preconditioning turns it into a condition on the raw network, since
\(\operatorname{Lip}(\Dhat_{\sigma_j})\le\alpha_j+\beta_jc_j\LF{j}=1-b_j<1\) whenever
\(\LF{j}<\sigma_j/\sigmad\), by \cref{eq:edm_learned_denoiser_step,eq:def_b_k_edm}. Our analysis
uses this certificate only at high noise, applies to the uncorrected predictor, and replaces
worst-case certificates at low noise by the amplification factor \(\gamma_j\).

\section{Propagation}
\label{app:propagation}

This appendix complements \Cref{sec:two_regime_stability}. It details the split index, proves the
directional and field-level bounds and their hierarchy, reads the directional rate geometrically,
and proves the strict separation between the realized amplification and the worst-case majorants.

\subsection{The Split Index}

With the convention \(\Khi=0\) when \(\sigma_0<\sighi\), the split index \cref{eq:def_split_index}
is the last level at or above \(\sighi\). The high-noise block is the set of steps
\(j=0,\ldots,\Khi-1\), each of which ends at \(\sigma_{j+1}\ge\sighi\); the low-noise block is
\(j=\Khi,\ldots,K-1\), so the step \(\sigma_{\Khi}\to\sigma_{\Khi+1}\) that crosses the threshold
belongs to the low-noise block (\Cref{fig:split_index}). Either block may be empty, with the
conventions that empty sums vanish and empty products equal one: \(\Khi=0\) when no step ends at or
above the threshold, and \(\Khi=K\) when \(\sigma_K\ge\sighi\).

The threshold plays no role in \Cref{def:schedule_family}: the family \(\Gsched\) is fixed first,
and each member \(G_K\) then acquires its own split index \(\Khi(K)\). When
\(\underline\sigma<\sighi\le\overline\sigma\), the nested grids give
\(\sigma_{\Khi(K)}\downarrow\sighi\) and
\(\Khi(K)/K\to(\overline\eta-\sighi^\rho)/(\overline\eta-\underline\eta)\) as \(K\to\infty\), so
each block occupies a fixed asymptotic fraction of the schedule. Otherwise one block is empty for
every \(K\). In \Cref{ass:high_noise_contraction} the inequality needs no convention at \(e_j=0\),
where both sides vanish.

\begin{figure}[t]
  \centering
  \begin{tikzpicture}[x=1cm,y=1cm,>=Stealth]
  \def\levels{0,1.1,2.1,3.0,3.8,4.5,5.3,6.2,7.1,8.0,8.9,9.8,10.6}
  \draw[->,thick,black!70] (-0.3,0) -- (11.2,0) node[right,font=\footnotesize] {sampler};
  \foreach \x in \levels { \draw[thick] (\x,-0.12) -- (\x,0.12); }
  \fill[black!12] (4.5,-0.12) rectangle (5.3,0.12);
  \draw[dashed,thick,black!60] (4.95,-0.75) -- (4.95,0.3);
  \node[font=\footnotesize,below] at (4.95,-0.75) {\(\sighi\)};
  \node[font=\footnotesize,below] at (0,-0.15) {\(\sigma_0\)};
  \node[font=\footnotesize,below] at (1.1,-0.15) {\(\sigma_1\)};
  \node[font=\footnotesize,below] at (4.5,-0.15) {\(\sigma_{\Khi}\)};
  \node[font=\footnotesize,below] at (5.3,-0.15) {\(\sigma_{\Khi+1}\)};
  \node[font=\footnotesize,below] at (10.6,-0.15) {\(\sigma_K\)};
  \draw[decorate,decoration={brace,amplitude=5pt,raise=2pt}] (0,0.2) -- (4.5,0.2)
    node[midway,above=8pt,font=\footnotesize,align=center]
    {high-noise block, \(j=0,\ldots,\Khi-1\)\\ contraction (\Cref{ass:high_noise_contraction})};
  \draw[decorate,decoration={brace,amplitude=5pt,raise=2pt}] (4.5,0.2) -- (10.6,0.2)
    node[midway,above=8pt,font=\footnotesize,align=center]
    {low-noise block, \(j=\Khi,\ldots,K-1\)\\ cumulative amplification (\Cref{def:low_noise_budget})};
  \end{tikzpicture}
  \caption[Split index and the two blocks]{The split index \(\Khi\) of \cref{eq:def_split_index} is the last level at or above the
  threshold \(\sighi\). The step \(\sigma_{\Khi}\to\sigma_{\Khi+1}\) that crosses the threshold
  (shaded) is the first step of the low-noise block, so every high-noise step ends at or above
  \(\sighi\).}
  \label{fig:split_index}
\end{figure}

\subsection{Low-Noise Majorants: Proofs}

\Cref{fig:low_noise_hierarchy} summarizes the majorants of \(\Lambda_K\) and their order.

\begin{figure}[t]
  \centering
  \begin{minipage}[b]{0.52\textwidth}
    \centering
    \begin{tikzpicture}[x=0.95cm,y=0.95cm,>=Stealth,font=\footnotesize]
  \fill[black!12] plot[domain=-3:3,samples=121,smooth]
    (\x,{1.3*exp(-(\x+1.6)^2/0.3)+1.3*exp(-(\x-1.6)^2/0.3)}) -- (3,0) -- (-3,0) -- cycle;
  \draw[thick] plot[domain=-3:3,samples=161,smooth]
    (\x,{2.6*exp(-(\x)*(\x)/0.12)-0.45+0.25*exp(-(\x+1.6)^2/0.3)+0.25*exp(-(\x-1.6)^2/0.3)});
  \draw[->,black!60] (-3.1,0) -- (3.3,0) node[right] {\(x\)};
  \draw[densely dotted,thick] (-3,2.15) -- (3,2.15);
  \node[above,align=center] at (-1.6,2.15) {worst case \(\Los{j}\): supremum};
  \draw[dashed,thick,black!60] (-3,-0.27) -- (3,-0.27);
  \node[below,align=center] at (0,-0.5) {realized \(\Erate[\OT]{j}\): average under the transport weight};
  \node[align=left,right] at (0.35,1.35) {local expansion\\[-1pt]of \(\vhat_{\sigma_j}\)};
  \node[black!60,align=center] at (-1.6,0.55) {transport\\[-1pt]weight};
  \node[below=1pt] at (0,-1.1) {(a)};
    \end{tikzpicture}
  \end{minipage}%
  \hfill
  \begin{minipage}[b]{0.45\textwidth}
    \centering
    \begin{tikzpicture}[x=1cm,y=1cm,>=Stealth,font=\footnotesize,
    lev/.style={draw,rounded corners=2pt,inner sep=3pt,minimum width=1.2cm}]
  \node[lev,fill=black!10] (L) at (0,0)    {\(\Lambda_K\)};
  \node[lev]               (D) at (-1.4,1.5) {\(\Lambda_K^{\mathrm{dir}}\)};
  \node[lev]               (M) at (1.4,1.5)  {\(\Lambda_K^{\mathrm{map}}\)};
  \node[lev]               (F) at (0,3)    {\(\Lambda_K^{\mathrm{field}}\)};
  \draw[->] (L) -- (D);
  \draw[->] (L) -- (M);
  \draw[->] (D) -- (F);
  \draw[->] (M) -- (F);
  \draw[dashed,black!45] (D) -- node[above,black!60] {no order} (M);
  \node[left,align=right,black!70] at (-0.8,0.6) {keep the\\[-1pt]expansive part};
  \node[right,align=left,black!70] at (0.8,0.6) {forget the\\[-1pt]direction};
  \node[left,align=right,black!70] at (-0.8,2.4) {forget\\[-1pt]the laws};
  \node[right,align=left,black!70] at (0.8,2.4) {bound the map\\[-1pt]by its field};
  \node[below=1pt] at (0,-1.3) {(b)};
    \end{tikzpicture}
  \end{minipage}
  \caption[Realized transport and worst-case bounds at low noise]{Schematic, not to scale.
  (a)~Along one direction, the local expansion rate of the learned velocity (solid) peaks between
  the two modes of the law. The shaded density, with one lobe per mode, is the squared-displacement
  occupation of the optimal-coupling segments between \(\mu_j\) and \(\widehat\mu_j\), which is the
  weight in \cref{eq:transport_directional_expansion_rate}: the transport moves mass within each
  mode, not across the gap where the expansion peaks. The global one-sided rate \(\Los{j}\) is the
  supremum of the expansion rate (dotted); the directional rate \(\Erate[\OT]{j}\) is its average
  under this weight (dashed), and stays small. In higher dimension the supremum also runs over all
  directions. (b)~Majorants of \(\Lambda_K\); an arrow from \(A\) to \(B\) means \(A\le B\), and
  each arrow discards the information named beside it. The directional and map-level majorants are
  not ordered, and \Cref{prop:strict_law_map_separation} shows that the chain cannot be reversed.}
  \label{fig:low_noise_hierarchy}
\end{figure}

\begin{proof}[Proof of \Cref{prop:law_level_sufficiency}]
Since \(\Phihat_j(x)=x-\ell_j\vhat_{\sigma_j}(x)\), for all \(x,y\in\R^d\),
\begin{equation}\label{eq:predictor_secant_identity}
  \norm{\Phihat_j(x)-\Phihat_j(y)}^2 =\norm{x-y}^2
  -2\ell_j\ip{x-y}{\vhat_{\sigma_j}(x)-\vhat_{\sigma_j}(y)}
  +\ell_j^2\norm{\vhat_{\sigma_j}(x)-\vhat_{\sigma_j}(y)}^2.
\end{equation}
If \(e_j=0\), then \(\gamma_j=1\) by \cref{eq:def_raw_amplification} and
\(\Epos[\OT]{j}=\Qsec[\OT]{j}=0\) by convention, so \cref{eq:directional_gamma_chain} and
\cref{eq:law_level_stepwise_log} hold trivially. Let \(e_j>0\), so that
\(\E\norm{U_j-V_j}^2=e_j^2\), and define the \emph{signed} directional expansion rate
\begin{equation}\label{eq:def_signed_directional_rate}
  \Erate[\OT]{j} =\frac{\E\ip{V_j-U_j}{\mathsf v_j}}{e_j^2},
  \qquad\text{so that}\qquad \Epos[\OT]{j}=\bigl(\Erate[\OT]{j}\bigr)_+ .
\end{equation}
Pushing the optimal coupling \((U_j,V_j)\) through \(\Phihat_j\) gives a coupling of the two
pushforwards, so \(\gamma_j^2e_j^2\le\E\norm{\Phihat_j(U_j)-\Phihat_j(V_j)}^2\). Taking expectations
in \cref{eq:predictor_secant_identity} at \((x,y)=(U_j,V_j)\) and dividing by \(e_j^2\),
\begin{equation}\label{eq:signed_gamma_bound}
  \gamma_j^2 \le \frac{\E\norm{\Phihat_j(U_j)-\Phihat_j(V_j)}^2}{e_j^2}
  =1+2\ell_j\Erate[\OT]{j}+\ell_j^2\Qsec[\OT]{j} \le1+2\ell_j\Epos[\OT]{j}+\ell_j^2\Qsec[\OT]{j},
\end{equation}
which is \cref{eq:directional_gamma_chain}. Its right-hand side is at least one, so it also bounds
\(\gambar_j^2=\max\{1,\gamma_j^2\}\); then \(\log\gambar_j=\tfrac12\log\gambar_j^2\) and
\(\log(1+x)\le x\) give \cref{eq:law_level_stepwise_log}. Summing over the low-noise block gives
\cref{eq:law_level_summed_condition}.
\end{proof}

The signed rate may be negative, in which case the step contracts the realized discrepancy at first
order; only its positive part enters \(\Lambda_K^{\mathrm{dir}}\). At a step with \(e_j=0\) the two
laws coincide, \(\gamma_j=1\), and every stepwise bound on \(\log\gambar_j\) holds trivially; the
signed rate, and the bounds on \(\gamma_j\) itself, are stated only for \(e_j>0\).

The hierarchy \cref{eq:law_map_field_budget_hierarchy} rests on the following certificate.

\begin{corollary}[Field-level certificate]
  \label{cor:field_level_certificate}
  Assume that \(\operatorname{Lip}(\vhat_{\sigma_j})<\infty\) at every low-noise step, and let
  \(\Los{j}\) be the global one-sided rate \eqref{eq:global_one_sided_expansion_rate}. Then
  \(|\Los{j}|\le\operatorname{Lip}(\vhat_{\sigma_j})\), and \(\Los{j}\) is the smallest constant
  such that, for all \(x,y\in\R^d\),
  \begin{equation}\label{eq:field_one_sided_condition}
    -\ip{\vhat_{\sigma_j}(x)-\vhat_{\sigma_j}(y)}{x-y} \le\Los{j}\norm{x-y}^2.
  \end{equation}
  At every low-noise step, \(\Qsec[\OT]{j}\le\operatorname{Lip}(\vhat_{\sigma_j})^2\) and
  \(\Epos[\OT]{j}\le(\Los{j})_+\). Consequently, with \(\Lambda_K^{\mathrm{dir}}\),
  \(\Lambda_K^{\mathrm{map}}\) and \(\Lambda_K^{\mathrm{field}}\) as in
  \cref{eq:law_level_summed_condition,eq:map_gamma_bound,eq:field_level_summed_condition}, both
  chains of \cref{eq:law_map_field_budget_hierarchy} hold:
  \begin{equation*}
    \Lambda_K\le\Lambda_K^{\mathrm{dir}}\le\Lambda_K^{\mathrm{field}},
    \qquad \Lambda_K\le\Lambda_K^{\mathrm{map}}\le\Lambda_K^{\mathrm{field}}.
  \end{equation*}
  These bounds remain valid if each \(\Los{j}\) is replaced by any larger constant, in particular by
  any constant satisfying \cref{eq:field_one_sided_condition}, both in the stepwise bounds and in
  \(\Lambda_K^{\mathrm{field}}\).
\end{corollary}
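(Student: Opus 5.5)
The plan is to prove the pointwise claims about \(\Los{j}\) first, then the two stepwise inequalities, and finally sum them using \Cref{prop:law_level_sufficiency}.

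\emph{Properties of \(\Los{j}\).} By Cauchy--Schwarz,
\(|\ip{x-y}{\vhat_{\sigma_j}(x)-\vhat_{\sigma_j}(y)}|\le\operatorname{Lip}(\vhat_{\sigma_j})\norm{x-y}^2\).
Hence every ratio in the supremum \eqref{eq:global_one_sided_expansion_rate} lies in
\([-\operatorname{Lip}(\vhat_{\sigma_j}),\operatorname{Lip}(\vhat_{\sigma_j})]\), and so does \(\Los{j}\). The inequality \eqref{eq:field_one_sided_condition} with constant \(c\) for all \(x\neq y\) says exactly that \(c\) bounds every such ratio. The case \(x=y\) is trivial. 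Thus the admissible constants are precisely those \(c\ge\Los{j}\), which makes \(\Los{j}\) the smallest one.

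\emph{Stepwise bounds.} Fix a low-noise step with \(e_j>0\); when \(e_j=0\) all quantities vanish by convention. For the secant term, \(\norm{\mathsf v_j}\le\operatorname{Lip}(\vhat_{\sigma_j})\norm{U_j-V_j}\) pointwise. Taking expectations and using \(\E\norm{U_j-V_j}^2=e_j^2\) gives \(\Qsec[\OT]{j}\le\operatorname{Lip}(\vhat_{\sigma_j})^2\). For the rate, apply \eqref{eq:field_one_sided_condition} at \((x,y)=(V_j,U_j)\):
\[
\ip{V_j-U_j}{\mathsf v_j}=-\ip{\vhat_{\sigma_j}(V_j)-\vhat_{\sigma_j}(U_j)}{V_j-U_j}\le\Los{j}\norm{V_j-U_j}^2 .
\]
Taking expectations gives \(\E\ip{V_j-U_j}{\mathsf v_j}\le\Los{j}e_j^2\). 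Both sides may be negative, so I take positive parts using the monotonicity of \(r\mapsto r_+\), which yields \(\Epos[\OT]{j}\le(\Los{j})_+\). The only point needing care is the sign convention: \(\mathsf v_j=\vhat(U_j)-\vhat(V_j)\) is paired with \(V_j-U_j\), and this matches the minus sign in \(\Los{j}\). Integrability holds because the fields are Lipschitz and the laws lie in \(\mathcal P_2\).

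\emph{Summing.} The first chain, \(\Lambda_K\le\Lambda_K^{\mathrm{dir}}\), is \eqref{eq:law_level_summed_condition}. Summing the stepwise bounds against \(\ell_j\ge0\) gives \(\Lambda_K^{\mathrm{dir}}\le\Lambda_K^{\mathrm{field}}\).

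For the map chain, pushing an optimal coupling through \(\Phihat_j\) gives \(\gamma_j\le\operatorname{Lip}(\Phihat_j)\), hence \(\Lambda_K\le\Lambda_K^{\mathrm{map}}\). To compare with the field majorant, use \eqref{eq:predictor_secant_identity} at arbitrary \(x\neq y\), bounding the cross term by \(\Los{j}\) and the quadratic term by \(\operatorname{Lip}(\vhat_{\sigma_j})\):
\[
\norm{\Phihat_j(x)-\Phihat_j(y)}^2\le\bigl(1+2\ell_j(\Los{j})_++\ell_j^2\operatorname{Lip}(\vhat_{\sigma_j})^2\bigr)\norm{x-y}^2 .
\]
Therefore \(\log\max\{1,\operatorname{Lip}(\Phihat_j)\}\le\ell_j(\Los{j})_++\tfrac12\ell_j^2\operatorname{Lip}(\vhat_{\sigma_j})^2\), using \(\log(1+u)\le u\). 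Summing over the low-noise block gives \(\Lambda_K^{\mathrm{map}}\le\Lambda_K^{\mathrm{field}}\).

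\emph{Replacing \(\Los{j}\) by a larger constant.} Every use of \(\Los{j}\) above either applies \eqref{eq:field_one_sided_condition} or involves \((\cdot)_+\), which is monotone. So each bound persists when \(\Los{j}\) is replaced by any admissible larger constant. No step is a genuine obstacle; the sign bookkeeping in the directional rate is the main place to be careful.
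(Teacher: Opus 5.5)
Your proposal is correct and follows essentially the same route as the paper. It uses Cauchy--Schwarz for the range and minimality of \(\Los{j}\), the one-sided condition and the Lipschitz bound evaluated along the optimal coupling for the stepwise bounds, and the secant identity \eqref{eq:predictor_secant_identity} with \(\log(1+u)\le u\) for \(\Lambda_K^{\mathrm{map}}\le\Lambda_K^{\mathrm{field}}\). One small imprecision: \(\gamma_j\le\operatorname{Lip}(\Phihat_j)\) can fail when \(e_j=0\), because the convention \(\gamma_j=1\) may exceed a Lipschitz constant below one; but then \(\log\gambar_j=0\), so \(\Lambda_K\le\Lambda_K^{\mathrm{map}}\) still holds, and the paper handles this case separately.
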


The certificate follows from a stepwise chain, which also orders the directional, map and field
bounds on a single step.

\begin{corollary}[Field-level certificate, stepwise form]
  \label{cor:field_level_certificate_full}
  Under the hypotheses of \Cref{cor:field_level_certificate}, fix a low-noise step. Then
  \(\Qsec[\OT]{j}\le\operatorname{Lip}(\vhat_{\sigma_j})^2\), \(\Epos[\OT]{j}\le(\Los{j})_+\), and
  if \(e_j>0\) also \(\Erate[\OT]{j}\le\Los{j}\) and
  \begin{align}
    \gamma_j^2 &\le 1+2\ell_j\Erate[\OT]{j}+\ell_j^2\Qsec[\OT]{j}\nonumber \\
    &\le \operatorname{Lip}(\Phihat_j)^2
    \le 1+2\ell_j\Los{j}+\ell_j^2\operatorname{Lip}(\vhat_{\sigma_j})^2 .
    \label{eq:field_gamma_chain}
  \end{align}
  Moreover, at every low-noise step,
  \begin{align}
    \log\gambar_j &\le \ell_j\Epos[\OT]{j}+\tfrac12\ell_j^2\Qsec[\OT]{j}
    \le \ell_j\bigl(\Los{j}\bigr)_+
    +\tfrac12\ell_j^2\operatorname{Lip}(\vhat_{\sigma_j})^2, \label{eq:field_level_stepwise_log} \\
    \log\gambar_j &\le \log\max\bigl\{1,\operatorname{Lip}(\Phihat_j)\bigr\}
    \le \ell_j\bigl(\Los{j}\bigr)_+
    +\tfrac12\ell_j^2\operatorname{Lip}(\vhat_{\sigma_j})^2. \label{eq:map_level_stepwise_log}
  \end{align}
\end{corollary}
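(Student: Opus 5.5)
The plan is to derive every inequality from the secant identity \eqref{eq:predictor_secant_identity}, evaluated either at the optimal coupling $(U_j,V_j)$ or at arbitrary points. Throughout we write $L=\operatorname{Lip}(\vhat_{\sigma_j})<\infty$.

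\textbf{Step 1: the two secant-ratio and rate bounds.} Pointwise, $\|\mathsf v_j\|\le L\|U_j-V_j\|$. Taking expectations and dividing by $e_j^2=\E\|U_j-V_j\|^2$ gives $\Qsec[\OT]{j}\le L^2$. By the definition \eqref{eq:global_one_sided_expansion_rate},
\[
\ip{V_j-U_j}{\mathsf v_j}=-\ip{U_j-V_j}{\vhat_{\sigma_j}(U_j)-\vhat_{\sigma_j}(V_j)}\le\Los{j}\|U_j-V_j\|^2 .
\]
Averaging yields $\Erate[\OT]{j}\le\Los{j}$ when $e_j>0$. Since $r\mapsto r_+$ is monotone, $\Epos[\OT]{j}\le(\Los{j})_+$, and when $e_j=0$ this holds trivially by convention.

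\textbf{Step 2: the chain \eqref{eq:field_gamma_chain}.} Its first inequality is \eqref{eq:signed_gamma_bound}, already proved in \Cref{prop:law_level_sufficiency}. For the middle inequality, the expression $1+2\ell_j\Erate[\OT]{j}+\ell_j^2\Qsec[\OT]{j}$ equals $\E\|\Phihat_j(U_j)-\Phihat_j(V_j)\|^2/e_j^2$. This ratio is at most $\operatorname{Lip}(\Phihat_j)^2$, because the numerator is bounded pointwise by $\operatorname{Lip}(\Phihat_j)^2\|U_j-V_j\|^2$. For the last inequality, bound the right side of \eqref{eq:predictor_secant_identity} pointwise for arbitrary $x\neq y$: the cross term is at most $2\ell_j\Los{j}\|x-y\|^2$, and the quadratic term is at most $\ell_j^2L^2\|x-y\|^2$. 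Taking the supremum of the ratio gives
\[
\operatorname{Lip}(\Phihat_j)^2\le1+2\ell_j\Los{j}+\ell_j^2L^2 .
\]
This also shows $\operatorname{Lip}(\Phihat_j)<\infty$, so the middle inequality is meaningful.

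\textbf{Step 3: the log bounds.} The first inequality of \eqref{eq:field_level_stepwise_log} is \eqref{eq:law_level_stepwise_log}. The second follows from Step 1, since $\ell_j\ge0$ and the right-hand side is monotone in $\Epos[\OT]{j}$ and $\Qsec[\OT]{j}$.

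For \eqref{eq:map_level_stepwise_log}, we first need $\gamma_j\le\operatorname{Lip}(\Phihat_j)$. Pushing any coupling through a Lipschitz map gives $\Wtwo(f_\#\mu,f_\#\nu)\le\operatorname{Lip}(f)\Wtwo(\mu,\nu)$, and when $e_j=0$ we have $\gamma_j=1$, so $\bar\gamma_j\le\max\{1,\operatorname{Lip}(\Phihat_j)\}$ in both cases. For the second inequality, Step 2 gives
\[
\max\{1,\operatorname{Lip}(\Phihat_j)\}^2\le1+2\ell_j(\Los{j})_++\ell_j^2L^2 ,
\]
because replacing $\Los{j}$ by its positive part makes the right side at least $1$. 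Then $\tfrac12\log(1+x)\le x/2$ gives the claim.

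The only delicate point is exactly this step: one must pass to $(\Los{j})_+$ before taking logarithms. Otherwise the right-hand side can fall below $1$ when $\Los{j}<0$, and the logarithm of the truncation would not be dominated.

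Everything here is elementary; the main care lies in the convention cases $e_j=0$ and in keeping signs straight.
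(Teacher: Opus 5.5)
Your proposal is correct and follows the paper's own proof. The one-sided and Lipschitz bounds evaluated at the optimal coupling give the stepwise bounds on $\Qsec[\OT]{j}$ and the directional rates. Bounding the secant identity \eqref{eq:predictor_secant_identity} pointwise gives the last link of \eqref{eq:field_gamma_chain}, and replacing $\Los{j}$ by its positive part before taking half the logarithm gives \eqref{eq:map_level_stepwise_log}.

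One small point to add: your derivation of $\Qsec[\OT]{j}\le\operatorname{Lip}(\vhat_{\sigma_j})^2$ divides by $e_j^2$, so it needs $e_j>0$. At $e_j=0$ the bound holds separately by the zero convention, as you already noted for $\Epos[\OT]{j}$.
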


\paragraph{Proof of the field-level certificate.}

\begin{proof}[Proof of \Cref{cor:field_level_certificate,cor:field_level_certificate_full}]
By Cauchy--Schwarz, the quotient in \cref{eq:global_one_sided_expansion_rate} lies in
\([-\operatorname{Lip}(\vhat_{\sigma_j}),\allowbreak\operatorname{Lip}(\vhat_{\sigma_j})]\), so its supremum
\(\Los{j}\) is finite with \(|\Los{j}|\le\operatorname{Lip}(\vhat_{\sigma_j})\); being a supremum,
it satisfies \cref{eq:field_one_sided_condition} and no smaller constant does. If \(e_j>0\), then
\(\E\norm{U_j-V_j}^2=e_j^2\). Applying \cref{eq:field_one_sided_condition} at \((x,y)=(U_j,V_j)\)
and taking expectations gives \(\Erate[\OT]{j}\le\Los{j}\), hence \(\Epos[\OT]{j}\le(\Los{j})_+\);
the Lipschitz bound gives \(\E\norm{\mathsf v_j}^2\le\operatorname{Lip}(\vhat_{\sigma_j})^2e_j^2\),
that is, \(\Qsec[\OT]{j}\le\operatorname{Lip}(\vhat_{\sigma_j})^2\). When \(e_j=0\), the conventions
give \(\Qsec[\OT]{j}=\Epos[\OT]{j}=0\), so these two bounds remain valid.

In \cref{eq:field_gamma_chain}, the first inequality and the identification of the middle member
with \(\E\norm{\Phihat_j(U_j)-\Phihat_j(V_j)}^2/e_j^2\) are \cref{eq:signed_gamma_bound}; this
quotient is at most \(\operatorname{Lip}(\Phihat_j)^2\), which is the second inequality. For the
third, \cref{eq:field_one_sided_condition} and the Lipschitz bound on \(\vhat_{\sigma_j}\) bound the
right-hand side of \cref{eq:predictor_secant_identity} by
\(\bigl(1+2\ell_j\Los{j}+\ell_j^2\operatorname{Lip}(\vhat_{\sigma_j})^2\bigr)\norm{x-y}^2\) for
every pair \((x,y)\). The restriction to \(e_j>0\) matters only for the first two members: at
\(e_j=0\) the convention \(\gamma_j=1\) is unrelated to \(\operatorname{Lip}(\Phihat_j)\), which may
be smaller than one.

In \cref{eq:field_level_stepwise_log}, the first inequality is \cref{eq:law_level_stepwise_log} and
the second follows from the two bounds just proved. In \cref{eq:map_level_stepwise_log}, the first
inequality follows from \(\gamma_j\le\operatorname{Lip}(\Phihat_j)\) when \(e_j>0\)
(\cref{eq:field_gamma_chain}) and from \(\gamma_j=1\) when \(e_j=0\). For the second, the last
inequality of \cref{eq:field_gamma_chain}, whose proof does not use \(e_j\), and
\(\Los{j}\le(\Los{j})_+\) give \(\max\{1,\operatorname{Lip}(\Phihat_j)^2\}
\le1+2\ell_j(\Los{j})_++\ell_j^2\operatorname{Lip}(\vhat_{\sigma_j})^2\); taking half the logarithm
and using \(\log(1+x)\le x\) concludes.

Summing over the low-noise block, \cref{eq:law_level_summed_condition} and the second inequality of
\cref{eq:field_level_stepwise_log} give
\(\Lambda_K\le\Lambda_K^{\mathrm{dir}}\le\Lambda_K^{\mathrm{field}}\), and the two inequalities of
\cref{eq:map_level_stepwise_log} give
\(\Lambda_K\le\Lambda_K^{\mathrm{map}}\le\Lambda_K^{\mathrm{field}}\). Replacing \(\Los{j}\) by a
larger constant only enlarges the right-hand sides of
\cref{eq:field_gamma_chain,eq:field_level_stepwise_log,eq:map_level_stepwise_log} and
\(\Lambda_K^{\mathrm{field}}\).
\end{proof}

\paragraph{The hierarchy.}
The directional and map-level majorants are not ordered in general, because each discards something
the other keeps: the directional sum keeps the direction of the displacement but has already dropped
signed contraction (through the positive part) and relaxed the logarithm, whereas the map sum keeps
the logarithm but tests every direction. For example, for \(\vhat(x)=-x\), one step contributes
\(\ell+\ell^2/2\) to the directional sum but only \(\log(1+\ell)\) to the map sum. Before these two
relaxations, the squared amplification is ordered, as \cref{eq:field_gamma_chain} shows.

\subsection{Geometric Reading of the Directional Rate}

Under the additional assumption \(\vhat_{\sigma_j}\in C^1(\R^d;\R^d)\), since \(\mu_j\) is
absolutely continuous, Brenier's theorem \cite[Theorem~2.12]{villani_topics_2003} allows the choice
\(V_j=T_j(U_j)\) for an optimal map \(T_j\). With \(\xi_j(x)=T_j(x)-x\), for \(e_j>0\), the signed
rate \eqref{eq:def_signed_directional_rate} reads
\begin{equation}\label{eq:transport_directional_expansion_rate}
  \Erate[\OT]{j} =
  -\frac{ \int_0^1\!\int \xi_j(x)^\top
  \operatorname{Sym}\nabla\vhat_{\sigma_j}\bigl(x+t\xi_j(x)\bigr) \xi_j(x)\,\mu_j(\dd x)\,\dd t }{
  \int\norm{\xi_j(x)}^2\,\mu_j(\dd x) }.
\end{equation}
The directional rate is therefore an energy-weighted average of Rayleigh quotients of
\(-\operatorname{Sym}\nabla\vhat_{\sigma_j}\) along the segments actually travelled by the
transport, whereas \cref{eq:global_one_sided_expansion_rate} is a supremum over all points and
directions. The directional rate may be much smaller when the displacement avoids the most expansive
eigendirections. Evaluated on finitely many sampled segments, the local quotients compare the
displacement with the worst direction at the same points; they are an empirical diagnostic, not a
certificate of \cref{eq:global_one_sided_expansion_rate}, which is how the CIFAR--10 probes of
\Cref{subsec:evidence_directional_cifar} must be read. At high noise the worst case is already
favorable: by \cref{eq:edm_velocity_monotonicity}, \(\Los{j}\le-b_j/\sigma_j<0\) whenever the EDM
margin is positive, so following the realized displacement pays off only below the threshold.

\begin{remark}[An integrable one-sided rate bounds \(\Lambda_K\)]
  \label{rem:integrable_rate}
  The field-level majorant separates a mesh-free rate from a finite-step remainder, and gives the
  one sufficient condition of this paper for \(\Lambda_K\) to stay bounded along \(\Gsched\). Let
  \(\ell_{\max,K}=\max_j\ell_j\). If \(\sighi\) lies in the interior of the noise range and
  \(\sigma\mapsto(\Los{}(\sigma))_+\) and \(\sigma\mapsto\operatorname{Lip}(\vhat_\sigma)^2\) are
  Riemann integrable, the first-order part of \(\Lambda_K^{\mathrm{field}}\) converges to
  \(\int_{\underline\sigma}^{\sighi}(\Los{}(\sigma))_+\,\dd\sigma\). The quadratic part is at most
  \(\tfrac12\ell_{\max,K}\sum_j\ell_j\operatorname{Lip}(\vhat_{\sigma_j})^2\to0\), so integrability
  of the one-sided rate yields a majorant, hence a \(\Lambda_K\), bounded uniformly along
  \(\Gsched\). For the oracle field, \(-\nabla v_\sigma=\sigma\nabla^2\log p_\sigma\) by
  \cref{eq:velocity}, so both rates are governed by the small-noise behaviour of the log-Hessian of
  the smoothed data law; for laws supported on manifolds with corners, this behaviour is described
  by \textcite{brosse_boundary_layer_2026}. The condition is worst-case in space; \Cref{prop:strict_law_map_separation} shows how
  much it can overstate.
\end{remark}

\paragraph{Relation to prior work: stability from the structure of the data law.}
Like Gao--Zhu, most Wasserstein guarantees obtain stability from structure of the data law or of the
forward dynamics: strong log-concavity \cite{gao_nguyen_zhu_wasserstein_2025}, weak log-concavity,
which the forward smoothing strengthens along the path
\cite{gentiloni_ocello_beyond_2025,stephanovitch_lipschitz_2026,kremling_weak_logconcavity_2025}, or
semiconvexity with strong convexity at infinity \cite{bruno_sabanis_semiconvexity_2025}. These
assumptions yield one-sided or global Lipschitz control of the flow between any two points. Our
directional condition is also one-sided, but it concerns the learned velocity, and only along the
optimal displacement between the exact and sampler laws: it is a property of the realized transport,
not a consequence of a global property of the data law.

\paragraph{Relation to prior work: deterministic samplers and Jacobian control.}
Non-asymptotic guarantees for reverse-SDE samplers are commonly stated in total variation or
Kullback--Leibler divergence
\cite{chen_sampling_2023,lee_convergence_2023,chen_improved_2023,benton_nearly_2024,
conforti_kl_2025}. These analyses typically compare path measures through Girsanov's theorem, so the
score error is evaluated along the exact process and no propagation factor through the learned
dynamics appears. The probability-flow ODE admits no such change of measure, and guarantees for
deterministic samplers require additional control. \textcite{chen_probability_flow_fast_2023} obtain
polynomial guarantees by interleaving the deterministic flow with stochastic corrector steps.
\textcite{li_nonasymptotic_2024} analyze the deterministic sampler directly in total variation,
assuming, in addition to \(\mathbb L_2\) score accuracy, control of the error on the score Jacobian;
sharp total-variation analysis of probability-flow discretization likewise assumes both score and
score-Jacobian accuracy \cite{li_sharp_probability_flow_2024}.
\textcite{chen_daras_dimakis_ddim_2023} give polynomial guarantees for DDIM-type deterministic
samplers, the family of the first-order EDM predictor, by decomposing each step into a restoration
and a degradation step. In our \(W_2\) setting, this additional requirement is carried by the
propagation factor \(\gamma_j\), and Jacobian-type conditions play the role of its field-level
majorants \cref{eq:field_gamma_chain}.
\textcite{benton_flow_matching_2024} bound the \(W_2\) error of flow-matching ODEs by a
Gr\"onwall argument under Lipschitz control of the learned field, which is the worst-case route of
\Cref{cor:field_level_certificate}; \textcite{kwon_fan_lee_wasserstein_2022} likewise bound the
\(W_2\) error of the probability-flow ODE by the score-matching objective under Lipschitz conditions
on the learned score. \textcite{arsenyan_assessing_2025} obtain \(W_2\) guarantees for the
stochastic DDPM sampler that are robust to noisy score evaluations, with rates matching the Gaussian
case. Other recent Wasserstein analyses of diffusion samplers include
\textcite{koike_follmer_wasserstein_2026,lyu_load_reserve_2026,tang_decentralized_ode_2026}.

\subsection{The Strict Separation from Worst-Case Stability}
\label{app:law_map_separation}

The proof of \Cref{prop:strict_law_map_separation} is given below, together with the exact limit;
the constant \(C\) grows with \(\overline\sigma/\underline\sigma\) and is large on the EDM noise
range (\Cref{rem:separation_cost}). The divergence is not a mesh-refinement instability: for each
fixed \(\varepsilon\), both worst-case majorants have a finite limit in \(K\). They grow like
\(\varepsilon^{-1}\) across perturbations whose amplitude vanishes while their slope grows, whereas
\(\Lambda_K\) stays identically zero. The map and field majorants agree at first order
(\Cref{lem:map_field_first_order}), so the loss lies between the laws and the worst-case majorants,
not between map and field. \Cref{app:law_map_separation} also gives a fixed construction with
\(\Lambda_K=0\) in which every predictor has infinite Lipschitz constant and every one-sided rate is
infinite (\Cref{prop:fixed_infinite_separation}).

The mechanism is the translated initialization: it keeps the discrepancy large relative to the
perturbation amplitude, so that the Gaussian contraction absorbs the perturbation. The proof bounds
the cost of the optimal input coupling pushed through the predictor, without reoptimizing the output
coupling. The construction therefore shows an advantage of realized displacements over worst-case
regularity, but not the same separation with exact initialization and purely discretization-induced
error. The mixture experiment of \Cref{subsec:evidence_hierarchy} examines this non-adversarial
counterpart.

\begin{lemma}[Map and field agree at first order]
  \label{lem:map_field_first_order}
  Fix a step \(j\), assume \(\operatorname{Lip}(\vhat_{\sigma_j})<\infty\), and let \(\Los{j}\in\R\)
  be the smallest constant in \cref{eq:field_one_sided_condition}. Then
  \begin{equation}\label{eq:field_below_map_first_order}
    1+\ell_j\Los{j} \le \operatorname{Lip}(\Phihat_j)
    \le \sqrt{1+2\ell_j\Los{j}+\ell_j^2\operatorname{Lip}(\vhat_{\sigma_j})^2}.
  \end{equation}
\end{lemma}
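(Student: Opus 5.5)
The upper bound in \cref{eq:field_below_map_first_order} is already contained in the last inequality of \cref{eq:field_gamma_chain}. Its proof does not use \(e_j\), so I would only recall it. Set \(\Phihat_j=\Id-\ell_j\vhat_{\sigma_j}\) in the secant identity \cref{eq:predictor_secant_identity}. Bound the cross term by \cref{eq:field_one_sided_condition} and the quadratic term by \(\operatorname{Lip}(\vhat_{\sigma_j})^2\norm{x-y}^2\). This gives \(\norm{\Phihat_j(x)-\Phihat_j(y)}^2\le\bigl(1+2\ell_j\Los{j}+\ell_j^2\operatorname{Lip}(\vhat_{\sigma_j})^2\bigr)\norm{x-y}^2\) for every pair. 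Taking square roots and the supremum over \(x\ne y\) yields the right inequality. The right-hand side under the root is nonnegative by Cauchy--Schwarz, since \(|\Los{j}|\le\operatorname{Lip}(\vhat_{\sigma_j})\) by \Cref{cor:field_level_certificate}.

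For the lower bound I would avoid the squared identity and use Cauchy--Schwarz on the inner product instead. For \(x\ne y\),
\[
\ip{x-y}{\Phihat_j(x)-\Phihat_j(y)}=\norm{x-y}^2-\ell_j\ip{x-y}{\vhat_{\sigma_j}(x)-\vhat_{\sigma_j}(y)}.
\]
The left side is at most \(\norm{x-y}\,\norm{\Phihat_j(x)-\Phihat_j(y)}\le\operatorname{Lip}(\Phihat_j)\norm{x-y}^2\). Dividing by \(\norm{x-y}^2\) gives
\[
1+\ell_j\,\frac{-\ip{x-y}{\vhat_{\sigma_j}(x)-\vhat_{\sigma_j}(y)}}{\norm{x-y}^2}\le\operatorname{Lip}(\Phihat_j).
\]
Taking the supremum over \(x\ne y\) and using the definition \cref{eq:global_one_sided_expansion_rate} of \(\Los{j}\) as this supremum gives \(1+\ell_j\Los{j}\le\operatorname{Lip}(\Phihat_j)\). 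The hypothesis that \(\Los{j}\) is the smallest constant in \cref{eq:field_one_sided_condition} identifies it with that supremum, so no separate argument is needed.

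I expect no real obstacle. The only subtlety is that the lower bound must hold even when \(1+\ell_j\Los{j}\) is negative. In that case it holds trivially, since \(\operatorname{Lip}(\Phihat_j)\ge0\), and the Cauchy--Schwarz argument above needs no sign case split in any event. Finiteness of \(\operatorname{Lip}(\Phihat_j)\) follows from \(\operatorname{Lip}(\Phihat_j)\le1+\ell_j\operatorname{Lip}(\vhat_{\sigma_j})\), so both sides of the lower bound are finite real numbers.
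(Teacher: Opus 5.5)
Your proof is correct and follows essentially the paper's argument: the upper bound is the last inequality of \cref{eq:field_gamma_chain}, obtained from the secant identity, and the lower bound comes from Cauchy--Schwarz applied to $\ip{x-y}{\Phihat_j(x)-\Phihat_j(y)}$ with $\Phihat_j=\Id-\ell_j\vhat_{\sigma_j}$. The only cosmetic difference is that you take the supremum over $x\ne y$ directly, whereas the paper follows a sequence of pairs whose quotient approaches $\Los{j}$; the two are equivalent because $\ell_j>0$.
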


\begin{proof}
The smallest constant exists and is finite, since by Cauchy--Schwarz the quotient below lies in
\([-\operatorname{Lip}(\vhat_{\sigma_j}),\operatorname{Lip}(\vhat_{\sigma_j})]\). The upper bound is
the last inequality of \cref{eq:field_gamma_chain}, whose proof does not use \(e_j\). For the lower
bound, take pairs \(x\ne y\) along which
\(-\ip{\vhat_{\sigma_j}(x)-\vhat_{\sigma_j}(y)}{x-y}/\norm{x-y}^2\) approaches \(\Los{j}\). By
Cauchy--Schwarz and \(\Phihat_j(x)=x-\ell_j\vhat_{\sigma_j}(x)\),
\begin{equation*}
  \frac{\norm{\Phihat_j(x)-\Phihat_j(y)}}{\norm{x-y}}
  \ge \frac{\ip{\Phihat_j(x)-\Phihat_j(y)}{x-y}}{\norm{x-y}^2}
  = 1-\ell_j\frac{\ip{\vhat_{\sigma_j}(x)-\vhat_{\sigma_j}(y)}{x-y}}{\norm{x-y}^2},
\end{equation*}
whose right-hand side approaches \(1+\ell_j\Los{j}\).
\end{proof}

When \(\Los{j}\ge0\), the lower bound gives \(\operatorname{Lip}(\Phihat_j)\ge1\), and
\(x-x^2/2\le\log(1+x)\le x\) for \(x\ge0\), together with
\(\Los{j}\le\operatorname{Lip}(\vhat_{\sigma_j})\), turns \cref{eq:field_below_map_first_order} into
\begin{equation*}
  \bigl|\log\operatorname{Lip}(\Phihat_j)-\ell_j\Los{j}\bigr|
  \le\tfrac12\ell_j^2\operatorname{Lip}(\vhat_{\sigma_j})^2 .
\end{equation*}
The step contribution \(\log\max\{1,\operatorname{Lip}(\Phihat_j)\}\) to
\(\Lambda_K^{\mathrm{map}}\) and the step contribution
\(\ell_j(\Los{j})_++\tfrac12\ell_j^2\operatorname{Lip}(\vhat_{\sigma_j})^2\) to
\(\Lambda_K^{\mathrm{field}}\) therefore share the first-order term \(\ell_j\Los{j}\) and differ by
at most \(\ell_j^2\operatorname{Lip}(\vhat_{\sigma_j})^2\).

Throughout the rest of the appendix, set \(\ell_{\mathrm{tot}}=\overline\sigma-\underline\sigma\)
and \(\vartheta=\underline\sigma/\overline\sigma\).

\begin{proof}[Proof of \Cref{prop:strict_law_map_separation}]
The construction perturbs the Gaussian denoiser by an oscillation of amplitude \(\sigma\varepsilon\)
and slope of order \(\varepsilon^{-1}\), and starts the sampler from a translate of
\(\mu_{\overline\sigma}\) by \(R_\star\varepsilon\). Part (a) follows from the amplitude of the
perturbation. For (b), a lower bound on the mean of the sampler law keeps the discrepancy
\(e_j^\varepsilon\) of order \(\varepsilon\), and pushing an optimal coupling through the predictor
then bounds \(\gamma_j^\varepsilon\) by one. For (c), the Lipschitz constants and one-sided rates
are computed exactly, and the majorants become Riemann sums with explicit limits. Parts (a) and (b),
and the exact stepwise formulas for these constants, hold for every \(\varepsilon>0\); only the
limits in (c) use \(\varepsilon<2\sigmad\). The schedules are the grids \(G_K\in\Gsched\) of
\Cref{def:schedule_family}; for each \(\varepsilon\), the denoiser family and the initialization
below are fixed across refinements. Since the construction is the same on every \(G_K\in\Gsched\),
we suppress \(K\) from stepwise objects, writing for instance \(\Phihat_j^\varepsilon\) for
\(\Phihat_{j,K}^\varepsilon\).

\paragraph{Construction.}
The Gaussian target \(q=\mathcal N(0,\sigmad^2)\) is centered with scale \(\sigmad\), and
\begin{equation*}
  D_\sigma(x)=\frac{\sigmad^2}{\sigmad^2+\sigma^2}x, \qquad v_\sigma(x)=\varpi(\sigma)x,
  \qquad \varpi(\sigma)=\frac{\sigma}{\sigmad^2+\sigma^2}.
\end{equation*}
The rate \(\varpi\) is unimodal with maximum \(1/(2\sigmad)\) at \(\sigma=\sigmad\), so
\(\varpi_{\min}=\min_{[\underline\sigma,\overline\sigma]}\varpi=\min\{\varpi(\underline\sigma),\varpi(\overline\sigma)\}>0\).
Set
\begin{equation*}
  R_\star=\frac{1}{\vartheta}\max\Bigl\{2\ell_{\mathrm{tot}},\frac{4}{\varpi_{\min}}\Bigr\}.
\end{equation*}
Initialize from the translate \(\nu_0^\varepsilon=(x\mapsto
x+R_\star\varepsilon)_\#\mu_{\overline\sigma}\), so that
\(\Wtwo(\nu_0^\varepsilon,\mu_{\overline\sigma})=R_\star\varepsilon\), and set
\begin{equation*}
  \Dhat^\varepsilon_\sigma(x) =D_\sigma(x)-\sigma\varepsilon\sin(x/\varepsilon^2),
  \qquad \sigma\in[\underline\sigma,\overline\sigma].
\end{equation*}
This denoiser admits the EDM parametrization \cref{eq:edm_learned_denoiser_step}: since
\(\beta_j,c_j>0\), inverting \cref{eq:edm_learned_denoiser_step} defines the raw output
\(\Fhat_j(u)=\beta_j^{-1}\bigl(\Dhat^\varepsilon_{\sigma_j}(u/c_j)-\alpha_ju/c_j\bigr)\). The
learned velocity and predictor are
\begin{equation}\label{eq:separation_predictor}
  \vhat^\varepsilon_\sigma(x)=\varpi(\sigma)x+\varepsilon\sin(x/\varepsilon^2),
  \qquad \Phihat^\varepsilon_j(x)=\chi_jx-\ell_j\varepsilon\sin(x/\varepsilon^2),
  \qquad \chi_j=1-\ell_j\varpi(\sigma_j).
\end{equation}
Each predictor is smooth and globally Lipschitz, hence satisfies \Cref{ass:wellposed}.

\paragraph{(a) Small errors.}
The initialization error is \(R_\star\varepsilon\) by construction, and the denoiser error is
pointwise at most \(\sigma\varepsilon\). By \cref{eq:edm_den_error_residual} and
\(\beta(\sigma)=\sigma\sigmad/\sqrt{\sigmad^2+\sigma^2}\),
\begin{equation*}
  \Rres_j^\varepsilon
  =\frac{\norm{\Dhat^\varepsilon_{\sigma_j}(X_{\sigma_j})-D_{\sigma_j}(X_{\sigma_j})}_{\mathbb L_2}}{\beta_j}
  \le\frac{\sqrt{\sigmad^2+\sigma_j^2}}{\sigmad}\,\varepsilon
  \le\frac{\sqrt{\sigmad^2+\overline\sigma^2}}{\sigmad}\,\varepsilon .
\end{equation*}
This is (a), with \(C=\max\{R_\star,\sqrt{\sigmad^2+\overline\sigma^2}/\sigmad\}\).

\paragraph{(b) Zero cumulative log-amplification.}
We first show that the discrepancy never falls below the perturbation scale. Since
\(\varpi(\sigma)<1/\sigma\),
\begin{equation*}
  1>\chi_j>1-\frac{\ell_j}{\sigma_j}=\frac{\sigma_{j+1}}{\sigma_j}>0,
\end{equation*}
so every product of consecutive factors \(\chi_i\) starting at level \(0\) is at least
\(\vartheta\). Let \(\bar x_j^\varepsilon\) be the mean of \(\widehat\mu_j^\varepsilon\), so \(\bar
x_0^\varepsilon=R_\star\varepsilon\). Taking expectations in \cref{eq:separation_predictor} gives
\(\bar x_{j+1}^\varepsilon\ge \chi_j\bar x_j^\varepsilon-\ell_j\varepsilon\). Iterating with
\(0<\chi_i<1\),
\begin{equation*}
  \bar x_j^\varepsilon \ge \vartheta R_\star\varepsilon-\varepsilon\sum_{i<j}\ell_i
  \ge\varepsilon(\vartheta R_\star-\ell_{\mathrm{tot}}) \ge\frac{\vartheta R_\star}{2}\,\varepsilon,
\end{equation*}
by the choice \(\vartheta R_\star\ge2\ell_{\mathrm{tot}}\). The exact laws \(\mu_j\) are centered,
and \(\Wtwo\) dominates the distance between means, hence
\begin{equation}\label{eq:separation_error_lower_bound}
  e_j^\varepsilon\ge \bar x_j^\varepsilon\ge\frac{\vartheta R_\star}{2}\,\varepsilon>0
  \qquad\text{at every step of every schedule.}
\end{equation}

Next, push an optimal coupling \((U_j,V_j)\) of \((\mu_j,\widehat\mu_j^\varepsilon)\) through
\(\Phihat_j^\varepsilon\). Minkowski's inequality, \cref{eq:separation_predictor}, and \(|\sin
u-\sin v|\le2\) give
\(\Wtwo\bigl((\Phihat_j^\varepsilon)_\#\mu_j,(\Phihat_j^\varepsilon)_\#\widehat\mu_j^\varepsilon\bigr)
\le \chi_je_j^\varepsilon+2\ell_j\varepsilon\). With
\cref{eq:separation_error_lower_bound},
\begin{equation*}
  \gamma_j^\varepsilon \le \chi_j+\frac{2\ell_j\varepsilon}{e_j^\varepsilon}
  \le1-\ell_j\Bigl(\varpi(\sigma_j)-\frac{4}{\vartheta R_\star}\Bigr) \le1,
\end{equation*}
by the choice \(\vartheta R_\star\ge4/\varpi_{\min}\). This holds for every \(\varepsilon>0\). Since
\(\sighi>\overline\sigma\), every step belongs to the low-noise block, so the cumulative
log-amplification (\Cref{def:low_noise_budget}) vanishes, \(\Lambda_K=0\). This is (b).

\paragraph{(c) Worst-case expansion.}
We first compute the one-sided rates and Lipschitz constants. By \cref{eq:separation_predictor},
\((\vhat_{\sigma_j}^\varepsilon)'(x)=\varpi(\sigma_j)+\varepsilon^{-1}\cos(x/\varepsilon^2)\), so in
dimension one \cref{eq:global_one_sided_expansion_rate} reads
\begin{equation*}
  \Los{j} =\sup_x\bigl(-(\vhat^\varepsilon_{\sigma_j})'(x)\bigr)
  =\varepsilon^{-1}-\varpi(\sigma_j) \ \ge\ \varepsilon^{-1}-\frac{1}{2\sigmad}.
\end{equation*}
This lower bound holds at each step separately, with a constant independent of \(j\) and \(K\); it
does not come from accumulation over steps. Summed over the schedule, it makes the first term of
\cref{eq:field_level_summed_condition} at least
\(\sum_j\ell_j(\varepsilon^{-1}-1/(2\sigmad))=\ell_{\mathrm{tot}}(\varepsilon^{-1}-1/(2\sigmad))\).
The same slope drives the map: differentiating \cref{eq:separation_predictor} gives
\((\Phihat_j^\varepsilon)'(x)=\chi_j-(\ell_j/\varepsilon)\cos(x/\varepsilon^2)\), and because
\(\chi_j>0\) its supremum in absolute value is attained where the cosine equals \(-1\):
\begin{equation*}
  \operatorname{Lip}(\Phihat_j^\varepsilon) =\chi_j+\frac{\ell_j}{\varepsilon} =1+\ell_j\Los{j},
\end{equation*}
so \cref{eq:field_below_map_first_order} holds with equality here and
\begin{equation*}
  \frac{\operatorname{Lip}(\Phihat_j^\varepsilon)-1}{\ell_j}
  =\Los{j}\ge\varepsilon^{-1}-\frac{1}{2\sigmad}.
\end{equation*}
These exact stepwise formulas hold for every \(\varepsilon>0\), and show that the map and field
majorants quantify the same gap.

We now pass to the limits, and assume \(\varepsilon<2\sigmad\). Then
\(\varepsilon^{-1}>1/(2\sigmad)\ge\varpi(\sigma_j)\), so \(\Los{j}>0\) and every
\(\operatorname{Lip}(\Phihat_j^\varepsilon)=1+\ell_j\Los{j}\) exceeds one. This is what allows us to
drop the maximum with one in \(\Lambda_K^{\mathrm{map},\varepsilon}\), which becomes
\(\sum_{j=0}^{K-1}\log\operatorname{Lip}(\Phihat_j^\varepsilon)\). Write
\(x_j=\ell_j\Los{j}=\ell_j(\varepsilon^{-1}-\varpi(\sigma_j))\in[0,\ell_j/\varepsilon]\). Since
\(x-x^2/2\le\log(1+x)\le x\) for \(x\ge0\),
\begin{equation*}
  \Bigl|\sum_{j=0}^{K-1}\log\operatorname{Lip}(\Phihat_j^\varepsilon) -\sum_{j=0}^{K-1}x_j\Bigr|
  \le\frac12\sum_{j=0}^{K-1}x_j^2 \le\frac{\ell_{\mathrm{tot}}}{2\varepsilon^2}\,\ell_{\max,K} .
\end{equation*}
As \(K\to\infty\) along \(\mathscr K\), \(\ell_{\max,K}\to0\), so the remainder vanishes and the
Riemann sum converges to
\(\int_{\underline\sigma}^{\overline\sigma}(\varepsilon^{-1}-\varpi(\sigma))\,\dd\sigma\). Since
\(\int\varpi(\sigma)\,\dd\sigma=\tfrac12\log(\sigmad^2+\sigma^2)\), this gives the map-level limit
\begin{equation}\label{eq:separation_worst_case_budget_limit}
  \lim_{K\to\infty}\Lambda_K^{\mathrm{map},\varepsilon}
  = \frac{\overline\sigma-\underline\sigma}{\varepsilon}
  -\frac12 \log\frac{\sigmad^2+\overline\sigma^2}{\sigmad^2+\underline\sigma^2}.
\end{equation}

It remains to identify the limit of the field-level majorant
\(\Lambda_K^{\mathrm{field},\varepsilon}\) of \cref{eq:field_level_summed_condition}. Since
\(\varepsilon^{-1}>\varpi(\sigma_j)\), the derivative computed above gives
\begin{equation*}
  \operatorname{Lip}(\vhat_{\sigma_j}^\varepsilon) =\varepsilon^{-1}+\varpi(\sigma_j).
\end{equation*}
Therefore
\begin{equation*}
  0 \le \frac12\sum_{j=0}^{K-1}\ell_j^2 \operatorname{Lip}(\vhat_{\sigma_j}^\varepsilon)^2
  \le \frac{\ell_{\mathrm{tot}}}{2} \left(\varepsilon^{-1}+\frac1{2\sigmad}\right)^2 \ell_{\max,K}
  \longrightarrow0.
\end{equation*}
The first-order term of \(\Lambda_K^{\mathrm{field},\varepsilon}\) is
\(\sum_j\ell_j(\Los{j})_+=\sum_j x_j\), whose Riemann-sum limit is the map-level limit just
computed. Hence the field and map majorants have the common limit
\eqref{eq:separation_worst_case_budget_limit}, which is of the form
\((\overline\sigma-\underline\sigma)/\varepsilon+O(1)\) with
\(O(1)=-\tfrac12\log\bigl((\sigmad^2+\overline\sigma^2)/(\sigmad^2+\underline\sigma^2)\bigr)\). This
term depends only on \(\sigmad\), \(\underline\sigma\) and \(\overline\sigma\), not on
\(\varepsilon\), and lies in \([-\log(\overline\sigma/\underline\sigma),0]\). This proves (c).
\end{proof}

\begin{remark}[Cost of the construction]
  \label{rem:separation_cost}
  The price of the separation is the size of the translation \(R_\star\varepsilon\): the constant
  \(C\) of \Cref{prop:strict_law_map_separation} is at least
  \(R_\star=(\overline\sigma/\underline\sigma)\max\{2(\overline\sigma-\underline\sigma),4/\varpi_{\min}\}\),
  and both factors are large on wide noise ranges. The first is the ratio of the endpoints. In the
  second, \(4/\varpi(\sigma)=4(\sigmad^2+\sigma^2)/\sigma\), so \(4/\varpi_{\min}\) is of order
  \(\max\{\sigmad^2/\underline\sigma,\overline\sigma\}\) when
  \(\underline\sigma\ll\sigmad\ll\overline\sigma\). With the EDM endpoints
  \(\underline\sigma=0.002\), \(\overline\sigma=80\) and \(\sigmad=0.5\), the ratio is
  \(4\times10^4\) and \(4/\varpi_{\min}\approx500\).
\end{remark}

The first construction gives a quantitative separation: both worst-case majorants are finite for
each fixed \(\varepsilon\) but arbitrarily large as \(\varepsilon\downarrow0\). A fixed
construction, in which the perturbation does not vary along a vanishing parameter, gives the
qualitative non-implication.

\begin{proposition}[Zero cumulative log-amplification with infinite worst-case constants]
  \label{prop:fixed_infinite_separation}
  In the setting of \Cref{prop:strict_law_map_separation}, for every \(\bar\varepsilon>0\) there are
  a smooth denoiser family satisfying \Cref{ass:wellposed} and the EDM parametrization, and an
  initialization, such that on the same refinement grids \(\gamma_{j,K}\le1\) at every step of every
  schedule, so \(\Lambda_K=0\), while every predictor has infinite global Lipschitz constant and
  every one-sided rate \(\Los{j}\) is infinite.
\end{proposition}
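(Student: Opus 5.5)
I would reuse the proof of \Cref{prop:strict_law_map_separation}, replacing the fixed-amplitude, fixed-frequency oscillation by one whose frequency grows without bound in space while its amplitude stays bounded. Concretely, with \(q=\mathcal N(0,\sigmad^2)\), \(D_\sigma(x)=\sigmad^2x/(\sigmad^2+\sigma^2)\) and \(\varpi(\sigma)=\sigma/(\sigmad^2+\sigma^2)\), I would set
\begin{equation*}
  \Dhat_\sigma(x)=D_\sigma(x)-\sigma\bar\varepsilon\,\sin(x^3), \qquad \sigma\in[\underline\sigma,\overline\sigma].
\end{equation*}
Then \(\vhat_\sigma(x)=\varpi(\sigma)x+\bar\varepsilon\sin(x^3)\) and \(\Phihat_j(x)=\chi_jx-\ell_j\bar\varepsilon\sin(x^3)\), with \(\chi_j=1-\ell_j\varpi(\sigma_j)\in(\sigma_{j+1}/\sigma_j,1)\). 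Each predictor is smooth and locally Lipschitz. Since \(|\sin|\le1\), it has linear growth, so \Cref{ass:wellposed} holds. The EDM parametrization is obtained by inverting \cref{eq:edm_learned_denoiser_step} exactly as before.

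For the worst-case constants, I would differentiate: \(\vhat_\sigma'(x)=\varpi(\sigma)+3\bar\varepsilon x^2\cos(x^3)\). Along points where \(\cos(x^3)=-1\) and \(|x|\to\infty\), the quantity \(-\vhat'\) is unbounded, so \(\Los{j}=+\infty\). Likewise \(\Phihat_j'(x)=\chi_j-3\ell_j\bar\varepsilon x^2\cos(x^3)\) is unbounded, so \(\operatorname{Lip}(\Phihat_j)=\infty\) at every step of every schedule. If the smoothness of \(\sup\) formulas is a worry, the secant quotient can be used directly, since the derivative is continuous.

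For \(\gamma_{j,K}\le1\), I would copy part (b) with \(\varepsilon\) replaced by \(\bar\varepsilon\) and the bound \(|\sin u-\sin v|\le2\) unchanged.
\begin{itemize}
  \item Initialize from the translate \(\nu_0=(x\mapsto x+R_\star\bar\varepsilon)_\#\mu_{\overline\sigma}\), with \(R_\star=\vartheta^{-1}\max\{2\ell_{\mathrm{tot}},4/\varpi_{\min}\}\).
  \item The mean recursion \(\bar x_{j+1}\ge\chi_j\bar x_j-\ell_j\bar\varepsilon\) gives \(e_j\ge\bar x_j\ge\vartheta R_\star\bar\varepsilon/2>0\).
  \item Pushing an optimal coupling through \(\Phihat_j\) and applying Minkowski gives
  \begin{equation*}
    \gamma_j\le\chi_j+\frac{2\ell_j\bar\varepsilon}{e_j}\le 1-\ell_j\Bigl(\varpi(\sigma_j)-\frac{4}{\vartheta R_\star}\Bigr)\le1.
  \end{equation*}
  \item Taking \(\sighi>\overline\sigma\) puts all steps in the low-noise block, so \(\Lambda_K=0\).
\end{itemize}
None of these steps used a global Lipschitz bound; only bounded amplitude was needed.

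The main obstacle I anticipate is well-posedness: ensuring the sampler laws \(\widehat\mu_j\) lie in \(\mathcal P_2\) and the \(W_2\) quantities are finite despite infinite Lipschitz constants. Linear growth settles this. The remaining point to check is that the \(\mathbb L_2\) bounds, in particular \(\E\|\Phihat_j(U)-\Phihat_j(V)\|^2<\infty\), follow from the same growth bound.
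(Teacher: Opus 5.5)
Your proposal is correct and matches the paper's proof: the paper also keeps the translate of \(\mu_{\overline\sigma}\) by \(R_\star\bar\varepsilon\) and reuses the amplitude-only arguments of \Cref{prop:strict_law_map_separation} (the mean lower bound and Minkowski on the pushed optimal coupling, both using only \(|\sin u-\sin v|\le2\)). The one difference is that the paper's oscillation is \(\sin(x^2)\) where yours is \(\sin(x^3)\); this is immaterial, since each has bounded amplitude and an unbounded derivative of both signs.
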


\begin{proof}
Let \(\bar\varepsilon>0\), take \(R_\star\) as in the proof above, let
\(\nu_0^{\mathrm{fix},\bar\varepsilon}\) be the translate of \(\mu_{\overline\sigma}\) by
\(R_\star\bar\varepsilon\), and define the second, now fixed, denoiser family
\begin{equation*}
  \Dhat_\sigma^{\mathrm{fix},\bar\varepsilon}(x)=D_\sigma(x)-\sigma\bar\varepsilon\sin(x^2).
\end{equation*}
Attach this denoiser family and initialization, fixed across refinements, to the grids
\(G_K\in\Gsched\). The residual, mean, and coupling arguments above use only the amplitude bounds
\(|\sin|\le1\) and \(|\sin u-\sin v|\le2\), not the frequency of the oscillation, which changes from
\(x/\varepsilon^2\) to \(x^2\). They apply verbatim with \(\varepsilon\) replaced by
\(\bar\varepsilon\): (a) of \Cref{prop:strict_law_map_separation} holds with \(\bar\varepsilon\) in
place of \(\varepsilon\), and the corresponding factors satisfy
\(\gamma_{j,K}^{\mathrm{fix},\bar\varepsilon}\le1\) at every step of every \(G_K\in\Gsched\). The
associated predictor
\(\Phihat_j^{\mathrm{fix},\bar\varepsilon}(x)=\chi_jx-\ell_j\bar\varepsilon\sin(x^2)\) is smooth,
hence locally Lipschitz, and satisfies
\(|\Phihat_j^{\mathrm{fix},\bar\varepsilon}(x)|\le|x|+\ell_j\bar\varepsilon\), so
\Cref{ass:wellposed} holds. But
\begin{equation*}
  (\Phihat_j^{\mathrm{fix},\bar\varepsilon})'(x)=\chi_j-2\ell_j\bar\varepsilon x\cos(x^2)
\end{equation*}
is unbounded, so \(\operatorname{Lip}(\Phihat_j^{\mathrm{fix},\bar\varepsilon})=\infty\) at every
step; and since
\((\vhat^{\mathrm{fix},\bar\varepsilon}_{\sigma_j})'(x)=\varpi(\sigma_j)+2\bar\varepsilon
x\cos(x^2)\) is unbounded below as well, \(\Los{j}=\infty\) and \cref{eq:field_one_sided_condition}
holds for no finite constant. Neither worst-case certificate is available, while \(\Lambda_K=0\).
\end{proof}

\begin{remark}[Conjecture: localized perturbations]
  We conjecture, without proof, that the oscillatory form is not essential: a narrow smooth bump
  should confine the large derivative to an arbitrarily small spatial region, and a cutoff in
  \(\sigma\) should confine the perturbation to a prescribed low-noise interval while leaving the
  high-noise denoiser unchanged, with the same arguments.
\end{remark}

\section{The Global Wasserstein Bound}
\label{app:global_bound}

This appendix complements \Cref{sec:global_bound}. It proves the high-noise and low-noise block
estimates, with the summation lemmas they rely on, and concatenates them into the proofs of
\Cref{thm:global_edm_wasserstein,cor:budget_refinement_rates}; it then bounds the initialization
error and extends the bound to the deployed sample.

\subsection{Explicit Form of the Global Bound}

Two quantities of the proof are left implicit in \Cref{thm:global_edm_wasserstein}. On a finite mesh
the kernel contracts slightly less than its continuous form, by the factor
\begin{equation}\label{eq:def_kappa_mesh_edm_predictor}
  \kappa(\theta) = \frac{1-(1-\theta)^{1/\rho}}{\rho^{-1}\log\bigl((1-\theta)^{-1}\bigr)} \in(0,1)
\end{equation}
at relative step \(h/\eta_j=\theta\), and \(\kappa(\theta)\uparrow1\) as \(\theta\downarrow0\). The
learning terms sum to the learning scale
\begin{align}
  \Upsilon(s_1,s_0) &= \int_{s_1}^{s_0}\frac{\sigmad\,\dd s}{\sqrt{s^2+\sigmad^2}}\nonumber \\
  &\le \min\Bigl\{s_0-s_1,\ \sigmad\log\Bigl(1+\frac{2s_0}{\sigmad}\Bigr)\Bigr\}.
  \label{eq:def_learning_scale}
\end{align}

\begin{theorem}[Global Wasserstein bound, explicit form]
  \label{thm:global_edm_explicit}
  Under the hypotheses of \Cref{thm:global_edm_wasserstein}, set \(\beff=\bhi\kappa(\thetahi)\) and
  \(C_\star=C_{\rho,\max(\thetahi,\thetalo)}\), the constant of \Cref{lem:damped_defect_sum}. For
  every \(G_K\in\Gsched\) satisfying the mesh condition
  \begin{equation}\label{eq:global_mesh_condition}
    h_K\leqslant \min\bigl\{\thetahi\sighi^\rho,\ \thetalo\underline\sigma^\rho\bigr\},
  \end{equation}
  the bound \eqref{eq:transport_global_bound} holds with \(\mathrm{Disc}_K\) as in
  \Cref{thm:global_edm_wasserstein}, that is,
  \begin{equation*}
    \mathrm{Disc}_K =C_\star\Bd[1]\,h_K\bigl[\Xi_\rho(0;\sigma_K,\sigma_{\Khi})
    +\Xi_\rho(\beff;\sigma_{\Khi},\sigma_0)\bigr],
  \end{equation*}
  and with the learning term
  \begin{equation*}
    \mathrm{Learn}_K =\sqrt d\,\Cres\Bigl[\frac{\sigmad}{\bhi}\bigl(1-\Pi_{0,\Khi}\bigr)
    +\Upsilon(\sigma_K,\sigma_{\Khi})\Bigr],
  \end{equation*}
  which is at most the learning term of \Cref{thm:global_edm_wasserstein} by
  \cref{eq:def_learning_scale}.
\end{theorem}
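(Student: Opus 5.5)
The plan is to unroll the assembled recursion \eqref{eq:one_step_learned_map_bound_integrated_curvature} in two blocks, using \Cref{ass:high_noise_contraction} on steps \(j<\Khi\) and the realized factors \(\gambar_j\) on the low-noise block. On the high-noise block, \Cref{ass:high_noise_contraction} turns the recursion into \(e_{j+1}\le(1-\bhi a_j)e_j+\Delta_j+a_j\beta_j\sqrt d\,\Cres\). Iterating gives
\[
e_{\Khi}\le\Pi_{0,\Khi}e_0+\sum_{r<\Khi}\Pi_{r+1,\Khi}\bigl(\Delta_r+a_r\beta_r\sqrt d\,\Cres\bigr).
\]
On the low-noise block, \(\gamma_j\le\gambar_j\) and every partial product of \(\gambar_i\) is at most \(e^{\Lambda_K}\). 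Hence
\[
e_K\le e^{\Lambda_K}\Bigl[e_{\Khi}+\sum_{j=\Khi}^{K-1}\bigl(\Delta_j+a_j\beta_j\sqrt d\,\Cres\bigr)\Bigr].
\]
Combined with \eqref{eq:edm_final_error_with_terminal_bias}, this produces the structure of \eqref{eq:transport_global_bound}. What remains is to bound the four sums by the stated scale functions.

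\textbf{Learning sums.} On the high-noise block, \(\beta_r\le\sigmad\) gives
\[
\sum_r\Pi_{r+1,\Khi}\,a_r\beta_r\le\frac{\sigmad}{\bhi}\sum_r\Pi_{r+1,\Khi}\,\bhi a_r=\frac{\sigmad}{\bhi}\bigl(1-\Pi_{0,\Khi}\bigr).
\]
The last equality is the telescoping identity \(\Pi_{r+1,k}-\Pi_{r,k}=\bhi a_r\Pi_{r+1,k}\). On the low-noise block, \(a_j\beta_j=\ell_j\sigmad/\sqrt{\sigma_j^2+\sigmad^2}\) is a left Riemann sum of a decreasing integrand. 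It is therefore bounded by the integral from the right endpoint only up to a mesh factor. The cleaner route is to compare each term with \(\int_{\sigma_{j+1}}^{\sigma_j}\sigmad\,\dd s/\sqrt{s^2+\sigmad^2}\) times \(\sqrt{(s^2+\sigmad^2)/(\sigma_j^2+\sigmad^2)}\le1\). Indeed, \(s\le\sigma_j\) makes that factor at most one, so the sum is \(\le\Upsilon(\sigma_K,\sigma_{\Khi})\). The bound \eqref{eq:def_learning_scale} then follows from \(\sqrt{s^2+\sigmad^2}\ge\max(s,\sigmad)\), or from the explicit \(\operatorname{arsinh}\) formula with \(\operatorname{arsinh}(u)\le\log(1+2u)\).

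\textbf{Discretization sums.} This is the content of \Cref{lem:damped_defect_sum}, which I would prove as follows. By \eqref{eq:local_defect_clock_expansion}, or directly from \(g(a)\le a^2/2\cdot(1-a)^{-1}\), we have \(\Delta_j\le C B_{d,1}h_K^2\sigma_j^{1-2\rho}\) whenever \(h_K/\eta_j\le\theta<1\). The mesh condition \eqref{eq:global_mesh_condition} ensures this on each block, with \(\theta=\thetahi\) or \(\thetalo\). Writing \(h_K\sigma_j^{1-2\rho}=\rho\,\sigma_j^{1-\rho}\,(h_K/\rho\eta_j)\,\sigma_j^{\rho}\), I identify \(h_K\approx\rho\sigma_j^{\rho-1}\ell_j\) up to \(\theta\)-dependent constants. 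This yields \(\Delta_j\le C_{\rho,\theta}B_{d,1}h_K\,\ell_j\sigma_j^{-\rho}\), so the undamped sum is a Riemann sum for \(\Xi_\rho(0;\sigma_K,\sigma_{\Khi})\), with comparison of \(\sigma_j^{-\rho}\) to \(s^{-\rho}\) on \([\sigma_{j+1},\sigma_j]\) costing another \(\theta\)-constant. For the damped sum, I bound the kernel from above by \(\Pi_{r+1,\Khi}\le\exp\bigl(-\bhi\sum a_i\bigr)\). Then \(a_i\ge\kappa(\theta)\log(\sigma_i/\sigma_{i+1})\), which is exactly how \(\kappa\) in \eqref{eq:def_kappa_mesh_edm_predictor} arises since \(a_i=1-(1-\bar h_i)^{1/\rho}\) and \(\log(\sigma_i/\sigma_{i+1})=\rho^{-1}\log(1-\bar h_i)^{-1}\). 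This gives \(\Pi_{r+1,\Khi}\le(\sigma_{\Khi}/\sigma_{r+1})^{\beff}\), and after comparing \(\sigma_{r+1}\) with \(s\in[\sigma_{r+1},\sigma_r]\) the sum is bounded by \(C\,h_K\Xi_\rho(\beff;\sigma_{\Khi},\sigma_0)\). Monotonicity of \(\kappa\) and \(\beff\to\bhi\) as \(\thetahi\to0\) are elementary.

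\textbf{Main obstacle.} I expect the main obstacle to be bookkeeping the mesh-dependent constants so that \(C_\star\) depends only on \(\rho\) and \(\max(\thetahi,\thetalo)\). This requires the ratios \(\sigma_j/\sigma_{j+1}\le(1-\theta)^{-1/\rho}\) and the step comparison \(h_K\leftrightarrow\ell_j\) uniformly. I would handle this by proving one lemma: for \(\bar h\le\theta\), all of \(a_j\rho\eta_j/h_K\), \(\sigma_j/\sigma_{j+1}\), and \(g(a_j)/a_j^2\) lie in intervals depending only on \((\rho,\theta)\). I would then apply it uniformly to both blocks, using \(\eta_j\ge\sighi^\rho\) on the high block and \(\eta_j\ge\underline\sigma^\rho\) on the low block.
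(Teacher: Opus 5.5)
Your proposal follows the paper's proof in every structural step. These are the two-block unrolling of \eqref{eq:one_step_learned_map_bound_integrated_curvature}, the telescoping \(\Pi_{r+1,\Khi}\bhi a_r=\Pi_{r+1,\Khi}-\Pi_{r,\Khi}\) with \(\beta_r\le\sigmad\), the decreasing-integrand comparison giving \(\Upsilon(\sigma_K,\sigma_{\Khi})\), the kernel bound via \(a_i\ge\kappa(\theta)\log(\sigma_i/\sigma_{i+1})\) (\Cref{lem:kernel_power_bound}), and the concatenation using that the constants increase in \(\theta\). The one deviation is that you compare the bias sums with \(\Xi_\rho\) through a Riemann sum in \(\sigma\) with weights \(\ell_j\). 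The paper instead uses a relative-mesh Riemann sum in the clock \(\eta\) (\Cref{lem:relative_mesh_riemann}) and then substitutes \(\eta=s^\rho\).

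\textbf{The constant.} This deviation needs care, because the statement fixes \(C_\star\) to the explicit constant \(C_{\rho,\theta}=(1-\theta)^{-2-2/\rho}/(2\rho)\) of \Cref{lem:damped_defect_sum}, not just some constant depending on \(\rho,\theta\).

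As sketched, you first bound \(\Delta_j\lesssim h_K^2\sigma_j^{1-2\rho}\), which uses the upper bound \(a_j\le h_K/(\rho\eta_j)\). You then convert one factor \(h_K\) back into \(\ell_j=a_j\sigma_j\), which needs a \emph{lower} bound on \(a_j\) in terms of \(h_K/\eta_j\). Whichever lower bound you take (\(a_j\ge h_K/\eta_j\), or concavity of \(\theta\mapsto1-(1-\theta)^{1/\rho}\)), this round trip loses a factor. Combined with the damping cost \((1-\theta)^{-\beff/\rho}\), the result is not bounded by \(C_{\rho,\theta}\) in general, for instance at \(\rho=1/7\), \(\beff\) near one and small \(\theta\). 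Your displayed identity also has a stray factor \(\sigma_j^{\rho}\); the correct one is \(h_K\sigma_j^{1-2\rho}=\rho\,\sigma_j^{1-\rho}\,h_K/(\rho\eta_j)\).

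\textbf{The fix.} Skip the round trip. Since \(\sigma_j=\ell_j/a_j\), you have exactly \(\Delta_j=B_{d,1}\,\ell_j\,g(a_j)/a_j\) with \(g(a)=a+(1-a)\log(1-a)\). Using \(g(a)/a\le a/(2(1-a))\), \(a_j\le h_K/(\rho\eta_j)\) and \(1-a_j\ge(1-\theta)^{1/\rho}\) gives
\[
\Delta_j\le\frac{(1-\theta)^{-1/\rho}}{2\rho}\,B_{d,1}\,h_K\,\ell_j\,\sigma_j^{-\rho}.
\]
The comparison with the integral then has two cases.
\begin{itemize}
\item In the undamped case it is free, since \(s^{-\rho}\ge\sigma_j^{-\rho}\) on \([\sigma_{j+1},\sigma_j]\). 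Contrary to what you wrote, no extra \(\theta\)-constant is needed.
\item In the damped case it costs only \((\sigma_j/\sigma_{j+1})^{b}\le(1-\theta)^{-b/\rho}\).
\end{itemize}
The total constant \((1-\theta)^{-2/\rho}/(2\rho)\) is at most \(C_{\rho,\theta}\). Done this way, your \(\sigma\)-route proves the statement, is slightly sharper, and avoids \Cref{lem:relative_mesh_riemann}.

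\textbf{Learning scale.} For the logarithmic half of \eqref{eq:def_learning_scale}, only your \(\operatorname{arsinh}\) argument works. The bound \(\sqrt{s^2+\sigmad^2}\ge s\) yields \(\sigmad\log(s_0/s_1)\), not \(\sigmad\log(1+2s_0/\sigmad)\).
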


\begin{remark}[Three regimes, one rate]
  \label{rem:three_regimes}
  Since \(\Xi_\rho\) is nonincreasing in \(b\), contraction is not what makes the discretization
  error first order; it decides which noise scale sets the constant. For \(\rho<1\), the high-noise
  scale function \(\Xi_\rho(\beff;\sigma_{\Khi},\sigma_0)\) is at most
  \(\sigma_{\Khi}^{1-\rho}/(\beff+\rho-1)\) if \(\beff>1-\rho\) (supercritical),
  \(\sigma_{\Khi}^{1-\rho}\log(\sigma_0/\sigma_{\Khi})\) if \(\beff=1-\rho\) (critical), and
  \(\sigma_{\Khi}^{\beff}\sigma_0^{1-\rho-\beff}/(1-\rho-\beff)\) if \(\beff<1-\rho\) (subcritical).
  In the supercritical regime the damped sum of the local discretization biases of the high-noise
  block is set by \(\sigma_{\Khi}\) alone, however large \(\sigma_0\) is. At \(\rho=1/7\)
  (\(\rhoEDM=7\)) the supercritical condition reads \(\beff>6/7\). Since
  \(\beff=\bhi\kappa(\thetahi)<\bhi\) and \(\kappa(\thetahi)\uparrow1\) as \(\thetahi\downarrow0\),
  it requires \(\bhi>6/7\), and is then met on every sufficiently fine mesh, that is, for
  \(\thetahi\) small enough (\Cref{subsec:evidence_damping}).
\end{remark}

\subsection{High-Noise Block}

Under the high-noise contraction \cref{eq:high_noise_transport_contraction},
\cref{eq:one_step_learned_map_bound_integrated_curvature} specializes to
\begin{equation}\label{eq:one_step_edm_predictor_residual_bound}
  e_{j+1} \le (1-\bhi a_j)e_j+\Delta_j+a_j\beta_j\Rres_j, \qquad j=0,\ldots,\Khi-1.
\end{equation}
Under the EDM certificate of \Cref{prop:high_noise_stability_certificate}, the step-dependent factor
\(1-a_jb_j\) may replace \(1-\bhi a_j\) throughout. The kernel of \Cref{sec:global_bound},
\(\Pi_{r,k}=\prod_{i=r}^{k-1}(1-\bhi a_i)\) for \(0\leqslant r\leqslant k\), has factors in
\((0,1)\), and \(\Pi_{k,k}=1\) (empty product). It satisfies \(\Pi_{r,k}=(1-\bhi a_r)\Pi_{r+1,k}\)
for \(r<k\).

\begin{proposition}[High-noise contraction estimate]
  \label{prop:large_noise_edm_discrete_gronwall}
  Let \(\bhi\in(0,1)\), and assume \Cref{ass:uniform_residual_accuracy} and the high-noise
  contraction \cref{eq:high_noise_transport_contraction}. Then, for every \(G_K\in\Gsched\) and
  every \(k\in\{0,\ldots,\Khi\}\),
  \begin{equation*}
    e_{k} \le \Pi_{0,k}e_0 + \sum_{j=0}^{k-1}\Pi_{j+1,k}\Delta_j
    + \frac{\sigmad\sqrt{d}\,\Cres}{\bhi} \left(1-\Pi_{0,k}\right).
  \end{equation*}
\end{proposition}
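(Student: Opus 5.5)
We unroll the one-step recursion \eqref{eq:one_step_edm_predictor_residual_bound} by induction on \(k\). For \(k=0\) the claim reads \(e_0\le e_0\), since \(\Pi_{0,0}=1\) and both the sum and the factor \(1-\Pi_{0,0}\) vanish. Under \Cref{ass:uniform_residual_accuracy}, the learning term at step \(j\) is at most \(a_j\beta_j\sqrt d\,\Cres\). The coefficient satisfies \(\beta_j=\sigma_j\sigmad/\sqrt{\sigma_j^2+\sigmad^2}\le\sigmad\), so this term is bounded by \(a_j\sigmad\sqrt d\,\Cres\). Writing \(c=\sigmad\sqrt d\,\Cres\), the recursion becomes
\begin{equation*}
  e_{j+1}\le(1-\bhi a_j)e_j+\Delta_j+c\,a_j,\qquad j<\Khi .
\end{equation*}

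For the induction step, assume the bound holds at some \(k<\Khi\) and multiply it by \(1-\bhi a_k\in(0,1)\). Since \(\Pi_{r,k+1}=\Pi_{r,k}(1-\bhi a_k)\), the initialization term becomes \(\Pi_{0,k+1}e_0\). Each bias term becomes \(\Pi_{j+1,k+1}\Delta_j\), and adding the new bias \(\Delta_k=\Pi_{k+1,k+1}\Delta_k\) extends the sum to \(j=k\). It remains to handle the learning terms, where we need
\begin{equation*}
  (1-\bhi a_k)\frac{c}{\bhi}(1-\Pi_{0,k})+c\,a_k=\frac{c}{\bhi}\bigl(1-\Pi_{0,k+1}\bigr).
\end{equation*}
Expanding the left side gives \(\frac{c}{\bhi}\bigl[(1-\bhi a_k)-\Pi_{0,k+1}+\bhi a_k\bigr]\), and this equals the right side exactly.

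Alternatively, the learning part can be seen as the telescoping identity \(\sum_{j<k}\Pi_{j+1,k}\,\bhi a_j=\sum_{j<k}(\Pi_{j+1,k}-\Pi_{j,k})=1-\Pi_{0,k}\). This follows from \(\Pi_{j,k}=(1-\bhi a_j)\Pi_{j+1,k}\).

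No step is a genuine obstacle. The only points that need care are the bound \(\beta_j\le\sigmad\), which makes the learning constant uniform in \(j\), and keeping the index conventions consistent: the steps run over \(j=0,\ldots,\Khi-1\), and \(\Pi_{k,k}=1\).
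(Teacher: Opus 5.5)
Your proof is correct and is essentially the paper's argument. The paper unrolls \eqref{eq:one_step_edm_predictor_residual_bound} directly, bounds the learning term using \(\beta_j\le\sigmad\) and \Cref{ass:uniform_residual_accuracy}, and sums it through the same telescoping identity \(\Pi_{j+1,k}\bhi a_j=\Pi_{j+1,k}-\Pi_{j,k}\) that you give as your alternative; your induction step is just the stepwise form of that unrolling.
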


\begin{proof}
For \(k=0\) the bound reads \(e_0\leqslant e_0\). For \(k\geqslant1\), iterating
\cref{eq:one_step_edm_predictor_residual_bound} for \(j=0,\ldots,k-1\), all of whose coefficients
are nonnegative, gives \(e_{k}\leqslant
\Pi_{0,k}e_0+\sum_{j=0}^{k-1}\Pi_{j+1,k}(\Delta_j+a_j\beta_j\Rres_j)\). Since \(\beta_j\leqslant
\sigmad\) and \(\Rres_j\leqslant \sqrt d\,\Cres\), the learning term satisfies
\(a_j\beta_j\Rres_j\le(\sigmad\sqrt d\,\Cres/\bhi)\,\bhi a_j\), and the kernel telescopes:
\(\Pi_{j+1,k}\bhi a_j=\Pi_{j+1,k}-\Pi_{j,k}\), whose sum over \(j=0,\ldots,k-1\) is
\(\Pi_{k,k}-\Pi_{0,k}=1-\Pi_{0,k}\).
\end{proof}

The kernel damps the initialization error and saturates the accumulated learning error at
\(\sigmad\sqrt d\,\Cres/\bhi\), however many steps the block contains. In the damped bias sum
\(\sum_j\Pi_{j+1,k}\Delta_j\), the universal bias bound \(\Delta_j\) is of order
\(h^2\sigma_j^{1-2\rho}\), so for \(\rho<1/2\) the earliest, highest-noise steps contribute the
most, and the kernel discounts those same steps geometrically. The scale function \cref{eq:def_Xi}
measures the outcome; explicitly,
\begin{equation*}
  \Xi_\rho(b;s_1,s_0)
  = \begin{cases} \dfrac{s_1^{1-\rho}}{b+\rho-1}
    \Bigl[1-\Bigl(\dfrac{s_1}{s_0}\Bigr)^{b+\rho-1}\Bigr], & b\neq1-\rho,\\[1.2em]
    s_1^{1-\rho}\log\dfrac{s_0}{s_1}, & b=1-\rho, \end{cases}
\end{equation*}
and \(\Xi_\rho(0;s_1,s_0)=\int_{s_1}^{s_0}s^{-\rho}\dd s\) is the undamped sum. On a finite mesh the
relaxation \(a_j\) is smaller than the log-noise increment \(r_j=\log(\sigma_j/\sigma_{j+1})\),
whose sums give the continuous kernel, and \(\kappa(\theta)\) of
\cref{eq:def_kappa_mesh_edm_predictor} is the value of \(a_j/r_j\) at \(h/\eta_j=\theta\). On every
stretch of steps with \(h/\eta_i\leqslant \theta\) for \(i=j,\ldots,k-1\),
\(\Pi_{j,k}\le(\sigma_{k}/\sigma_{j})^{\bhi\kappa(\theta)}\) (\Cref{lem:kernel_power_bound}).

\begin{proposition}[High-noise estimate on the refinement family]
  \label{prop:high_noise_grid}
  Let \(0<\rho\leqslant 1\) and \(\bhi\in(0,1)\), and assume \Cref{ass:uniform_residual_accuracy}
  and the high-noise contraction \cref{eq:high_noise_transport_contraction}. Fix
  \(\thetahi\in(0,1)\) and set \(\kappahi=\kappa(\thetahi)\) and \(\beff=\bhi\kappahi\). There is a
  constant \(C_{\rho,\thetahi}<\infty\), depending only on \(\rho\) and \(\thetahi\), such that for
  every \(G_K\in\Gsched\) with \(h_K\leqslant \thetahi\sighi^\rho\),
  \begin{equation}\label{eq:large_noise_closed_damped_curvature_bound}
    e_{\Khi} \le \Pi_{0,\Khi}e_0
    + C_{\rho,\thetahi}\Bd[1]\,h_K\,\Xi_\rho(\beff;\sigma_{\Khi},\sigma_0)
    + \frac{\sigmad\sqrt{d}\,\Cres}{\bhi} \left(1-\Pi_{0,\Khi}\right).
  \end{equation}
\end{proposition}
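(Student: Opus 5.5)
The proof starts from \Cref{prop:large_noise_edm_discrete_gronwall} with \(k=\Khi\). That result already gives the initialization term \(\Pi_{0,\Khi}e_0\) and the learning term \(\sigmad\sqrt d\,\Cres(1-\Pi_{0,\Khi})/\bhi\) exactly as in \eqref{eq:large_noise_closed_damped_curvature_bound}. The only remaining task is to bound the damped bias sum
\begin{equation*}
  S=\sum_{j=0}^{\Khi-1}\Pi_{j+1,\Khi}\Delta_j
\end{equation*}
by \(C_{\rho,\thetahi}\Bd[1]h_K\,\Xi_\rho(\beff;\sigma_{\Khi},\sigma_0)\).

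\textbf{Step 1: bound each bias.} On the high-noise block every \(\eta_j\ge\sighi^\rho\), so the relative step satisfies \(h_K/\eta_j\le h_K/\sighi^\rho\le\thetahi<1\). Write \(\Delta_j=\Bd[1]\sigma_jg(a_j)\) with \(g(a)=a+(1-a)\log(1-a)\le a^2/2\). Next use \(a_j=1-(1-h_K/\eta_j)^{1/\rho}\le C'_{\rho,\thetahi}h_K/\eta_j\); the constant is finite because the relative step is bounded by \(\thetahi<1\). Together these give
\begin{equation*}
  \Delta_j\le C''_{\rho,\thetahi}\Bd[1]h_K^2\sigma_j^{1-2\rho}.
\end{equation*}
This uses only \Cref{lem:local_euler_defect}.

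\textbf{Step 2: bound the kernel.} I will invoke \Cref{lem:kernel_power_bound} with \(\theta=\thetahi\). Every step \(i=j+1,\dots,\Khi-1\) has relative step at most \(\thetahi\), so
\begin{equation*}
  \Pi_{j+1,\Khi}\le(\sigma_{\Khi}/\sigma_{j+1})^{\beff}.
\end{equation*}
I then replace \(\sigma_{j+1}\) by \(\sigma_j\). The ratio \(\sigma_j/\sigma_{j+1}=(1-h_K/\eta_j)^{-1/\rho}\le(1-\thetahi)^{-1/\rho}\) is bounded, and \(\beff<1\), so this costs at most a factor \((1-\thetahi)^{-1/\rho}\).

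\textbf{Step 3: turn the sum into an integral.} Each term of \(S\) is now at most a constant times \(\Bd[1]h_K\cdot h_K(\sigma_{\Khi}/\sigma_j)^{\beff}\sigma_j^{1-2\rho}\). I convert \(h_K\) to the noise increment: since \(\dd\eta=\rho s^{\rho-1}\dd s\), one has \(h_K=\int_{\sigma_{j+1}}^{\sigma_j}\rho s^{\rho-1}\,\dd s\). On the interval \([\sigma_{j+1},\sigma_j]\), the quantities \(s\) and \(\sigma_j\) differ by at most the factor \((1-\thetahi)^{-1/\rho}\). Hence
\begin{equation*}
  h_K\,(\sigma_{\Khi}/\sigma_j)^{\beff}\sigma_j^{1-2\rho}\le C\int_{\sigma_{j+1}}^{\sigma_j}(\sigma_{\Khi}/s)^{\beff}s^{1-2\rho}\rho s^{\rho-1}\,\dd s=C\rho\int_{\sigma_{j+1}}^{\sigma_j}(\sigma_{\Khi}/s)^{\beff}s^{-\rho}\,\dd s,
\end{equation*}
with \(C\) depending only on \(\rho,\thetahi\). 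The exponent adjustment is uniform because \(1-2\rho\in[-1,1)\) and \(\beff\in(0,1)\) are bounded.

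\textbf{Step 4: sum.} Summing over \(j=0,\dots,\Khi-1\), the intervals tile \([\sigma_{\Khi},\sigma_0]\) and the integrals add up to \(\Xi_\rho(\beff;\sigma_{\Khi},\sigma_0)\). This yields \(S\le C_{\rho,\thetahi}\Bd[1]h_K\,\Xi_\rho(\beff;\sigma_{\Khi},\sigma_0)\), which is the claimed bound.

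The main obstacle is making every comparison constant depend only on \(\rho\) and \(\thetahi\), uniformly in \(K\) and in \(\sighi\). This holds because all of them come from powers of \((1-\thetahi)^{-1/\rho}\) and from bounded exponents. If \Cref{lem:kernel_power_bound} turns out to require the relative-step condition at the step starting at \(j\) itself, that condition is also satisfied on the high-noise block, so nothing changes. If \(\Khi=0\), the sum is empty and the bound holds trivially.
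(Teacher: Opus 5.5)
Your proof follows the paper's: it applies \Cref{prop:large_noise_edm_discrete_gronwall} with $k=\Khi$, bounds $\Pi_{j+1,\Khi}\le(\sigma_{\Khi}/\sigma_{j+1})^{\beff}$ by \Cref{lem:kernel_power_bound}, and then carries out directly (in $\sigma$ rather than in the clock $\eta$) the Riemann comparison that the paper packages as \Cref{lem:damped_defect_sum} with $b=\beff$. One slip in Step~1 needs fixing: $g(a)=a+(1-a)\log(1-a)=\sum_{m\ge2}a^m/(m(m-1))$ is \emph{at least} $a^2/2$, not at most; use $g(a)\le a^2/(2(1-a))$ together with $1-a_j\ge(1-\thetahi)^{1/\rho}$, which only adds a factor $(1-\thetahi)^{-1/\rho}$, so the constant still depends on $\rho$ and $\thetahi$ alone.
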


The mesh condition gives \(h_K/\eta_j\leqslant \thetahi\) on the whole high-noise block. The proof,
in \Cref{app:large_noise_extras} together with the explicit constant, bounds
\(\Pi_{j+1,\Khi}\le(\sigma_{\Khi}/\sigma_{j+1})^{\beff}\) and compares the resulting sum with the
integral \cref{eq:def_Xi}. The range \(\rho>1\), that is \(\rhoEDM<1\), is not treated by this
estimate nor by the low-noise one; it plays no role in practice.

\subsection{Summation of the Local Biases}
\label{app:large_noise_extras}

This subsection proves the two summation estimates behind \Cref{prop:high_noise_grid} and
\Cref{prop:terminal_transport_grid_bound}. Throughout, \(\eta_r>\eta_{r+1}>\cdots>\eta_{k+1}>0\) is
a segment of a grid that is uniform in the clock \(\eta=\sigma^\rho\), with \(0<\rho\leqslant 1\),
common step \(h=\eta_j-\eta_{j+1}\), and levels \(\sigma_j=\eta_j^{1/\rho}\); the lemmas below
differ only in the indices of the segment.

The argument has three steps. \Cref{lem:relative_mesh_riemann} compares a Riemann sum on the grid
with the corresponding integral, at a cost controlled by the relative mesh \(h/\eta_j\).
\Cref{lem:damped_defect_sum} uses it to bound the sum of the universal bias bounds \(\Delta_j\)
weighted by a power kernel \((\sigma_{k+1}/\sigma_{j+1})^b\), and identifies the resulting integral
with the scale function \(\Xi_\rho\) of \cref{eq:def_Xi}; with \(b=0\) it bounds the low-noise bias
sum of \Cref{prop:terminal_transport_grid_bound}. For the high-noise block,
\Cref{lem:kernel_power_bound} bounds the discrete contraction kernel \(\Pi_{j+1,k+1}\) by such a
power, with exponent \(\beff\), and inserting both bounds into
\Cref{prop:large_noise_edm_discrete_gronwall} proves \Cref{prop:high_noise_grid}.

\subsubsection{A Relative-Mesh Riemann Comparison}

\begin{lemma}[Relative-mesh Riemann comparison]
  \label{lem:relative_mesh_riemann}
  Let \(f_p(\eta)=\eta^{-p}\) with \(p\in\R\), and assume \(h\leqslant \theta\eta_j\) for
  \(j=r,\ldots,k\) and some \(\theta\in(0,1)\). Then
  \begin{equation}\label{eq:relative_mesh_riemann}
    h\sum_{j=r}^{k} f_p(\eta_{j+1})
    \le (1-\theta)^{-p_+}\int_{\eta_{k+1}}^{\eta_r} f_p(\eta)\,\dd\eta, \qquad p_+=\max(p,0).
  \end{equation}
\end{lemma}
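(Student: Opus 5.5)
The plan is to compare each term of the sum with the integral of \(f_p\) over the adjacent cell \([\eta_{j+1},\eta_j]\), which has length exactly \(h\). Since \(\eta_{k+1}<\eta_{k}<\cdots<\eta_r\) and these cells tile \([\eta_{k+1},\eta_r]\), it suffices to prove, for every \(j=r,\ldots,k\), the cellwise inequality
\begin{equation*}
  h\,f_p(\eta_{j+1}) \le (1-\theta)^{-p_+}\int_{\eta_{j+1}}^{\eta_j} f_p(\eta)\,\dd\eta .
\end{equation*}
Summing over \(j\) then gives \cref{eq:relative_mesh_riemann}.

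We split according to the sign of \(p\). If \(p\le0\), then \(f_p\) is nondecreasing on \((0,\infty)\), so \(f_p(\eta_{j+1})\le f_p(\eta)\) for every \(\eta\in[\eta_{j+1},\eta_j]\). Integrating over a cell of length \(h\) yields the cellwise inequality with factor \(1=(1-\theta)^{0}\).

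If \(p>0\), then \(f_p\) is decreasing, so the left endpoint \(\eta_{j+1}\) gives the largest value on the cell. The sum is therefore an upper Riemann sum, and we must pay for the ratio of the maximum to the minimum of \(f_p\) on the cell. For \(\eta\in[\eta_{j+1},\eta_j]\) we have
\begin{equation*}
  \frac{f_p(\eta_{j+1})}{f_p(\eta)}=\Bigl(\frac{\eta}{\eta_{j+1}}\Bigr)^{p}\le\Bigl(\frac{\eta_j}{\eta_{j+1}}\Bigr)^{p}.
\end{equation*}
The mesh hypothesis \(h\le\theta\eta_j\) gives \(\eta_{j+1}=\eta_j-h\ge(1-\theta)\eta_j\), so \(\eta_j/\eta_{j+1}\le(1-\theta)^{-1}\). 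Hence \(f_p(\eta_{j+1})\le(1-\theta)^{-p}f_p(\eta)\) on the whole cell, and integrating gives the cellwise bound with factor \((1-\theta)^{-p}=(1-\theta)^{-p_+}\).

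The only step needing care is this \(p>0\) case. There the relative-mesh condition must be applied at the upper endpoint \(\eta_j\) of each cell, which is exactly where it is assumed for \(j=r,\ldots,k\). Positivity \(\eta_{k+1}>0\) ensures that every integral is finite.
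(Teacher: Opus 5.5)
Your proof is correct and follows the paper's argument. Both compare each term with the integral of \(f_p\) over the cell \([\eta_{j+1},\eta_j]\) of length \(h\), bound the ratio \(f_p(\eta_{j+1})/f_p(\eta)=(\eta/\eta_{j+1})^p\) by \((1-\theta)^{-p_+}\) using \(\eta_{j+1}\ge(1-\theta)\eta_j\), and sum over \(j\); the only cosmetic difference is that you split explicitly on the sign of \(p\), whereas the paper handles both cases at once through \(p_+\).
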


\begin{proof}
From \(\eta_{j+1}=\eta_j-h\ge(1-\theta)\eta_j\) we have \(\eta_j/\eta_{j+1}\le(1-\theta)^{-1}\). For
\(\eta\) in the panel \([\eta_{j+1},\eta_j]\),
\(f_p(\eta_{j+1})/f_p(\eta)=(\eta/\eta_{j+1})^{p}\le(1-\theta)^{-p_+}\), so
\(h\,f_p(\eta_{j+1})\le(1-\theta)^{-p_+}\int_{\eta_{j+1}}^{\eta_j}f_p(\eta)\,\dd\eta\). Summing over
\(j=r,\ldots,k\) gives \cref{eq:relative_mesh_riemann}.
\end{proof}

\subsubsection{The Damped Bias Sum}

\begin{lemma}[Damped bias sum]
  \label{lem:damped_defect_sum}
  Assume \(h\leqslant \theta\eta_j\) for \(j=r,\ldots,k\) and some \(\theta\in(0,1)\). Then for
  every \(b\in[0,1]\),
  \begin{equation}\label{eq:damped_defect_sum}
    \sum_{j=r}^{k}\Bigl(\frac{\sigma_{k+1}}{\sigma_{j+1}}\Bigr)^{b}\Delta_j
    \le C_{\rho,\theta}\,\Bd[1]\,h\,\Xi_\rho(b;\sigma_{k+1},\sigma_r),
    \qquad C_{\rho,\theta}=\frac{(1-\theta)^{-2-2/\rho}}{2\rho},
  \end{equation}
  with \(\Delta_j\) as in \Cref{lem:local_euler_defect} and \(\Xi_\rho\) as in \cref{eq:def_Xi}.
\end{lemma}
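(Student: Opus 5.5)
The plan is to bound each $\Delta_j$ by an explicit power of $\eta_{j+1}$ times $h^2$, absorb the kernel weight into that power, and then turn the sum into the integral $\Xi_\rho$ using \Cref{lem:relative_mesh_riemann}.

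\emph{Step 1: pointwise bound on $\Delta_j$.} Start from the integral form in \Cref{lem:local_euler_defect},
\[
\Delta_j=\Bd[1]\int_{\sigma_{j+1}}^{\sigma_j}\frac{s-\sigma_{j+1}}{s}\,\dd s .
\]
Bounding $1/s\le 1/\sigma_{j+1}$ gives $\Delta_j\le \Bd[1](\sigma_j-\sigma_{j+1})^2/(2\sigma_{j+1})$. Next, write $\sigma_j-\sigma_{j+1}=\rho^{-1}\int_{\eta_{j+1}}^{\eta_j}\eta^{1/\rho-1}\,\dd\eta$. Since $\rho\le1$, the integrand is nondecreasing in $\eta$, so $\sigma_j-\sigma_{j+1}\le (h/\rho)\,\eta_j^{1/\rho-1}$. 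The mesh condition gives $\eta_j\le\eta_{j+1}/(1-\theta)$. Combining these,
\[
\Delta_j\le \frac{\Bd[1]}{2\rho^2}(1-\theta)^{-(2/\rho-2)}\,h^2\,\eta_{j+1}^{1/\rho-2}.
\]

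\emph{Step 2: absorb the weight.} Since $\sigma=\eta^{1/\rho}$, the weight is $(\sigma_{k+1}/\sigma_{j+1})^b=\eta_{k+1}^{b/\rho}\,\eta_{j+1}^{-b/\rho}$. Each summand is therefore at most
\[
\frac{\Bd[1]}{2\rho^2}(1-\theta)^{-(2/\rho-2)}\,h^2\,\eta_{k+1}^{b/\rho}\,f_p(\eta_{j+1}),\qquad p=2-\frac{1-b}{\rho}.
\]
Because $b\le1$, we have $p\le2$ and hence $p_+\le2$.

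\emph{Step 3: Riemann comparison and change of variables.} Applying \Cref{lem:relative_mesh_riemann} to $h\sum_j f_p(\eta_{j+1})$ costs one factor $h$ and a factor $(1-\theta)^{-2}$, and leaves $\int_{\eta_{k+1}}^{\eta_r}\eta^{(1-b)/\rho-2}\,\dd\eta$. Substitute $\eta=s^\rho$, so that $\dd\eta=\rho s^{\rho-1}\dd s$. The integrand becomes $\rho\,s^{-b-\rho}$, and with the prefactor $\eta_{k+1}^{b/\rho}=\sigma_{k+1}^b$ the integral equals $\rho\,\Xi_\rho(b;\sigma_{k+1},\sigma_r)$ by \eqref{eq:def_Xi}.

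\emph{Collecting constants.} The numerical factor is $(2\rho^2)^{-1}\cdot\rho=(2\rho)^{-1}$, and the mesh factors combine to $(1-\theta)^{-2/\rho}$. This is at most the stated $(1-\theta)^{-2-2/\rho}$, so \eqref{eq:damped_defect_sum} follows.

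The main point to check is the sign bookkeeping in Steps 1 and 3. The monotonicity of $\eta^{1/\rho-1}$ uses $\rho\le1$, and the bound $p_+\le2$ uses $b\le1$, including the case $p<0$ where the Riemann factor is $1$. The delicate part is making sure the mesh factors only ever enter as upper bounds through $\eta_j\le\eta_{j+1}/(1-\theta)$.
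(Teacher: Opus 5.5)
Your proof is correct and follows the paper's route: you bound each \(\Delta_j\) by \(h^2\) times a power of the clock, absorb the kernel weight into \(f_p\) with \(p\le 2\), apply \Cref{lem:relative_mesh_riemann}, and substitute \(\eta=s^\rho\). The only difference is in the per-step bound. You use the integral form of \(\Delta_j\) with \(1/s\le 1/\sigma_{j+1}\) and the monotonicity of \(\eta^{1/\rho-1}\), whereas the paper uses \(g(a)\le a^2/(2(1-a))\), Bernoulli's inequality and a shift of \(\eta_j\) to \(\eta_{j+1}\). Your version lands directly at \(\eta_{j+1}\) and gives the slightly sharper constant \((1-\theta)^{-2/\rho}/(2\rho)\).
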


\begin{proof}
Write \(\theta_j=h/\eta_j\leqslant \theta\). Since
\(\sigma_{j+1}/\sigma_j=(\eta_{j+1}/\eta_j)^{1/\rho}=(1-\theta_j)^{1/\rho}\), the relaxation
coefficient is \(a_j=1-(1-\theta_j)^{1/\rho}\). As \(1/\rho\ge1\), Bernoulli's inequality gives
\(a_j\leqslant \theta_j/\rho\), and \(1-a_j=(1-\theta_j)^{1/\rho}\ge(1-\theta)^{1/\rho}\). The
function \(g(a)=a+(1-a)\log(1-a)=\sum_{m\ge2}a^m/(m(m-1))\) satisfies \(g(a)\leqslant
a^2/(2(1-a))\). Hence, using \(\theta_j=h\sigma_j^{-\rho}\),
\begin{equation*}
  \Delta_j = \Bd[1]\sigma_jg(a_j)
  \le \frac{(1-\theta)^{-1/\rho}}{2\rho^2}\,\Bd[1]\,h^2\sigma_j^{1-2\rho}.
\end{equation*}
Up to the constant \((1-\theta)^{-1/\rho}\Bd[1]/(2\rho^2)\) and the factor \(h^2\), the \(j\)th term
of \cref{eq:damped_defect_sum} is therefore at most
\((\sigma_{k+1}/\sigma_{j+1})^b\sigma_j^{1-2\rho}
=\sigma_{k+1}^{b}\,\eta_{j+1}^{-b/\rho}\eta_j^{(1-2\rho)/\rho}\) in the clock variable. One factor
\(h\) turns the sum of these terms into a Riemann sum, and the other remains as the factor \(h\) of
\cref{eq:damped_defect_sum}. Evaluating the bias factor \(\eta_j^{(1-2\rho)/\rho}\) at
\(\eta_{j+1}\) instead of \(\eta_j\) costs at most \((1-\theta)^{-(1-2\rho)_+/\rho}\), since
\(\eta_j/\eta_{j+1}\le(1-\theta)^{-1}\), and \Cref{lem:relative_mesh_riemann} with
\(p=(b+2\rho-1)/\rho\le2\) then gives
\begin{equation*}
  h\sum_{j=r}^{k}\eta_{j+1}^{-b/\rho}\eta_j^{(1-2\rho)/\rho}
  \le (1-\theta)^{-(1-2\rho)_+/\rho-2} \int_{\eta_{k+1}}^{\eta_r}\eta^{(1-2\rho-b)/\rho}\,\dd\eta
  = \rho\,(1-\theta)^{-(1-2\rho)_+/\rho-2}\int_{\sigma_{k+1}}^{\sigma_r}s^{-\rho-b}\,\dd s,
\end{equation*}
by the substitution \(\eta=s^\rho\). Multiplying by \(\sigma_{k+1}^b\) turns the last integral into
\(\Xi_\rho(b;\sigma_{k+1},\sigma_r)\). Collecting the factors, whose exponents of
\((1-\theta)^{-1}\) add up to at most \(2+2/\rho\), gives \cref{eq:damped_defect_sum}.
\end{proof}

The constant depends on \(b\) only through the bound \(p\le2\), so it is uniform over \(b\in[0,1]\);
in particular the same lemma serves the damped high-noise sum and, with \(b=0\), the undamped
low-noise sum. As \(\theta\downarrow0\) it tends to \(1/(2\rho)\), the value given by the
leading-order expansion \eqref{eq:local_defect_clock_expansion} of the universal bias bound,
\(\Delta_j\approx\Bd[1]h^2\sigma_j^{1-2\rho}/(2\rho^2)\).

\subsubsection{Proof of the High-Noise Estimate}

\begin{lemma}[Finite-mesh contraction of the kernel]
  \label{lem:kernel_power_bound}
  Let \(\bhi\in(0,1)\), let \(j\leqslant k\) and let the segment be \(\eta_{j}>\cdots>\eta_{k}\),
  with \(a_i=1-\sigma_{i+1}/\sigma_i\) and \(\Pi_{j,k}=\prod_{i=j}^{k-1}(1-\bhi a_i)\). If
  \(h\leqslant \theta\eta_i\) for \(i=j,\ldots,k-1\) and some \(\theta\in(0,1)\), then
  \(\Pi_{j,k}\le(\sigma_{k}/\sigma_{j})^{\bhi\kappa(\theta)}\), with \(\kappa\) as in
  \cref{eq:def_kappa_mesh_edm_predictor}.
\end{lemma}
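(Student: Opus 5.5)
The plan is to bound each factor of the product separately by a power of the noise ratio, and then multiply. Write \(r_i=\log(\sigma_i/\sigma_{i+1})>0\). Since \(\sigma_k/\sigma_j=\prod_{i=j}^{k-1}\sigma_{i+1}/\sigma_i=\exp(-\sum_i r_i)\), the claim follows from the stepwise inequality
\begin{equation*}
  1-\bhi a_i\le\exp\bigl(-\bhi\kappa(\theta)\,r_i\bigr),\qquad i=j,\ldots,k-1.
\end{equation*}
Taking the product over \(i\) then gives \(\Pi_{j,k}\le\exp(-\bhi\kappa(\theta)\sum_ir_i)=(\sigma_k/\sigma_j)^{\bhi\kappa(\theta)}\). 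The case \(j=k\) is trivial, since both sides equal one.

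For the stepwise inequality, I would first use \(1-x\le e^{-x}\) to get \(1-\bhi a_i\le\exp(-\bhi a_i)\). It then suffices to show \(a_i\ge\kappa(\theta)r_i\).

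Set \(\theta_i=h/\eta_i\le\theta\). As in the proof of \Cref{lem:damped_defect_sum}, \(\sigma_{i+1}/\sigma_i=(1-\theta_i)^{1/\rho}\). Hence
\begin{equation*}
  a_i=1-(1-\theta_i)^{1/\rho},\qquad r_i=\rho^{-1}\log\bigl((1-\theta_i)^{-1}\bigr),
\end{equation*}
so \(a_i/r_i=\kappa(\theta_i)\) by the definition \cref{eq:def_kappa_mesh_edm_predictor}. The remaining point, which I expect to be the only real step, is to show that \(\kappa\) is nonincreasing on \((0,1)\), so that \(\kappa(\theta_i)\ge\kappa(\theta)\).

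To prove the monotonicity, substitute \(u=\rho^{-1}\log((1-t)^{-1})\), which is increasing in \(t\). Then \(1-(1-t)^{1/\rho}=1-e^{-u}\), and
\begin{equation*}
  \kappa(t)=\frac{1-e^{-u}}{u}=\int_0^1e^{-us}\,\dd s.
\end{equation*}
This integral is decreasing in \(u\), hence \(\kappa\) is decreasing in \(t\). The same representation gives \(\kappa\in(0,1)\) and \(\kappa\uparrow1\) as \(\theta\downarrow0\), which were asserted after \cref{eq:def_kappa_mesh_edm_predictor}.

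Combining \(a_i\ge\kappa(\theta)r_i\) with \(1-\bhi a_i\le e^{-\bhi a_i}\) yields the stepwise inequality and therefore the lemma.

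Note that the argument does not use \(\rho\le1\): the representation \(\kappa=(1-e^{-u})/u\) is valid for every \(\rho>0\).
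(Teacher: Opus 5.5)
Your proof is correct and follows essentially the same route as the paper's: bound each factor by \(e^{-\bhi a_i}\), then show \(a_i\ge\kappa(\theta)r_i\) with \(r_i=\log(\sigma_i/\sigma_{i+1})\), using that \(r\mapsto(1-e^{-r})/r\) is decreasing and that \(r_i\le\rho^{-1}\log((1-\theta)^{-1})\). Your representation \(\int_0^1e^{-us}\,\dd s\) is a clean way to verify that monotonicity, and you are right that \(\rho\le1\) is not needed for this lemma.
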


\begin{proof}
Write
\begin{equation*}
  r_i=\log\frac{\sigma_i}{\sigma_{i+1}}=\frac1\rho\log\frac{1}{1-h/\eta_i}
\end{equation*}
for the log-noise increment of step \(i\), so that \(a_i=1-e^{-r_i}\). The continuous kernel with
exponent \(b\) is \((\sigma_{k}/\sigma_{j})^{b}=\exp(-b\sum_{i=j}^{k-1}r_i)\), and the discrete
relaxation is smaller than the increment, \(a_i\leqslant r_i\), since \(1-e^{-r}\leqslant r\). In
the other direction, \(r_i\) is an increasing function of \(h/\eta_i\leqslant \theta\), so
\(r_i\leqslant r_\theta=\rho^{-1}\log((1-\theta)^{-1})\), and since \(r\mapsto(1-e^{-r})/r\) is
decreasing,
\begin{equation*}
  a_i\ge\kappa(\theta)\,r_i, \qquad \kappa(\theta)=\frac{1-e^{-r_\theta}}{r_\theta}
  =\frac{1-(1-\theta)^{1/\rho}}{\rho^{-1}\log\bigl((1-\theta)^{-1}\bigr)}\in(0,1),
\end{equation*}
which is \cref{eq:def_kappa_mesh_edm_predictor}. Finally \(1-x\leqslant e^{-x}\) for every factor
gives
\begin{equation*}
  \Pi_{j,k} \leqslant \exp\Bigl(-\bhi\sum_{i=j}^{k-1}a_i\Bigr)
  \leqslant \exp\Bigl(-\bhi\kappa(\theta)\sum_{i=j}^{k-1}r_i\Bigr)
  =\Bigl(\frac{\sigma_{k}}{\sigma_{j}}\Bigr)^{\bhi\kappa(\theta)}.
\end{equation*}
For \(j=k\) both sides equal one.
\end{proof}

\begin{proof}[Proof of \Cref{prop:high_noise_grid}]
For \(\Khi=0\) the kernel \(\Pi_{0,0}\) is an empty product, equal to one, the bias sum is empty,
and \(\Xi_\rho(\beff;\sigma_0,\sigma_0)=0\), so \cref{eq:large_noise_closed_damped_curvature_bound}
reduces to \(e_0\leqslant e_0\). For \(\Khi\ge1\), every level \(\eta_j\) with \(j\leqslant \Khi\)
satisfies \(\eta_j\ge\sighi^\rho\), so the mesh condition \(h_K\leqslant \thetahi\sighi^\rho\) gives
\(h_K\leqslant \thetahi\eta_j\) on the whole high-noise block. Apply
\Cref{prop:large_noise_edm_discrete_gronwall} with \(k=\Khi\), bound each kernel \(\Pi_{j+1,\Khi}\),
\(0\leqslant j\leqslant\Khi-1\), by \Cref{lem:kernel_power_bound} on the segment
\(\eta_{j+1}>\cdots>\eta_{\Khi}\), which gives
\(\Pi_{j+1,\Khi}\leqslant(\sigma_{\Khi}/\sigma_{j+1})^{\beff}\), and the resulting sum by
\Cref{lem:damped_defect_sum} with \(r=0\), \(k=\Khi-1\) and \(b=\beff\leqslant 1\):
\begin{equation*}
  \sum_{j=0}^{\Khi-1}\Pi_{j+1,\Khi}\Delta_j
  \leqslant\sum_{j=0}^{\Khi-1}\Bigl(\frac{\sigma_{\Khi}}{\sigma_{j+1}}\Bigr)^{\beff}\Delta_j
  \leqslant C_{\rho,\thetahi}\Bd[1]h_K\,\Xi_\rho(\beff;\sigma_{\Khi},\sigma_0).
\end{equation*}
\end{proof}

\subsection{Low-Noise Block and Proof of the Global Bound}

No stability assumption is made below the threshold. The recursion is closed by \(\Lambda_K\), which
bounds every partial product of the truncated factors; with no contraction to exploit, the universal
bias bounds are summed without a kernel, which is the undamped case \(b=0\) of \cref{eq:def_Xi}.

\begin{proposition}[Low-noise estimate on the refinement family]
  \label{prop:terminal_transport_grid_bound}
  Let \(0<\rho\leqslant 1\), assume \Cref{ass:uniform_residual_accuracy}, and let \(\Lambda_K\) be
  the cumulative low-noise log-amplification of \Cref{def:low_noise_budget}. Fix
  \(\thetalo\in(0,1)\). There is a constant \(C_{\rho,\thetalo}<\infty\), depending only on \(\rho\)
  and \(\thetalo\), such that for every \(G_K\in\Gsched\) with \(h_K\leqslant
  \thetalo\underline\sigma^\rho\),
  \begin{equation}\label{eq:terminal_transport_grid_bound}
    e_K \le e^{\Lambda_K} \left[ e_{\Khi}
    + C_{\rho,\thetalo}\Bd[1]\,h_K\,\Xi_\rho(0;\sigma_K,\sigma_{\Khi})
    + \sqrt{d}\,\Cres\,\Upsilon(\sigma_K,\sigma_{\Khi}) \right],
  \end{equation}
  where \(\Xi_\rho(0;\sigma_K,\sigma_{\Khi})\) equals
  \((\sigma_{\Khi}^{1-\rho}-\sigma_K^{1-\rho})/(1-\rho)\) for \(\rho<1\) and
  \(\log(\sigma_{\Khi}/\sigma_K)\) for \(\rho=1\), and \(\Upsilon\) is the learning scale
  \cref{eq:def_learning_scale},
  \(\Upsilon(s_1,s_0)=\sigmad[\operatorname{arsinh}(s_0/\sigmad)-\operatorname{arsinh}(s_1/\sigmad)]\).
\end{proposition}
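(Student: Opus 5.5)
The plan is to unroll the one-step recursion \eqref{eq:one_step_learned_map_bound_integrated_curvature} over the low-noise block \(j=\Khi,\ldots,K-1\). This recursion holds with no stability assumption. The estimate \(e_{j+1}\le\gamma_je_j+\Delta_j+a_j\beta_j\Rres_j\) remains valid after replacing \(\gamma_j\) by \(\gambar_j\ge\gamma_j\), so all coefficients are nonnegative. Iterating from \(j=\Khi\) then gives
\begin{equation*}
  e_K\le\Bigl(\prod_{i=\Khi}^{K-1}\gambar_i\Bigr)e_{\Khi}
  +\sum_{j=\Khi}^{K-1}\Bigl(\prod_{i=j+1}^{K-1}\gambar_i\Bigr)\bigl(\Delta_j+a_j\beta_j\Rres_j\bigr).
\end{equation*}
Every product on the right is a partial product over the low-noise block, or an empty product equal to one. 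Each is therefore at most \(e^{\Lambda_K}\), because \(\gambar_i\ge1\) gives \(\Lambda_K\ge0\). Factoring out \(e^{\Lambda_K}\) leaves two sums to bound: the undamped bias sum \(\sum_j\Delta_j\) and the learning sum \(\sum_ja_j\beta_j\Rres_j\).

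\textbf{Bias sum.} For the bias term I will apply \Cref{lem:damped_defect_sum} with \(b=0\), \(r=\Khi\), \(k=K-1\) and \(\theta=\thetalo\). This requires \(h_K\le\thetalo\eta_j\) for \(j=\Khi,\ldots,K-1\). That holds because \(\eta_j\ge\eta_K=\underline\sigma^\rho\) and \(h_K\le\thetalo\underline\sigma^\rho\) by hypothesis. The lemma yields \(\sum_j\Delta_j\le C_{\rho,\thetalo}\Bd[1]h_K\,\Xi_\rho(0;\sigma_K,\sigma_{\Khi})\). Evaluating \(\int_{\sigma_K}^{\sigma_{\Khi}}s^{-\rho}\,\dd s\) gives the stated closed forms, \((\sigma_{\Khi}^{1-\rho}-\sigma_K^{1-\rho})/(1-\rho)\) for \(\rho<1\) and \(\log(\sigma_{\Khi}/\sigma_K)\) for \(\rho=1\). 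If the block is empty (\(\Khi=K\)), the bound reduces to \(e_K\le e_K\).

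\textbf{Learning sum.} Under \Cref{ass:uniform_residual_accuracy} we have \(\Rres_j\le\sqrt d\,\Cres\), so the learning sum is at most \(\sqrt d\,\Cres\sum_j a_j\beta_j\). Since \(a_j\sigma_j=\ell_j=\sigma_j-\sigma_{j+1}\),
\begin{equation*}
  a_j\beta_j=\ell_j\,\frac{\sigmad}{\sqrt{\sigma_j^2+\sigmad^2}}.
\end{equation*}
The function \(s\mapsto\sigmad/\sqrt{s^2+\sigmad^2}\) is decreasing, and \(\sigma_j\) is the right endpoint of the panel \([\sigma_{j+1},\sigma_j]\). Hence
\begin{equation*}
  \ell_j\,\frac{\sigmad}{\sqrt{\sigma_j^2+\sigmad^2}}\le\int_{\sigma_{j+1}}^{\sigma_j}\frac{\sigmad\,\dd s}{\sqrt{s^2+\sigmad^2}}.
\end{equation*}
Summing over the block telescopes to \(\Upsilon(\sigma_K,\sigma_{\Khi})\). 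The arsinh formula is the antiderivative of this integrand.

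\textbf{Main obstacle.} The only delicate point is the bookkeeping in the unrolling. One must use \(\gambar_j\) rather than \(\gamma_j\), so that every partial product, including the empty ones, is bounded by \(e^{\Lambda_K}\). One must also handle the \(e_j=0\) convention, which is harmless because \(\gamma_j=1\) there. Beyond that the argument is routine.
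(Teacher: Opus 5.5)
Your proposal is correct and follows the paper's proof step for step. You unroll the recursion with \(\gambar_j\ge\gamma_j\), bound every partial product by \(e^{\Lambda_K}\), apply \Cref{lem:damped_defect_sum} with \(b=0\), \(r=\Khi\), \(k=K-1\) using \(\eta_j\ge\underline\sigma^\rho\), and dominate \(a_j\beta_j\) by the integral of the decreasing integrand over \([\sigma_{j+1},\sigma_j]\). One small precision: a nonempty partial product is at most \(e^{\Lambda_K}\) because each \(\gambar_i\ge1\) makes it at most the full product, whereas \(\Lambda_K\ge0\) only covers the empty products.
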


\begin{proof}
For \(\Khi=K\) there is nothing to prove, since \(\Lambda_K=0\) and the sums vanish. Otherwise,
iterating \cref{eq:one_step_learned_map_bound_integrated_curvature} from \(j=\Khi\) to \(K-1\),
using \(\gamma_i\leqslant\gambar_i\) and bounding every partial product
\(\prod_{i=j+1}^{K-1}\gambar_i\), \(\Khi\leqslant j\leqslant K-1\), by \(e^{\Lambda_K}\) (an empty
product equals \(1\leqslant e^{\Lambda_K}\)) gives \(e_K\leqslant
e^{\Lambda_K}[e_{\Khi}+\sum_{j=\Khi}^{K-1}(\Delta_j+a_j\beta_j\Rres_j)]\). The bias sum is bounded
by \Cref{lem:damped_defect_sum} with \(r=\Khi\), \(k=K-1\) and \(b=0\), since \(h_K/\eta_j\leqslant
h_K/\underline\eta\leqslant \thetalo\) by \cref{eq:schedule_family_relative_mesh}, where
\(\underline\eta=\underline\sigma^\rho\). For the learning term,
\(a_j\beta_j=\ell_j\sigmad/\sqrt{\sigma_j^2+\sigmad^2}\). The integrand of
\cref{eq:def_learning_scale} is decreasing and \(\sigma_j\) is the upper end of the step, so
\(a_j\beta_j\leqslant \int_{\sigma_{j+1}}^{\sigma_j}\sigmad(s^2+\sigmad^2)^{-1/2}\,\dd s\). With
\Cref{ass:uniform_residual_accuracy}, the learning terms therefore sum to at most \(\sqrt
d\,\Cres\,\Upsilon(\sigma_K,\sigma_{\Khi})\), with no mesh condition.
\end{proof}

The learning scale satisfies \(\Upsilon(s_1,s_0)\leqslant \min\{s_0-s_1,\
\sigmad\log(1+2s_0/\sigmad)\}\): it is at most the length of the noise range, since its integrand is
at most one, and it grows only logarithmically in the upper end once \(s_0\gg\sigmad\), since
\(\operatorname{arsinh}x=\log(x+\sqrt{x^2+1})\leqslant \log(1+2x)\).

\begin{proof}[Proof of \Cref{thm:global_edm_wasserstein,thm:global_edm_explicit}]
The constant is \(C_\star=C_{\rho,\max(\thetahi,\thetalo)}\), that of \Cref{lem:damped_defect_sum}.
By \cref{eq:global_mesh_condition}, the mesh conditions of \Cref{prop:high_noise_grid} and
\Cref{prop:terminal_transport_grid_bound} both hold. Substitute the bound on \(e_{\Khi}\) of
\Cref{prop:high_noise_grid} into \cref{eq:terminal_transport_grid_bound}, use
\(C_{\rho,\theta}\leqslant C_\star\) for \(\theta\leqslant \max(\thetahi,\thetalo)\), since
\(C_{\rho,\theta}\) is increasing in \(\theta\), and add the terminal bias
\cref{eq:edm_final_error_with_terminal_bias}. The two discretization terms combine into
\(C_\star\Bd[1]h_K[\Xi_\rho(0;\sigma_K,\sigma_{\Khi})+\Xi_\rho(\beff;\sigma_{\Khi},\sigma_0)]
=\mathrm{Disc}_K\). For \(\Khi=0\) the high-noise terms vanish (\(\Pi_{0,0}=1\),
\(\Xi_\rho(\beff;\sigma_0,\sigma_0)=0\)); for \(\Khi=K\) the low-noise terms vanish. This gives
\Cref{thm:global_edm_explicit}, and \Cref{thm:global_edm_wasserstein} follows by the bound on
\(\Upsilon\) in \cref{eq:def_learning_scale}.
\end{proof}

The two mesh conditions in \cref{eq:global_mesh_condition} are those of the two blocks; they are
kept separate because \(\thetahi\) sets how much of the margin the kernel retains,
\(\beff=\bhi\kappa(\thetahi)\), whereas \(\thetalo\) enters only the constant. Refinement does not
improve the denoiser or remove an initialization mismatch, so a growing envelope for
\(e^{\Lambda_K}\) can make those terms of the bound increase even when the actual error stays
bounded. For the discretization term with \(\rho<1\), \(\Xi_\rho(0;\sigma_K,\sigma_{\Khi})\leqslant
\sigma_{\Khi}^{1-\rho}/(1-\rho)\), so in the supercritical regime \(\beff>1-\rho\) it is
\(O(e^{\Lambda_K}h_K\sigma_{\Khi}^{1-\rho})\), independent of \(\sigma_0\) at fixed clock mesh.
Raising \(\sigma_0\) to shrink the initialization error therefore costs only through
\(h_K=(\sigma_0^\rho-\sigma_K^\rho)/K\) and through \(\Lambda_K\). In the critical and subcritical
regimes the scale function contributes an additional factor \(\log(\sigma_0/\sigma_{\Khi})\) or
\((\sigma_0/\sigma_{\Khi})^{1-\rho-\beff}\). At \(\rho=1\) every \(\beff>0\) is supercritical.
Because \(\kappahi<1\), the relevant exponent is \(\beff\), not \(\bhi\); but if \(\bhi>1-\rho\),
choosing \(\thetahi\) small enough that \(\bhi\kappa(\thetahi)>1-\rho\) makes the bound
supercritical on every \(G_K\) with \(h_K\leqslant \thetahi\sighi^\rho\).

\begin{proof}[Proof of \Cref{cor:budget_refinement_rates}]
The mesh is \(h_K=(\overline\sigma^\rho-\underline\sigma^\rho)/K\), and \cref{eq:logarithmic_budget}
gives \(e^{\Lambda_K}=O(K^{\budexp})\). The scale function is bounded uniformly in \(K\): since
\(\Xi_\rho\) is nonincreasing in \(b\) and additive over adjacent intervals,
\(\Xi_\rho(0;\sigma_K,\sigma_{\Khi})+\Xi_\rho(\beff;\sigma_{\Khi},\sigma_0)\leqslant\Xi_\rho(0;\sigma_K,\sigma_{\Khi})
+\Xi_\rho(0;\sigma_{\Khi},\sigma_0)=\Xi_\rho(0;\underline\sigma,\overline\sigma)\). Substitute these
bounds in the discretization term. For the oracle with exact initialization, \(e_0=0\) and
\(\Rres_j=0\) for every \(j\), so \Cref{ass:uniform_residual_accuracy} holds with \(\Cres=0\) and
\(\mathrm{Learn}_K=0\); the same estimate bounds
\(e_K=\Wtwo(\widehat\mu_K,\mu_{\underline\sigma})=O(K^{-(1-\budexp)})\). This assertion does not
require \Cref{ass:high_noise_contraction}: choosing \(\sighi>\overline\sigma\) makes the high-noise
block empty (\(\Khi=0\)), and the estimate then uses only \Cref{ass:wellposed} and the envelope
\eqref{eq:logarithmic_budget}. Adding the terminal smoothing bias \(\underline\sigma\sqrt d\) of
\cref{eq:edm_final_error_with_terminal_bias} gives the second assertion.
\end{proof}

\subsection{Initialization Error at the Top Noise Level}

For the canonical EDM initialization \(\nu_0=\widehat\mu_0=\mathcal N(0,\sigma_0^2\Idd)\) of
\Cref{sec:one_step_error}, a Gaussian at the top noise level centered at the data mean, which is
zero for the centered target of \Cref{sec:preconditioning_residual}, the initialization error
\(e_0=\Wtwo(\nu_0,\mu_{\sigma_0})\) is controlled by the top noise scale: \(e_0\leqslant \sqrt
d\,\sigmad\) for every \(\sigma_0>0\), and under a light-tail assumption \(e_0\) decays like
\(\sigma_0^{-1}\), with \(2\sigma_0\,e_0\to\norm{\Sigma_{\mathrm{data}}}_F\leqslant d\,\sigmad^2\)
as \(\sigma_0\to\infty\) (\Cref{prop:initialization_error}). In the bound it is further damped by
the high-noise kernel \(\Pi_{0,\Khi}\).

\begin{proposition}[Initialization error at the top noise level]
  \label{prop:initialization_error}
  Let \(q\in\mathcal P_2(\R^d)\) be centered, let \(X_0\sim q\), so that \(\E[X_0]=0\), and set
  \(\sigmad^2=d^{-1}\E\norm{X_0}^2\), as in \Cref{sec:preconditioning_residual}, and
  \(\mu_{\sigma_0}=\Law(X_0+\sigma_0Z)\) for an independent \(Z\sim\mathcal N(0,\Idd)\). Let
  \(\nu_0=\mathcal N(0,\sigma_0^2\Idd)\). Then, for every \(\sigma_0>0\),
  \begin{equation*}
    e_0=\Wtwo(\nu_0,\mu_{\sigma_0})\leqslant \sqrt d\,\sigmad.
  \end{equation*}
  If moreover \(X_0\) has light tails, in the sense that \(\E\exp(\xi\norm{X_0}^2)<\infty\) for some
  \(\xi>0\), then, with \(\Sigma_{\mathrm{data}}=\E[X_0X_0^\top]\),
  \begin{equation*}
    \lim_{\sigma_0\to\infty}2\sigma_0\,e_0 =\norm{\Sigma_{\mathrm{data}}}_F
    \leqslant \Tr \Sigma_{\mathrm{data}} =d\,\sigmad^2 .
  \end{equation*}
  In particular, \(e_0\) decays at the rate \(\sigma_0^{-1}\): for every \(\varepsilon>0\) there
  exists \(\sigma_\star<\infty\) such that
  \begin{equation*}
    e_0\leqslant \frac{\norm{\Sigma_{\mathrm{data}}}_F+\varepsilon}{2\sigma_0}
    \leqslant \frac{d\,\sigmad^2+\varepsilon}{2\sigma_0}
    \qquad\text{for all }\sigma_0\ge\sigma_\star .
  \end{equation*}
\end{proposition}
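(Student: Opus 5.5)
The first bound comes from the synchronous coupling. Let \(Z\sim\mathcal N(0,\Idd)\) be independent of \(X_0\), and couple \(\sigma_0Z\sim\nu_0\) with \(X_0+\sigma_0Z\sim\mu_{\sigma_0}\). The cost of this coupling is \(\E\norm{X_0}^2=d\,\sigmad^2\), so \(e_0\le\sqrt d\,\sigmad\) for every \(\sigma_0\). For the limit, set \(t=\sigma_0^{-1}\) and use the scaling \(\Wtwo(\nu_0,\mu_{\sigma_0})=\sigma_0\,\Wtwo(\mathcal N(0,\Idd),\lambda_t)\), where \(\lambda_t=\Law(tX_0+Z)\). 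The claim then reads \(\Wtwo(\mathcal N(0,\Idd),\lambda_t)=\tfrac{t^2}{2}\norm{\Sigma_{\mathrm{data}}}_F+o(t^2)\) as \(t\downarrow0\). The final inequality \(\norm{\Sigma_{\mathrm{data}}}_F\le\Tr\Sigma_{\mathrm{data}}=d\,\sigmad^2\) is immediate because \(\Sigma_{\mathrm{data}}\) is positive semidefinite. The last display then follows from the definition of the limit.

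\emph{Lower bound.} The law \(\lambda_t\) is centered with covariance \(\Idd+t^2\Sigma_{\mathrm{data}}\). The Gelbrich bound says the \(\Wtwo\) distance between two laws dominates the distance between the Gaussians sharing their means and covariances. Hence
\[
\Wtwo(\mathcal N(0,\Idd),\lambda_t)\ge\bigl\|(\Idd+t^2\Sigma_{\mathrm{data}})^{1/2}-\Idd\bigr\|_F .
\]
Diagonalizing \(\Sigma_{\mathrm{data}}\) with eigenvalues \(\lambda_i\), the right-hand side equals \(\bigl(\sum_i(\sqrt{1+t^2\lambda_i}-1)^2\bigr)^{1/2}=\tfrac{t^2}{2}\norm{\Sigma_{\mathrm{data}}}_F+O(t^4)\). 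This gives the liminf.

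\emph{Upper bound.} Let \(\gamma_t=\mathcal N(0,\Idd+t^2\Sigma_{\mathrm{data}})\) be the Gaussian with the first two moments of \(\lambda_t\), and use the triangle inequality
\[
\Wtwo(\mathcal N(0,\Idd),\lambda_t)\le\Wtwo(\mathcal N(0,\Idd),\gamma_t)+\Wtwo(\gamma_t,\lambda_t).
\]
The first term equals the same Frobenius expression as in the lower bound, namely \(\tfrac{t^2}{2}\norm{\Sigma_{\mathrm{data}}}_F+O(t^4)\). It therefore remains to show \(\Wtwo(\gamma_t,\lambda_t)=o(t^2)\). The measure \(\gamma_t\) is strongly log-concave with constant \((1+t^2\norm{\Sigma_{\mathrm{data}}}_\op)^{-1}\). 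Talagrand's inequality then gives
\[
\Wtwo^2(\gamma_t,\lambda_t)\le 2\bigl(1+t^2\norm{\Sigma_{\mathrm{data}}}_\op\bigr)\,\mathrm{KL}(\lambda_t\,\|\,\gamma_t),
\]
so it suffices to prove \(\mathrm{KL}(\lambda_t\|\gamma_t)=o(t^4)\). I would first compute the density ratio
\[
\frac{\dd\lambda_t}{\dd\mathcal N(0,\Idd)}(x)=\E\exp\bigl(t\ip{X_0}{x}-\tfrac{t^2}{2}\norm{X_0}^2\bigr).
\]
Dividing by the analogous explicit ratio for \(\gamma_t\) gives \(\dd\lambda_t/\dd\gamma_t=1+r_t\). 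Expanding in \(t\), the terms of order \(t\) and \(t^2\) in \(r_t\) cancel because the means and covariances agree. What remains is \(O(t^3)\) at each fixed \(x\), with polynomial growth in \(x\). Then \(\mathrm{KL}\le\chi^2(\lambda_t\|\gamma_t)=\int r_t^2\,\dd\gamma_t=O(t^6)\).

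\emph{Main obstacle.} The hardest step is making this expansion uniform enough to integrate against \(\gamma_t\). The moment generating function \(\E e^{t\ip{X_0}{x}}\) can grow like \(e^{ct^2\norm{x}^2}\). Here the light-tail hypothesis \(\E e^{\xi\norm{X_0}^2}<\infty\) enters. Combined with \(t\ip{X_0}{x}\le \xi\norm{X_0}^2+t^2\norm{x}^2/(4\xi)\), it gives a Gaussian-dominated envelope for the Taylor remainder of order three in \(t\). For small \(t\), this envelope is square-integrable under \(\gamma_t\), and dominated convergence then justifies the \(O(t^6)\) bound. If the \(\chi^2\) route turns out to be too lossy at the tails, I would fall back on bounding KL directly through \((1+r)\log(1+r)-r\le r^2\) on the region \(\{\abs{r_t}\le1/2\}\). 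The complement would be controlled by the same exponential envelope together with a Chebyshev estimate on \(r_t\). Combining the two bounds gives \(2\sigma_0e_0\to\norm{\Sigma_{\mathrm{data}}}_F\).
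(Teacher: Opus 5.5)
Your proof is correct. The first bound is the same shared-noise coupling the paper uses, but for the limit you take a genuinely different route.

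\textbf{How the paper handles the limit.} It does not prove the asymptotic itself. It invokes Proposition~7 of Beyler--Bach with coefficients $(1,\sigma_0,\sigma_0)$, which rests on the smoothed-Wasserstein asymptotics of Chen--Niles-Weed. The light-tail hypothesis is stated precisely because it is inherited from that result.

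\textbf{How you handle it.} You rescale to $\lambda_t=\Law(tX_0+Z)$ and prove the limit directly by a sandwich. The lower half comes from the Gelbrich bound. The upper half comes from the triangle inequality through the moment-matched Gaussian $\gamma_t=\mathcal N(0,\Idd+t^2\Sigma_{\mathrm{data}})$, followed by Talagrand's inequality, $\mathrm{KL}\le\chi^2$, and a third-order expansion of the likelihood ratio.

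\textbf{What your route gives beyond the citation.}
\begin{enumerate}
  \item The argument is self-contained.
  \item The lower half, $\liminf_{\sigma_0\to\infty}2\sigma_0e_0\ge\norm{\Sigma_{\mathrm{data}}}_F$, holds for every centered $q\in\mathcal P_2(\R^d)$ with no tail assumption.
  \item The upper half is quantitative: $e_0\le\norm{\Sigma_{\mathrm{data}}}_F/(2\sigma_0)+O(\sigma_0^{-2})$. So $\sigma_\star$ could in principle be made explicit in terms of $\xi$ and $\E\exp(\xi\norm{X_0}^2)$.
\end{enumerate}
The price is the verification of the $\chi^2$ estimate.

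\textbf{Two details to write out in that verification.}
\begin{enumerate}
  \item Apply Young's inequality with $\xi/2$ rather than $\xi$: $t\ip{X_0}{x}\le\tfrac{\xi}{2}\norm{X_0}^2+\tfrac{t^2}{2\xi}\norm{x}^2$. Otherwise the polynomial factors in $\norm{X_0}$ produced by the third $t$-derivative are not integrable against the envelope.
  \item The Gaussian ratio you divide by is bounded below by $(1+t^2\norm{\Sigma_{\mathrm{data}}}_\op)^{-d/2}$, because its exponential factor is at least one.
\end{enumerate}
With these, $\lvert r_t(x)\rvert\le Ct^3(1+\norm{x})^4e^{ct^2\norm{x}^2}$ for small $t$. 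The bound $\chi^2(\lambda_t\Vert\gamma_t)=O(t^6)$ then follows by integrating this envelope against $\gamma_t$ directly. You need neither dominated convergence nor the fallback you describe.
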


\begin{proof}
The first bound follows from the shared-noise coupling. Draw a single \(Z\sim\mathcal N(0,\Idd)\)
and couple \(X_{\sigma_0}=X_0+\sigma_0Z\sim\mu_{\sigma_0}\) with \(\sigma_0Z\sim\nu_0\). Then
\begin{equation*}
  \Wtwo(\nu_0,\mu_{\sigma_0})^2 \le \E\norm{(X_0+\sigma_0Z)-\sigma_0Z}^2 = \E\norm{X_0}^2
  =d\,\sigmad^2 ,
\end{equation*}
so \(e_0\leqslant \sqrt d\,\sigmad\). For the sharp \(\sigma_0^{-1}\) asymptotic, we use
\cite[Proposition~7]{beyler_convergence_2025}, which compares a scaled and Gaussian-smoothed random
vector with a centered Gaussian. Apply it to \(X=X_0\) with coefficients \((1,\sigma_0,\sigma_0)\)
in the notation there. Then \(X+\sigma_0Z=X_{\sigma_0}\) has law \(\mu_{\sigma_0}\), and
\(\sigma_0Z'\), with \(Z'\sim\mathcal N(0,\Idd)\), has law \(\nu_0\), so the distance compared there
equals \(\Wtwo(\nu_0,\mu_{\sigma_0})\). The second bound of that proposition requires \(X\) to be
centered, which holds since \(q\) is, and to satisfy the light-tail condition above, a hypothesis
inherited from the smoothed-Wasserstein asymptotics \cite[Theorem~2.1]{chen_asymptotics_2022} on
which its proof rests. With our coefficients it states that
\(\Wtwo(\nu_0,\mu_{\sigma_0})\sim\norm{\Sigma_{\mathrm{data}}}_F/(2\sigma_0)\) as
\(\sigma_0\to\infty\), which is the limit \(2\sigma_0\,e_0\to\norm{\Sigma_{\mathrm{data}}}_F\);
\cite[Corollary~8]{beyler_convergence_2025} states the same limit for data of bounded support. The
inequality \(\norm{\Sigma_{\mathrm{data}}}_F\leqslant \Tr \Sigma_{\mathrm{data}}=d\sigmad^2\) holds
because \(\Sigma_{\mathrm{data}}\) is positive semidefinite, and the final display restates the
limit with an threshold \(\sigma_\star\).
\end{proof}

\subsection{From the Positive-Noise Iterate to the Deployed Sample}
\label{subsec:deployed_output}

\Cref{thm:global_edm_wasserstein} controls \(\widehat\mu_K\), not the deployed sample. The deployed
EDM sampler appends a final step to \(\sigma=0\). In the frozen-\(\sigma\) predictor
\cref{eq:learned_edm_predictor} that step has relaxation \(a=1\), so it is exactly one evaluation of
the learned denoiser, \(x\mapsto\Dhat_{\sigma_K}(x)\), and the emitted law is
\((\Dhat_{\sigma_K})_\#\widehat\mu_K\). Write
\(\operatorname{mmse}_q(\sigma)=\E\norm{X_0-D_\sigma(X_0+\sigma Z)}^2\).

\begin{corollary}[Deployed-output closure]
  Adopt the hypotheses of \Cref{thm:global_edm_wasserstein}, and assume in addition that the
  terminal denoiser \(\Dhat_{\sigma_K}\) is globally \(L_K^D\)-Lipschitz. Then the law of the
  emitted sample satisfies
  \begin{equation*}
    \Wtwo\bigl((\Dhat_{\sigma_K})_\#\widehat\mu_K,q\bigr)
    \le \underbrace{L_K^De_K}_{\text{propagated sampler error}}
    + \underbrace{\sqrt d\,\Cres\sigma_K}_{\text{learning error}}
    + \underbrace{\sqrt{\operatorname{mmse}_q(\sigma_K)}}_{\text{posterior uncertainty}},
  \end{equation*}
  with \(e_K\leqslant e^{\Lambda_K}[\Pi_{0,\Khi}e_0+\mathrm{Disc}_K+\mathrm{Learn}_K]\), the bound
  on \(e_K\) established in the proof of \Cref{thm:global_edm_wasserstein}.
\end{corollary}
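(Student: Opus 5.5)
The plan is a triangle inequality through an intermediate law, followed by bounding each of the three resulting legs by a coupling cost. The intermediate law is \((\Dhat_{\sigma_K})_\#\mu_K\). We first bound \(\Wtwo\bigl((\Dhat_{\sigma_K})_\#\widehat\mu_K,(\Dhat_{\sigma_K})_\#\mu_K\bigr)\). Then we compare \((\Dhat_{\sigma_K})_\#\mu_K\) with \((D_{\sigma_K})_\#\mu_K\), and finally \((D_{\sigma_K})_\#\mu_K\) with \(q\). All these laws lie in \(\mathcal P_2(\R^d)\): \(\widehat\mu_K\in\mathcal P_2\) by \Cref{ass:wellposed}, \(\mu_K\in\mathcal P_2\) since \(q\in\mathcal P_2\), and a globally Lipschitz map, as well as the posterior mean (an \(\mathbb L_2\) contraction of \(X_0\)), preserves finite second moments. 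Hence every distance below is finite.

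\textbf{First leg (propagation).} Take an optimal coupling \((U,V)\) of \((\mu_K,\widehat\mu_K\)) and push it through \(\Dhat_{\sigma_K}\). The global \(L_K^D\)-Lipschitz bound then gives
\(\Wtwo\le\|\Dhat_{\sigma_K}(U)-\Dhat_{\sigma_K}(V)\|_{\mathbb L_2}\le L_K^D e_K\).

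\textbf{Second leg (learning).} Use the diagonal coupling induced by \(X_{\sigma_K}\sim\mu_K\). This bounds the distance by \(\|\Dhat_{\sigma_K}-D_{\sigma_K}\|_{\mathbb L_2(\mu_K)}\), which equals \(\beta_K\Rres_K\) by \eqref{eq:edm_den_error_residual}. Since \(\beta(\sigma)=\sigma\sigmad/\sqrt{\sigma^2+\sigmad^2}\le\sigma\) and \Cref{ass:uniform_residual_accuracy} gives \(\Rres_K\le\sqrt d\,\Cres\), this term is at most \(\sqrt d\,\Cres\,\sigma_K\).

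\textbf{Third leg (posterior uncertainty).} Couple \(D_{\sigma_K}(X_{\sigma_K})\) with \(X_0\) on the common probability space, where \(X_{\sigma_K}=X_0+\sigma_K Z\). The cost of this coupling is exactly \(\sqrt{\operatorname{mmse}_q(\sigma_K)}\).

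Summing the three legs gives the displayed bound. The bound on \(e_K\) is read off from the proof of \Cref{thm:global_edm_wasserstein}: combining \Cref{prop:high_noise_grid} with \eqref{eq:terminal_transport_grid_bound} gives exactly \(e_K\le e^{\Lambda_K}[\Pi_{0,\Khi}e_0+\mathrm{Disc}_K+\mathrm{Learn}_K]\), before the terminal smoothing bias is added.

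There is no real obstacle here. The only point needing care is the order of the decomposition. The learning error must be measured on the exact law \(\mu_K\), where \eqref{eq:edm_den_error_residual} applies, rather than on \(\widehat\mu_K\). This is the same choice as the ``last'' ordering in \Cref{app:alt_decomposition}, and it is why the Lipschitz transport is done with the learned map \(\Dhat_{\sigma_K}\).
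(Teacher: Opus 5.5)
Your proposal is correct and is essentially the paper's proof. It uses the same two intermediate laws \((\Dhat_{\sigma_K})_\#\mu_K\) and \((D_{\sigma_K})_\#\mu_K\) and the same three couplings: an optimal coupling of \((\mu_K,\widehat\mu_K)\) pushed through the \(L_K^D\)-Lipschitz terminal denoiser, the diagonal coupling at \(X_{\sigma_K}\) with \(\beta_K\Rres_K\le\sqrt d\,\Cres\sigma_K\), and the joint coupling \((D_{\sigma_K}(X_{\sigma_K}),X_0)\). Your extra remarks, that all laws lie in \(\mathcal P_2(\R^d)\) and that the \(e_K\) bound follows from \Cref{prop:high_noise_grid} combined with \eqref{eq:terminal_transport_grid_bound}, are consistent with the paper.
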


\begin{proof}
Insert \((\Dhat_{\sigma_K})_\#\mu_K\) and \((D_{\sigma_K})_\#\mu_K\) between the emitted law and
\(q\). Pushing an optimal coupling of \((\widehat\mu_K,\mu_K)\) through \(\Dhat_{\sigma_K}\) gives
the first term by the Lipschitz hypothesis. Coupling the learned and exact denoisers at the same
\(Y_K=X_0+\sigma_KZ\sim\mu_K\) gives, by \cref{eq:edm_den_error_residual}, the cost
\(\beta_K\Rres_K\leqslant \sqrt d\,\Cres\sigma_K\), since \(\sigma_K=\underline\sigma\) lies in the
range of \Cref{ass:uniform_residual_accuracy} and \(\beta_K\leqslant \sigma_K\). Finally
\((D_{\sigma_K}(Y_K),X_0)\) couples \((D_{\sigma_K})_\#\mu_K\) with \(q\) at cost
\(\operatorname{mmse}_q(\sigma_K)\).
\end{proof}

The Lipschitz hypothesis is a global condition on the terminal denoiser, of the same status as the
high-noise EDM criterion in \Cref{prop:high_noise_stability_certificate}. The last two terms can
also be read together. Write \(\widehat m(\sigma)=\sigma^{-2}\E\norm{\Dhat_\sigma(X_0+\sigma
Z)-X_0}^2\), the denoising error in units of the per-coordinate noise variance: it equals \(d\) for
the identity map, which removes no noise, and is small when the denoiser removes most of it. The
shared-noise coupling \((\Dhat_{\sigma_K}(X_0+\sigma_KZ),X_0)\) gives the direct diagnostic
\begin{equation*}
  \Wtwo\bigl((\Dhat_{\sigma_K})_\#\widehat\mu_K,q\bigr)
  \le L_K^De_K+\sigma_K\sqrt{\widehat m(\sigma_K)},
  \qquad \sigma_K^2\,\widehat m(\sigma_K)=\beta_K^2\Rres_K^2+\operatorname{mmse}_q(\sigma_K),
\end{equation*}
where the identity is the orthogonality of the posterior-mean error. Since
\(\operatorname{mmse}_q(\sigma_K)\leqslant \sigma_K^2d\), the closure matches the terminal bias
\(\sigma_K\sqrt d\) of \cref{eq:transport_global_bound} in the worst case, and improves on it only
when the final denoiser removes most of the noise, \(\widehat m(\sigma_K)\ll d\); its interest is to
separate the propagated, learning and posterior errors acting on the emitted law.

\section{Numerical Experiments}
\label{app:numerics}
\label{app:numerical_registry}

This appendix supports \Cref{sec:numerical_evidence}. \Cref{tab:evidence_status} lists what each
experiment measures. \Cref{subsec:numerical_definitions} defines the estimands and their numerical
approximation, and \Cref{subsec:numerical_full_figures} gives the full figures and tables summarized
in the main text. \Cref{subsec:numerical_supp_toy,%
subsec:analytic_controls,subsec:numerical_supp_cifar} report results that complement the three
findings of the main text. \Cref{subsec:numerical_methods} collects the checks establishing that the
reported numbers are numerically resolved, including the two that fail.
\Cref{subsec:numerical_reproducibility} describes the code and data.

\begin{table}[H]
  \centering
  \small
  \caption{Quantities of the analysis and what the two experiments measure of them.}
  \label{tab:evidence_status}
  \renewcommand{\arraystretch}{1.15}
  \begin{tabular}{@{}>{\raggedright}p{5.4cm}>{\raggedright}p{4.2cm}>{\raggedright\arraybackslash}p{6.2cm}@{}}
    \toprule
    quantity of the analysis & one-dimensional mixture & CIFAR--10 \\
    \midrule
    final error \(e_K\), factors \(\gamma_{j,K}\), \(\Lambda_K\) (\Cref{def:low_noise_budget})
      & measured by quadrature & not accessible \\
    one-step recursion \eqref{eq:law_level_one_step}
      & every term measured & not accessible \\
    directional majorant \(\Lambda_K^{\mathrm{dir}}\) \cref{eq:law_level_summed_condition}
      & measured & displacement rate \(\lambda^\parallel\) on coarse--fine segments
        \cref{eq:segment_rate_main} \\
    field-level majorant \eqref{eq:field_level_summed_condition}
      & finite-domain proxy & worst-direction rate \(\lambda^{\max}\) \cref{eq:segment_rate_main};
        empirical EDM margin \eqref{eq:probe_margin_main} \\
    stability hypotheses
      & tested quantitatively & not certified \\
    \bottomrule
  \end{tabular}
\end{table}

\subsection{Experimental Definitions and Implementation}
\label{subsec:numerical_definitions}

\subsubsection{Shared Schedule and Implementation}

Unless varied in the analytic controls, experiments use the nested EDM schedule
\begin{equation*}
  \sigma_j=\left(\overline\sigma^{1/7}
  +\frac{j}{K}\left(\underline\sigma^{1/7}-\overline\sigma^{1/7}\right)\right)^7,
  \qquad j=0,\ldots,K, \qquad (\overline\sigma,\underline\sigma)=(80,0.002).
\end{equation*}
Consequently \(G_K\subset G_{2K}\), so the coarse and fine trajectories in the synchronous
experiment meet at exactly the same noise levels. Grid construction and the one-dimensional audit
use float64. The CIFAR--10 network, sampler, and Jacobian actions use float32, matching inference;
subtraction of recorded states and outputs, scalar products, and moment reductions use float64 to
limit cancellation and summation error, which does not recover information already lost during
float32 inference. The CIFAR experiments use the public class-conditional CIFAR--10 EMA checkpoint
released with \textcite{karras_elucidating_2022}. Random seeds and noisy probes are held fixed
across noise levels and refinements.

\subsubsection{One-Dimensional Audit}
\label{exp:toy_law}

\paragraph{Target law and denoisers.}
The target is the four-component Gaussian mixture with weights \((0.15,0.35,0.35,0.15)\), centres
\(m_r\) proportional to \((-3,-1,1,3)\), and common component standard deviation \(\varsigma=0.04\).
The centres are rescaled so that the mixture is centered with \(\sigmad=0.5\), so the EDM
coefficients \cref{eq:edm_coefficients} use the exact \(\sigmad\). The main comparison uses the
exact posterior-mean denoiser and three learned denoisers with seeds 0, 1, and 2. Each learned
denoiser is an EDM-preconditioned three-hidden-layer MLP of width 128 with SiLU activations. It is
trained for 20,000 Adam steps with batches of 4096, initial learning rate \(10^{-3}\), cosine decay,
gradient clipping at one, and an exponential moving average of rate \(0.999\). Training noise
follows the EDM log-normal law with \((P_{\rm mean},P_{\rm std})=(-1.2,1.2)\). Two analytic arms
complete the set and require no training. The misspecified arm is the posterior mean of a
deliberately wrong mixture, with weights \((0.2,0.3,0.3,0.2)\), centres \(1.02\,m_r\) and component
standard deviation \(0.7\varsigma\); it provides a discrepancy dominated by learning error rather
than by discretization. A perturbed oracle adds a smooth, bounded perturbation of controlled
amplitude to the exact normalized residual,
\begin{equation}\label{eq:toy_perturbed_oracle}
  \widehat D^{\varepsilon}_\sigma(x)
  =D_\sigma(x)+\varepsilon\,\beta(\sigma)\tanh\bigl(\cin(\sigma)x\bigr),
  \qquad \varepsilon\in\{0.003,\,0.01,\,0.03,\,0.1\},
\end{equation}
so that \(\Rres_j\leqslant \varepsilon\) at every level and the perturbation of the learned velocity
has slope at most \(\varepsilon/\sigmad\). Unlike the oracle discretization error, the learning
contribution of \cref{eq:toy_perturbed_oracle} does not decrease under refinement;
\Cref{subsec:numerical_supp_toy} reports the effect of the four amplitudes on \(\Lambda_K\).

\paragraph{Quantile representation.}
For \(K\in\{17,34,68,136,272,544,1088\}\), each target and sampler law is represented at
\(M=65{,}536\) midpoint quantiles. Target quantiles are obtained from the mixture CDF; Gaussian or
exact-target initialization quantiles are propagated through the predictor and sorted after every
step. In one dimension this is the monotone optimal coupling.

\paragraph{Directional quantities.}
At step \(j\), let \((U,V)\) denote the monotone coupling of the target and sampler laws, set
\(e_{j,K}^2=\E|U-V|^2\), and write \(\mathsf v=\vhat_{\sigma_j}(U)-\vhat_{\sigma_j}(V)\) for the
velocity increment, as in \cref{eq:directional_step_constants}. The stepwise quantities are
\begin{equation*}
  \Erate[\OT]{j,K}=-\frac{\E[(U-V)\mathsf v]}{e_{j,K}^2},
  \qquad \Qsec[\OT]{j,K}=\frac{\E|\mathsf v|^2}{e_{j,K}^2},
  \qquad \gamma_{j,K}=\frac{W_2((\widehat\Phi_j)_\#\mu_j,
  (\widehat\Phi_j)_\#\widehat\mu_j)}{e_{j,K}}.
\end{equation*}
Up to quantile quadrature, \cref{eq:law_level_stepwise_log} gives
\begin{equation}\label{eq:toy_directional_chain}
  \log\max\{1,\gamma_{j,K}\} \leqslant \ell_j(\Erate[\OT]{j,K})_+
  +\frac{\ell_j^2}{2}\Qsec[\OT]{j,K}.
\end{equation}
We also record the root-mean-square amplification of the fixed input coupling,
\begin{equation}\label{eq:toy_rms_amplification}
  \Amp[\OT]{j,K}=\frac{\|\Phihat_j(U)-\Phihat_j(V)\|_{\mathbb L_2}}{e_{j,K}}\ \ge\ \gamma_{j,K}.
\end{equation}
By the predictor identity \eqref{eq:predictor_secant_identity},
\((\Amp[\OT]{j,K})^2=1+2\ell_j\Erate[\OT]{j,K}+\ell_j^2\Qsec[\OT]{j,K}\), so the slack in
\cref{eq:toy_directional_chain} splits into the effect of reoptimizing the output coupling, from
\(\gamma_{j,K}\) to \(\Amp[\OT]{j,K}\), and the algebraic relaxation that follows.
\(\Amp[\OT]{j,K}\) equals \(\gamma_{j,K}\) for the oracle and misspecified posterior mean because
their predictor maps are increasing, and in the recorded runs it coincides with \(\gamma_{j,K}\) on
every arm and step.

\paragraph{Finite-domain quantities.}
For the mixture centres \(m_r\) and component standard deviation \(\varsigma\), the spatial search
uses
\begin{equation*}
  I_\sigma(R)=\left[\min_r m_r-R\sqrt{\varsigma^2+\sigma^2},\,
  \max_r m_r+R\sqrt{\varsigma^2+\sigma^2}\right].
\end{equation*}
This interval depends on the target law and \(\sigma\), but not on the sampler law or on \(K\). On
\(N_x\) equispaced points of this interval, augmented by bounded scalar refinement around the four
largest grid candidates, we compute
\begin{equation*}
  \Los[\pts]{j,K}=\max_{x\in\mathcal X_{j,K}^{\pts}} [-\vhat'_{\sigma_j}(x)],
  \qquad L_{v,j,K}^{\pts}=\max_{x\in\mathcal X_{j,K}^{\pts}} |\vhat'_{\sigma_j}(x)|,
\end{equation*}
where \(\mathcal X_{j,K}^{\pts}\) is the resulting finite set at \(\sigma=\sigma_j\). The same
search gives the finite-domain EDM margin
\(b^{\pts}(\sigma)=1-\alpha(\sigma)-\max_x|\Dhat'_\sigma(x)-\alpha(\sigma)|\): since
\(\Dhat'_\sigma(x)-\alpha(\sigma)=\beta(\sigma)\cin(\sigma)\Fhat'_\sigma(\cin(\sigma)x)\) by
\cref{eq:edm_learned_denoiser_step}, it is the margin \eqref{eq:def_b_k_edm} with the Lipschitz
constant of the normalized branch replaced by the maximum of its slope over the finite point set.
The main search uses \((R,N_x)=(12,4097)\).

\paragraph{Noise threshold.}
For each denoiser, a candidate threshold is computed before any accumulated quantity: on 400
geometrically spaced noise levels, it is the bottom of the top contiguous block, starting at
\(\overline\sigma\), on which \(b^{\pts}(\sigma)\ge\bhi=0.5\). There are four distinct candidates:
\(0.876\) for the six analytic denoisers, and \(1.53\), \(1.57\) and \(1.84\) for the three trained
networks. The largest, \(\sighi=1.84\), is used as a common threshold for every denoiser, refinement
and initialization, and determines the split index \(\Khi\) of each grid through
\cref{eq:def_split_index}. This is an empirical split, not a global certificate. The largest
candidate gives the largest low-noise block, hence the largest sums of nonnegative stepwise
contributions among the four candidate splits; every accumulated quantity is also re-summed at each
candidate (\Cref{tab:numerical_reliability}).

\paragraph{Accumulated quantities.}
Summing over the low-noise block defines the exact cumulative log-amplification \(\Lambda_K\) of
\Cref{def:low_noise_budget}, its directional bound
\begin{equation*}
  \Lambda_K^{\mathrm{dir}} \coloneqq \sum_{j=\Khi}^{K-1} \left[\ell_j(\Erate[\OT]{j,K})_+
  +\frac{\ell_j^2}{2}\Qsec[\OT]{j,K}\right],
\end{equation*}
and the finite-domain field proxy
\begin{equation}\label{eq:toy_field_proxy}
  \Lambda_K^{\mathrm{field},\pts} \coloneqq \sum_{j=\Khi}^{K-1} \left[\ell_j(\Los[\pts]{j,K})_+
  +\frac{\ell_j^2}{2}(L_{v,j,K}^{\pts})^2\right].
\end{equation}
Because both the domain and the evaluations are finite, \(\Lambda_K^{\mathrm{field},\pts}\) is a
proxy for the field-level bound, not a certified upper bound on \(\mathbb R\); in particular, the
empirical ordering with \(\Lambda_K^{\mathrm{dir}}\) is an outcome to check rather than an identity
imposed by notation. The geometry panel of \Cref{fig:numerical_hierarchy} uses \(G_{272}\), a
representative level near \(\sigma=0.2\), and 1025 spatial points. The error weight \(w_j\) is
accumulated segment by segment over the \(M\) quantile pairs, and its Riemann integral against
\(\lambda_j\) reproduces the logged \(\Erate[\OT]{j}\) at that level to within \(10^{-3}\).

\paragraph{End-to-end recursion.}
The same quantiles make every term of \cref{eq:law_level_one_step} observable. Let
\(u_m=(m-1/2)/M\), let \(T_{j,m}\) be the \(u_m\)-quantile of \(\mu_j\), and let \(T_{j+1,m}\) be
the corresponding quantile at the next noise level. In one dimension the exact flow preserves
quantile rank, so \(\Psi_j(T_{j,m})=T_{j+1,m}\). In addition to \(e_{j,K}\) and \(\gamma_{j,K}\), we
compute the measured local bias, the quantile quadrature of \(\delta_{j,K}\) in
\cref{eq:def_local_defect_learned_map}, and the measured learning term
\begin{align*}
  \widetilde\delta_{j,K}^2 &=\frac1M\sum_{m=1}^M \left|T_{j+1,m}-\Phi_j(T_{j,m})\right|^2, \\
  \widetilde r_{j,K}^2 &=\frac{a_j^2}{M}\sum_{m=1}^M
  \left|\widehat D_{\sigma_j}(T_{j,m})-D_{\sigma_j}(T_{j,m})\right|^2.
\end{align*}
Thus, up to the common quantile quadrature,
\begin{equation}\label{eq:toy_measured_recursion}
  e_{j+1,K} \leqslant \gamma_{j,K}e_{j,K}+\widetilde\delta_{j,K}+\widetilde r_{j,K}.
\end{equation}
We record the one-step slack and unroll the measured raw factors, without truncating contraction,
into
\begin{align}
  C_K^{\rm init} &=e_{0,K}\prod_{k=0}^{K-1}\gamma_{k,K}, &C_K^{\rm disc}
  &=\sum_{j=0}^{K-1}\widetilde\delta_{j,K} \prod_{k=j+1}^{K-1}\gamma_{k,K}, \notag \\
  C_K^{\rm learn} &=\sum_{j=0}^{K-1}\widetilde r_{j,K} \prod_{k=j+1}^{K-1}\gamma_{k,K}, &e_{K,K}
  &\leqslant C_K^{\rm init}+C_K^{\rm disc}+C_K^{\rm learn}. \label{eq:toy_propagated_decomposition}
\end{align}
In the exact-initialization control, \(C_K^{\rm disc}\) is the recursion bound
\eqref{eq:recursion_bound_main} of the main text. These are contributions to an a posteriori upper
bound, not a decomposition of the observed error or the closed-form bound of
\Cref{thm:global_edm_wasserstein}. Stability is not treated as a fourth additive error source: its
effect is precisely the family of downstream products of \(\gamma_{k,K}\) weighting the three terms.
The ratio of their sum to \(e_{K,K}\) is reported only when \(e_{K,K}\) is numerically resolved
(\Cref{tab:numerical_reliability}); no floor is applied to produce a finite ratio otherwise.

The canonical run starts the sampler from the canonical EDM initialization \(\mathcal
N(0,\overline\sigma^2)\) of \Cref{prop:initialization_error}, which enters
\Cref{thm:global_edm_wasserstein} through \(e_0\). A separate oracle control starts instead from the
exact law \(\mu_{\overline\sigma}\). In that control \(C_K^{\rm init}=C_K^{\rm learn}=0\), so both
the observed final error and \(C_K^{\rm disc}\) isolate numerical discretization. This control
records its own \(\gamma_{j,K}\) and \(\Lambda_K\); the amplification of the Gaussian-initialized
run is not substituted in the error--amplification comparison. We also record
\(\sum_j\widetilde\delta_{j,K}\) and the propagated universal bias bound
\(\sum_j\Delta_j\prod_{k=j+1}^{K-1}\gamma_{k,K}\), using the same raw propagation factors. For each
dyadic pair \((K/2,K)\) we report the local orders \cref{eq:empirical_order_main} of the
exact-initialization final error, of \(C_K^{\rm disc}\) and of \(\sum_j\widetilde\delta_{j,K}\), and
compare them with the amplification-based order \(1-(\Lambda_K-\Lambda_{K/2})/\log2\), the order
that \(K^{-1}e^{\Lambda_K}\) would have on the same pair (\Cref{fig:numerical_toy}c). It is computed
from the measured \(\Lambda_K\), so the comparison is an a posteriori consistency check, not a
prediction; equivalently, it tests whether \(c_K=Ke_Ke^{-\Lambda_K}\) is stable on the tested range
(\Cref{tab:toy_refinement}). We do not assign a convergence order to the total learned-sampler error
once initialization or learning sets a visible floor.

\subsubsection{High-Noise CIFAR--10 Diagnostics}

Neither the optimal coupling between the exact and sampler laws nor the factors \(\gamma_{j,K}\) are
computed on CIFAR--10. The coefficients \(\alpha,\beta,\cin\) of \cref{eq:edm_coefficients} are
those of the checkpoint, with \(\sigmad\) replaced by the hyperparameter \(\sigma_{\rm
data}=\auditSigmaData\): CIFAR--10 is neither centered nor of root-mean-square exactly
\(\auditSigmaData\) per coordinate, so this value is not \(\sigmad\), and the margin and residual
below are those of the deployed parametrization. The high-noise calculation probes a local
preconditioning mechanism on held-out noisy data. The probe bank consists of \(N=128\) held-out
CIFAR--10 test images with independent Gaussian noises, reused at every level. For
\(Y_{\sigma,i}=X_i+\sigma Z_i\), the supervised residual per coordinate is
\begin{equation*}
  r_{\rm reg}(\sigma)
  =\frac{1}{\beta(\sigma)}\left(\frac{1}{dN}\sum_{i=1}^N
  \|\widehat D_\sigma(Y_{\sigma,i})-X_i\|^2\right)^{1/2}.
\end{equation*}
It is evaluated at all 273 levels of \(G_{272}\). The population risk includes the excess risk and
irreducible posterior uncertainty (\cref{eq:edm_regr_pyth}); it is an upper proxy for the learning
residual, not a measurement of that residual alone.

At the same probes, eight power iterations from each of two random starts estimate the local stretch
\(\SFp{i}(\sigma)=\norm{\nabla\Fhat_\sigma(\cin(\sigma)Y_{\sigma,i})}_\op\) of the raw EDM network
branch in normalized input; the larger of the two estimates is denoted \(\SFpest{i}(\sigma)\). With
the deployed preconditioning coefficients and the empirical \(p\)-quantile
\(\SFest[p](\sigma)=\Quant{p}{\SFpest{i}(\sigma)}{i}\) over the probes, the empirical margin
\eqref{eq:probe_margin_main} is
\begin{equation}\label{eq:branch_stability_margin}
  \bprobe[p](\sigma)=1-\alpha(\sigma) -\beta(\sigma)\cin(\sigma)\SFest[p](\sigma).
\end{equation}
It is evaluated for \(p=0.5\) and \(p=0.9\) at the 90 levels of \(G_{136}\) satisfying
\(\sigma\ge0.5\). Since the map from stretch to margin is decreasing and affine, \(\bprobe[0.9]\) is
the 10th percentile of the per-probe margins \(1-\alpha-\beta \cin\SFpest{i}\). Compared with the
EDM margin \eqref{eq:def_b_k_edm}, it replaces the Lipschitz constant \(\LF{j}\), a supremum over
all inputs, by a percentile over 128 probes, and each probe stretch by a power-iteration estimate,
which cannot exceed it. Both substitutions can only raise the margin, so the empirical margin is an
optimistic estimate of \(b_j\), not a certificate; its comparison with \(6/7\) is discussed in
\Cref{subsec:evidence_damping}.

\subsubsection{Low-Noise Synchronous-Segment Diagnostic}
\label{exp:transport}

For each \(K\in\{17,34,68,136,272\}\), each of the ten classes, and each of 128 fixed latent
variables per class, a predictor trajectory on \(G_K\) is paired with the predictor trajectory on
\(G_{2K}\) generated from the same latent; the ensemble thus has 1280 trajectories per grid. At a
level \(\sigma_j\) of \(G_K\subset G_{2K}\) let
\begin{equation*}
  x_i=\widehat X_{j,i}^{(K)},\qquad y_i=\widehat X_{2j,i}^{(2K)},\qquad \zeta_i=y_i-x_i,\qquad
  \mathsf v_i=\vhat_{\sigma_j}(y_i)-\vhat_{\sigma_j}(x_i).
\end{equation*}
Pooling raw second moments over the ensemble gives
\begin{equation*}
  \Erate[\sync]{j,K}=-\frac{\sum_i\langle\zeta_i,\mathsf v_i\rangle} {\sum_i\|\zeta_i\|^2},
  \qquad \Qsec[\sync]{j,K}=\frac{\sum_i\|\mathsf v_i\|^2} {\sum_i\|\zeta_i\|^2},
\end{equation*}
and the predictor identity \eqref{eq:predictor_secant_identity} gives, for the root-mean-square
amplification
\(\Amp[\sync]{j,K}=(\sum_i\|\Phihat_j(y_i)-\Phihat_j(x_i)\|^2/\sum_i\|\zeta_i\|^2)^{1/2}\) of the
predictor along this coupling,
\begin{equation}\label{eq:transport_sync_secant_identity}
  (\Amp[\sync]{j,K})^2 =1+2\ell_j\Erate[\sync]{j,K} +\ell_j^2\Qsec[\sync]{j,K}.
\end{equation}
Accumulated synchronous quantities are summed over the levels \(\sigma_j\le2\) of \(G_K\), the
window also used in the calibration of \Cref{subsec:analytic_controls}.

\paragraph{Segment chain.}
The rates below are the rate \eqref{eq:segment_rate_main} of the main text, written per trajectory
and per level. The segment calculation uses only \(K=68,272\). On each grid the starting levels are
those nearest in log noise to \(0.004,0.01,0.03,0.1,0.3,1,3,8\), with duplicate indices removed and
actual levels reported. The last two levels examine the transition region; this set is not the
theoretical low-noise block. The first eight fixed latent variables in each class are used, hence 80
segments per selected level; \(\Erate[\sync]{j,K}\) is recomputed on exactly this subset for the
chain below. For a nonzero segment define \(u_i=\zeta_i/\|\zeta_i\|\) and
\begin{align*}
  \lambda_{i,j}^{\parallel}(t) &=-u_i^\top\operatorname{Sym}\nabla\widehat v_{\sigma_j}
  (x_i+t\zeta_i)u_i, \\
  \lambda_{i,j}^{\max}(t) &=\lambda_{\max}\!\left(-\operatorname{Sym}\nabla\widehat v_{\sigma_j}
  (x_i+t\zeta_i)\right).
\end{align*}
With energy weights \(w_i=\|\zeta_i\|^2/\sum_k\|\zeta_k\|^2\), the fundamental theorem of calculus
along each segment gives the equality below; the first inequality bounds an average by a supremum,
and the second a Rayleigh quotient by the largest eigenvalue. For the continuous quantities the
chain
\begin{equation}\label{eq:segment_aggregate_chain}
  \Erate[\sync]{j,K} =\sum_iw_i\int_0^1\lambda_{i,j}^{\parallel}(t)\,\dd t
  \leqslant \sum_iw_i\sup_{t\in[0,1]}\lambda_{i,j}^{\parallel}(t)
  \leqslant \sum_iw_i\sup_{t\in[0,1]}\lambda_{i,j}^{\max}(t),
\end{equation}
is therefore exact; the synchronous rate and the two segment quantities use the same states and the
same energy weights.

In dimension \(d=3072\) a Rayleigh quotient far below the largest eigenvalue is the default behavior
of any generic direction, so the gap between \(\lambda^{\parallel}\) and \(\lambda^{\max}\) is
uninformative on its own. For each segment a unit direction \(u_i^{\rm rand}\) is therefore drawn
once, uniformly on the sphere with a fixed seed, and
\begin{equation*}
  \lambda_{i,j}^{\rm rand}(t)
  =-(u_i^{\rm rand})^\top\operatorname{Sym}\nabla\widehat v_{\sigma_j}
  (x_i+t\zeta_i)\,u_i^{\rm rand}
\end{equation*}
is evaluated at the same nodes, with the same energy weights. A synchronous direction
indistinguishable from a random one carries no directional information, while one that sits between
the baseline and the maximum is partially aligned with the expansive subspace.

\paragraph{Discrete segment estimators.}
The continuous suprema in \cref{eq:segment_aggregate_chain} are approximated on the nine nodes
\(T_9=\{0,1/8,\ldots,1\}\); the nested sets \(T_5=\{0,1/4,1/2,3/4,1\}\) and \(T_3=\{0,1/2,1\}\)
diagnose resolution in \(t\), and the trapezoidal rule on \(T_9\) approximates the integral.
Jacobian--vector products evaluate \(\lambda_{i,j}^{\parallel}\) and \(\lambda_{i,j}^{\rm rand}\);
32 Lanczos steps estimate \(\lambda_{i,j}^{\max}\) by the largest Ritz value. Two convergence
diagnostics accompany it: the half-iteration drift, the change in the largest Ritz value between 16
and 32 steps, and the Ritz residual, the norm of the residual of the corresponding Ritz pair. Only
the equality in \cref{eq:segment_aggregate_chain} is checked against an independent computation, the
trapezoidal integral against \(\Erate[\sync]{j,K}\). The two inequalities are exact for the
continuous quantities, but their computed versions are not guaranteed: both the finite-node maximum
and an unconverged largest Ritz value can underestimate the continuous supremum, so the computed
inequalities are empirical consistency checks, not certified versions of
\cref{eq:segment_aggregate_chain}. The segment calculation requires at most
\(2\times8\times80\times9\times32=368{,}640\) Lanczos operator applications, about one sixteenth of
evaluating every low-noise level on all five grids, plus two Jacobian--vector products per node for
the directional and random rates.

\paragraph{Sampling and influence statistics.}
The primary estimator is untrimmed, and a dominant trajectory is retained: its removal is a
sensitivity check only. We report the effective sample size
\begin{equation}\label{eq:segment_energy_weights}
  \operatorname{ESS}=\frac{1}{\sum_iw_i^2}
\end{equation}
together with \(\max_iw_i\) and the largest leave-one-out change. Pointwise sensitivity intervals
use the 2.5th and 97.5th percentiles of 1000 bootstrap resamples within classes, retaining eight
segments per class, recomputing energy weights, and using the same resample for all terms of a
comparison. These intervals describe sampling sensitivity, not Jacobian approximation error. The
fixed subset may miss influential trajectories in the full ensemble, so its geometry is not
presented as representative of all rare events.

Only the full ensemble is accumulated over the whole window. Its linear term
\(\sum_j\ell_j(\Erate[\sync]{j,K})_+\) is plotted in \Cref{fig:numerical_cifar}d; its quadratic term
\(\tfrac12\sum_j\ell_j^2\Qsec[\sync]{j,K}\) is not plotted and stays below \(0.09\) on every grid
(\(0.087\) at \(K=17\), \(0.038\) at \(K=272\)). Each accumulated rate is recomputed after removing
each of the 1280 trajectories in turn, always removing the same latent trajectory at every noise
level and renormalizing within the affected class. The minimum and maximum measure single-trajectory
influence, not statistical confidence; a small range does not rule out the joint influence of
several rare events. No cumulative field curve is inferred from the eight sparse segment levels. As
a descriptive complement we also accumulate, over the same window, the median rate
\begin{equation*}
  \sum_{j:\,\sigma_j\le2}\ell_j
  \Bigl(\operatorname{median}_i\Bigl(-\frac{\langle\zeta_i,\mathsf v_i\rangle}{\|\zeta_i\|^2}\Bigr)\Bigr)_+,
\end{equation*}
where at each level the median runs over the trajectories with \(\zeta_i\neq0\). The median is taken
level by level, so it does not follow a single trajectory, and no trajectory is selected by outcome.
This statistic is not the energy-weighted rate of \cref{eq:segment_aggregate_chain} and never
replaces it in an inequality.

Zero displacements carry zero energy and have no defined individual rate. Non-finite states,
outputs, or positive-energy rates invalidate the corresponding aggregate rather than being dropped
or replaced by zero, and displacement size relative to states and outputs is recorded to flag
subtraction near float32 resolution. A reproducible large discrepancy is part of the measured
phenomenon; any unresolved discrepancy is reported as missing or qualified evidence, never as a
favorable stability estimate.

\subsection{Full Figures and Tables}
\label{subsec:numerical_full_figures}

\begin{table}[H]
  \centering
  \small
  \caption{Oracle denoiser with exact initialization, under refinement. Orders are the local
  orders \cref{eq:empirical_order_main} on the pair \((K/2,K)\); the amplification-based order is
  \(1-(\Lambda_K-\Lambda_{K/2})/\log2\), and \(c_K=Ke_Ke^{-\Lambda_K}\). Bound\,/\,error divides the
  recursion bound \eqref{eq:recursion_bound_main} by \(e_K\), with the measured local biases
  \(\widetilde\delta_{j,K}\) or with their universal bounds \(\Delta_j\) (\(B_{1,1}=16\)),
  propagated by the same measured factors.}
  \label{tab:toy_refinement}
  \begin{tabular}{@{}r rr rr rrr rr@{}}
\toprule
 & \multicolumn{2}{c}{final error} & \multicolumn{2}{c}{summed biases} & \multicolumn{3}{c}{amplification} & \multicolumn{2}{c}{bound\,/\,error} \\
\cmidrule(lr){2-3}\cmidrule(lr){4-5}\cmidrule(lr){6-8}\cmidrule(l){9-10}
\(K\) & \(e_K\) & order & \(\sum_j\widetilde\delta_{j,K}\) & order & \(\Lambda_K\) & ampl.\ order & \(c_K\) & measured & universal \\
\midrule
17 & 0.113 & \multicolumn{1}{c}{--} & 0.166 & \multicolumn{1}{c}{--} & 0.325 & \multicolumn{1}{c}{--} & 1.388 & 1.35 & 134 \\
34 & 0.0747 & 0.597 & 0.0934 & 0.826 & 0.548 & 0.679 & 1.469 & 1.40 & 127 \\
68 & 0.0497 & 0.590 & 0.0496 & 0.913 & 0.795 & 0.643 & 1.524 & 1.40 & 122 \\
136 & 0.0332 & 0.583 & 0.0255 & 0.959 & 1.065 & 0.612 & 1.555 & 1.39 & 119 \\
272 & 0.0222 & 0.581 & 0.0129 & 0.981 & 1.344 & 0.596 & 1.572 & 1.38 & 118 \\
544 & 0.0148 & 0.581 & 0.00650 & 0.991 & 1.630 & 0.589 & 1.580 & 1.37 & 117 \\
1088 & 0.00988 & 0.585 & 0.00326 & 0.996 & 1.915 & 0.588 & 1.583 & 1.36 & 116 \\
\bottomrule
\end{tabular}

\end{table}

\begin{figure}[H]
  \centering
  \includegraphics[width=\textwidth]{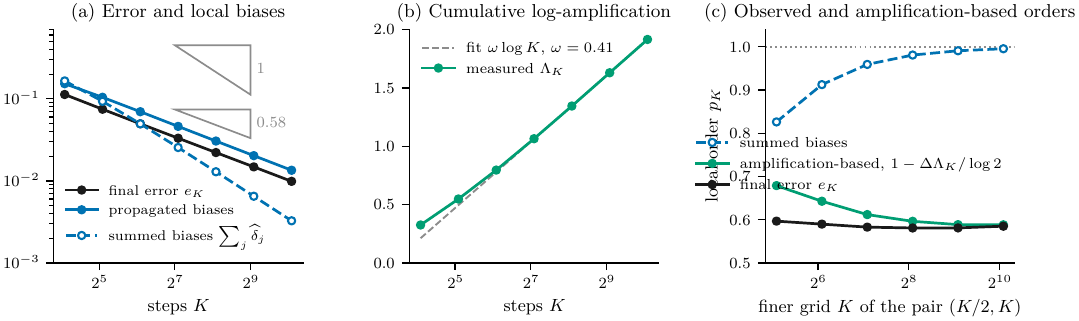}
  \caption[Amplification and the observed rate on the one-dimensional mixture]{
  Oracle denoiser on the four-component mixture, exact initialization, \(M=65{,}536\) quantiles.
  \textbf{(a)} Final error \(e_K\), the recursion bound \eqref{eq:recursion_bound_main} built from
  the measured local biases and factors, and the unweighted sum of the measured biases; triangles
  mark slopes \(1\) and \(\resErrorOrderMin\). \textbf{(b)} Measured \(\Lambda_K\), with a
  logarithmic fit on \(K\ge\resOmegaFitK\). \textbf{(c)} Local orders \cref{eq:empirical_order_main}
  on each pair \((K/2,K)\), against the amplification-based order
  \(1-(\Lambda_K-\Lambda_{K/2})/\log2\) computed from (b). The summed biases converge at first
  order; the final error does not, and the growth of \(\Lambda_K\) accounts for the gap.}
  \label{fig:numerical_toy}
\end{figure}

\begin{table}[H]
  \centering
  \small
  \caption{Measured \(\Lambda_K\) and its two majorants, as ratios to \(\Lambda_K\), on the
  coarsest and finest grids. Directional: \(\Lambda_K^{\mathrm{dir}}\)
  \cref{eq:law_level_summed_condition}; field: the finite-domain proxy for
  \cref{eq:field_level_summed_condition}. Canonical Gaussian initialization, common noise
  threshold.}
  \label{tab:toy_hierarchy}
  \begin{tabular}{@{}l rrr rrr@{}}
\toprule
 & \multicolumn{3}{c}{\(K=17\)} & \multicolumn{3}{c}{\(K=1088\)} \\
\cmidrule(lr){2-4}\cmidrule(l){5-7}
denoiser & \(\Lambda_K\) & directional & field & \(\Lambda_K\) & directional & field \\
\midrule
oracle & 0.325 & 1.976 & 143 & 1.921 & 1.022 & 11.1 \\
trained, seed 0 & 0.324 & 1.970 & 27.5 & 1.328 & 1.020 & 5.1 \\
trained, seed 1 & 0.334 & 1.934 & 28.0 & 1.875 & 1.017 & 3.8 \\
trained, seed 2 & 0.320 & 1.990 & 29.6 & 1.879 & 1.018 & 3.8 \\
misspecified mean & 0.205 & 3.938 & 944 & 0.673 & 1.012 & 75.6 \\
\bottomrule
\end{tabular}

\end{table}

For the oracle, \(D'_\sigma\) is a posterior variance divided by \(\sigma^2\), hence nonnegative, so
the predictor preserves order and the optimal coupling remains optimal after each step: the gap
between \(\Lambda_K\) and \(\Lambda_K^{\mathrm{dir}}\) comes only from the relaxation
\(\log(1+x)\leqslant x\) and the positive part in \cref{eq:law_level_stepwise_log}. On the coarsest
grid most of the excess of the field proxy is the quadratic term
\(\tfrac12\ell_j^2\operatorname{Lip}(\vhat_{\sigma_j})^2\), which vanishes under refinement; what
remains on the finest grid is first order and geometric. In one dimension both rates read off the
local expansion rate \(\lambda_j(x)=(D'_{\sigma_j}(x)-1)/\sigma_j\) of the oracle velocity:
\(\Los{j}\) is its supremum over \(x\), and the directional rate is its average
\(\Erate[\OT]{j}=\int\lambda_j\,w_j\) against the error weight
\begin{equation*}
  w_j(x)=\frac{\E\bigl[\lvert U_j-V_j\rvert\,\mathbf 1\{x\text{ between }V_j\text{ and }U_j\}\bigr]}{e_j^2},
\end{equation*}
a probability density that spreads each coupled pair along its segment
(\Cref{fig:numerical_hierarchy}a). \(\Lambda_K\) itself depends on the denoiser
(\Cref{fig:numerical_hierarchy}b): the trained networks stay close to the oracle, except seed 0,
whose \(\Lambda_K\) is lower (\(1.33\) against \(1.88\) for the other two seeds at \(K=\resKmax\),
\Cref{tab:toy_hierarchy}), whereas the misspecified posterior mean, whose error is dominated by
learning, settles near \(\resMisspecLambdaFine\).

\begin{figure}[H]
  \centering
  \includegraphics[width=\textwidth]{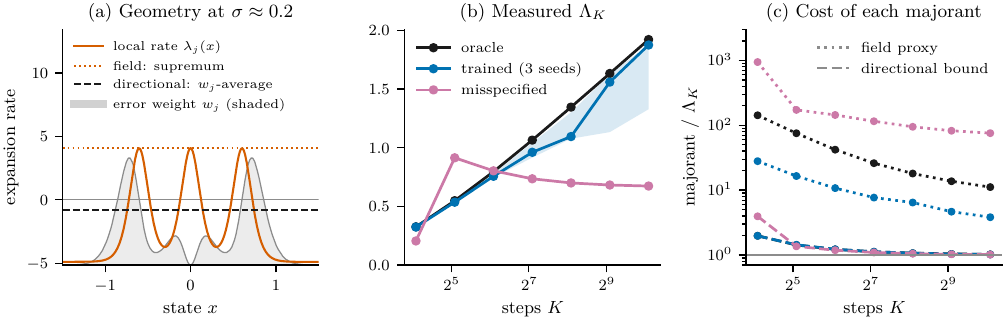}
  \caption[Stability controls on the one-dimensional mixture]{
  Where expansion sits, and what each stability control costs. Canonical Gaussian initialization.
  \textbf{(a)} Oracle on \(G_{272}\) at one low noise level: local expansion rate \(\lambda_j(x)\),
  the error weight \(w_j\) (shaded, arbitrary scale), and the two rates each bound pays at this
  step, the supremum of \(\lambda_j\) (field) and its \(w_j\)-average \(\Erate[\OT]{j}\)
  (directional). The error sits where the predictor contracts. \textbf{(b)} Measured \(\Lambda_K\)
  by denoiser; trained networks as median and range over three seeds. \textbf{(c)} Directional bound
  (dashed) and finite-domain field proxy (dotted), divided by \(\Lambda_K\), with the colours of
  (b).}
  \label{fig:numerical_hierarchy}
\end{figure}

\begin{figure}[H]
  \centering
  \includegraphics[width=\textwidth]{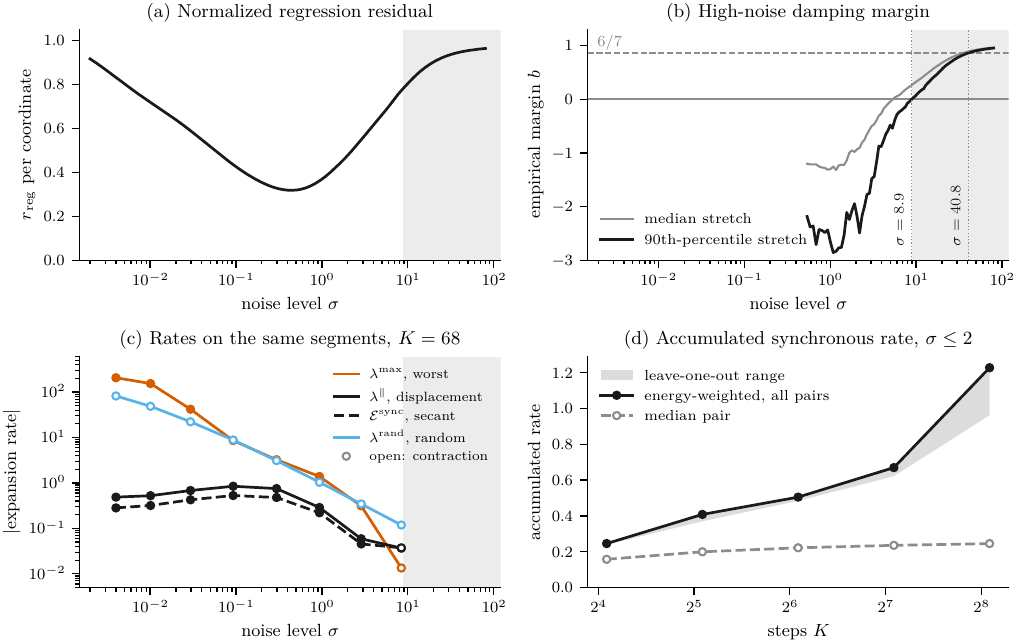}
  \caption[Stability mechanisms on CIFAR--10]{
  Stability diagnostics for the pretrained CIFAR--10 EDM model. Shading marks the noise levels at
  which the 90th-percentile empirical margin of (b) is positive. \textbf{(a)} Normalized denoising
  loss \(r_{\mathrm{reg}}\) per coordinate. \textbf{(b)} Empirical margin
  \eqref{eq:probe_margin_main} from the median and the 90th percentile of the local stretch; dotted
  lines mark where the latter crosses \(0\) and the supercritical threshold \(6/7\). \textbf{(c)}
  Energy-weighted rates on the coarse--fine segments of \(G_{\auditSegK}\), at the same nodes: worst
  direction, displacement, secant, and random direction \cref{eq:segment_rate_main}. Absolute
  values; open markers are negative rates. \textbf{(d)} Accumulated synchronous rate over
  \(\sigma\le2\), energy-weighted over all \auditSyncTrajectories trajectories with its
  leave-one-trajectory-out range, and the same accumulation of the level-wise median of the
  per-trajectory rates.}
  \label{fig:numerical_cifar}
\end{figure}

\subsection{Supplementary Results: One-Dimensional Audit}
\label{subsec:numerical_supp_toy}

\begin{remark}[Finite-range growth and a uniform oracle bound]
  \label{rem:toy_uniform_budget}
  The oracle mixture is not a counterexample to a uniform bound on \(\Lambda_K\). Let
  \(\varsigma=0.04\) be its common component standard deviation, and \(m_{\mathsf r}\) the centre of
  the component from which \(X_0\) is drawn, a discrete random variable. Differentiating its
  posterior mean gives
  \begin{equation*}
    v'_\sigma(x)=\frac{\sigma}{\varsigma^2+\sigma^2}
    -\frac{\sigma}{(\varsigma^2+\sigma^2)^2}\operatorname{Var}(m_{\mathsf r}\mid X_\sigma=x).
  \end{equation*}
  The centres are bounded, so their conditional variance is bounded uniformly in \(x\). Hence
  \(L=\sup_{\sigma\in[\underline\sigma,\overline\sigma],x}|v'_\sigma(x)|<\infty\), and
  \begin{equation*}
    \log\max(1,\gamma_{j,K})\leqslant \log(1+\ell_jL)\leqslant \ell_jL,
    \qquad \Lambda_K\leqslant L(\overline\sigma-\underline\sigma).
  \end{equation*}
  This analytic bound holds for every refinement and initialization, but its constant is
  quantitatively uninformative. The logarithmic growth and the sublinear order of
  \Cref{subsec:evidence_law_level} are therefore preasymptotic: first order must eventually return,
  but not within the tested range, which already far exceeds the step counts used in practice
  \cite{karras_elucidating_2022}. The growth motivates the logarithmic envelope
  \eqref{eq:logarithmic_budget} without proving it.
\end{remark}

\Cref{fig:numerical_toy_supplementary} shows the measured recursion from the canonical Gaussian
initialization. With the oracle, the Gaussian start reproduces the exact-initialization curve of
\Cref{fig:numerical_toy}a: its propagated initialization term stays below \(\resGaussInitMax\),
against \(\resGaussDiscFine\) for discretization at \(K=\resKmax\), so the run is
discretization-dominated as well. With the learned denoisers, the propagated learning contribution
overtakes discretization from \(K=272\) on, for every seed, and the observed error levels off near
\(0.018\) on the finest grids for two of the three seeds; no order is assigned to these curves.

\paragraph{Refinement regime of \(\Lambda_K\).}
For every arm we record, in addition to the three accumulated curves, the increment
\(\Lambda_K-\Lambda_{K/2}\), its distribution over fixed noise decades, and the signed directional
rate \(\Erate[\OT]{j,K}\) at the levels of each \(G_K\) nearest to
\(\sigma\in\{0.04,0.065,0.09,0.15\}\). The question was whether \(\Lambda_K\) stabilizes only for
arms whose low-noise discrepancy has a learning-error floor. The answer is partial.
\begin{itemize}[leftmargin=1.6em]
  \item For the oracle, \(\Lambda_K\) grows by \(0.22\)--\(0.29\) per doubling, and at \(K=1088\),
  \(92\) percent of it comes from \(\sigma\in[0.01,0.1)\).
  \item The two arms with a large floor, the misspecified mean and the perturbed oracle with
  \(\varepsilon=0.1\), converge: their \(\Lambda_K\) and their fixed-noise rates settle at every
  tested level.
  \item The trained networks remain close to the oracle in \(\Lambda_K\)
  (\Cref{tab:toy_hierarchy}), but for all four denoisers the fixed-noise rates depend on the level
  chosen. At \(\sigma\approx0.065\) the oracle rate keeps growing (\(3.3\) to \(34\)) and the
  trained ones slow down (to about \(21\)). At \(\sigma\approx0.09\) the oracle rate levels off near
  \(11\) while two of the trained ones still grow (to \(17\)--\(18\)). The nearest grid level also
  differs from the target by up to \(10\) percent in \(\sigma\) for \(K\ge34\), and more on
  \(G_{17}\).
  \item The perturbed oracles with \(\varepsilon\leqslant 0.03\) show isolated collapses of
  \(\Lambda_K\) (for instance \(0.12\) at \(K=272\) for \(\varepsilon=0.01\)). At these step counts
  the local bias and the perturbation nearly cancel, so the displacement whose amplification
  \(\gamma_{j,K}\) measures nearly vanishes. These are not trends.
\end{itemize}
We therefore do not claim that a model-error floor regularizes \(\Lambda_K\) in general.

\begin{figure}[H]
  \centering
  \includegraphics[width=0.8\textwidth]{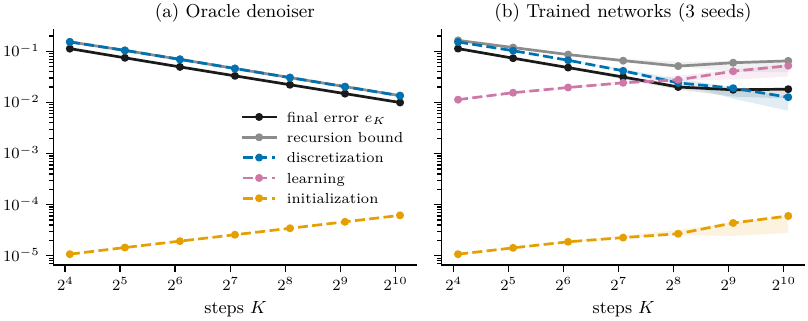}
  \caption[Measured recursion from the Gaussian start]{
  Terms of the measured recursion \eqref{eq:toy_propagated_decomposition} from the canonical
  Gaussian initialization: final error, recursion bound, and its propagated discretization, learning
  and initialization contributions. \textbf{(a)} Oracle denoiser. \textbf{(b)} Trained networks,
  median and range over three seeds. The contributions add up to an upper bound, not to the observed
  error.}
  \label{fig:numerical_toy_supplementary}
\end{figure}

\subsection{Analytic Controls}
\label{subsec:analytic_controls}

These controls reuse the mixture and analytic denoisers above, without network training or spatial
maximization. They use 4096 midpoint quantiles; the finest settings are repeated with 8192, and
differences comparable to the quadrature discrepancy are left uninterpreted.
\Cref{tab:control_clock,tab:control_start,tab:analytic_controls} report the outcomes.

\paragraph{Clock.}
For the oracle with exact initialization, we compare \(\rhoEDM=2,3,7\) at \(K=68,136,272,544\), with
the same endpoints, recording final error, measured local biases, their universal bounds, and the
relative mesh \(\theta_K=h_K/\underline\eta\) of \cref{eq:schedule_family_relative_mesh}. Equal
\(K\) means equal numbers of denoiser evaluations, but not equal relative meshes: with
\(\underline\sigma=0.002\), the clock \(\rhoEDM=2\) has \(\theta_K>1\) for \(K\leqslant 136\) and
reaches \(\theta_K<1/2\) only at \(K=544\), whereas \(\rhoEDM=7\) has \(\theta_K<0.06\) already at
\(K=68\). The clock \(\rhoEDM=1\) is excluded because \(\theta_K\) exceeds one on every step count
considered, so none of the estimates of \Cref{sec:two_regime_propagation} applies to it.

Outcome. The universal bias bound of \Cref{lem:local_euler_defect} exceeds the measured biases by a
factor of about \(280\) to \(1190\). It also reverses the ranking of the clocks: at \(K=544\) the
measured biases and the final error order the clocks \(\rhoEDM=7<3<2\), while the universal bound
orders them \(2<3<7\). The clock \(\rhoEDM=2\) balances the leading universal bound, but it need not
minimize observed biases or final error, since the mixture concentrates its local biases in a narrow
noise window. The bound controls rates, not the choice of schedule.

\begin{table}[H]
  \centering
  \small
  \caption{Clock control at \(K=544\), oracle with exact initialization; every grid satisfies
  \(\theta_K<1/2\). Ratio: universal bound over measured biases.}
  \label{tab:control_clock}
  \begin{tabular}{@{}rrrrr@{}}
    \toprule
    \(\rhoEDM\) & final error & measured biases & universal bound & ratio \\
    \midrule
    2 & 0.0229 & 0.0164 & 4.64 & 282 \\
    3 & 0.0173 & 0.0095 & 5.13 & 539 \\
    7 & 0.0139 & 0.0065 & 7.48 & 1150 \\
    \bottomrule
  \end{tabular}
\end{table}

\paragraph{Starting noise.}
We use \(\rhoEDM=7\) and fix \(h\) to the clock step of the reference grid with
\(\overline\sigma=80\) and \(K=272\). Integer step counts are chosen whose upper endpoints are
nearest 20, 80, and 320 in the clock, and the actual endpoint \((\underline\sigma^{1/7}+Kh)^7\) is
reported, so every grid is clock-uniform with exactly the same \(h\). The oracle starts from the
exact top law. For this control alone \(\sighi=4\) and \(\bhi=0.9\). A global oracle certificate is
available from
\begin{equation*}
  0\leqslant D'_\sigma(x)\le \frac{\varsigma^2}{\varsigma^2+\sigma^2}
  +\frac{\sigma^2(\max_r m_r-\min_r m_r)^2}{4(\varsigma^2+\sigma^2)^2},
\end{equation*}
since the posterior variance is the within-component variance plus the variance of component means,
the latter bounded by one quarter of their squared span. If the displayed upper bound is at most
\(1-\bhi\) at every high-noise level, the oracle predictor is \((1-\bhi a_j)\)-Lipschitz.

Outcome. The certificate holds on every grid (\(\beff=0.886>6/7\)). Raising \(\overline\sigma\) from
\(20\) to \(318\) multiplies the undamped high-noise sum of universal bounds by \(14\) but the
damped one only by \(2.6\) (\Cref{tab:control_start}), and changes the final error by
\(1.6\times10^{-6}\) in absolute terms (\(7.5\times10^{-5}\) relatively). Early error is damped, as
the high-noise block predicts.

\begin{table}[H]
  \centering
  \small
  \caption{Starting-noise control, \(\rhoEDM=7\) with a fixed clock step; oracle started
  from the exact top law. The change in final error is relative to \(\overline\sigma=80\). The last
  two columns sum the universal bias bounds \(\Delta_j\) over the high-noise steps
  (\(\sigma_{j+1}\ge\sighi=4\)), without and with the weights \(\Pi_{j+1,\Khi}\) of the high-noise
  kernel.}
  \label{tab:control_start}
  \begin{tabular}{@{}rrrrrr@{}}
    \toprule
    \(\overline\sigma\) & \(K\) & final error & change & \(\sum_j\Delta_j\) & \(\sum_j\Pi_{j+1,\Khi}\Delta_j\) \\
    \midrule
    19.8 & 209 & 0.02176 & \(-1.6\times10^{-6}\) & 3.36  & 1.54 \\
    80   & 272 & 0.02176 & \multicolumn{1}{c}{--} & 13.77 & 2.81 \\
    318  & 348 & 0.02176 & \(+0.1\times10^{-6}\) & 47.60 & 4.00 \\
    \bottomrule
  \end{tabular}
\end{table}

\paragraph{Calibration of the synchronous diagnostic.}
For the oracle and misspecified posterior mean, we compare synchronous \(G_K/G_{2K}\) pairs at
\(K=68,136,272\) with the optimal target--sampler coupling, both from canonical Gaussian
initialization, on the common window \(\sigma_j\le2\). Latent ranks are preserved for synchronous
pairs and sorted only for optimal transport.

Outcome. For the oracle the two accumulated rates grow together (ratio \(1.29\) to \(1.24\) for
\(K=68\) to \(272\)). For the misspecified denoiser they move apart: the synchronous rate rises from
\(1.11\) to \(1.87\) while the optimal-coupling rate falls from \(0.76\) to \(0.69\), and so does
\(\Lambda_K\) on the same window, from \(0.80\) to \(0.70\). A growing synchronous rate therefore
need not track \(\Lambda_K\) once the denoiser is inexact.

\begin{table}[H]
  \centering
  \small
  \caption{Calibration of the synchronous diagnostic: accumulated linear rates
  \(\sum_j\ell_j(\Erate[\sync]{j,K})_+\) and \(\sum_j\ell_j(\Erate[\OT]{j,K})_+\) over the levels
  \(\sigma_j\le2\), canonical Gaussian initialization. Ratio: synchronous over optimal coupling.
  Last column: \(\Lambda_K\) restricted to the same levels, \(\sum_j\log\max\{1,\gamma_{j,K}\}\).}
  \label{tab:analytic_controls}
  \begin{tabular}{@{}lrrrrr@{}}
    \toprule
    denoiser & \(K\) & synchronous & optimal coupling & ratio & \(\Lambda\) on window \\
    \midrule
    oracle       & 68  & 0.909 & 0.704 & 1.29 & 0.793 \\
                 & 272 & 1.605 & 1.296 & 1.24 & 1.326 \\
    misspecified & 68  & 1.112 & 0.762 & 1.46 & 0.804 \\
                 & 272 & 1.870 & 0.687 & 2.72 & 0.699 \\
    \bottomrule
  \end{tabular}
\end{table}

\subsection{Supplementary CIFAR--10 Diagnostics}
\label{subsec:numerical_supp_cifar}

\paragraph{Accumulated synchronous rates.}
\Cref{subsec:evidence_directional_cifar} discusses the endpoints of the two curves of
\Cref{fig:numerical_cifar}d. On the intermediate grids \(K=34,68,136\), the accumulated
energy-weighted rate is \(0.41\), \(0.50\), \(0.67\), and the accumulated median rate of
\Cref{exp:transport} is \(0.20\), \(0.22\), \(0.24\), starting from \(0.16\) at \(K=17\); the
successive ratios of the median rate decrease from \(1.27\) to \(1.04\).

\paragraph{Influence of rare trajectories.}
Concentration increases with refinement. Half of the total leave-one-out change is carried by
\(129\) trajectories at \(K=17\) and by \(21\) at \(K=272\). At \(K=272\) five coarse--fine pairs
have a final squared displacement more than \(100\) times their class median; the largest is
\(3268\) times that median. The median squared displacement decreases with a local exponent
approaching \(-2\) (\(-1.93\) on the last doubling), the first-order expectation. The most
influential trajectory is different on every grid, and every large leave-one-out change is a
decrease, so these pairs cause the rise rather than offset it. The refinement trend is therefore
described as sensitive to rare pairs, not as a property of the ensemble.

\subsection{Numerical Reliability and Sensitivity Checks}
\label{subsec:numerical_methods}

\Cref{tab:numerical_reliability} lists, for each reported quantity, the check applied, its
acceptance criterion, and the outcome. Two checks fail, both on the fine CIFAR segment grid
\(G_{272}\); the corresponding statistics are withdrawn from \Cref{fig:numerical_cifar} and from the
main text.

For the exact-initialization comparison, we additionally repeat all three grids \(K=272,544,1088\)
with \(4M=262{,}144\) quantiles. This checks both endpoints of each of the last two refinement pairs
at a common quadrature resolution (\Cref{tab:oracle_paired_orders}). The finest-grid error changes
from \(0.00987629\) to \(0.00996318\), and its cumulative log-amplification from \(1.91498\) to
\(1.92367\). On each pair the error order and the amplification-based order shift by the same amount
and in the same direction, by about \(0.003\) on \((272,544)\) and \(0.0068\) on \((544,1088)\), so
their difference changes by less than \(10^{-4}\). The individual orders are thus not resolved to
the third decimal, but their agreement is robust to this refinement. These differences measure
sensitivity to the tested quadrature refinement, not certified errors relative to exact integration.

\begin{table}[H]
  \centering
  \small
  \caption{Paired quadrature check with exact initialization. Local orders
  \cref{eq:empirical_order_main} on the pair \((K/2,K)\); the amplification-based order is
  \(1-(\Lambda_K-\Lambda_{K/2})/\log2\), and the bias-sum order refers to
  \(\sum_j\widetilde\delta_{j,K}\). Digits show numerical drift rather than certified precision.}
  \label{tab:oracle_paired_orders}
  \begin{tabular}{@{}rrrrr@{}}
\toprule
pair $(K/2,K)$ & quantiles & error order & ampl.-based order & bias-sum order \\
\midrule
$(272,544)$ & 65536 & 0.5811 & 0.5886 & 0.9908 \\
$(272,544)$ & 262144 & 0.5779 & 0.5854 & 0.9908 \\
$(544,1088)$ & 65536 & 0.5850 & 0.5882 & 0.9955 \\
$(544,1088)$ & 262144 & 0.5782 & 0.5814 & 0.9955 \\
\bottomrule
\end{tabular}

\end{table}

\begin{table}[H]
  \centering
  \reliabilitytablestyle
  \caption{Numerical reliability checks. Failures lead to omission, not to a qualified report.}
  \label{tab:numerical_reliability}
  \renewcommand{\arraystretch}{1.15}
  \begin{tabular}{@{}>{\raggedright}p{2.9cm}>{\raggedright}p{4.4cm}>{\raggedright}p{3.4cm}>{\raggedright\arraybackslash}p{5.1cm}@{}}
    \toprule
    quantity & check & criterion & outcome \\
    \midrule
    \multicolumn{4}{@{}l}{One-dimensional mixture}\\
    endpoint error \(e_{K,K}\)
      & \(M\) vs \(4M\) quantiles on \(G_{1088}\) (analytic arms, trained seed 0)
      & discrepancy \(<e_{K,K}/10\)
      & pass; oracle error changes by \(<1\%\). Bound/error ratio not reported for arms without a
        \(4M\) rerun \\
    cumulative log-amplification \(\Lambda_K\)
      & \(M\) vs \(4M\) on \(G_{1088}\)
      & small relative change
      & Gaussian start: about \(0.1\%\); exact start: about \(0.45\%\).
        Paired orders checked separately in \Cref{tab:oracle_paired_orders} \\
    one-step recursion \eqref{eq:toy_measured_recursion}
      & measured slack
      & nonnegative up to quadrature
      & pass on every arm and step \\
    field proxy \cref{eq:toy_field_proxy}
      & domain \((R,N_x)=(12,4097)\to(16,8193)\)
      & small relative change
      & pass; \(\leqslant 0.8\%\). The maximizer hits the domain boundary at 225 of 507 oracle levels on
        \(G_{1088}\) (never for trained arms) but sits on a flat plateau, not clipped \\
    noise threshold
      & re-summation at the four candidate thresholds of \Cref{exp:toy_law}
      & range reported
      & \(\Lambda_K\) bit-identical; \(\Lambda_K^{\rm dir}\) varies by \(0.13\%\) at \(K=1088\) \\
    clock control
      & \(M=4096\) vs \(8192\) at \(\rhoEDM=7\)
      & discrepancy \(<10\%\)
      & pass; \(8.7\%\), below the interpreted differences \\
    starting-noise control
      & finite-mesh certificate
      & \(\bhi\kappa(\thetahi)>6/7\) on every grid
      & pass; \(\beff=0.886\) \\
    \midrule
    \multicolumn{4}{@{}l}{CIFAR--10, high noise}\\
    margin crossings \(\sigma=\auditBcrossQninety,\auditBsupercritQninety\)
      & independent rerun
      & same grid levels
      & pass; reproduced exactly \\
    empirical margin \eqref{eq:branch_stability_margin}
      & 8, 16, 32 power iterations at 13 levels of \(G_{136}\) around the crossings
      & crossings unchanged
      & pass; \(\bprobe[0.9]\) moves by \(\le5\cdot10^{-5}\) (tail probes by up to \(0.1\)
        below \(\sigma\approx10\)); it stays \(+5.7\cdot10^{-4}\) at \(\sigma=\auditBcrossQninety\)
        and exceeds \(6/7\) by \(6.6\cdot10^{-3}\) at \(\sigma=\auditBsupercritQninety\) \\
    \midrule
    \multicolumn{4}{@{}l}{CIFAR--10, low-noise segments}\\
    secant integral, \(G_{68}\)
      & trapezoid of \(\lambda^\parallel\) on \(T_9\) vs \(\Erate[\sync]{j,K}\)
      & agreement
      & pass; within \(0.011\) \\
    \(\lambda^{\max}\), \(G_{68}\)
      & Lanczos Ritz residual, half-iteration drift
      & small residual
      & pass; \(<0.11\%\) of \(\lambda^{\max}\) except at \(\sigma=0.004\) (\(4.6\%\), an
        underestimate that only widens the reported gap) \\
    segment sample, \(G_{68}\)
      & ESS \cref{eq:segment_energy_weights} of 80 segments
      & not dominated by one segment
      & pass; ESS \(18\)--\(40\) \\
    segment sample, \(G_{272}\)
      & ESS; bootstrap at \(\sigma=0.004\)
      & not dominated by one segment
      & \textbf{fail}; one segment carries \(88\)--\(90\%\) of the energy at every
        \(\sigma\leqslant 1\) (ESS \(1.2\)--\(1.3\)); bootstrap interval \(8.8\) times the point
        estimate. Not reported \\
    segment maxima, \(G_{272}\)
      & nested \(T_3/T_5/T_9\); rerun with 65 nested nodes
      & maxima resolved in \(t\)
      & \textbf{fail}; nested check passes, but with 65 nodes the nine-node maxima at
        \(\sigma\approx0.03,0.1,0.3\) are low by factors \(2\) to \(4.5\), and at
        \(\sigma\approx0.3\) the refined \(\lambda^\parallel\) exceeds the nine-node
        \(\lambda^{\max}\). The secant integral is reproduced within \(0.001\). Not
        reported; the nested check is necessary, not sufficient \\
    accumulated synchronous rate
      & leave-one-out over 1280 trajectories
      & trend stable under single removal
      & sensitive; two trajectories dominate at \(K=272\) (\Cref{subsec:evidence_directional_cifar}).
        Trend reported as driven by rare pairs (\Cref{subsec:numerical_supp_cifar}) \\
    \bottomrule
  \end{tabular}
\end{table}

\paragraph{Software checks.}
The one-dimensional code checks the predictor secant identity, the ordering between the realized and
directional quantities, the independence of the spatial interval from \(K\), the vanishing learned
term for the oracle, and the residual bound \(\Rres_j\leqslant \varepsilon\) of the perturbed
oracle. The CIFAR code checks exact nesting of the schedules, the synchronous predictor identity,
the segment chain on known linear maps, the random-direction rate against the exact Rayleigh
quotient of a known symmetric matrix, nestedness of the 3/5/9-node estimators, energy-weighted
aggregation, zero-displacement handling, conservation of trajectory identities in the influence
analysis, stratified resampling, and propagation of invalid observations. Reduced smoke
configurations exercise each calculation before the full runs. The targeted oracle audit also checks
that sharing target quantiles across nested grids reproduces the independent-grid calculation, and
that orders compare identical quadrature resolutions and initializations.

\subsection{Code and Data Availability}
\label{subsec:numerical_reproducibility}

The code and the frozen raw outputs of every run, with their resolved configurations, are
\codeavailability. Two scripts regenerate from these outputs the numerical
figures, the generated tables and the measured values quoted in the main text; a check mode fails
when one of them is stale with respect to the outputs. The other tables and values of this appendix
are computed from the same outputs but are not regenerated by the scripts. The one-dimensional
experiments run on a CPU in a few hours in total. The CIFAR--10 diagnostics use the public
class-conditional EDM checkpoint of
\textcite{karras_elucidating_2022} and the CIFAR--10 test images; the longest of them, the
synchronous diagnostic, takes about six hours on a single NVIDIA L4 GPU.

\section{Comparison with Prior Wasserstein Analyses}
\label{app:related_work_extended}

This appendix compares our analysis with the two closest Wasserstein analyses of probability-flow
samplers, which follow the two routes of \Cref{app:alt_decomposition}: Beyler--Bach transport the
learned map, Gao--Zhu the exact flow. It closes with the counterexamples that motivate measuring
propagation separately from score accuracy.

\subsection{Closest Comparison: Beyler--Bach}

\textcite{beyler_convergence_2025} study deterministic and stochastic samplers on the same Gaussian
smoothing path. Their heat parameter is the variance \(t=\sigma^2\) in our notation. If the data law
is supported in a ball of radius \(R\), their score Hessian estimate implies that the exact Euler
map
\begin{equation*}
  f_{t,h}(x)=x+h\nabla\log p_t(x)
\end{equation*}
satisfies, for \(h\le t\),
\begin{equation}\label{eq:beyler_bach_exact_map_factor}
  \operatorname{Lip}(f_{t,h}) \le L_{t,h} =1+h\left(\frac{R^2}{t^2}-\frac1t\right).
\end{equation}
In particular, this certificate becomes contractive for \(t>R^2\)
\cite[Lemma~3]{beyler_convergence_2025}. The map \(f_{t,h}\) is our oracle predictor in their
variables: since \(v_\sigma=-\sigma\nabla\log p_\sigma\) by \cref{eq:velocity}, with
\(\ell_j=\sigma_j-\sigma_{j+1}\), \cref{eq:oracle_edm_predictor} reads
\(\Phi_j=\Id-\ell_jv_{\sigma_j}=f_{\sigma_j^2,\sigma_j\ell_j}\), and the condition \(h\le t\) holds
automatically because \(\ell_j<\sigma_j\). Their learned-score assumption imposes an analogous
Lipschitz control on \(x\mapsto x+h s_\theta(t,x)\) \cite[Assumption~2]{beyler_convergence_2025}.
The current discrepancy is then propagated through that worst-case map factor, while score and
discretization errors are evaluated on the exact law. This produces the same three-term recursion as
ours.

For Euler, their global estimate is first order in the maximal step
\cite[Proposition~10]{beyler_convergence_2025}. Its constants, however, inherit the bounded-support
radius and deteriorate as the sampler approaches the data law; the early-stopping dependence
contains, in particular, a factor of the form \(\exp(R^2/(2\varepsilon))\). The difference is thus
not the order of convergence but the mechanism and the constants used to obtain it. The comparison
can be summarized as follows.

\begin{center}
  \small
  \begin{tabular}{@{}p{0.20\linewidth}p{0.35\linewidth}p{0.37\linewidth}@{}}
\toprule
 & \textcite{beyler_convergence_2025} & This paper \\
\midrule
Path
  & Heat variance \(t\)
  & \(X_\sigma=X_0+\sigma Z\), with \(t=\sigma^2\) \\
Sampler
  & Euler, Heun, and Euler--Maruyama
  & First-order deterministic EDM predictor \\
Local bias
  & Bounds depending on \(R\) or on Gaussian smoothing
  & Universal acceleration bound, with no structural assumption on \(q\) \\
Learned error
  & \(\mathbb L_2\) score error under the exact law
  & Normalized EDM regression residual under the exact law \\
Propagation
  & Global Lipschitz envelope of the learned map
  & Realized amplification factor \(\gamma_j\), with directional and worst-case majorants \\
High noise
  & Contraction from smoothing and bounded support
  & Contraction explicitly certified by EDM preconditioning \\
\bottomrule
  \end{tabular}
\end{center}

Our local-bias estimate replaces the support-dependent control by posterior identities valid for
every Borel data law. For propagation, \cref{eq:beyler_bach_exact_map_factor} is analogous to our
map-level bound \eqref{eq:map_gamma_bound}, an intermediate link of the chain
\cref{eq:field_gamma_chain}. Our recursion instead retains the smaller quantity it actually
multiplies, the amplification factor \(\gamma_j\), which admits the transport-directional bound
\eqref{eq:directional_gamma_chain}. Conversely, Beyler--Bach cover Heun and Euler--Maruyama; the
global theorem of this paper covers only the first-order deterministic predictor.

Retaining \(\gamma_j\) sharpens the propagation step but does not by itself give an a priori
guarantee, since \(\gamma_j\) depends on the sampler laws (\Cref{sec:two_regime_stability}). The
separation example of \Cref{prop:strict_law_map_separation} shows that a worst-case map bound can be
unnecessarily large, not that an envelope for \(\Lambda_K\) holds for general learned denoisers.

\subsection{A Structural Wasserstein Route: Gao--Zhu}

\textcite{gao_zhu_probability_flow_2025} analyze probability-flow ODE samplers directly in
\(W_2\) for a general linear forward diffusion
\begin{equation*}
  \dd X_t=-f(t)X_t\,\dd t+g(t)\,\dd B_t.
\end{equation*}
They assume that the data density \(p_0\) is positive and differentiable, with \(-\log p_0\)
\(m_0\)-strongly convex and \(L_0\)-smooth. They also impose a time-regularity bound on the exact
score within each step: for all \(x\) and \(k=1,\ldots,K\), \(\sup_{(k-1)h_{\mathrm{GZ}}\le t\le
kh_{\mathrm{GZ}}}
\norm{\nabla\log p_{T-t}(x)-\nabla\log p_{T-(k-1)h_{\mathrm{GZ}}}(x)}
\le L_1h_{\mathrm{GZ}}(1+\norm{x})\), where \(h_{\mathrm{GZ}}\) is their step size, written so
to avoid a clash with our clock \(\eta\), and \(T=Kh_{\mathrm{GZ}}\) is the horizon. They further
assume an \(\mathbb L_2\) score-estimation error bounded by \(M\) along the numerical iterates and a
sufficiently small step size \cite[Assumptions~3.1--3.3,
Theorem~3.4]{gao_zhu_probability_flow_2025}. Their exponential-integrator sampler then satisfies
\begin{equation*}
  W_2\bigl(\Law(u_K),p_0\bigr)
  \le \exp\!\left(-\int_0^{Kh_{\mathrm{GZ}}}\mu_{\mathrm{GZ}}(t)\,\dd t\right)
  \norm{X_0}_{\mathbb L_2} +E_1(f,g,K,h_{\mathrm{GZ}},L_1) +E_2(f,g,K,h_{\mathrm{GZ}},M,L_1),
\end{equation*}
where \(E_1\) and \(E_2\) are respectively the discretization and score-matching terms
\cite[Theorem~3.4]{gao_zhu_probability_flow_2025}. The continuous contraction rate is explicit:
\begin{equation*}
  \mu_{\mathrm{GZ}}(t) = \frac{m_0g(t)^2}{ 2\left[ \exp\!\left(-2\int_0^t f(s)\,\dd s\right)
  +m_0\int_0^t \exp\!\left(-2\int_s^t f(v)\,\dd v\right)g(s)^2\,\dd s \right] }>0.
\end{equation*}
Strong log-concavity is what makes \(\mu_{\mathrm{GZ}}\) positive. At the discrete level, a
synchronous coupling yields a one-step factor, written here \(\gamma_j^{\mathrm{GZ}}\), constrained
to \((0,1)\) by the step-size condition and then multiplied through the accumulated error
\cite[Proposition~4.2]{gao_zhu_probability_flow_2025}. This factor should not be confused with our
\(\gamma_j\): theirs is an explicit coupling-based contraction coefficient, whereas ours is the
ratio of two Wasserstein distances after applying the learned finite map.

The comparison with the present analysis is summarized below.

\begin{center}
  \small
  \begin{tabular}{@{}p{0.20\linewidth}p{0.35\linewidth}p{0.37\linewidth}@{}}
\toprule
 & \textcite{gao_zhu_probability_flow_2025} & This paper \\
\midrule
Forward path
  & General linear diffusion with coefficients \(f,g\)
  & Gaussian path \(X_\sigma=X_0+\sigma Z\) \\
Sampler
  & Exponential integrator for the learned probability-flow ODE
  & First-order finite EDM predictor \\
Data assumptions
  & Smooth, strongly log-concave density
  & No structural assumption for the local bias; centered \(q\in\mathcal P_2\) for the global bound \\
Local bias
  & Time regularity and spatial Lipschitz control of the score
  & Universal acceleration bound \\
Learned error
  & \(\mathbb L_2\) score error along the numerical iterates
  & Normalized EDM regression residual under the exact law \\
Propagation
  & Synchronous coupling with explicit contraction from log-concavity
  & Realized amplification factor \(\gamma_j\), with directional and worst-case majorants \\
\bottomrule
  \end{tabular}
\end{center}

Both analyses therefore separate initialization, discretization and learning errors in the same
metric, but place the structural burden at different points. Gao--Zhu derive contractivity of the
continuous flow and of the coupled discretization from the target geometry. We obtain the local bias
without that geometry, then expose propagation as a separate condition on the learned finite
transport: EDM preconditioning certifies it at high noise, while at low noise only the cumulative
low-noise log-amplification \(\Lambda_K\) is required.

\subsection{Score Accuracy Does Not Imply Stability}

The counterexamples of \textcite{zhou_score_accuracy_2026} show that a small score error does not by
itself make a sampler stable. A tail perturbation can have arbitrarily small score error under every
forward marginal while explicit Euler develops diverging moments; in the deterministic
probability-flow construction, the discrete laws converge weakly but diverge in every \(W_p\),
\(p\ge1\) \cite[Theorem~3.1 and Corollary~3.3]{zhou_score_accuracy_2026}. A residual evaluated on
the exact law therefore cannot control the error alone: the propagation of an existing discrepancy
through the learned map must be measured separately, which is the role of \(\gamma_j\) in our
recursion. The deterministic example does not contradict our theorem, since its field has
superlinear tail growth and violates \Cref{ass:wellposed}. Their bounded, globally Lipschitz neural
construction, which would satisfy that assumption, concerns Euler--Maruyama rather than the
deterministic predictor, and its constants are not uniform along the family
\cite[Theorem~3.5]{zhou_score_accuracy_2026}.

\end{document}